\crefname{section}{Sect.}{Sect.}
\Crefname{section}{Section}{Sections}
\newcommand{\eg}{e.\,g.,\xspace}
\newcommand{\ie}{i.\,e.,\xspace}
\DeclareFontFamily{U}{MnSymbolC}{}
\DeclareSymbolFont{MnSyC}{U}{MnSymbolC}{m}{n}
\DeclareFontShape{U}{MnSymbolC}{m}{n}
{<-6>  MnSymbolC5
  <6-7>  MnSymbolC6
  <7-8>  MnSymbolC7
  <8-9>  MnSymbolC8
  <9-10> MnSymbolC9
  <10-12> MnSymbolC10
  <12->   MnSymbolC12%
}{}
\DeclareMathSymbol{\powerset}{\mathord}{MnSyC}{180}
\newcommand{\f}{\varphi}
\newcommand{\MTL}{MTL\xspace}
\newcommand{\Intvl}{I}
\newcommand{\sPara}{c}
\newcommand{\tPara}{\tau}
\newcommand{\val}{\nu}
\newcommand{\x}{\mathbf{x}}
\newcommand{\G}{\mathbf{G}}
\newcommand{\F}{\mathbf{F}}
\newcommand{\ev}{\mathbf{F}}
\newcommand{\U}{\mathbf{U}}
\newcommand{\true}{\mathit{true}}
\newcommand{\nmodels}{\not\models}
\newcommand{\aand}{\,\wedge\,}
\newcommand{\Reals}{\mathbb{R}}
\newcommand{\PosReals}{\mathbb{R}_{\ge0}}
\newcommand{\domain}{\mathcal{D}}
\newcommand{\setof}[1]{\left\{#1\right\}}
\newcommand{\params}{\mathcal{P}}
\newcommand{\timedomain}{T}
\newcommand{\valuedomain}{V}
\newcommand{\paramspace}{{\domain_\params}}
\newcommand{\timeparams}{\params^\timedomain}
\newcommand{\valueparams}{\params^\valuedomain}
\newcommand{\p}{\mathbf{p}}
\newcommand{\Traces}{X}
\newcommand{\TracesSubset}{Y}
\newcommand{\validitydomain}[1]{\mathcal{V} (#1)}
\newcommand{\validitydomainboundary}[1]{\partial\mathcal{V} (#1)}
\newcommand{\mypara}[1]{\vspace{0.3em} \noindent {\bf #1.\ }}
\newcommand{\myipara}[1]{\vspace{0.3em} \noindent {\em #1.\/ }}
\DeclareMathOperator*{\argmin}{argmin}
\newcommand{\pOrder}{\trianglelefteq}
\newcommand{\tOrder}{\preceq}
\newcommand{\lexOrder}{\preceq_\mathrm{lex}}
\newcommand{\scalarOrder}{\preceq_\mathrm{scalar}}
\newcommand{\polarity}{\mathrm{sgn}}
\newcommand{\eqdef}{\mathrel{\stackrel{\makebox[0pt]{\mbox{\normalfont\tiny def}}}{=}}}
\newcommand{\stepStl}{\text{step}}
\newcommand{\overshootSTL}{\f_{\text{overshoot}}}
\newcommand{\cost}{J}
\newcommand{\proj}{\pi}
\newcommand{\projscalar}{\proj_{\mathrm{scalar}}}
\newcommand{\projlex}{\proj_{\mathrm{lex}}}
\newcommand{\inflex}{\inf_{\lexOrder}}
\newcommand{\assign}{\leftarrow}
\newcommand{\valup}{\val^{u}}
\newcommand{\valdn}{\val^{\ell}}
\newcommand{\ceil}[1]{\left\lceil#1 \right\rceil}
\newcommand{\lanei} {L_i}
\newcommand{\aaboxes}{\mathcal{B}}
\newcommand{\downward}{D}
\newcommand{\model}[1]{\llbracket\f({#1})\rrbracket}
\newcommand{\labelingfn}{\ell}
\newcommand{\labelingfnbox}{\ell_\mathrm{box}}
\newcommand{\stepinput}{\mathtt{lane\_change}}
\newcommand{\powerSet}[1]{\mathbf{2}^{#1}}
\def\blfootnote{\gdef\@thefnmark{}\@footnotetext}
\begin{document}

\pdfglyphtounicode{A}{0041}
\pdfglyphtounicode{AE}{00C6}
\pdfglyphtounicode{AEacute}{01FC}
\pdfglyphtounicode{AEmacron}{01E2}
\pdfglyphtounicode{AEsmall}{00E6}
\pdfglyphtounicode{Aacute}{00C1}
\pdfglyphtounicode{Aacutesmall}{00E1}
\pdfglyphtounicode{Abreve}{0102}
\pdfglyphtounicode{Abreveacute}{1EAE}
\pdfglyphtounicode{Abrevecyrillic}{04D0}
\pdfglyphtounicode{Abrevedotbelow}{1EB6}
\pdfglyphtounicode{Abrevegrave}{1EB0}
\pdfglyphtounicode{Abrevehookabove}{1EB2}
\pdfglyphtounicode{Abrevetilde}{1EB4}
\pdfglyphtounicode{Acaron}{01CD}
\pdfglyphtounicode{Acircle}{24B6}
\pdfglyphtounicode{Acircumflex}{00C2}
\pdfglyphtounicode{Acircumflexacute}{1EA4}
\pdfglyphtounicode{Acircumflexdotbelow}{1EAC}
\pdfglyphtounicode{Acircumflexgrave}{1EA6}
\pdfglyphtounicode{Acircumflexhookabove}{1EA8}
\pdfglyphtounicode{Acircumflexsmall}{00E2}
\pdfglyphtounicode{Acircumflextilde}{1EAA}
\pdfglyphtounicode{Acute}{00B4}
\pdfglyphtounicode{Acutesmall}{00B4}
\pdfglyphtounicode{Acyrillic}{0410}
\pdfglyphtounicode{Adblgrave}{0200}
\pdfglyphtounicode{Adieresis}{00C4}
\pdfglyphtounicode{Adieresiscyrillic}{04D2}
\pdfglyphtounicode{Adieresismacron}{01DE}
\pdfglyphtounicode{Adieresissmall}{00E4}
\pdfglyphtounicode{Adotbelow}{1EA0}
\pdfglyphtounicode{Adotmacron}{01E0}
\pdfglyphtounicode{Agrave}{00C0}
\pdfglyphtounicode{Agravesmall}{00E0}
\pdfglyphtounicode{Ahookabove}{1EA2}
\pdfglyphtounicode{Aiecyrillic}{04D4}
\pdfglyphtounicode{Ainvertedbreve}{0202}
\pdfglyphtounicode{Alpha}{0391}
\pdfglyphtounicode{Alphatonos}{0386}
\pdfglyphtounicode{Amacron}{0100}
\pdfglyphtounicode{Amonospace}{FF21}
\pdfglyphtounicode{Aogonek}{0104}
\pdfglyphtounicode{Aring}{00C5}
\pdfglyphtounicode{Aringacute}{01FA}
\pdfglyphtounicode{Aringbelow}{1E00}
\pdfglyphtounicode{Aringsmall}{00E5}
\pdfglyphtounicode{Asmall}{0061}
\pdfglyphtounicode{Atilde}{00C3}
\pdfglyphtounicode{Atildesmall}{00E3}
\pdfglyphtounicode{Aybarmenian}{0531}
\pdfglyphtounicode{B}{0042}
\pdfglyphtounicode{Bcircle}{24B7}
\pdfglyphtounicode{Bdotaccent}{1E02}
\pdfglyphtounicode{Bdotbelow}{1E04}
\pdfglyphtounicode{Becyrillic}{0411}
\pdfglyphtounicode{Benarmenian}{0532}
\pdfglyphtounicode{Beta}{0392}
\pdfglyphtounicode{Bhook}{0181}
\pdfglyphtounicode{Blinebelow}{1E06}
\pdfglyphtounicode{Bmonospace}{FF22}
\pdfglyphtounicode{Brevesmall}{02D8}
\pdfglyphtounicode{Bsmall}{0062}
\pdfglyphtounicode{Btopbar}{0182}
\pdfglyphtounicode{C}{0043}
\pdfglyphtounicode{Caarmenian}{053E}
\pdfglyphtounicode{Cacute}{0106}
\pdfglyphtounicode{Caron}{02C7}
\pdfglyphtounicode{Caronsmall}{02C7}
\pdfglyphtounicode{Ccaron}{010C}
\pdfglyphtounicode{Ccedilla}{00C7}
\pdfglyphtounicode{Ccedillaacute}{1E08}
\pdfglyphtounicode{Ccedillasmall}{00E7}
\pdfglyphtounicode{Ccircle}{24B8}
\pdfglyphtounicode{Ccircumflex}{0108}
\pdfglyphtounicode{Cdot}{010A}
\pdfglyphtounicode{Cdotaccent}{010A}
\pdfglyphtounicode{Cedillasmall}{00B8}
\pdfglyphtounicode{Chaarmenian}{0549}
\pdfglyphtounicode{Cheabkhasiancyrillic}{04BC}
\pdfglyphtounicode{Checyrillic}{0427}
\pdfglyphtounicode{Chedescenderabkhasiancyrillic}{04BE}
\pdfglyphtounicode{Chedescendercyrillic}{04B6}
\pdfglyphtounicode{Chedieresiscyrillic}{04F4}
\pdfglyphtounicode{Cheharmenian}{0543}
\pdfglyphtounicode{Chekhakassiancyrillic}{04CB}
\pdfglyphtounicode{Cheverticalstrokecyrillic}{04B8}
\pdfglyphtounicode{Chi}{03A7}
\pdfglyphtounicode{Chook}{0187}
\pdfglyphtounicode{Circumflexsmall}{02C6}
\pdfglyphtounicode{Cmonospace}{FF23}
\pdfglyphtounicode{Coarmenian}{0551}
\pdfglyphtounicode{Csmall}{0063}
\pdfglyphtounicode{D}{0044}
\pdfglyphtounicode{DZ}{01F1}
\pdfglyphtounicode{DZcaron}{01C4}
\pdfglyphtounicode{Daarmenian}{0534}
\pdfglyphtounicode{Dafrican}{0189}
\pdfglyphtounicode{Dbar}{0110}
\pdfglyphtounicode{Dcaron}{010E}
\pdfglyphtounicode{Dcedilla}{1E10}
\pdfglyphtounicode{Dcircle}{24B9}
\pdfglyphtounicode{Dcircumflexbelow}{1E12}
\pdfglyphtounicode{Dcroat}{0110}
\pdfglyphtounicode{Ddotaccent}{1E0A}
\pdfglyphtounicode{Ddotbelow}{1E0C}
\pdfglyphtounicode{Decyrillic}{0414}
\pdfglyphtounicode{Deicoptic}{03EE}
\pdfglyphtounicode{Delta}{2206}
\pdfglyphtounicode{Deltagreek}{0394}
\pdfglyphtounicode{Dhook}{018A}
\pdfglyphtounicode{Dieresis}{00A8}
\pdfglyphtounicode{DieresisAcute}{F6CC}
\pdfglyphtounicode{DieresisGrave}{F6CD}
\pdfglyphtounicode{Dieresissmall}{00A8}
\pdfglyphtounicode{Digamma}{D875 DFCB}
\pdfglyphtounicode{Digammagreek}{03DC}
\pdfglyphtounicode{Djecyrillic}{0402}
\pdfglyphtounicode{Dlinebelow}{1E0E}
\pdfglyphtounicode{Dmonospace}{FF24}
\pdfglyphtounicode{Dotaccentsmall}{02D9}
\pdfglyphtounicode{Dslash}{0110}
\pdfglyphtounicode{Dsmall}{0064}
\pdfglyphtounicode{Dtopbar}{018B}
\pdfglyphtounicode{Dz}{01F2}
\pdfglyphtounicode{Dzcaron}{01C5}
\pdfglyphtounicode{Dzeabkhasiancyrillic}{04E0}
\pdfglyphtounicode{Dzecyrillic}{0405}
\pdfglyphtounicode{Dzhecyrillic}{040F}
\pdfglyphtounicode{E}{0045}
\pdfglyphtounicode{Eacute}{00C9}
\pdfglyphtounicode{Eacutesmall}{00E9}
\pdfglyphtounicode{Ebreve}{0114}
\pdfglyphtounicode{Ecaron}{011A}
\pdfglyphtounicode{Ecedillabreve}{1E1C}
\pdfglyphtounicode{Echarmenian}{0535}
\pdfglyphtounicode{Ecircle}{24BA}
\pdfglyphtounicode{Ecircumflex}{00CA}
\pdfglyphtounicode{Ecircumflexacute}{1EBE}
\pdfglyphtounicode{Ecircumflexbelow}{1E18}
\pdfglyphtounicode{Ecircumflexdotbelow}{1EC6}
\pdfglyphtounicode{Ecircumflexgrave}{1EC0}
\pdfglyphtounicode{Ecircumflexhookabove}{1EC2}
\pdfglyphtounicode{Ecircumflexsmall}{00EA}
\pdfglyphtounicode{Ecircumflextilde}{1EC4}
\pdfglyphtounicode{Ecyrillic}{0404}
\pdfglyphtounicode{Edblgrave}{0204}
\pdfglyphtounicode{Edieresis}{00CB}
\pdfglyphtounicode{Edieresissmall}{00EB}
\pdfglyphtounicode{Edot}{0116}
\pdfglyphtounicode{Edotaccent}{0116}
\pdfglyphtounicode{Edotbelow}{1EB8}
\pdfglyphtounicode{Efcyrillic}{0424}
\pdfglyphtounicode{Egrave}{00C8}
\pdfglyphtounicode{Egravesmall}{00E8}
\pdfglyphtounicode{Eharmenian}{0537}
\pdfglyphtounicode{Ehookabove}{1EBA}
\pdfglyphtounicode{Eightroman}{2167}
\pdfglyphtounicode{Einvertedbreve}{0206}
\pdfglyphtounicode{Eiotifiedcyrillic}{0464}
\pdfglyphtounicode{Elcyrillic}{041B}
\pdfglyphtounicode{Elevenroman}{216A}
\pdfglyphtounicode{Emacron}{0112}
\pdfglyphtounicode{Emacronacute}{1E16}
\pdfglyphtounicode{Emacrongrave}{1E14}
\pdfglyphtounicode{Emcyrillic}{041C}
\pdfglyphtounicode{Emonospace}{FF25}
\pdfglyphtounicode{Encyrillic}{041D}
\pdfglyphtounicode{Endescendercyrillic}{04A2}
\pdfglyphtounicode{Eng}{014A}
\pdfglyphtounicode{Enghecyrillic}{04A4}
\pdfglyphtounicode{Enhookcyrillic}{04C7}
\pdfglyphtounicode{Eogonek}{0118}
\pdfglyphtounicode{Eopen}{0190}
\pdfglyphtounicode{Epsilon}{0395}
\pdfglyphtounicode{Epsilontonos}{0388}
\pdfglyphtounicode{Ercyrillic}{0420}
\pdfglyphtounicode{Ereversed}{018E}
\pdfglyphtounicode{Ereversedcyrillic}{042D}
\pdfglyphtounicode{Escyrillic}{0421}
\pdfglyphtounicode{Esdescendercyrillic}{04AA}
\pdfglyphtounicode{Esh}{01A9}
\pdfglyphtounicode{Esmall}{0065}
\pdfglyphtounicode{Eta}{0397}
\pdfglyphtounicode{Etarmenian}{0538}
\pdfglyphtounicode{Etatonos}{0389}
\pdfglyphtounicode{Eth}{00D0}
\pdfglyphtounicode{Ethsmall}{00F0}
\pdfglyphtounicode{Etilde}{1EBC}
\pdfglyphtounicode{Etildebelow}{1E1A}
\pdfglyphtounicode{Euro}{20AC}
\pdfglyphtounicode{Ezh}{01B7}
\pdfglyphtounicode{Ezhcaron}{01EE}
\pdfglyphtounicode{Ezhreversed}{01B8}
\pdfglyphtounicode{F}{0046}
\pdfglyphtounicode{FFIsmall}{0066 0066 0069}
\pdfglyphtounicode{FFLsmall}{0066 0066 006C}
\pdfglyphtounicode{FFsmall}{0066 0066}
\pdfglyphtounicode{FIsmall}{0066 0069}
\pdfglyphtounicode{FLsmall}{0066 006C}
\pdfglyphtounicode{Fcircle}{24BB}
\pdfglyphtounicode{Fdotaccent}{1E1E}
\pdfglyphtounicode{Feharmenian}{0556}
\pdfglyphtounicode{Feicoptic}{03E4}
\pdfglyphtounicode{Fhook}{0191}
\pdfglyphtounicode{Finv}{2132}
\pdfglyphtounicode{Fitacyrillic}{0472}
\pdfglyphtounicode{Fiveroman}{2164}
\pdfglyphtounicode{Fmonospace}{FF26}
\pdfglyphtounicode{Fourroman}{2163}
\pdfglyphtounicode{Fsmall}{0066}
\pdfglyphtounicode{G}{0047}
\pdfglyphtounicode{GBsquare}{3387}
\pdfglyphtounicode{Gacute}{01F4}
\pdfglyphtounicode{Gamma}{0393}
\pdfglyphtounicode{Gammaafrican}{0194}
\pdfglyphtounicode{Gangiacoptic}{03EA}
\pdfglyphtounicode{Gbreve}{011E}
\pdfglyphtounicode{Gcaron}{01E6}
\pdfglyphtounicode{Gcedilla}{0122}
\pdfglyphtounicode{Gcircle}{24BC}
\pdfglyphtounicode{Gcircumflex}{011C}
\pdfglyphtounicode{Gcommaaccent}{0122}
\pdfglyphtounicode{Gdot}{0120}
\pdfglyphtounicode{Gdotaccent}{0120}
\pdfglyphtounicode{Gecyrillic}{0413}
\pdfglyphtounicode{Germandbls}{0053 0053}
\pdfglyphtounicode{Germandblssmall}{0073 0073}
\pdfglyphtounicode{Ghadarmenian}{0542}
\pdfglyphtounicode{Ghemiddlehookcyrillic}{0494}
\pdfglyphtounicode{Ghestrokecyrillic}{0492}
\pdfglyphtounicode{Gheupturncyrillic}{0490}
\pdfglyphtounicode{Ghook}{0193}
\pdfglyphtounicode{Gimarmenian}{0533}
\pdfglyphtounicode{Gjecyrillic}{0403}
\pdfglyphtounicode{Gmacron}{1E20}
\pdfglyphtounicode{Gmir}{2141}
\pdfglyphtounicode{Gmonospace}{FF27}
\pdfglyphtounicode{Grave}{0060}
\pdfglyphtounicode{Gravesmall}{0060}
\pdfglyphtounicode{Gsmall}{0067}
\pdfglyphtounicode{Gsmallhook}{029B}
\pdfglyphtounicode{Gstroke}{01E4}
\pdfglyphtounicode{H}{0048}
\pdfglyphtounicode{H18533}{25CF}
\pdfglyphtounicode{H18543}{25AA}
\pdfglyphtounicode{H18551}{25AB}
\pdfglyphtounicode{H22073}{25A1}
\pdfglyphtounicode{HPsquare}{33CB}
\pdfglyphtounicode{Haabkhasiancyrillic}{04A8}
\pdfglyphtounicode{Hadescendercyrillic}{04B2}
\pdfglyphtounicode{Hardsigncyrillic}{042A}
\pdfglyphtounicode{Hbar}{0126}
\pdfglyphtounicode{Hbrevebelow}{1E2A}
\pdfglyphtounicode{Hcedilla}{1E28}
\pdfglyphtounicode{Hcircle}{24BD}
\pdfglyphtounicode{Hcircumflex}{0124}
\pdfglyphtounicode{Hdieresis}{1E26}
\pdfglyphtounicode{Hdotaccent}{1E22}
\pdfglyphtounicode{Hdotbelow}{1E24}
\pdfglyphtounicode{Hmonospace}{FF28}
\pdfglyphtounicode{Hoarmenian}{0540}
\pdfglyphtounicode{Horicoptic}{03E8}
\pdfglyphtounicode{Hsmall}{0068}
\pdfglyphtounicode{Hungarumlaut}{02DD}
\pdfglyphtounicode{Hungarumlautsmall}{02DD}
\pdfglyphtounicode{Hzsquare}{3390}
\pdfglyphtounicode{I}{0049}
\pdfglyphtounicode{IAcyrillic}{042F}
\pdfglyphtounicode{IJ}{0132}
\pdfglyphtounicode{IUcyrillic}{042E}
\pdfglyphtounicode{Iacute}{00CD}
\pdfglyphtounicode{Iacutesmall}{00ED}
\pdfglyphtounicode{Ibreve}{012C}
\pdfglyphtounicode{Icaron}{01CF}
\pdfglyphtounicode{Icircle}{24BE}
\pdfglyphtounicode{Icircumflex}{00CE}
\pdfglyphtounicode{Icircumflexsmall}{00EE}
\pdfglyphtounicode{Icyrillic}{0406}
\pdfglyphtounicode{Idblgrave}{0208}
\pdfglyphtounicode{Idieresis}{00CF}
\pdfglyphtounicode{Idieresisacute}{1E2E}
\pdfglyphtounicode{Idieresiscyrillic}{04E4}
\pdfglyphtounicode{Idieresissmall}{00EF}
\pdfglyphtounicode{Idot}{0130}
\pdfglyphtounicode{Idotaccent}{0130}
\pdfglyphtounicode{Idotbelow}{1ECA}
\pdfglyphtounicode{Iebrevecyrillic}{04D6}
\pdfglyphtounicode{Iecyrillic}{0415}
\pdfglyphtounicode{Ifractur}{2111}
\pdfglyphtounicode{Ifraktur}{2111}
\pdfglyphtounicode{Igrave}{00CC}
\pdfglyphtounicode{Igravesmall}{00EC}
\pdfglyphtounicode{Ihookabove}{1EC8}
\pdfglyphtounicode{Iicyrillic}{0418}
\pdfglyphtounicode{Iinvertedbreve}{020A}
\pdfglyphtounicode{Iishortcyrillic}{0419}
\pdfglyphtounicode{Imacron}{012A}
\pdfglyphtounicode{Imacroncyrillic}{04E2}
\pdfglyphtounicode{Imonospace}{FF29}
\pdfglyphtounicode{Iniarmenian}{053B}
\pdfglyphtounicode{Iocyrillic}{0401}
\pdfglyphtounicode{Iogonek}{012E}
\pdfglyphtounicode{Iota}{0399}
\pdfglyphtounicode{Iotaafrican}{0196}
\pdfglyphtounicode{Iotadieresis}{03AA}
\pdfglyphtounicode{Iotatonos}{038A}
\pdfglyphtounicode{Ismall}{0069}
\pdfglyphtounicode{Istroke}{0197}
\pdfglyphtounicode{Itilde}{0128}
\pdfglyphtounicode{Itildebelow}{1E2C}
\pdfglyphtounicode{Izhitsacyrillic}{0474}
\pdfglyphtounicode{Izhitsadblgravecyrillic}{0476}
\pdfglyphtounicode{J}{004A}
\pdfglyphtounicode{Jaarmenian}{0541}
\pdfglyphtounicode{Jcircle}{24BF}
\pdfglyphtounicode{Jcircumflex}{0134}
\pdfglyphtounicode{Jecyrillic}{0408}
\pdfglyphtounicode{Jheharmenian}{054B}
\pdfglyphtounicode{Jmonospace}{FF2A}
\pdfglyphtounicode{Jsmall}{006A}
\pdfglyphtounicode{K}{004B}
\pdfglyphtounicode{KBsquare}{3385}
\pdfglyphtounicode{KKsquare}{33CD}
\pdfglyphtounicode{Kabashkircyrillic}{04A0}
\pdfglyphtounicode{Kacute}{1E30}
\pdfglyphtounicode{Kacyrillic}{041A}
\pdfglyphtounicode{Kadescendercyrillic}{049A}
\pdfglyphtounicode{Kahookcyrillic}{04C3}
\pdfglyphtounicode{Kappa}{039A}
\pdfglyphtounicode{Kastrokecyrillic}{049E}
\pdfglyphtounicode{Kaverticalstrokecyrillic}{049C}
\pdfglyphtounicode{Kcaron}{01E8}
\pdfglyphtounicode{Kcedilla}{0136}
\pdfglyphtounicode{Kcircle}{24C0}
\pdfglyphtounicode{Kcommaaccent}{0136}
\pdfglyphtounicode{Kdotbelow}{1E32}
\pdfglyphtounicode{Keharmenian}{0554}
\pdfglyphtounicode{Kenarmenian}{053F}
\pdfglyphtounicode{Khacyrillic}{0425}
\pdfglyphtounicode{Kheicoptic}{03E6}
\pdfglyphtounicode{Khook}{0198}
\pdfglyphtounicode{Kjecyrillic}{040C}
\pdfglyphtounicode{Klinebelow}{1E34}
\pdfglyphtounicode{Kmonospace}{FF2B}
\pdfglyphtounicode{Koppacyrillic}{0480}
\pdfglyphtounicode{Koppagreek}{03DE}
\pdfglyphtounicode{Ksicyrillic}{046E}
\pdfglyphtounicode{Ksmall}{006B}
\pdfglyphtounicode{L}{004C}
\pdfglyphtounicode{LJ}{01C7}
\pdfglyphtounicode{LL}{004C 004C}
\pdfglyphtounicode{Lacute}{0139}
\pdfglyphtounicode{Lambda}{039B}
\pdfglyphtounicode{Lcaron}{013D}
\pdfglyphtounicode{Lcedilla}{013B}
\pdfglyphtounicode{Lcircle}{24C1}
\pdfglyphtounicode{Lcircumflexbelow}{1E3C}
\pdfglyphtounicode{Lcommaaccent}{013B}
\pdfglyphtounicode{Ldot}{013F}
\pdfglyphtounicode{Ldotaccent}{013F}
\pdfglyphtounicode{Ldotbelow}{1E36}
\pdfglyphtounicode{Ldotbelowmacron}{1E38}
\pdfglyphtounicode{Liwnarmenian}{053C}
\pdfglyphtounicode{Lj}{01C8}
\pdfglyphtounicode{Ljecyrillic}{0409}
\pdfglyphtounicode{Llinebelow}{1E3A}
\pdfglyphtounicode{Lmonospace}{FF2C}
\pdfglyphtounicode{Lslash}{0141}
\pdfglyphtounicode{Lslashsmall}{0142}
\pdfglyphtounicode{Lsmall}{006C}
\pdfglyphtounicode{M}{004D}
\pdfglyphtounicode{MBsquare}{3386}
\pdfglyphtounicode{Macron}{00AF}
\pdfglyphtounicode{Macronsmall}{00AF}
\pdfglyphtounicode{Macute}{1E3E}
\pdfglyphtounicode{Mcircle}{24C2}
\pdfglyphtounicode{Mdotaccent}{1E40}
\pdfglyphtounicode{Mdotbelow}{1E42}
\pdfglyphtounicode{Menarmenian}{0544}
\pdfglyphtounicode{Mmonospace}{FF2D}
\pdfglyphtounicode{Msmall}{006D}
\pdfglyphtounicode{Mturned}{019C}
\pdfglyphtounicode{Mu}{039C}
\pdfglyphtounicode{N}{004E}
\pdfglyphtounicode{NJ}{01CA}
\pdfglyphtounicode{Nacute}{0143}
\pdfglyphtounicode{Ncaron}{0147}
\pdfglyphtounicode{Ncedilla}{0145}
\pdfglyphtounicode{Ncircle}{24C3}
\pdfglyphtounicode{Ncircumflexbelow}{1E4A}
\pdfglyphtounicode{Ncommaaccent}{0145}
\pdfglyphtounicode{Ndotaccent}{1E44}
\pdfglyphtounicode{Ndotbelow}{1E46}
\pdfglyphtounicode{Ng}{014A}
\pdfglyphtounicode{Nhookleft}{019D}
\pdfglyphtounicode{Nineroman}{2168}
\pdfglyphtounicode{Nj}{01CB}
\pdfglyphtounicode{Njecyrillic}{040A}
\pdfglyphtounicode{Nlinebelow}{1E48}
\pdfglyphtounicode{Nmonospace}{FF2E}
\pdfglyphtounicode{Nowarmenian}{0546}
\pdfglyphtounicode{Nsmall}{006E}
\pdfglyphtounicode{Ntilde}{00D1}
\pdfglyphtounicode{Ntildesmall}{00F1}
\pdfglyphtounicode{Nu}{039D}
\pdfglyphtounicode{O}{004F}
\pdfglyphtounicode{OE}{0152}
\pdfglyphtounicode{OEsmall}{0153}
\pdfglyphtounicode{Oacute}{00D3}
\pdfglyphtounicode{Oacutesmall}{00F3}
\pdfglyphtounicode{Obarredcyrillic}{04E8}
\pdfglyphtounicode{Obarreddieresiscyrillic}{04EA}
\pdfglyphtounicode{Obreve}{014E}
\pdfglyphtounicode{Ocaron}{01D1}
\pdfglyphtounicode{Ocenteredtilde}{019F}
\pdfglyphtounicode{Ocircle}{24C4}
\pdfglyphtounicode{Ocircumflex}{00D4}
\pdfglyphtounicode{Ocircumflexacute}{1ED0}
\pdfglyphtounicode{Ocircumflexdotbelow}{1ED8}
\pdfglyphtounicode{Ocircumflexgrave}{1ED2}
\pdfglyphtounicode{Ocircumflexhookabove}{1ED4}
\pdfglyphtounicode{Ocircumflexsmall}{00F4}
\pdfglyphtounicode{Ocircumflextilde}{1ED6}
\pdfglyphtounicode{Ocyrillic}{041E}
\pdfglyphtounicode{Odblacute}{0150}
\pdfglyphtounicode{Odblgrave}{020C}
\pdfglyphtounicode{Odieresis}{00D6}
\pdfglyphtounicode{Odieresiscyrillic}{04E6}
\pdfglyphtounicode{Odieresissmall}{00F6}
\pdfglyphtounicode{Odotbelow}{1ECC}
\pdfglyphtounicode{Ogoneksmall}{02DB}
\pdfglyphtounicode{Ograve}{00D2}
\pdfglyphtounicode{Ogravesmall}{00F2}
\pdfglyphtounicode{Oharmenian}{0555}
\pdfglyphtounicode{Ohm}{2126}
\pdfglyphtounicode{Ohookabove}{1ECE}
\pdfglyphtounicode{Ohorn}{01A0}
\pdfglyphtounicode{Ohornacute}{1EDA}
\pdfglyphtounicode{Ohorndotbelow}{1EE2}
\pdfglyphtounicode{Ohorngrave}{1EDC}
\pdfglyphtounicode{Ohornhookabove}{1EDE}
\pdfglyphtounicode{Ohorntilde}{1EE0}
\pdfglyphtounicode{Ohungarumlaut}{0150}
\pdfglyphtounicode{Oi}{01A2}
\pdfglyphtounicode{Oinvertedbreve}{020E}
\pdfglyphtounicode{Omacron}{014C}
\pdfglyphtounicode{Omacronacute}{1E52}
\pdfglyphtounicode{Omacrongrave}{1E50}
\pdfglyphtounicode{Omega}{2126}
\pdfglyphtounicode{Omegacyrillic}{0460}
\pdfglyphtounicode{Omegagreek}{03A9}
\pdfglyphtounicode{Omegainv}{2127}
\pdfglyphtounicode{Omegaroundcyrillic}{047A}
\pdfglyphtounicode{Omegatitlocyrillic}{047C}
\pdfglyphtounicode{Omegatonos}{038F}
\pdfglyphtounicode{Omicron}{039F}
\pdfglyphtounicode{Omicrontonos}{038C}
\pdfglyphtounicode{Omonospace}{FF2F}
\pdfglyphtounicode{Oneroman}{2160}
\pdfglyphtounicode{Oogonek}{01EA}
\pdfglyphtounicode{Oogonekmacron}{01EC}
\pdfglyphtounicode{Oopen}{0186}
\pdfglyphtounicode{Oslash}{00D8}
\pdfglyphtounicode{Oslashacute}{01FE}
\pdfglyphtounicode{Oslashsmall}{00F8}
\pdfglyphtounicode{Osmall}{006F}
\pdfglyphtounicode{Ostrokeacute}{01FE}
\pdfglyphtounicode{Otcyrillic}{047E}
\pdfglyphtounicode{Otilde}{00D5}
\pdfglyphtounicode{Otildeacute}{1E4C}
\pdfglyphtounicode{Otildedieresis}{1E4E}
\pdfglyphtounicode{Otildesmall}{00F5}
\pdfglyphtounicode{P}{0050}
\pdfglyphtounicode{Pacute}{1E54}
\pdfglyphtounicode{Pcircle}{24C5}
\pdfglyphtounicode{Pdotaccent}{1E56}
\pdfglyphtounicode{Pecyrillic}{041F}
\pdfglyphtounicode{Peharmenian}{054A}
\pdfglyphtounicode{Pemiddlehookcyrillic}{04A6}
\pdfglyphtounicode{Phi}{03A6}
\pdfglyphtounicode{Phook}{01A4}
\pdfglyphtounicode{Pi}{03A0}
\pdfglyphtounicode{Piwrarmenian}{0553}
\pdfglyphtounicode{Pmonospace}{FF30}
\pdfglyphtounicode{Psi}{03A8}
\pdfglyphtounicode{Psicyrillic}{0470}
\pdfglyphtounicode{Psmall}{0070}
\pdfglyphtounicode{Q}{0051}
\pdfglyphtounicode{Qcircle}{24C6}
\pdfglyphtounicode{Qmonospace}{FF31}
\pdfglyphtounicode{Qsmall}{0071}
\pdfglyphtounicode{R}{0052}
\pdfglyphtounicode{Raarmenian}{054C}
\pdfglyphtounicode{Racute}{0154}
\pdfglyphtounicode{Rcaron}{0158}
\pdfglyphtounicode{Rcedilla}{0156}
\pdfglyphtounicode{Rcircle}{24C7}
\pdfglyphtounicode{Rcommaaccent}{0156}
\pdfglyphtounicode{Rdblgrave}{0210}
\pdfglyphtounicode{Rdotaccent}{1E58}
\pdfglyphtounicode{Rdotbelow}{1E5A}
\pdfglyphtounicode{Rdotbelowmacron}{1E5C}
\pdfglyphtounicode{Reharmenian}{0550}
\pdfglyphtounicode{Rfractur}{211C}
\pdfglyphtounicode{Rfraktur}{211C}
\pdfglyphtounicode{Rho}{03A1}
\pdfglyphtounicode{Ringsmall}{02DA}
\pdfglyphtounicode{Rinvertedbreve}{0212}
\pdfglyphtounicode{Rlinebelow}{1E5E}
\pdfglyphtounicode{Rmonospace}{FF32}
\pdfglyphtounicode{Rsmall}{0072}
\pdfglyphtounicode{Rsmallinverted}{0281}
\pdfglyphtounicode{Rsmallinvertedsuperior}{02B6}
\pdfglyphtounicode{S}{0053}
\pdfglyphtounicode{SF010000}{250C}
\pdfglyphtounicode{SF020000}{2514}
\pdfglyphtounicode{SF030000}{2510}
\pdfglyphtounicode{SF040000}{2518}
\pdfglyphtounicode{SF050000}{253C}
\pdfglyphtounicode{SF060000}{252C}
\pdfglyphtounicode{SF070000}{2534}
\pdfglyphtounicode{SF080000}{251C}
\pdfglyphtounicode{SF090000}{2524}
\pdfglyphtounicode{SF100000}{2500}
\pdfglyphtounicode{SF110000}{2502}
\pdfglyphtounicode{SF190000}{2561}
\pdfglyphtounicode{SF200000}{2562}
\pdfglyphtounicode{SF210000}{2556}
\pdfglyphtounicode{SF220000}{2555}
\pdfglyphtounicode{SF230000}{2563}
\pdfglyphtounicode{SF240000}{2551}
\pdfglyphtounicode{SF250000}{2557}
\pdfglyphtounicode{SF260000}{255D}
\pdfglyphtounicode{SF270000}{255C}
\pdfglyphtounicode{SF280000}{255B}
\pdfglyphtounicode{SF360000}{255E}
\pdfglyphtounicode{SF370000}{255F}
\pdfglyphtounicode{SF380000}{255A}
\pdfglyphtounicode{SF390000}{2554}
\pdfglyphtounicode{SF400000}{2569}
\pdfglyphtounicode{SF410000}{2566}
\pdfglyphtounicode{SF420000}{2560}
\pdfglyphtounicode{SF430000}{2550}
\pdfglyphtounicode{SF440000}{256C}
\pdfglyphtounicode{SF450000}{2567}
\pdfglyphtounicode{SF460000}{2568}
\pdfglyphtounicode{SF470000}{2564}
\pdfglyphtounicode{SF480000}{2565}
\pdfglyphtounicode{SF490000}{2559}
\pdfglyphtounicode{SF500000}{2558}
\pdfglyphtounicode{SF510000}{2552}
\pdfglyphtounicode{SF520000}{2553}
\pdfglyphtounicode{SF530000}{256B}
\pdfglyphtounicode{SF540000}{256A}
\pdfglyphtounicode{SS}{0053 0053}
\pdfglyphtounicode{SSsmall}{0073 0073}
\pdfglyphtounicode{Sacute}{015A}
\pdfglyphtounicode{Sacutedotaccent}{1E64}
\pdfglyphtounicode{Sampigreek}{03E0}
\pdfglyphtounicode{Scaron}{0160}
\pdfglyphtounicode{Scarondotaccent}{1E66}
\pdfglyphtounicode{Scaronsmall}{0161}
\pdfglyphtounicode{Scedilla}{015E}
\pdfglyphtounicode{Schwa}{018F}
\pdfglyphtounicode{Schwacyrillic}{04D8}
\pdfglyphtounicode{Schwadieresiscyrillic}{04DA}
\pdfglyphtounicode{Scircle}{24C8}
\pdfglyphtounicode{Scircumflex}{015C}
\pdfglyphtounicode{Scommaaccent}{0218}
\pdfglyphtounicode{Sdotaccent}{1E60}
\pdfglyphtounicode{Sdotbelow}{1E62}
\pdfglyphtounicode{Sdotbelowdotaccent}{1E68}
\pdfglyphtounicode{Seharmenian}{054D}
\pdfglyphtounicode{Sevenroman}{2166}
\pdfglyphtounicode{Shaarmenian}{0547}
\pdfglyphtounicode{Shacyrillic}{0428}
\pdfglyphtounicode{Shchacyrillic}{0429}
\pdfglyphtounicode{Sheicoptic}{03E2}
\pdfglyphtounicode{Shhacyrillic}{04BA}
\pdfglyphtounicode{Shimacoptic}{03EC}
\pdfglyphtounicode{Sigma}{03A3}
\pdfglyphtounicode{Sixroman}{2165}
\pdfglyphtounicode{Smonospace}{FF33}
\pdfglyphtounicode{Softsigncyrillic}{042C}
\pdfglyphtounicode{Ssmall}{0073}
\pdfglyphtounicode{Stigmagreek}{03DA}
\pdfglyphtounicode{T}{0054}
\pdfglyphtounicode{Tau}{03A4}
\pdfglyphtounicode{Tbar}{0166}
\pdfglyphtounicode{Tcaron}{0164}
\pdfglyphtounicode{Tcedilla}{0162}
\pdfglyphtounicode{Tcircle}{24C9}
\pdfglyphtounicode{Tcircumflexbelow}{1E70}
\pdfglyphtounicode{Tcommaaccent}{0162}
\pdfglyphtounicode{Tdotaccent}{1E6A}
\pdfglyphtounicode{Tdotbelow}{1E6C}
\pdfglyphtounicode{Tecyrillic}{0422}
\pdfglyphtounicode{Tedescendercyrillic}{04AC}
\pdfglyphtounicode{Tenroman}{2169}
\pdfglyphtounicode{Tetsecyrillic}{04B4}
\pdfglyphtounicode{Theta}{0398}
\pdfglyphtounicode{Thook}{01AC}
\pdfglyphtounicode{Thorn}{00DE}
\pdfglyphtounicode{Thornsmall}{00FE}
\pdfglyphtounicode{Threeroman}{2162}
\pdfglyphtounicode{Tildesmall}{02DC}
\pdfglyphtounicode{Tiwnarmenian}{054F}
\pdfglyphtounicode{Tlinebelow}{1E6E}
\pdfglyphtounicode{Tmonospace}{FF34}
\pdfglyphtounicode{Toarmenian}{0539}
\pdfglyphtounicode{Tonefive}{01BC}
\pdfglyphtounicode{Tonesix}{0184}
\pdfglyphtounicode{Tonetwo}{01A7}
\pdfglyphtounicode{Tretroflexhook}{01AE}
\pdfglyphtounicode{Tsecyrillic}{0426}
\pdfglyphtounicode{Tshecyrillic}{040B}
\pdfglyphtounicode{Tsmall}{0074}
\pdfglyphtounicode{Twelveroman}{216B}
\pdfglyphtounicode{Tworoman}{2161}
\pdfglyphtounicode{U}{0055}
\pdfglyphtounicode{Uacute}{00DA}
\pdfglyphtounicode{Uacutesmall}{00FA}
\pdfglyphtounicode{Ubreve}{016C}
\pdfglyphtounicode{Ucaron}{01D3}
\pdfglyphtounicode{Ucircle}{24CA}
\pdfglyphtounicode{Ucircumflex}{00DB}
\pdfglyphtounicode{Ucircumflexbelow}{1E76}
\pdfglyphtounicode{Ucircumflexsmall}{00FB}
\pdfglyphtounicode{Ucyrillic}{0423}
\pdfglyphtounicode{Udblacute}{0170}
\pdfglyphtounicode{Udblgrave}{0214}
\pdfglyphtounicode{Udieresis}{00DC}
\pdfglyphtounicode{Udieresisacute}{01D7}
\pdfglyphtounicode{Udieresisbelow}{1E72}
\pdfglyphtounicode{Udieresiscaron}{01D9}
\pdfglyphtounicode{Udieresiscyrillic}{04F0}
\pdfglyphtounicode{Udieresisgrave}{01DB}
\pdfglyphtounicode{Udieresismacron}{01D5}
\pdfglyphtounicode{Udieresissmall}{00FC}
\pdfglyphtounicode{Udotbelow}{1EE4}
\pdfglyphtounicode{Ugrave}{00D9}
\pdfglyphtounicode{Ugravesmall}{00F9}
\pdfglyphtounicode{Uhookabove}{1EE6}
\pdfglyphtounicode{Uhorn}{01AF}
\pdfglyphtounicode{Uhornacute}{1EE8}
\pdfglyphtounicode{Uhorndotbelow}{1EF0}
\pdfglyphtounicode{Uhorngrave}{1EEA}
\pdfglyphtounicode{Uhornhookabove}{1EEC}
\pdfglyphtounicode{Uhorntilde}{1EEE}
\pdfglyphtounicode{Uhungarumlaut}{0170}
\pdfglyphtounicode{Uhungarumlautcyrillic}{04F2}
\pdfglyphtounicode{Uinvertedbreve}{0216}
\pdfglyphtounicode{Ukcyrillic}{0478}
\pdfglyphtounicode{Umacron}{016A}
\pdfglyphtounicode{Umacroncyrillic}{04EE}
\pdfglyphtounicode{Umacrondieresis}{1E7A}
\pdfglyphtounicode{Umonospace}{FF35}
\pdfglyphtounicode{Uogonek}{0172}
\pdfglyphtounicode{Upsilon}{03A5}
\pdfglyphtounicode{Upsilon1}{03D2}
\pdfglyphtounicode{Upsilonacutehooksymbolgreek}{03D3}
\pdfglyphtounicode{Upsilonafrican}{01B1}
\pdfglyphtounicode{Upsilondieresis}{03AB}
\pdfglyphtounicode{Upsilondieresishooksymbolgreek}{03D4}
\pdfglyphtounicode{Upsilonhooksymbol}{03D2}
\pdfglyphtounicode{Upsilontonos}{038E}
\pdfglyphtounicode{Uring}{016E}
\pdfglyphtounicode{Ushortcyrillic}{040E}
\pdfglyphtounicode{Usmall}{0075}
\pdfglyphtounicode{Ustraightcyrillic}{04AE}
\pdfglyphtounicode{Ustraightstrokecyrillic}{04B0}
\pdfglyphtounicode{Utilde}{0168}
\pdfglyphtounicode{Utildeacute}{1E78}
\pdfglyphtounicode{Utildebelow}{1E74}
\pdfglyphtounicode{V}{0056}
\pdfglyphtounicode{Vcircle}{24CB}
\pdfglyphtounicode{Vdotbelow}{1E7E}
\pdfglyphtounicode{Vecyrillic}{0412}
\pdfglyphtounicode{Vewarmenian}{054E}
\pdfglyphtounicode{Vhook}{01B2}
\pdfglyphtounicode{Vmonospace}{FF36}
\pdfglyphtounicode{Voarmenian}{0548}
\pdfglyphtounicode{Vsmall}{0076}
\pdfglyphtounicode{Vtilde}{1E7C}
\pdfglyphtounicode{W}{0057}
\pdfglyphtounicode{Wacute}{1E82}
\pdfglyphtounicode{Wcircle}{24CC}
\pdfglyphtounicode{Wcircumflex}{0174}
\pdfglyphtounicode{Wdieresis}{1E84}
\pdfglyphtounicode{Wdotaccent}{1E86}
\pdfglyphtounicode{Wdotbelow}{1E88}
\pdfglyphtounicode{Wgrave}{1E80}
\pdfglyphtounicode{Wmonospace}{FF37}
\pdfglyphtounicode{Wsmall}{0077}
\pdfglyphtounicode{X}{0058}
\pdfglyphtounicode{Xcircle}{24CD}
\pdfglyphtounicode{Xdieresis}{1E8C}
\pdfglyphtounicode{Xdotaccent}{1E8A}
\pdfglyphtounicode{Xeharmenian}{053D}
\pdfglyphtounicode{Xi}{039E}
\pdfglyphtounicode{Xmonospace}{FF38}
\pdfglyphtounicode{Xsmall}{0078}
\pdfglyphtounicode{Y}{0059}
\pdfglyphtounicode{Yacute}{00DD}
\pdfglyphtounicode{Yacutesmall}{00FD}
\pdfglyphtounicode{Yatcyrillic}{0462}
\pdfglyphtounicode{Ycircle}{24CE}
\pdfglyphtounicode{Ycircumflex}{0176}
\pdfglyphtounicode{Ydieresis}{0178}
\pdfglyphtounicode{Ydieresissmall}{00FF}
\pdfglyphtounicode{Ydotaccent}{1E8E}
\pdfglyphtounicode{Ydotbelow}{1EF4}
\pdfglyphtounicode{Yen}{00A5}
\pdfglyphtounicode{Yericyrillic}{042B}
\pdfglyphtounicode{Yerudieresiscyrillic}{04F8}
\pdfglyphtounicode{Ygrave}{1EF2}
\pdfglyphtounicode{Yhook}{01B3}
\pdfglyphtounicode{Yhookabove}{1EF6}
\pdfglyphtounicode{Yiarmenian}{0545}
\pdfglyphtounicode{Yicyrillic}{0407}
\pdfglyphtounicode{Yiwnarmenian}{0552}
\pdfglyphtounicode{Ymonospace}{FF39}
\pdfglyphtounicode{Ysmall}{0079}
\pdfglyphtounicode{Ytilde}{1EF8}
\pdfglyphtounicode{Yusbigcyrillic}{046A}
\pdfglyphtounicode{Yusbigiotifiedcyrillic}{046C}
\pdfglyphtounicode{Yuslittlecyrillic}{0466}
\pdfglyphtounicode{Yuslittleiotifiedcyrillic}{0468}
\pdfglyphtounicode{Z}{005A}
\pdfglyphtounicode{Zaarmenian}{0536}
\pdfglyphtounicode{Zacute}{0179}
\pdfglyphtounicode{Zcaron}{017D}
\pdfglyphtounicode{Zcaronsmall}{017E}
\pdfglyphtounicode{Zcircle}{24CF}
\pdfglyphtounicode{Zcircumflex}{1E90}
\pdfglyphtounicode{Zdot}{017B}
\pdfglyphtounicode{Zdotaccent}{017B}
\pdfglyphtounicode{Zdotbelow}{1E92}
\pdfglyphtounicode{Zecyrillic}{0417}
\pdfglyphtounicode{Zedescendercyrillic}{0498}
\pdfglyphtounicode{Zedieresiscyrillic}{04DE}
\pdfglyphtounicode{Zeta}{0396}
\pdfglyphtounicode{Zhearmenian}{053A}
\pdfglyphtounicode{Zhebrevecyrillic}{04C1}
\pdfglyphtounicode{Zhecyrillic}{0416}
\pdfglyphtounicode{Zhedescendercyrillic}{0496}
\pdfglyphtounicode{Zhedieresiscyrillic}{04DC}
\pdfglyphtounicode{Zlinebelow}{1E94}
\pdfglyphtounicode{Zmonospace}{FF3A}
\pdfglyphtounicode{Zsmall}{007A}
\pdfglyphtounicode{Zstroke}{01B5}
\pdfglyphtounicode{a}{0061}
\pdfglyphtounicode{aabengali}{0986}
\pdfglyphtounicode{aacute}{00E1}
\pdfglyphtounicode{aadeva}{0906}
\pdfglyphtounicode{aagujarati}{0A86}
\pdfglyphtounicode{aagurmukhi}{0A06}
\pdfglyphtounicode{aamatragurmukhi}{0A3E}
\pdfglyphtounicode{aarusquare}{3303}
\pdfglyphtounicode{aavowelsignbengali}{09BE}
\pdfglyphtounicode{aavowelsigndeva}{093E}
\pdfglyphtounicode{aavowelsigngujarati}{0ABE}
\pdfglyphtounicode{abbreviationmarkarmenian}{055F}
\pdfglyphtounicode{abbreviationsigndeva}{0970}
\pdfglyphtounicode{abengali}{0985}
\pdfglyphtounicode{abopomofo}{311A}
\pdfglyphtounicode{abreve}{0103}
\pdfglyphtounicode{abreveacute}{1EAF}
\pdfglyphtounicode{abrevecyrillic}{04D1}
\pdfglyphtounicode{abrevedotbelow}{1EB7}
\pdfglyphtounicode{abrevegrave}{1EB1}
\pdfglyphtounicode{abrevehookabove}{1EB3}
\pdfglyphtounicode{abrevetilde}{1EB5}
\pdfglyphtounicode{acaron}{01CE}
\pdfglyphtounicode{acircle}{24D0}
\pdfglyphtounicode{acircumflex}{00E2}
\pdfglyphtounicode{acircumflexacute}{1EA5}
\pdfglyphtounicode{acircumflexdotbelow}{1EAD}
\pdfglyphtounicode{acircumflexgrave}{1EA7}
\pdfglyphtounicode{acircumflexhookabove}{1EA9}
\pdfglyphtounicode{acircumflextilde}{1EAB}
\pdfglyphtounicode{acute}{00B4}
\pdfglyphtounicode{acutebelowcmb}{0317}
\pdfglyphtounicode{acutecmb}{0301}
\pdfglyphtounicode{acutecomb}{0301}
\pdfglyphtounicode{acutedeva}{0954}
\pdfglyphtounicode{acutelowmod}{02CF}
\pdfglyphtounicode{acutetonecmb}{0341}
\pdfglyphtounicode{acyrillic}{0430}
\pdfglyphtounicode{adblgrave}{0201}
\pdfglyphtounicode{addakgurmukhi}{0A71}
\pdfglyphtounicode{adeva}{0905}
\pdfglyphtounicode{adieresis}{00E4}
\pdfglyphtounicode{adieresiscyrillic}{04D3}
\pdfglyphtounicode{adieresismacron}{01DF}
\pdfglyphtounicode{adotbelow}{1EA1}
\pdfglyphtounicode{adotmacron}{01E1}
\pdfglyphtounicode{ae}{00E6}
\pdfglyphtounicode{aeacute}{01FD}
\pdfglyphtounicode{aekorean}{3150}
\pdfglyphtounicode{aemacron}{01E3}
\pdfglyphtounicode{afii00208}{2015}
\pdfglyphtounicode{afii08941}{20A4}
\pdfglyphtounicode{afii10017}{0410}
\pdfglyphtounicode{afii10018}{0411}
\pdfglyphtounicode{afii10019}{0412}
\pdfglyphtounicode{afii10020}{0413}
\pdfglyphtounicode{afii10021}{0414}
\pdfglyphtounicode{afii10022}{0415}
\pdfglyphtounicode{afii10023}{0401}
\pdfglyphtounicode{afii10024}{0416}
\pdfglyphtounicode{afii10025}{0417}
\pdfglyphtounicode{afii10026}{0418}
\pdfglyphtounicode{afii10027}{0419}
\pdfglyphtounicode{afii10028}{041A}
\pdfglyphtounicode{afii10029}{041B}
\pdfglyphtounicode{afii10030}{041C}
\pdfglyphtounicode{afii10031}{041D}
\pdfglyphtounicode{afii10032}{041E}
\pdfglyphtounicode{afii10033}{041F}
\pdfglyphtounicode{afii10034}{0420}
\pdfglyphtounicode{afii10035}{0421}
\pdfglyphtounicode{afii10036}{0422}
\pdfglyphtounicode{afii10037}{0423}
\pdfglyphtounicode{afii10038}{0424}
\pdfglyphtounicode{afii10039}{0425}
\pdfglyphtounicode{afii10040}{0426}
\pdfglyphtounicode{afii10041}{0427}
\pdfglyphtounicode{afii10042}{0428}
\pdfglyphtounicode{afii10043}{0429}
\pdfglyphtounicode{afii10044}{042A}
\pdfglyphtounicode{afii10045}{042B}
\pdfglyphtounicode{afii10046}{042C}
\pdfglyphtounicode{afii10047}{042D}
\pdfglyphtounicode{afii10048}{042E}
\pdfglyphtounicode{afii10049}{042F}
\pdfglyphtounicode{afii10050}{0490}
\pdfglyphtounicode{afii10051}{0402}
\pdfglyphtounicode{afii10052}{0403}
\pdfglyphtounicode{afii10053}{0404}
\pdfglyphtounicode{afii10054}{0405}
\pdfglyphtounicode{afii10055}{0406}
\pdfglyphtounicode{afii10056}{0407}
\pdfglyphtounicode{afii10057}{0408}
\pdfglyphtounicode{afii10058}{0409}
\pdfglyphtounicode{afii10059}{040A}
\pdfglyphtounicode{afii10060}{040B}
\pdfglyphtounicode{afii10061}{040C}
\pdfglyphtounicode{afii10062}{040E}
\pdfglyphtounicode{afii10063}{F6C4}
\pdfglyphtounicode{afii10064}{F6C5}
\pdfglyphtounicode{afii10065}{0430}
\pdfglyphtounicode{afii10066}{0431}
\pdfglyphtounicode{afii10067}{0432}
\pdfglyphtounicode{afii10068}{0433}
\pdfglyphtounicode{afii10069}{0434}
\pdfglyphtounicode{afii10070}{0435}
\pdfglyphtounicode{afii10071}{0451}
\pdfglyphtounicode{afii10072}{0436}
\pdfglyphtounicode{afii10073}{0437}
\pdfglyphtounicode{afii10074}{0438}
\pdfglyphtounicode{afii10075}{0439}
\pdfglyphtounicode{afii10076}{043A}
\pdfglyphtounicode{afii10077}{043B}
\pdfglyphtounicode{afii10078}{043C}
\pdfglyphtounicode{afii10079}{043D}
\pdfglyphtounicode{afii10080}{043E}
\pdfglyphtounicode{afii10081}{043F}
\pdfglyphtounicode{afii10082}{0440}
\pdfglyphtounicode{afii10083}{0441}
\pdfglyphtounicode{afii10084}{0442}
\pdfglyphtounicode{afii10085}{0443}
\pdfglyphtounicode{afii10086}{0444}
\pdfglyphtounicode{afii10087}{0445}
\pdfglyphtounicode{afii10088}{0446}
\pdfglyphtounicode{afii10089}{0447}
\pdfglyphtounicode{afii10090}{0448}
\pdfglyphtounicode{afii10091}{0449}
\pdfglyphtounicode{afii10092}{044A}
\pdfglyphtounicode{afii10093}{044B}
\pdfglyphtounicode{afii10094}{044C}
\pdfglyphtounicode{afii10095}{044D}
\pdfglyphtounicode{afii10096}{044E}
\pdfglyphtounicode{afii10097}{044F}
\pdfglyphtounicode{afii10098}{0491}
\pdfglyphtounicode{afii10099}{0452}
\pdfglyphtounicode{afii10100}{0453}
\pdfglyphtounicode{afii10101}{0454}
\pdfglyphtounicode{afii10102}{0455}
\pdfglyphtounicode{afii10103}{0456}
\pdfglyphtounicode{afii10104}{0457}
\pdfglyphtounicode{afii10105}{0458}
\pdfglyphtounicode{afii10106}{0459}
\pdfglyphtounicode{afii10107}{045A}
\pdfglyphtounicode{afii10108}{045B}
\pdfglyphtounicode{afii10109}{045C}
\pdfglyphtounicode{afii10110}{045E}
\pdfglyphtounicode{afii10145}{040F}
\pdfglyphtounicode{afii10146}{0462}
\pdfglyphtounicode{afii10147}{0472}
\pdfglyphtounicode{afii10148}{0474}
\pdfglyphtounicode{afii10192}{F6C6}
\pdfglyphtounicode{afii10193}{045F}
\pdfglyphtounicode{afii10194}{0463}
\pdfglyphtounicode{afii10195}{0473}
\pdfglyphtounicode{afii10196}{0475}
\pdfglyphtounicode{afii10831}{F6C7}
\pdfglyphtounicode{afii10832}{F6C8}
\pdfglyphtounicode{afii10846}{04D9}
\pdfglyphtounicode{afii299}{200E}
\pdfglyphtounicode{afii300}{200F}
\pdfglyphtounicode{afii301}{200D}
\pdfglyphtounicode{afii57381}{066A}
\pdfglyphtounicode{afii57388}{060C}
\pdfglyphtounicode{afii57392}{0660}
\pdfglyphtounicode{afii57393}{0661}
\pdfglyphtounicode{afii57394}{0662}
\pdfglyphtounicode{afii57395}{0663}
\pdfglyphtounicode{afii57396}{0664}
\pdfglyphtounicode{afii57397}{0665}
\pdfglyphtounicode{afii57398}{0666}
\pdfglyphtounicode{afii57399}{0667}
\pdfglyphtounicode{afii57400}{0668}
\pdfglyphtounicode{afii57401}{0669}
\pdfglyphtounicode{afii57403}{061B}
\pdfglyphtounicode{afii57407}{061F}
\pdfglyphtounicode{afii57409}{0621}
\pdfglyphtounicode{afii57410}{0622}
\pdfglyphtounicode{afii57411}{0623}
\pdfglyphtounicode{afii57412}{0624}
\pdfglyphtounicode{afii57413}{0625}
\pdfglyphtounicode{afii57414}{0626}
\pdfglyphtounicode{afii57415}{0627}
\pdfglyphtounicode{afii57416}{0628}
\pdfglyphtounicode{afii57417}{0629}
\pdfglyphtounicode{afii57418}{062A}
\pdfglyphtounicode{afii57419}{062B}
\pdfglyphtounicode{afii57420}{062C}
\pdfglyphtounicode{afii57421}{062D}
\pdfglyphtounicode{afii57422}{062E}
\pdfglyphtounicode{afii57423}{062F}
\pdfglyphtounicode{afii57424}{0630}
\pdfglyphtounicode{afii57425}{0631}
\pdfglyphtounicode{afii57426}{0632}
\pdfglyphtounicode{afii57427}{0633}
\pdfglyphtounicode{afii57428}{0634}
\pdfglyphtounicode{afii57429}{0635}
\pdfglyphtounicode{afii57430}{0636}
\pdfglyphtounicode{afii57431}{0637}
\pdfglyphtounicode{afii57432}{0638}
\pdfglyphtounicode{afii57433}{0639}
\pdfglyphtounicode{afii57434}{063A}
\pdfglyphtounicode{afii57440}{0640}
\pdfglyphtounicode{afii57441}{0641}
\pdfglyphtounicode{afii57442}{0642}
\pdfglyphtounicode{afii57443}{0643}
\pdfglyphtounicode{afii57444}{0644}
\pdfglyphtounicode{afii57445}{0645}
\pdfglyphtounicode{afii57446}{0646}
\pdfglyphtounicode{afii57448}{0648}
\pdfglyphtounicode{afii57449}{0649}
\pdfglyphtounicode{afii57450}{064A}
\pdfglyphtounicode{afii57451}{064B}
\pdfglyphtounicode{afii57452}{064C}
\pdfglyphtounicode{afii57453}{064D}
\pdfglyphtounicode{afii57454}{064E}
\pdfglyphtounicode{afii57455}{064F}
\pdfglyphtounicode{afii57456}{0650}
\pdfglyphtounicode{afii57457}{0651}
\pdfglyphtounicode{afii57458}{0652}
\pdfglyphtounicode{afii57470}{0647}
\pdfglyphtounicode{afii57505}{06A4}
\pdfglyphtounicode{afii57506}{067E}
\pdfglyphtounicode{afii57507}{0686}
\pdfglyphtounicode{afii57508}{0698}
\pdfglyphtounicode{afii57509}{06AF}
\pdfglyphtounicode{afii57511}{0679}
\pdfglyphtounicode{afii57512}{0688}
\pdfglyphtounicode{afii57513}{0691}
\pdfglyphtounicode{afii57514}{06BA}
\pdfglyphtounicode{afii57519}{06D2}
\pdfglyphtounicode{afii57534}{06D5}
\pdfglyphtounicode{afii57636}{20AA}
\pdfglyphtounicode{afii57645}{05BE}
\pdfglyphtounicode{afii57658}{05C3}
\pdfglyphtounicode{afii57664}{05D0}
\pdfglyphtounicode{afii57665}{05D1}
\pdfglyphtounicode{afii57666}{05D2}
\pdfglyphtounicode{afii57667}{05D3}
\pdfglyphtounicode{afii57668}{05D4}
\pdfglyphtounicode{afii57669}{05D5}
\pdfglyphtounicode{afii57670}{05D6}
\pdfglyphtounicode{afii57671}{05D7}
\pdfglyphtounicode{afii57672}{05D8}
\pdfglyphtounicode{afii57673}{05D9}
\pdfglyphtounicode{afii57674}{05DA}
\pdfglyphtounicode{afii57675}{05DB}
\pdfglyphtounicode{afii57676}{05DC}
\pdfglyphtounicode{afii57677}{05DD}
\pdfglyphtounicode{afii57678}{05DE}
\pdfglyphtounicode{afii57679}{05DF}
\pdfglyphtounicode{afii57680}{05E0}
\pdfglyphtounicode{afii57681}{05E1}
\pdfglyphtounicode{afii57682}{05E2}
\pdfglyphtounicode{afii57683}{05E3}
\pdfglyphtounicode{afii57684}{05E4}
\pdfglyphtounicode{afii57685}{05E5}
\pdfglyphtounicode{afii57686}{05E6}
\pdfglyphtounicode{afii57687}{05E7}
\pdfglyphtounicode{afii57688}{05E8}
\pdfglyphtounicode{afii57689}{05E9}
\pdfglyphtounicode{afii57690}{05EA}
\pdfglyphtounicode{afii57694}{FB2A}
\pdfglyphtounicode{afii57695}{FB2B}
\pdfglyphtounicode{afii57700}{FB4B}
\pdfglyphtounicode{afii57705}{FB1F}
\pdfglyphtounicode{afii57716}{05F0}
\pdfglyphtounicode{afii57717}{05F1}
\pdfglyphtounicode{afii57718}{05F2}
\pdfglyphtounicode{afii57723}{FB35}
\pdfglyphtounicode{afii57793}{05B4}
\pdfglyphtounicode{afii57794}{05B5}
\pdfglyphtounicode{afii57795}{05B6}
\pdfglyphtounicode{afii57796}{05BB}
\pdfglyphtounicode{afii57797}{05B8}
\pdfglyphtounicode{afii57798}{05B7}
\pdfglyphtounicode{afii57799}{05B0}
\pdfglyphtounicode{afii57800}{05B2}
\pdfglyphtounicode{afii57801}{05B1}
\pdfglyphtounicode{afii57802}{05B3}
\pdfglyphtounicode{afii57803}{05C2}
\pdfglyphtounicode{afii57804}{05C1}
\pdfglyphtounicode{afii57806}{05B9}
\pdfglyphtounicode{afii57807}{05BC}
\pdfglyphtounicode{afii57839}{05BD}
\pdfglyphtounicode{afii57841}{05BF}
\pdfglyphtounicode{afii57842}{05C0}
\pdfglyphtounicode{afii57929}{02BC}
\pdfglyphtounicode{afii61248}{2105}
\pdfglyphtounicode{afii61289}{2113}
\pdfglyphtounicode{afii61352}{2116}
\pdfglyphtounicode{afii61573}{202C}
\pdfglyphtounicode{afii61574}{202D}
\pdfglyphtounicode{afii61575}{202E}
\pdfglyphtounicode{afii61664}{200C}
\pdfglyphtounicode{afii63167}{066D}
\pdfglyphtounicode{afii64937}{02BD}
\pdfglyphtounicode{agrave}{00E0}
\pdfglyphtounicode{agujarati}{0A85}
\pdfglyphtounicode{agurmukhi}{0A05}
\pdfglyphtounicode{ahiragana}{3042}
\pdfglyphtounicode{ahookabove}{1EA3}
\pdfglyphtounicode{aibengali}{0990}
\pdfglyphtounicode{aibopomofo}{311E}
\pdfglyphtounicode{aideva}{0910}
\pdfglyphtounicode{aiecyrillic}{04D5}
\pdfglyphtounicode{aigujarati}{0A90}
\pdfglyphtounicode{aigurmukhi}{0A10}
\pdfglyphtounicode{aimatragurmukhi}{0A48}
\pdfglyphtounicode{ainarabic}{0639}
\pdfglyphtounicode{ainfinalarabic}{FECA}
\pdfglyphtounicode{aininitialarabic}{FECB}
\pdfglyphtounicode{ainmedialarabic}{FECC}
\pdfglyphtounicode{ainvertedbreve}{0203}
\pdfglyphtounicode{aivowelsignbengali}{09C8}
\pdfglyphtounicode{aivowelsigndeva}{0948}
\pdfglyphtounicode{aivowelsigngujarati}{0AC8}
\pdfglyphtounicode{akatakana}{30A2}
\pdfglyphtounicode{akatakanahalfwidth}{FF71}
\pdfglyphtounicode{akorean}{314F}
\pdfglyphtounicode{alef}{05D0}
\pdfglyphtounicode{alefarabic}{0627}
\pdfglyphtounicode{alefdageshhebrew}{FB30}
\pdfglyphtounicode{aleffinalarabic}{FE8E}
\pdfglyphtounicode{alefhamzaabovearabic}{0623}
\pdfglyphtounicode{alefhamzaabovefinalarabic}{FE84}
\pdfglyphtounicode{alefhamzabelowarabic}{0625}
\pdfglyphtounicode{alefhamzabelowfinalarabic}{FE88}
\pdfglyphtounicode{alefhebrew}{05D0}
\pdfglyphtounicode{aleflamedhebrew}{FB4F}
\pdfglyphtounicode{alefmaddaabovearabic}{0622}
\pdfglyphtounicode{alefmaddaabovefinalarabic}{FE82}
\pdfglyphtounicode{alefmaksuraarabic}{0649}
\pdfglyphtounicode{alefmaksurafinalarabic}{FEF0}
\pdfglyphtounicode{alefmaksurainitialarabic}{FEF3}
\pdfglyphtounicode{alefmaksuramedialarabic}{FEF4}
\pdfglyphtounicode{alefpatahhebrew}{FB2E}
\pdfglyphtounicode{alefqamatshebrew}{FB2F}
\pdfglyphtounicode{aleph}{2135}
\pdfglyphtounicode{allequal}{224C}
\pdfglyphtounicode{alpha}{03B1}
\pdfglyphtounicode{alphatonos}{03AC}
\pdfglyphtounicode{amacron}{0101}
\pdfglyphtounicode{amonospace}{FF41}
\pdfglyphtounicode{ampersand}{0026}
\pdfglyphtounicode{ampersandmonospace}{FF06}
\pdfglyphtounicode{ampersandsmall}{0026}
\pdfglyphtounicode{amsquare}{33C2}
\pdfglyphtounicode{anbopomofo}{3122}
\pdfglyphtounicode{angbopomofo}{3124}
\pdfglyphtounicode{angbracketleft}{27E8}
\pdfglyphtounicode{angbracketright}{27E9}
\pdfglyphtounicode{angkhankhuthai}{0E5A}
\pdfglyphtounicode{angle}{2220}
\pdfglyphtounicode{anglebracketleft}{3008}
\pdfglyphtounicode{anglebracketleftvertical}{FE3F}
\pdfglyphtounicode{anglebracketright}{3009}
\pdfglyphtounicode{anglebracketrightvertical}{FE40}
\pdfglyphtounicode{angleleft}{2329}
\pdfglyphtounicode{angleright}{232A}
\pdfglyphtounicode{angstrom}{212B}
\pdfglyphtounicode{anoteleia}{0387}
\pdfglyphtounicode{anticlockwise}{27F2}
\pdfglyphtounicode{anudattadeva}{0952}
\pdfglyphtounicode{anusvarabengali}{0982}
\pdfglyphtounicode{anusvaradeva}{0902}
\pdfglyphtounicode{anusvaragujarati}{0A82}
\pdfglyphtounicode{aogonek}{0105}
\pdfglyphtounicode{apaatosquare}{3300}
\pdfglyphtounicode{aparen}{249C}
\pdfglyphtounicode{apostrophearmenian}{055A}
\pdfglyphtounicode{apostrophemod}{02BC}
\pdfglyphtounicode{apple}{F8FF}
\pdfglyphtounicode{approaches}{2250}
\pdfglyphtounicode{approxequal}{2248}
\pdfglyphtounicode{approxequalorimage}{2252}
\pdfglyphtounicode{approximatelyequal}{2245}
\pdfglyphtounicode{approxorequal}{224A}
\pdfglyphtounicode{araeaekorean}{318E}
\pdfglyphtounicode{araeakorean}{318D}
\pdfglyphtounicode{arc}{2312}
\pdfglyphtounicode{archleftdown}{21B6}
\pdfglyphtounicode{archrightdown}{21B7}
\pdfglyphtounicode{arighthalfring}{1E9A}
\pdfglyphtounicode{aring}{00E5}
\pdfglyphtounicode{aringacute}{01FB}
\pdfglyphtounicode{aringbelow}{1E01}
\pdfglyphtounicode{arrowboth}{2194}
\pdfglyphtounicode{arrowbothv}{2195}
\pdfglyphtounicode{arrowdashdown}{21E3}
\pdfglyphtounicode{arrowdashleft}{21E0}
\pdfglyphtounicode{arrowdashright}{21E2}
\pdfglyphtounicode{arrowdashup}{21E1}
\pdfglyphtounicode{arrowdblboth}{21D4}
\pdfglyphtounicode{arrowdblbothv}{21D5}
\pdfglyphtounicode{arrowdbldown}{21D3}
\pdfglyphtounicode{arrowdblleft}{21D0}
\pdfglyphtounicode{arrowdblright}{21D2}
\pdfglyphtounicode{arrowdblup}{21D1}
\pdfglyphtounicode{arrowdown}{2193}
\pdfglyphtounicode{arrowdownleft}{2199}
\pdfglyphtounicode{arrowdownright}{2198}
\pdfglyphtounicode{arrowdownwhite}{21E9}
\pdfglyphtounicode{arrowheaddownmod}{02C5}
\pdfglyphtounicode{arrowheadleftmod}{02C2}
\pdfglyphtounicode{arrowheadrightmod}{02C3}
\pdfglyphtounicode{arrowheadupmod}{02C4}
\pdfglyphtounicode{arrowhorizex}{F8E7}
\pdfglyphtounicode{arrowleft}{2190}
\pdfglyphtounicode{arrowleftbothalf}{21BD}
\pdfglyphtounicode{arrowleftdbl}{21D0}
\pdfglyphtounicode{arrowleftdblstroke}{21CD}
\pdfglyphtounicode{arrowleftoverright}{21C6}
\pdfglyphtounicode{arrowlefttophalf}{21BC}
\pdfglyphtounicode{arrowleftwhite}{21E6}
\pdfglyphtounicode{arrownortheast}{2197}
\pdfglyphtounicode{arrownorthwest}{2196}
\pdfglyphtounicode{arrowparrleftright}{21C6}
\pdfglyphtounicode{arrowparrrightleft}{21C4}
\pdfglyphtounicode{arrowright}{2192}
\pdfglyphtounicode{arrowrightbothalf}{21C1}
\pdfglyphtounicode{arrowrightdblstroke}{21CF}
\pdfglyphtounicode{arrowrightheavy}{279E}
\pdfglyphtounicode{arrowrightoverleft}{21C4}
\pdfglyphtounicode{arrowrighttophalf}{21C0}
\pdfglyphtounicode{arrowrightwhite}{21E8}
\pdfglyphtounicode{arrowsoutheast}{2198}
\pdfglyphtounicode{arrowsouthwest}{2199}
\pdfglyphtounicode{arrowtableft}{21E4}
\pdfglyphtounicode{arrowtabright}{21E5}
\pdfglyphtounicode{arrowtailleft}{21A2}
\pdfglyphtounicode{arrowtailright}{21A3}
\pdfglyphtounicode{arrowtripleleft}{21DA}
\pdfglyphtounicode{arrowtripleright}{21DB}
\pdfglyphtounicode{arrowup}{2191}
\pdfglyphtounicode{arrowupdn}{2195}
\pdfglyphtounicode{arrowupdnbse}{21A8}
\pdfglyphtounicode{arrowupdownbase}{21A8}
\pdfglyphtounicode{arrowupleft}{2196}
\pdfglyphtounicode{arrowupleftofdown}{21C5}
\pdfglyphtounicode{arrowupright}{2197}
\pdfglyphtounicode{arrowupwhite}{21E7}
\pdfglyphtounicode{arrowvertex}{F8E6}
\pdfglyphtounicode{asciicircum}{005E}
\pdfglyphtounicode{asciicircummonospace}{FF3E}
\pdfglyphtounicode{asciitilde}{007E}
\pdfglyphtounicode{asciitildemonospace}{FF5E}
\pdfglyphtounicode{ascript}{0251}
\pdfglyphtounicode{ascriptturned}{0252}
\pdfglyphtounicode{asmallhiragana}{3041}
\pdfglyphtounicode{asmallkatakana}{30A1}
\pdfglyphtounicode{asmallkatakanahalfwidth}{FF67}
\pdfglyphtounicode{asterisk}{002A}
\pdfglyphtounicode{asteriskaltonearabic}{066D}
\pdfglyphtounicode{asteriskarabic}{066D}
\pdfglyphtounicode{asteriskcentered}{2217}
\pdfglyphtounicode{asteriskmath}{2217}
\pdfglyphtounicode{asteriskmonospace}{FF0A}
\pdfglyphtounicode{asterisksmall}{FE61}
\pdfglyphtounicode{asterism}{2042}
\pdfglyphtounicode{asuperior}{0061}
\pdfglyphtounicode{asymptoticallyequal}{2243}
\pdfglyphtounicode{at}{0040}
\pdfglyphtounicode{atilde}{00E3}
\pdfglyphtounicode{atmonospace}{FF20}
\pdfglyphtounicode{atsmall}{FE6B}
\pdfglyphtounicode{aturned}{0250}
\pdfglyphtounicode{aubengali}{0994}
\pdfglyphtounicode{aubopomofo}{3120}
\pdfglyphtounicode{audeva}{0914}
\pdfglyphtounicode{augujarati}{0A94}
\pdfglyphtounicode{augurmukhi}{0A14}
\pdfglyphtounicode{aulengthmarkbengali}{09D7}
\pdfglyphtounicode{aumatragurmukhi}{0A4C}
\pdfglyphtounicode{auvowelsignbengali}{09CC}
\pdfglyphtounicode{auvowelsigndeva}{094C}
\pdfglyphtounicode{auvowelsigngujarati}{0ACC}
\pdfglyphtounicode{avagrahadeva}{093D}
\pdfglyphtounicode{aybarmenian}{0561}
\pdfglyphtounicode{ayin}{05E2}
\pdfglyphtounicode{ayinaltonehebrew}{FB20}
\pdfglyphtounicode{ayinhebrew}{05E2}
\pdfglyphtounicode{b}{0062}
\pdfglyphtounicode{babengali}{09AC}
\pdfglyphtounicode{backslash}{005C}
\pdfglyphtounicode{backslashmonospace}{FF3C}
\pdfglyphtounicode{badeva}{092C}
\pdfglyphtounicode{bagujarati}{0AAC}
\pdfglyphtounicode{bagurmukhi}{0A2C}
\pdfglyphtounicode{bahiragana}{3070}
\pdfglyphtounicode{bahtthai}{0E3F}
\pdfglyphtounicode{bakatakana}{30D0}
\pdfglyphtounicode{bar}{007C}
\pdfglyphtounicode{bardbl}{2225}
\pdfglyphtounicode{barmonospace}{FF5C}
\pdfglyphtounicode{bbopomofo}{3105}
\pdfglyphtounicode{bcircle}{24D1}
\pdfglyphtounicode{bdotaccent}{1E03}
\pdfglyphtounicode{bdotbelow}{1E05}
\pdfglyphtounicode{beamedsixteenthnotes}{266C}
\pdfglyphtounicode{because}{2235}
\pdfglyphtounicode{becyrillic}{0431}
\pdfglyphtounicode{beharabic}{0628}
\pdfglyphtounicode{behfinalarabic}{FE90}
\pdfglyphtounicode{behinitialarabic}{FE91}
\pdfglyphtounicode{behiragana}{3079}
\pdfglyphtounicode{behmedialarabic}{FE92}
\pdfglyphtounicode{behmeeminitialarabic}{FC9F}
\pdfglyphtounicode{behmeemisolatedarabic}{FC08}
\pdfglyphtounicode{behnoonfinalarabic}{FC6D}
\pdfglyphtounicode{bekatakana}{30D9}
\pdfglyphtounicode{benarmenian}{0562}
\pdfglyphtounicode{bet}{05D1}
\pdfglyphtounicode{beta}{03B2}
\pdfglyphtounicode{betasymbolgreek}{03D0}
\pdfglyphtounicode{betdagesh}{FB31}
\pdfglyphtounicode{betdageshhebrew}{FB31}
\pdfglyphtounicode{beth}{2136}
\pdfglyphtounicode{bethebrew}{05D1}
\pdfglyphtounicode{betrafehebrew}{FB4C}
\pdfglyphtounicode{between}{226C}
\pdfglyphtounicode{bhabengali}{09AD}
\pdfglyphtounicode{bhadeva}{092D}
\pdfglyphtounicode{bhagujarati}{0AAD}
\pdfglyphtounicode{bhagurmukhi}{0A2D}
\pdfglyphtounicode{bhook}{0253}
\pdfglyphtounicode{bihiragana}{3073}
\pdfglyphtounicode{bikatakana}{30D3}
\pdfglyphtounicode{bilabialclick}{0298}
\pdfglyphtounicode{bindigurmukhi}{0A02}
\pdfglyphtounicode{birusquare}{3331}
\pdfglyphtounicode{blackcircle}{25CF}
\pdfglyphtounicode{blackdiamond}{25C6}
\pdfglyphtounicode{blackdownpointingtriangle}{25BC}
\pdfglyphtounicode{blackleftpointingpointer}{25C4}
\pdfglyphtounicode{blackleftpointingtriangle}{25C0}
\pdfglyphtounicode{blacklenticularbracketleft}{3010}
\pdfglyphtounicode{blacklenticularbracketleftvertical}{FE3B}
\pdfglyphtounicode{blacklenticularbracketright}{3011}
\pdfglyphtounicode{blacklenticularbracketrightvertical}{FE3C}
\pdfglyphtounicode{blacklowerlefttriangle}{25E3}
\pdfglyphtounicode{blacklowerrighttriangle}{25E2}
\pdfglyphtounicode{blackrectangle}{25AC}
\pdfglyphtounicode{blackrightpointingpointer}{25BA}
\pdfglyphtounicode{blackrightpointingtriangle}{25B6}
\pdfglyphtounicode{blacksmallsquare}{25AA}
\pdfglyphtounicode{blacksmilingface}{263B}
\pdfglyphtounicode{blacksquare}{25A0}
\pdfglyphtounicode{blackstar}{2605}
\pdfglyphtounicode{blackupperlefttriangle}{25E4}
\pdfglyphtounicode{blackupperrighttriangle}{25E5}
\pdfglyphtounicode{blackuppointingsmalltriangle}{25B4}
\pdfglyphtounicode{blackuppointingtriangle}{25B2}
\pdfglyphtounicode{blank}{2423}
\pdfglyphtounicode{blinebelow}{1E07}
\pdfglyphtounicode{block}{2588}
\pdfglyphtounicode{bmonospace}{FF42}
\pdfglyphtounicode{bobaimaithai}{0E1A}
\pdfglyphtounicode{bohiragana}{307C}
\pdfglyphtounicode{bokatakana}{30DC}
\pdfglyphtounicode{bparen}{249D}
\pdfglyphtounicode{bqsquare}{33C3}
\pdfglyphtounicode{braceex}{F8F4}
\pdfglyphtounicode{braceleft}{007B}
\pdfglyphtounicode{braceleftbt}{F8F3}
\pdfglyphtounicode{braceleftmid}{F8F2}
\pdfglyphtounicode{braceleftmonospace}{FF5B}
\pdfglyphtounicode{braceleftsmall}{FE5B}
\pdfglyphtounicode{bracelefttp}{F8F1}
\pdfglyphtounicode{braceleftvertical}{FE37}
\pdfglyphtounicode{braceright}{007D}
\pdfglyphtounicode{bracerightbt}{F8FE}
\pdfglyphtounicode{bracerightmid}{F8FD}
\pdfglyphtounicode{bracerightmonospace}{FF5D}
\pdfglyphtounicode{bracerightsmall}{FE5C}
\pdfglyphtounicode{bracerighttp}{F8FC}
\pdfglyphtounicode{bracerightvertical}{FE38}
\pdfglyphtounicode{bracketleft}{005B}
\pdfglyphtounicode{bracketleftbt}{F8F0}
\pdfglyphtounicode{bracketleftex}{F8EF}
\pdfglyphtounicode{bracketleftmonospace}{FF3B}
\pdfglyphtounicode{bracketlefttp}{F8EE}
\pdfglyphtounicode{bracketright}{005D}
\pdfglyphtounicode{bracketrightbt}{F8FB}
\pdfglyphtounicode{bracketrightex}{F8FA}
\pdfglyphtounicode{bracketrightmonospace}{FF3D}
\pdfglyphtounicode{bracketrighttp}{F8F9}
\pdfglyphtounicode{breve}{02D8}
\pdfglyphtounicode{brevebelowcmb}{032E}
\pdfglyphtounicode{brevecmb}{0306}
\pdfglyphtounicode{breveinvertedbelowcmb}{032F}
\pdfglyphtounicode{breveinvertedcmb}{0311}
\pdfglyphtounicode{breveinverteddoublecmb}{0361}
\pdfglyphtounicode{bridgebelowcmb}{032A}
\pdfglyphtounicode{bridgeinvertedbelowcmb}{033A}
\pdfglyphtounicode{brokenbar}{00A6}
\pdfglyphtounicode{bstroke}{0180}
\pdfglyphtounicode{bsuperior}{0062}
\pdfglyphtounicode{btopbar}{0183}
\pdfglyphtounicode{buhiragana}{3076}
\pdfglyphtounicode{bukatakana}{30D6}
\pdfglyphtounicode{bullet}{2022}
\pdfglyphtounicode{bulletinverse}{25D8}
\pdfglyphtounicode{bulletoperator}{2219}
\pdfglyphtounicode{bullseye}{25CE}
\pdfglyphtounicode{c}{0063}
\pdfglyphtounicode{caarmenian}{056E}
\pdfglyphtounicode{cabengali}{099A}
\pdfglyphtounicode{cacute}{0107}
\pdfglyphtounicode{cadeva}{091A}
\pdfglyphtounicode{cagujarati}{0A9A}
\pdfglyphtounicode{cagurmukhi}{0A1A}
\pdfglyphtounicode{calsquare}{3388}
\pdfglyphtounicode{candrabindubengali}{0981}
\pdfglyphtounicode{candrabinducmb}{0310}
\pdfglyphtounicode{candrabindudeva}{0901}
\pdfglyphtounicode{candrabindugujarati}{0A81}
\pdfglyphtounicode{capslock}{21EA}
\pdfglyphtounicode{careof}{2105}
\pdfglyphtounicode{caron}{02C7}
\pdfglyphtounicode{caronbelowcmb}{032C}
\pdfglyphtounicode{caroncmb}{030C}
\pdfglyphtounicode{carriagereturn}{21B5}
\pdfglyphtounicode{cbopomofo}{3118}
\pdfglyphtounicode{ccaron}{010D}
\pdfglyphtounicode{ccedilla}{00E7}
\pdfglyphtounicode{ccedillaacute}{1E09}
\pdfglyphtounicode{ccircle}{24D2}
\pdfglyphtounicode{ccircumflex}{0109}
\pdfglyphtounicode{ccurl}{0255}
\pdfglyphtounicode{cdot}{010B}
\pdfglyphtounicode{cdotaccent}{010B}
\pdfglyphtounicode{cdsquare}{33C5}
\pdfglyphtounicode{cedilla}{00B8}
\pdfglyphtounicode{cedillacmb}{0327}
\pdfglyphtounicode{ceilingleft}{2308}
\pdfglyphtounicode{ceilingright}{2309}
\pdfglyphtounicode{cent}{00A2}
\pdfglyphtounicode{centigrade}{2103}
\pdfglyphtounicode{centinferior}{00A2}
\pdfglyphtounicode{centmonospace}{FFE0}
\pdfglyphtounicode{centoldstyle}{00A2}
\pdfglyphtounicode{centsuperior}{00A2}
\pdfglyphtounicode{chaarmenian}{0579}
\pdfglyphtounicode{chabengali}{099B}
\pdfglyphtounicode{chadeva}{091B}
\pdfglyphtounicode{chagujarati}{0A9B}
\pdfglyphtounicode{chagurmukhi}{0A1B}
\pdfglyphtounicode{chbopomofo}{3114}
\pdfglyphtounicode{cheabkhasiancyrillic}{04BD}
\pdfglyphtounicode{check}{2713}
\pdfglyphtounicode{checkmark}{2713}
\pdfglyphtounicode{checyrillic}{0447}
\pdfglyphtounicode{chedescenderabkhasiancyrillic}{04BF}
\pdfglyphtounicode{chedescendercyrillic}{04B7}
\pdfglyphtounicode{chedieresiscyrillic}{04F5}
\pdfglyphtounicode{cheharmenian}{0573}
\pdfglyphtounicode{chekhakassiancyrillic}{04CC}
\pdfglyphtounicode{cheverticalstrokecyrillic}{04B9}
\pdfglyphtounicode{chi}{03C7}
\pdfglyphtounicode{chieuchacirclekorean}{3277}
\pdfglyphtounicode{chieuchaparenkorean}{3217}
\pdfglyphtounicode{chieuchcirclekorean}{3269}
\pdfglyphtounicode{chieuchkorean}{314A}
\pdfglyphtounicode{chieuchparenkorean}{3209}
\pdfglyphtounicode{chochangthai}{0E0A}
\pdfglyphtounicode{chochanthai}{0E08}
\pdfglyphtounicode{chochingthai}{0E09}
\pdfglyphtounicode{chochoethai}{0E0C}
\pdfglyphtounicode{chook}{0188}
\pdfglyphtounicode{cieucacirclekorean}{3276}
\pdfglyphtounicode{cieucaparenkorean}{3216}
\pdfglyphtounicode{cieuccirclekorean}{3268}
\pdfglyphtounicode{cieuckorean}{3148}
\pdfglyphtounicode{cieucparenkorean}{3208}
\pdfglyphtounicode{cieucuparenkorean}{321C}
\pdfglyphtounicode{circle}{25CB}
\pdfglyphtounicode{circleR}{00AE}
\pdfglyphtounicode{circleS}{24C8}
\pdfglyphtounicode{circleasterisk}{229B}
\pdfglyphtounicode{circlecopyrt}{20DD}
\pdfglyphtounicode{circledivide}{2298}
\pdfglyphtounicode{circledot}{2299}
\pdfglyphtounicode{circleequal}{229C}
\pdfglyphtounicode{circleminus}{2296}
\pdfglyphtounicode{circlemultiply}{2297}
\pdfglyphtounicode{circleot}{2299}
\pdfglyphtounicode{circleplus}{2295}
\pdfglyphtounicode{circlepostalmark}{3036}
\pdfglyphtounicode{circlering}{229A}
\pdfglyphtounicode{circlewithlefthalfblack}{25D0}
\pdfglyphtounicode{circlewithrighthalfblack}{25D1}
\pdfglyphtounicode{circumflex}{02C6}
\pdfglyphtounicode{circumflexbelowcmb}{032D}
\pdfglyphtounicode{circumflexcmb}{0302}
\pdfglyphtounicode{clear}{2327}
\pdfglyphtounicode{clickalveolar}{01C2}
\pdfglyphtounicode{clickdental}{01C0}
\pdfglyphtounicode{clicklateral}{01C1}
\pdfglyphtounicode{clickretroflex}{01C3}
\pdfglyphtounicode{clockwise}{27F3}
\pdfglyphtounicode{club}{2663}
\pdfglyphtounicode{clubsuitblack}{2663}
\pdfglyphtounicode{clubsuitwhite}{2667}
\pdfglyphtounicode{cmcubedsquare}{33A4}
\pdfglyphtounicode{cmonospace}{FF43}
\pdfglyphtounicode{cmsquaredsquare}{33A0}
\pdfglyphtounicode{coarmenian}{0581}
\pdfglyphtounicode{colon}{003A}
\pdfglyphtounicode{colonmonetary}{20A1}
\pdfglyphtounicode{colonmonospace}{FF1A}
\pdfglyphtounicode{colonsign}{20A1}
\pdfglyphtounicode{colonsmall}{FE55}
\pdfglyphtounicode{colontriangularhalfmod}{02D1}
\pdfglyphtounicode{colontriangularmod}{02D0}
\pdfglyphtounicode{comma}{002C}
\pdfglyphtounicode{commaabovecmb}{0313}
\pdfglyphtounicode{commaaboverightcmb}{0315}
\pdfglyphtounicode{commaaccent}{F6C3}
\pdfglyphtounicode{commaarabic}{060C}
\pdfglyphtounicode{commaarmenian}{055D}
\pdfglyphtounicode{commainferior}{002C}
\pdfglyphtounicode{commamonospace}{FF0C}
\pdfglyphtounicode{commareversedabovecmb}{0314}
\pdfglyphtounicode{commareversedmod}{02BD}
\pdfglyphtounicode{commasmall}{FE50}
\pdfglyphtounicode{commasuperior}{002C}
\pdfglyphtounicode{commaturnedabovecmb}{0312}
\pdfglyphtounicode{commaturnedmod}{02BB}
\pdfglyphtounicode{compass}{263C}
\pdfglyphtounicode{complement}{2201}
\pdfglyphtounicode{compwordmark}{200C}
\pdfglyphtounicode{congruent}{2245}
\pdfglyphtounicode{contourintegral}{222E}
\pdfglyphtounicode{control}{2303}
\pdfglyphtounicode{controlACK}{0006}
\pdfglyphtounicode{controlBEL}{0007}
\pdfglyphtounicode{controlBS}{0008}
\pdfglyphtounicode{controlCAN}{0018}
\pdfglyphtounicode{controlCR}{000D}
\pdfglyphtounicode{controlDC1}{0011}
\pdfglyphtounicode{controlDC2}{0012}
\pdfglyphtounicode{controlDC3}{0013}
\pdfglyphtounicode{controlDC4}{0014}
\pdfglyphtounicode{controlDEL}{007F}
\pdfglyphtounicode{controlDLE}{0010}
\pdfglyphtounicode{controlEM}{0019}
\pdfglyphtounicode{controlENQ}{0005}
\pdfglyphtounicode{controlEOT}{0004}
\pdfglyphtounicode{controlESC}{001B}
\pdfglyphtounicode{controlETB}{0017}
\pdfglyphtounicode{controlETX}{0003}
\pdfglyphtounicode{controlFF}{000C}
\pdfglyphtounicode{controlFS}{001C}
\pdfglyphtounicode{controlGS}{001D}
\pdfglyphtounicode{controlHT}{0009}
\pdfglyphtounicode{controlLF}{000A}
\pdfglyphtounicode{controlNAK}{0015}
\pdfglyphtounicode{controlRS}{001E}
\pdfglyphtounicode{controlSI}{000F}
\pdfglyphtounicode{controlSO}{000E}
\pdfglyphtounicode{controlSOT}{0002}
\pdfglyphtounicode{controlSTX}{0001}
\pdfglyphtounicode{controlSUB}{001A}
\pdfglyphtounicode{controlSYN}{0016}
\pdfglyphtounicode{controlUS}{001F}
\pdfglyphtounicode{controlVT}{000B}
\pdfglyphtounicode{coproduct}{2A3F}
\pdfglyphtounicode{copyright}{00A9}
\pdfglyphtounicode{copyrightsans}{00A9}
\pdfglyphtounicode{copyrightserif}{00A9}
\pdfglyphtounicode{cornerbracketleft}{300C}
\pdfglyphtounicode{cornerbracketlefthalfwidth}{FF62}
\pdfglyphtounicode{cornerbracketleftvertical}{FE41}
\pdfglyphtounicode{cornerbracketright}{300D}
\pdfglyphtounicode{cornerbracketrighthalfwidth}{FF63}
\pdfglyphtounicode{cornerbracketrightvertical}{FE42}
\pdfglyphtounicode{corporationsquare}{337F}
\pdfglyphtounicode{cosquare}{33C7}
\pdfglyphtounicode{coverkgsquare}{33C6}
\pdfglyphtounicode{cparen}{249E}
\pdfglyphtounicode{cruzeiro}{20A2}
\pdfglyphtounicode{cstretched}{0297}
\pdfglyphtounicode{ct}{0063 0074}
\pdfglyphtounicode{curlyand}{22CF}
\pdfglyphtounicode{curlyleft}{21AB}
\pdfglyphtounicode{curlyor}{22CE}
\pdfglyphtounicode{curlyright}{21AC}
\pdfglyphtounicode{currency}{00A4}
\pdfglyphtounicode{cwm}{200C}
\pdfglyphtounicode{cyrBreve}{02D8}
\pdfglyphtounicode{cyrFlex}{00A0 0311}
\pdfglyphtounicode{cyrbreve}{02D8}
\pdfglyphtounicode{cyrflex}{00A0 0311}
\pdfglyphtounicode{d}{0064}
\pdfglyphtounicode{daarmenian}{0564}
\pdfglyphtounicode{dabengali}{09A6}
\pdfglyphtounicode{dadarabic}{0636}
\pdfglyphtounicode{dadeva}{0926}
\pdfglyphtounicode{dadfinalarabic}{FEBE}
\pdfglyphtounicode{dadinitialarabic}{FEBF}
\pdfglyphtounicode{dadmedialarabic}{FEC0}
\pdfglyphtounicode{dagesh}{05BC}
\pdfglyphtounicode{dageshhebrew}{05BC}
\pdfglyphtounicode{dagger}{2020}
\pdfglyphtounicode{daggerdbl}{2021}
\pdfglyphtounicode{dagujarati}{0AA6}
\pdfglyphtounicode{dagurmukhi}{0A26}
\pdfglyphtounicode{dahiragana}{3060}
\pdfglyphtounicode{dakatakana}{30C0}
\pdfglyphtounicode{dalarabic}{062F}
\pdfglyphtounicode{dalet}{05D3}
\pdfglyphtounicode{daletdagesh}{FB33}
\pdfglyphtounicode{daletdageshhebrew}{FB33}
\pdfglyphtounicode{daleth}{2138}
\pdfglyphtounicode{dalethatafpatah}{05D3 05B2}
\pdfglyphtounicode{dalethatafpatahhebrew}{05D3 05B2}
\pdfglyphtounicode{dalethatafsegol}{05D3 05B1}
\pdfglyphtounicode{dalethatafsegolhebrew}{05D3 05B1}
\pdfglyphtounicode{dalethebrew}{05D3}
\pdfglyphtounicode{dalethiriq}{05D3 05B4}
\pdfglyphtounicode{dalethiriqhebrew}{05D3 05B4}
\pdfglyphtounicode{daletholam}{05D3 05B9}
\pdfglyphtounicode{daletholamhebrew}{05D3 05B9}
\pdfglyphtounicode{daletpatah}{05D3 05B7}
\pdfglyphtounicode{daletpatahhebrew}{05D3 05B7}
\pdfglyphtounicode{daletqamats}{05D3 05B8}
\pdfglyphtounicode{daletqamatshebrew}{05D3 05B8}
\pdfglyphtounicode{daletqubuts}{05D3 05BB}
\pdfglyphtounicode{daletqubutshebrew}{05D3 05BB}
\pdfglyphtounicode{daletsegol}{05D3 05B6}
\pdfglyphtounicode{daletsegolhebrew}{05D3 05B6}
\pdfglyphtounicode{daletsheva}{05D3 05B0}
\pdfglyphtounicode{daletshevahebrew}{05D3 05B0}
\pdfglyphtounicode{dalettsere}{05D3 05B5}
\pdfglyphtounicode{dalettserehebrew}{05D3 05B5}
\pdfglyphtounicode{dalfinalarabic}{FEAA}
\pdfglyphtounicode{dammaarabic}{064F}
\pdfglyphtounicode{dammalowarabic}{064F}
\pdfglyphtounicode{dammatanaltonearabic}{064C}
\pdfglyphtounicode{dammatanarabic}{064C}
\pdfglyphtounicode{danda}{0964}
\pdfglyphtounicode{dargahebrew}{05A7}
\pdfglyphtounicode{dargalefthebrew}{05A7}
\pdfglyphtounicode{dasiapneumatacyrilliccmb}{0485}
\pdfglyphtounicode{dbar}{0111}
\pdfglyphtounicode{dblGrave}{00A0 030F}
\pdfglyphtounicode{dblanglebracketleft}{300A}
\pdfglyphtounicode{dblanglebracketleftvertical}{FE3D}
\pdfglyphtounicode{dblanglebracketright}{300B}
\pdfglyphtounicode{dblanglebracketrightvertical}{FE3E}
\pdfglyphtounicode{dblarchinvertedbelowcmb}{032B}
\pdfglyphtounicode{dblarrowdwn}{21CA}
\pdfglyphtounicode{dblarrowheadleft}{219E}
\pdfglyphtounicode{dblarrowheadright}{21A0}
\pdfglyphtounicode{dblarrowleft}{21D4}
\pdfglyphtounicode{dblarrowright}{21D2}
\pdfglyphtounicode{dblarrowup}{21C8}
\pdfglyphtounicode{dblbracketleft}{27E6}
\pdfglyphtounicode{dblbracketright}{27E7}
\pdfglyphtounicode{dbldanda}{0965}
\pdfglyphtounicode{dblgrave}{00A0 030F}
\pdfglyphtounicode{dblgravecmb}{030F}
\pdfglyphtounicode{dblintegral}{222C}
\pdfglyphtounicode{dbllowline}{2017}
\pdfglyphtounicode{dbllowlinecmb}{0333}
\pdfglyphtounicode{dbloverlinecmb}{033F}
\pdfglyphtounicode{dblprimemod}{02BA}
\pdfglyphtounicode{dblverticalbar}{2016}
\pdfglyphtounicode{dblverticallineabovecmb}{030E}
\pdfglyphtounicode{dbopomofo}{3109}
\pdfglyphtounicode{dbsquare}{33C8}
\pdfglyphtounicode{dcaron}{010F}
\pdfglyphtounicode{dcedilla}{1E11}
\pdfglyphtounicode{dcircle}{24D3}
\pdfglyphtounicode{dcircumflexbelow}{1E13}
\pdfglyphtounicode{dcroat}{0111}
\pdfglyphtounicode{ddabengali}{09A1}
\pdfglyphtounicode{ddadeva}{0921}
\pdfglyphtounicode{ddagujarati}{0AA1}
\pdfglyphtounicode{ddagurmukhi}{0A21}
\pdfglyphtounicode{ddalarabic}{0688}
\pdfglyphtounicode{ddalfinalarabic}{FB89}
\pdfglyphtounicode{dddhadeva}{095C}
\pdfglyphtounicode{ddhabengali}{09A2}
\pdfglyphtounicode{ddhadeva}{0922}
\pdfglyphtounicode{ddhagujarati}{0AA2}
\pdfglyphtounicode{ddhagurmukhi}{0A22}
\pdfglyphtounicode{ddotaccent}{1E0B}
\pdfglyphtounicode{ddotbelow}{1E0D}
\pdfglyphtounicode{decimalseparatorarabic}{066B}
\pdfglyphtounicode{decimalseparatorpersian}{066B}
\pdfglyphtounicode{decyrillic}{0434}
\pdfglyphtounicode{defines}{225C}
\pdfglyphtounicode{degree}{00B0}
\pdfglyphtounicode{dehihebrew}{05AD}
\pdfglyphtounicode{dehiragana}{3067}
\pdfglyphtounicode{deicoptic}{03EF}
\pdfglyphtounicode{dekatakana}{30C7}
\pdfglyphtounicode{deleteleft}{232B}
\pdfglyphtounicode{deleteright}{2326}
\pdfglyphtounicode{delta}{03B4}
\pdfglyphtounicode{deltaturned}{018D}
\pdfglyphtounicode{denominatorminusonenumeratorbengali}{09F8}
\pdfglyphtounicode{dezh}{02A4}
\pdfglyphtounicode{dhabengali}{09A7}
\pdfglyphtounicode{dhadeva}{0927}
\pdfglyphtounicode{dhagujarati}{0AA7}
\pdfglyphtounicode{dhagurmukhi}{0A27}
\pdfglyphtounicode{dhook}{0257}
\pdfglyphtounicode{dialytikatonos}{0385}
\pdfglyphtounicode{dialytikatonoscmb}{0344}
\pdfglyphtounicode{diamond}{2666}
\pdfglyphtounicode{diamondmath}{22C4}
\pdfglyphtounicode{diamondsolid}{2666}
\pdfglyphtounicode{diamondsuitwhite}{2662}
\pdfglyphtounicode{dieresis}{00A8}
\pdfglyphtounicode{dieresisacute}{00A0 0308 0301}
\pdfglyphtounicode{dieresisbelowcmb}{0324}
\pdfglyphtounicode{dieresiscmb}{0308}
\pdfglyphtounicode{dieresisgrave}{00A0 0308 0300}
\pdfglyphtounicode{dieresistonos}{0385}
\pdfglyphtounicode{difference}{224F}
\pdfglyphtounicode{dihiragana}{3062}
\pdfglyphtounicode{dikatakana}{30C2}
\pdfglyphtounicode{dittomark}{3003}
\pdfglyphtounicode{divide}{00F7}
\pdfglyphtounicode{dividemultiply}{22C7}
\pdfglyphtounicode{divides}{2223}
\pdfglyphtounicode{divisionslash}{2215}
\pdfglyphtounicode{djecyrillic}{0452}
\pdfglyphtounicode{dkshade}{2593}
\pdfglyphtounicode{dlinebelow}{1E0F}
\pdfglyphtounicode{dlsquare}{3397}
\pdfglyphtounicode{dmacron}{0111}
\pdfglyphtounicode{dmonospace}{FF44}
\pdfglyphtounicode{dnblock}{2584}
\pdfglyphtounicode{dochadathai}{0E0E}
\pdfglyphtounicode{dodekthai}{0E14}
\pdfglyphtounicode{dohiragana}{3069}
\pdfglyphtounicode{dokatakana}{30C9}
\pdfglyphtounicode{dollar}{0024}
\pdfglyphtounicode{dollarinferior}{0024}
\pdfglyphtounicode{dollarmonospace}{FF04}
\pdfglyphtounicode{dollaroldstyle}{0024}
\pdfglyphtounicode{dollarsmall}{FE69}
\pdfglyphtounicode{dollarsuperior}{0024}
\pdfglyphtounicode{dong}{20AB}
\pdfglyphtounicode{dorusquare}{3326}
\pdfglyphtounicode{dotaccent}{02D9}
\pdfglyphtounicode{dotaccentcmb}{0307}
\pdfglyphtounicode{dotbelowcmb}{0323}
\pdfglyphtounicode{dotbelowcomb}{0323}
\pdfglyphtounicode{dotkatakana}{30FB}
\pdfglyphtounicode{dotlessi}{0131}
\pdfglyphtounicode{dotlessj}{0237}
\pdfglyphtounicode{dotlessjstrokehook}{0284}
\pdfglyphtounicode{dotmath}{22C5}
\pdfglyphtounicode{dotplus}{2214}
\pdfglyphtounicode{dottedcircle}{25CC}
\pdfglyphtounicode{doubleyodpatah}{FB1F}
\pdfglyphtounicode{doubleyodpatahhebrew}{FB1F}
\pdfglyphtounicode{downfall}{22CE}
\pdfglyphtounicode{downslope}{29F9}
\pdfglyphtounicode{downtackbelowcmb}{031E}
\pdfglyphtounicode{downtackmod}{02D5}
\pdfglyphtounicode{dparen}{249F}
\pdfglyphtounicode{dsuperior}{0064}
\pdfglyphtounicode{dtail}{0256}
\pdfglyphtounicode{dtopbar}{018C}
\pdfglyphtounicode{duhiragana}{3065}
\pdfglyphtounicode{dukatakana}{30C5}
\pdfglyphtounicode{dz}{01F3}
\pdfglyphtounicode{dzaltone}{02A3}
\pdfglyphtounicode{dzcaron}{01C6}
\pdfglyphtounicode{dzcurl}{02A5}
\pdfglyphtounicode{dzeabkhasiancyrillic}{04E1}
\pdfglyphtounicode{dzecyrillic}{0455}
\pdfglyphtounicode{dzhecyrillic}{045F}
\pdfglyphtounicode{e}{0065}
\pdfglyphtounicode{eacute}{00E9}
\pdfglyphtounicode{earth}{2641}
\pdfglyphtounicode{ebengali}{098F}
\pdfglyphtounicode{ebopomofo}{311C}
\pdfglyphtounicode{ebreve}{0115}
\pdfglyphtounicode{ecandradeva}{090D}
\pdfglyphtounicode{ecandragujarati}{0A8D}
\pdfglyphtounicode{ecandravowelsigndeva}{0945}
\pdfglyphtounicode{ecandravowelsigngujarati}{0AC5}
\pdfglyphtounicode{ecaron}{011B}
\pdfglyphtounicode{ecedillabreve}{1E1D}
\pdfglyphtounicode{echarmenian}{0565}
\pdfglyphtounicode{echyiwnarmenian}{0587}
\pdfglyphtounicode{ecircle}{24D4}
\pdfglyphtounicode{ecircumflex}{00EA}
\pdfglyphtounicode{ecircumflexacute}{1EBF}
\pdfglyphtounicode{ecircumflexbelow}{1E19}
\pdfglyphtounicode{ecircumflexdotbelow}{1EC7}
\pdfglyphtounicode{ecircumflexgrave}{1EC1}
\pdfglyphtounicode{ecircumflexhookabove}{1EC3}
\pdfglyphtounicode{ecircumflextilde}{1EC5}
\pdfglyphtounicode{ecyrillic}{0454}
\pdfglyphtounicode{edblgrave}{0205}
\pdfglyphtounicode{edeva}{090F}
\pdfglyphtounicode{edieresis}{00EB}
\pdfglyphtounicode{edot}{0117}
\pdfglyphtounicode{edotaccent}{0117}
\pdfglyphtounicode{edotbelow}{1EB9}
\pdfglyphtounicode{eegurmukhi}{0A0F}
\pdfglyphtounicode{eematragurmukhi}{0A47}
\pdfglyphtounicode{efcyrillic}{0444}
\pdfglyphtounicode{egrave}{00E8}
\pdfglyphtounicode{egujarati}{0A8F}
\pdfglyphtounicode{eharmenian}{0567}
\pdfglyphtounicode{ehbopomofo}{311D}
\pdfglyphtounicode{ehiragana}{3048}
\pdfglyphtounicode{ehookabove}{1EBB}
\pdfglyphtounicode{eibopomofo}{311F}
\pdfglyphtounicode{eight}{0038}
\pdfglyphtounicode{eightarabic}{0668}
\pdfglyphtounicode{eightbengali}{09EE}
\pdfglyphtounicode{eightcircle}{2467}
\pdfglyphtounicode{eightcircleinversesansserif}{2791}
\pdfglyphtounicode{eightdeva}{096E}
\pdfglyphtounicode{eighteencircle}{2471}
\pdfglyphtounicode{eighteenparen}{2485}
\pdfglyphtounicode{eighteenperiod}{2499}
\pdfglyphtounicode{eightgujarati}{0AEE}
\pdfglyphtounicode{eightgurmukhi}{0A6E}
\pdfglyphtounicode{eighthackarabic}{0668}
\pdfglyphtounicode{eighthangzhou}{3028}
\pdfglyphtounicode{eighthnotebeamed}{266B}
\pdfglyphtounicode{eightideographicparen}{3227}
\pdfglyphtounicode{eightinferior}{2088}
\pdfglyphtounicode{eightmonospace}{FF18}
\pdfglyphtounicode{eightoldstyle}{0038}
\pdfglyphtounicode{eightparen}{247B}
\pdfglyphtounicode{eightperiod}{248F}
\pdfglyphtounicode{eightpersian}{06F8}
\pdfglyphtounicode{eightroman}{2177}
\pdfglyphtounicode{eightsuperior}{2078}
\pdfglyphtounicode{eightthai}{0E58}
\pdfglyphtounicode{einvertedbreve}{0207}
\pdfglyphtounicode{eiotifiedcyrillic}{0465}
\pdfglyphtounicode{ekatakana}{30A8}
\pdfglyphtounicode{ekatakanahalfwidth}{FF74}
\pdfglyphtounicode{ekonkargurmukhi}{0A74}
\pdfglyphtounicode{ekorean}{3154}
\pdfglyphtounicode{elcyrillic}{043B}
\pdfglyphtounicode{element}{2208}
\pdfglyphtounicode{elevencircle}{246A}
\pdfglyphtounicode{elevenparen}{247E}
\pdfglyphtounicode{elevenperiod}{2492}
\pdfglyphtounicode{elevenroman}{217A}
\pdfglyphtounicode{ellipsis}{2026}
\pdfglyphtounicode{ellipsisvertical}{22EE}
\pdfglyphtounicode{emacron}{0113}
\pdfglyphtounicode{emacronacute}{1E17}
\pdfglyphtounicode{emacrongrave}{1E15}
\pdfglyphtounicode{emcyrillic}{043C}
\pdfglyphtounicode{emdash}{2014}
\pdfglyphtounicode{emdashvertical}{FE31}
\pdfglyphtounicode{emonospace}{FF45}
\pdfglyphtounicode{emphasismarkarmenian}{055B}
\pdfglyphtounicode{emptyset}{2205}
\pdfglyphtounicode{enbopomofo}{3123}
\pdfglyphtounicode{encyrillic}{043D}
\pdfglyphtounicode{endash}{2013}
\pdfglyphtounicode{endashvertical}{FE32}
\pdfglyphtounicode{endescendercyrillic}{04A3}
\pdfglyphtounicode{eng}{014B}
\pdfglyphtounicode{engbopomofo}{3125}
\pdfglyphtounicode{enghecyrillic}{04A5}
\pdfglyphtounicode{enhookcyrillic}{04C8}
\pdfglyphtounicode{enspace}{2002}
\pdfglyphtounicode{eogonek}{0119}
\pdfglyphtounicode{eokorean}{3153}
\pdfglyphtounicode{eopen}{025B}
\pdfglyphtounicode{eopenclosed}{029A}
\pdfglyphtounicode{eopenreversed}{025C}
\pdfglyphtounicode{eopenreversedclosed}{025E}
\pdfglyphtounicode{eopenreversedhook}{025D}
\pdfglyphtounicode{eparen}{24A0}
\pdfglyphtounicode{epsilon}{03B5}
\pdfglyphtounicode{epsilon1}{03F5}
\pdfglyphtounicode{epsiloninv}{03F6}
\pdfglyphtounicode{epsilontonos}{03AD}
\pdfglyphtounicode{equal}{003D}
\pdfglyphtounicode{equaldotleftright}{2252}
\pdfglyphtounicode{equaldotrightleft}{2253}
\pdfglyphtounicode{equalmonospace}{FF1D}
\pdfglyphtounicode{equalorfollows}{22DF}
\pdfglyphtounicode{equalorgreater}{2A96}
\pdfglyphtounicode{equalorless}{2A95}
\pdfglyphtounicode{equalorprecedes}{22DE}
\pdfglyphtounicode{equalorsimilar}{2242}
\pdfglyphtounicode{equalsdots}{2251}
\pdfglyphtounicode{equalsmall}{FE66}
\pdfglyphtounicode{equalsuperior}{207C}
\pdfglyphtounicode{equivalence}{2261}
\pdfglyphtounicode{equivasymptotic}{224D}
\pdfglyphtounicode{erbopomofo}{3126}
\pdfglyphtounicode{ercyrillic}{0440}
\pdfglyphtounicode{ereversed}{0258}
\pdfglyphtounicode{ereversedcyrillic}{044D}
\pdfglyphtounicode{escyrillic}{0441}
\pdfglyphtounicode{esdescendercyrillic}{04AB}
\pdfglyphtounicode{esh}{0283}
\pdfglyphtounicode{eshcurl}{0286}
\pdfglyphtounicode{eshortdeva}{090E}
\pdfglyphtounicode{eshortvowelsigndeva}{0946}
\pdfglyphtounicode{eshreversedloop}{01AA}
\pdfglyphtounicode{eshsquatreversed}{0285}
\pdfglyphtounicode{esmallhiragana}{3047}
\pdfglyphtounicode{esmallkatakana}{30A7}
\pdfglyphtounicode{esmallkatakanahalfwidth}{FF6A}
\pdfglyphtounicode{estimated}{212E}
\pdfglyphtounicode{esuperior}{0065}
\pdfglyphtounicode{eta}{03B7}
\pdfglyphtounicode{etarmenian}{0568}
\pdfglyphtounicode{etatonos}{03AE}
\pdfglyphtounicode{eth}{00F0}
\pdfglyphtounicode{etilde}{1EBD}
\pdfglyphtounicode{etildebelow}{1E1B}
\pdfglyphtounicode{etnahtafoukhhebrew}{0591}
\pdfglyphtounicode{etnahtafoukhlefthebrew}{0591}
\pdfglyphtounicode{etnahtahebrew}{0591}
\pdfglyphtounicode{etnahtalefthebrew}{0591}
\pdfglyphtounicode{eturned}{01DD}
\pdfglyphtounicode{eukorean}{3161}
\pdfglyphtounicode{euro}{20AC}
\pdfglyphtounicode{evowelsignbengali}{09C7}
\pdfglyphtounicode{evowelsigndeva}{0947}
\pdfglyphtounicode{evowelsigngujarati}{0AC7}
\pdfglyphtounicode{exclam}{0021}
\pdfglyphtounicode{exclamarmenian}{055C}
\pdfglyphtounicode{exclamdbl}{203C}
\pdfglyphtounicode{exclamdown}{00A1}
\pdfglyphtounicode{exclamdownsmall}{00A1}
\pdfglyphtounicode{exclammonospace}{FF01}
\pdfglyphtounicode{exclamsmall}{0021}
\pdfglyphtounicode{existential}{2203}
\pdfglyphtounicode{ezh}{0292}
\pdfglyphtounicode{ezhcaron}{01EF}
\pdfglyphtounicode{ezhcurl}{0293}
\pdfglyphtounicode{ezhreversed}{01B9}
\pdfglyphtounicode{ezhtail}{01BA}
\pdfglyphtounicode{f}{0066}
\pdfglyphtounicode{fadeva}{095E}
\pdfglyphtounicode{fagurmukhi}{0A5E}
\pdfglyphtounicode{fahrenheit}{2109}
\pdfglyphtounicode{fathaarabic}{064E}
\pdfglyphtounicode{fathalowarabic}{064E}
\pdfglyphtounicode{fathatanarabic}{064B}
\pdfglyphtounicode{fbopomofo}{3108}
\pdfglyphtounicode{fcircle}{24D5}
\pdfglyphtounicode{fdotaccent}{1E1F}
\pdfglyphtounicode{feharabic}{0641}
\pdfglyphtounicode{feharmenian}{0586}
\pdfglyphtounicode{fehfinalarabic}{FED2}
\pdfglyphtounicode{fehinitialarabic}{FED3}
\pdfglyphtounicode{fehmedialarabic}{FED4}
\pdfglyphtounicode{feicoptic}{03E5}
\pdfglyphtounicode{female}{2640}
\pdfglyphtounicode{ff}{0066 0066}
\pdfglyphtounicode{ffi}{0066 0066 0069}
\pdfglyphtounicode{ffl}{0066 0066 006C}
\pdfglyphtounicode{fi}{0066 0069}
\pdfglyphtounicode{fifteencircle}{246E}
\pdfglyphtounicode{fifteenparen}{2482}
\pdfglyphtounicode{fifteenperiod}{2496}
\pdfglyphtounicode{figuredash}{2012}
\pdfglyphtounicode{filledbox}{25A0}
\pdfglyphtounicode{filledrect}{25AC}
\pdfglyphtounicode{finalkaf}{05DA}
\pdfglyphtounicode{finalkafdagesh}{FB3A}
\pdfglyphtounicode{finalkafdageshhebrew}{FB3A}
\pdfglyphtounicode{finalkafhebrew}{05DA}
\pdfglyphtounicode{finalkafqamats}{05DA 05B8}
\pdfglyphtounicode{finalkafqamatshebrew}{05DA 05B8}
\pdfglyphtounicode{finalkafsheva}{05DA 05B0}
\pdfglyphtounicode{finalkafshevahebrew}{05DA 05B0}
\pdfglyphtounicode{finalmem}{05DD}
\pdfglyphtounicode{finalmemhebrew}{05DD}
\pdfglyphtounicode{finalnun}{05DF}
\pdfglyphtounicode{finalnunhebrew}{05DF}
\pdfglyphtounicode{finalpe}{05E3}
\pdfglyphtounicode{finalpehebrew}{05E3}
\pdfglyphtounicode{finaltsadi}{05E5}
\pdfglyphtounicode{finaltsadihebrew}{05E5}
\pdfglyphtounicode{firsttonechinese}{02C9}
\pdfglyphtounicode{fisheye}{25C9}
\pdfglyphtounicode{fitacyrillic}{0473}
\pdfglyphtounicode{five}{0035}
\pdfglyphtounicode{fivearabic}{0665}
\pdfglyphtounicode{fivebengali}{09EB}
\pdfglyphtounicode{fivecircle}{2464}
\pdfglyphtounicode{fivecircleinversesansserif}{278E}
\pdfglyphtounicode{fivedeva}{096B}
\pdfglyphtounicode{fiveeighths}{215D}
\pdfglyphtounicode{fivegujarati}{0AEB}
\pdfglyphtounicode{fivegurmukhi}{0A6B}
\pdfglyphtounicode{fivehackarabic}{0665}
\pdfglyphtounicode{fivehangzhou}{3025}
\pdfglyphtounicode{fiveideographicparen}{3224}
\pdfglyphtounicode{fiveinferior}{2085}
\pdfglyphtounicode{fivemonospace}{FF15}
\pdfglyphtounicode{fiveoldstyle}{0035}
\pdfglyphtounicode{fiveparen}{2478}
\pdfglyphtounicode{fiveperiod}{248C}
\pdfglyphtounicode{fivepersian}{06F5}
\pdfglyphtounicode{fiveroman}{2174}
\pdfglyphtounicode{fivesuperior}{2075}
\pdfglyphtounicode{fivethai}{0E55}
\pdfglyphtounicode{fl}{0066 006C}
\pdfglyphtounicode{flat}{266D}
\pdfglyphtounicode{floorleft}{230A}
\pdfglyphtounicode{floorright}{230B}
\pdfglyphtounicode{florin}{0192}
\pdfglyphtounicode{fmonospace}{FF46}
\pdfglyphtounicode{fmsquare}{3399}
\pdfglyphtounicode{fofanthai}{0E1F}
\pdfglyphtounicode{fofathai}{0E1D}
\pdfglyphtounicode{follownotdbleqv}{2ABA}
\pdfglyphtounicode{follownotslnteql}{2AB6}
\pdfglyphtounicode{followornoteqvlnt}{22E9}
\pdfglyphtounicode{follows}{227B}
\pdfglyphtounicode{followsequal}{2AB0}
\pdfglyphtounicode{followsorcurly}{227D}
\pdfglyphtounicode{followsorequal}{227F}
\pdfglyphtounicode{fongmanthai}{0E4F}
\pdfglyphtounicode{forall}{2200}
\pdfglyphtounicode{forces}{22A9}
\pdfglyphtounicode{forcesbar}{22AA}
\pdfglyphtounicode{fork}{22D4}
\pdfglyphtounicode{four}{0034}
\pdfglyphtounicode{fourarabic}{0664}
\pdfglyphtounicode{fourbengali}{09EA}
\pdfglyphtounicode{fourcircle}{2463}
\pdfglyphtounicode{fourcircleinversesansserif}{278D}
\pdfglyphtounicode{fourdeva}{096A}
\pdfglyphtounicode{fourgujarati}{0AEA}
\pdfglyphtounicode{fourgurmukhi}{0A6A}
\pdfglyphtounicode{fourhackarabic}{0664}
\pdfglyphtounicode{fourhangzhou}{3024}
\pdfglyphtounicode{fourideographicparen}{3223}
\pdfglyphtounicode{fourinferior}{2084}
\pdfglyphtounicode{fourmonospace}{FF14}
\pdfglyphtounicode{fournumeratorbengali}{09F7}
\pdfglyphtounicode{fouroldstyle}{0034}
\pdfglyphtounicode{fourparen}{2477}
\pdfglyphtounicode{fourperiod}{248B}
\pdfglyphtounicode{fourpersian}{06F4}
\pdfglyphtounicode{fourroman}{2173}
\pdfglyphtounicode{foursuperior}{2074}
\pdfglyphtounicode{fourteencircle}{246D}
\pdfglyphtounicode{fourteenparen}{2481}
\pdfglyphtounicode{fourteenperiod}{2495}
\pdfglyphtounicode{fourthai}{0E54}
\pdfglyphtounicode{fourthtonechinese}{02CB}
\pdfglyphtounicode{fparen}{24A1}
\pdfglyphtounicode{fraction}{2044}
\pdfglyphtounicode{franc}{20A3}
\pdfglyphtounicode{frown}{2322}
\pdfglyphtounicode{g}{0067}
\pdfglyphtounicode{gabengali}{0997}
\pdfglyphtounicode{gacute}{01F5}
\pdfglyphtounicode{gadeva}{0917}
\pdfglyphtounicode{gafarabic}{06AF}
\pdfglyphtounicode{gaffinalarabic}{FB93}
\pdfglyphtounicode{gafinitialarabic}{FB94}
\pdfglyphtounicode{gafmedialarabic}{FB95}
\pdfglyphtounicode{gagujarati}{0A97}
\pdfglyphtounicode{gagurmukhi}{0A17}
\pdfglyphtounicode{gahiragana}{304C}
\pdfglyphtounicode{gakatakana}{30AC}
\pdfglyphtounicode{gamma}{03B3}
\pdfglyphtounicode{gammalatinsmall}{0263}
\pdfglyphtounicode{gammasuperior}{02E0}
\pdfglyphtounicode{gangiacoptic}{03EB}
\pdfglyphtounicode{gbopomofo}{310D}
\pdfglyphtounicode{gbreve}{011F}
\pdfglyphtounicode{gcaron}{01E7}
\pdfglyphtounicode{gcedilla}{0123}
\pdfglyphtounicode{gcircle}{24D6}
\pdfglyphtounicode{gcircumflex}{011D}
\pdfglyphtounicode{gcommaaccent}{0123}
\pdfglyphtounicode{gdot}{0121}
\pdfglyphtounicode{gdotaccent}{0121}
\pdfglyphtounicode{gecyrillic}{0433}
\pdfglyphtounicode{gehiragana}{3052}
\pdfglyphtounicode{gekatakana}{30B2}
\pdfglyphtounicode{geomequivalent}{224E}
\pdfglyphtounicode{geometricallyequal}{2251}
\pdfglyphtounicode{gereshaccenthebrew}{059C}
\pdfglyphtounicode{gereshhebrew}{05F3}
\pdfglyphtounicode{gereshmuqdamhebrew}{059D}
\pdfglyphtounicode{germandbls}{00DF}
\pdfglyphtounicode{gershayimaccenthebrew}{059E}
\pdfglyphtounicode{gershayimhebrew}{05F4}
\pdfglyphtounicode{getamark}{3013}
\pdfglyphtounicode{ghabengali}{0998}
\pdfglyphtounicode{ghadarmenian}{0572}
\pdfglyphtounicode{ghadeva}{0918}
\pdfglyphtounicode{ghagujarati}{0A98}
\pdfglyphtounicode{ghagurmukhi}{0A18}
\pdfglyphtounicode{ghainarabic}{063A}
\pdfglyphtounicode{ghainfinalarabic}{FECE}
\pdfglyphtounicode{ghaininitialarabic}{FECF}
\pdfglyphtounicode{ghainmedialarabic}{FED0}
\pdfglyphtounicode{ghemiddlehookcyrillic}{0495}
\pdfglyphtounicode{ghestrokecyrillic}{0493}
\pdfglyphtounicode{gheupturncyrillic}{0491}
\pdfglyphtounicode{ghhadeva}{095A}
\pdfglyphtounicode{ghhagurmukhi}{0A5A}
\pdfglyphtounicode{ghook}{0260}
\pdfglyphtounicode{ghzsquare}{3393}
\pdfglyphtounicode{gihiragana}{304E}
\pdfglyphtounicode{gikatakana}{30AE}
\pdfglyphtounicode{gimarmenian}{0563}
\pdfglyphtounicode{gimel}{05D2}
\pdfglyphtounicode{gimeldagesh}{FB32}
\pdfglyphtounicode{gimeldageshhebrew}{FB32}
\pdfglyphtounicode{gimelhebrew}{05D2}
\pdfglyphtounicode{gjecyrillic}{0453}
\pdfglyphtounicode{glottalinvertedstroke}{01BE}
\pdfglyphtounicode{glottalstop}{0294}
\pdfglyphtounicode{glottalstopinverted}{0296}
\pdfglyphtounicode{glottalstopmod}{02C0}
\pdfglyphtounicode{glottalstopreversed}{0295}
\pdfglyphtounicode{glottalstopreversedmod}{02C1}
\pdfglyphtounicode{glottalstopreversedsuperior}{02E4}
\pdfglyphtounicode{glottalstopstroke}{02A1}
\pdfglyphtounicode{glottalstopstrokereversed}{02A2}
\pdfglyphtounicode{gmacron}{1E21}
\pdfglyphtounicode{gmonospace}{FF47}
\pdfglyphtounicode{gohiragana}{3054}
\pdfglyphtounicode{gokatakana}{30B4}
\pdfglyphtounicode{gparen}{24A2}
\pdfglyphtounicode{gpasquare}{33AC}
\pdfglyphtounicode{gradient}{2207}
\pdfglyphtounicode{grave}{0060}
\pdfglyphtounicode{gravebelowcmb}{0316}
\pdfglyphtounicode{gravecmb}{0300}
\pdfglyphtounicode{gravecomb}{0300}
\pdfglyphtounicode{gravedeva}{0953}
\pdfglyphtounicode{gravelowmod}{02CE}
\pdfglyphtounicode{gravemonospace}{FF40}
\pdfglyphtounicode{gravetonecmb}{0340}
\pdfglyphtounicode{greater}{003E}
\pdfglyphtounicode{greaterdbleqlless}{2A8C}
\pdfglyphtounicode{greaterdblequal}{2267}
\pdfglyphtounicode{greaterdot}{22D7}
\pdfglyphtounicode{greaterequal}{2265}
\pdfglyphtounicode{greaterequalorless}{22DB}
\pdfglyphtounicode{greaterlessequal}{22DB}
\pdfglyphtounicode{greatermonospace}{FF1E}
\pdfglyphtounicode{greatermuch}{226B}
\pdfglyphtounicode{greaternotdblequal}{2A8A}
\pdfglyphtounicode{greaternotequal}{2A88}
\pdfglyphtounicode{greaterorapproxeql}{2A86}
\pdfglyphtounicode{greaterorequalslant}{2A7E}
\pdfglyphtounicode{greaterorequivalent}{2273}
\pdfglyphtounicode{greaterorless}{2277}
\pdfglyphtounicode{greaterornotdbleql}{2269}
\pdfglyphtounicode{greaterornotequal}{2269}
\pdfglyphtounicode{greaterorsimilar}{2273}
\pdfglyphtounicode{greateroverequal}{2267}
\pdfglyphtounicode{greatersmall}{FE65}
\pdfglyphtounicode{gscript}{0261}
\pdfglyphtounicode{gstroke}{01E5}
\pdfglyphtounicode{guhiragana}{3050}
\pdfglyphtounicode{guillemotleft}{00AB}
\pdfglyphtounicode{guillemotright}{00BB}
\pdfglyphtounicode{guilsinglleft}{2039}
\pdfglyphtounicode{guilsinglright}{203A}
\pdfglyphtounicode{gukatakana}{30B0}
\pdfglyphtounicode{guramusquare}{3318}
\pdfglyphtounicode{gysquare}{33C9}
\pdfglyphtounicode{h}{0068}
\pdfglyphtounicode{haabkhasiancyrillic}{04A9}
\pdfglyphtounicode{haaltonearabic}{06C1}
\pdfglyphtounicode{habengali}{09B9}
\pdfglyphtounicode{hadescendercyrillic}{04B3}
\pdfglyphtounicode{hadeva}{0939}
\pdfglyphtounicode{hagujarati}{0AB9}
\pdfglyphtounicode{hagurmukhi}{0A39}
\pdfglyphtounicode{haharabic}{062D}
\pdfglyphtounicode{hahfinalarabic}{FEA2}
\pdfglyphtounicode{hahinitialarabic}{FEA3}
\pdfglyphtounicode{hahiragana}{306F}
\pdfglyphtounicode{hahmedialarabic}{FEA4}
\pdfglyphtounicode{haitusquare}{332A}
\pdfglyphtounicode{hakatakana}{30CF}
\pdfglyphtounicode{hakatakanahalfwidth}{FF8A}
\pdfglyphtounicode{halantgurmukhi}{0A4D}
\pdfglyphtounicode{hamzaarabic}{0621}
\pdfglyphtounicode{hamzadammaarabic}{0621 064F}
\pdfglyphtounicode{hamzadammatanarabic}{0621 064C}
\pdfglyphtounicode{hamzafathaarabic}{0621 064E}
\pdfglyphtounicode{hamzafathatanarabic}{0621 064B}
\pdfglyphtounicode{hamzalowarabic}{0621}
\pdfglyphtounicode{hamzalowkasraarabic}{0621 0650}
\pdfglyphtounicode{hamzalowkasratanarabic}{0621 064D}
\pdfglyphtounicode{hamzasukunarabic}{0621 0652}
\pdfglyphtounicode{hangulfiller}{3164}
\pdfglyphtounicode{hardsigncyrillic}{044A}
\pdfglyphtounicode{harpoondownleft}{21C3}
\pdfglyphtounicode{harpoondownright}{21C2}
\pdfglyphtounicode{harpoonleftbarbup}{21BC}
\pdfglyphtounicode{harpoonleftright}{21CC}
\pdfglyphtounicode{harpoonrightbarbup}{21C0}
\pdfglyphtounicode{harpoonrightleft}{21CB}
\pdfglyphtounicode{harpoonupleft}{21BF}
\pdfglyphtounicode{harpoonupright}{21BE}
\pdfglyphtounicode{hasquare}{33CA}
\pdfglyphtounicode{hatafpatah}{05B2}
\pdfglyphtounicode{hatafpatah16}{05B2}
\pdfglyphtounicode{hatafpatah23}{05B2}
\pdfglyphtounicode{hatafpatah2f}{05B2}
\pdfglyphtounicode{hatafpatahhebrew}{05B2}
\pdfglyphtounicode{hatafpatahnarrowhebrew}{05B2}
\pdfglyphtounicode{hatafpatahquarterhebrew}{05B2}
\pdfglyphtounicode{hatafpatahwidehebrew}{05B2}
\pdfglyphtounicode{hatafqamats}{05B3}
\pdfglyphtounicode{hatafqamats1b}{05B3}
\pdfglyphtounicode{hatafqamats28}{05B3}
\pdfglyphtounicode{hatafqamats34}{05B3}
\pdfglyphtounicode{hatafqamatshebrew}{05B3}
\pdfglyphtounicode{hatafqamatsnarrowhebrew}{05B3}
\pdfglyphtounicode{hatafqamatsquarterhebrew}{05B3}
\pdfglyphtounicode{hatafqamatswidehebrew}{05B3}
\pdfglyphtounicode{hatafsegol}{05B1}
\pdfglyphtounicode{hatafsegol17}{05B1}
\pdfglyphtounicode{hatafsegol24}{05B1}
\pdfglyphtounicode{hatafsegol30}{05B1}
\pdfglyphtounicode{hatafsegolhebrew}{05B1}
\pdfglyphtounicode{hatafsegolnarrowhebrew}{05B1}
\pdfglyphtounicode{hatafsegolquarterhebrew}{05B1}
\pdfglyphtounicode{hatafsegolwidehebrew}{05B1}
\pdfglyphtounicode{hbar}{0127}
\pdfglyphtounicode{hbopomofo}{310F}
\pdfglyphtounicode{hbrevebelow}{1E2B}
\pdfglyphtounicode{hcedilla}{1E29}
\pdfglyphtounicode{hcircle}{24D7}
\pdfglyphtounicode{hcircumflex}{0125}
\pdfglyphtounicode{hdieresis}{1E27}
\pdfglyphtounicode{hdotaccent}{1E23}
\pdfglyphtounicode{hdotbelow}{1E25}
\pdfglyphtounicode{he}{05D4}
\pdfglyphtounicode{heart}{2665}
\pdfglyphtounicode{heartsuitblack}{2665}
\pdfglyphtounicode{heartsuitwhite}{2661}
\pdfglyphtounicode{hedagesh}{FB34}
\pdfglyphtounicode{hedageshhebrew}{FB34}
\pdfglyphtounicode{hehaltonearabic}{06C1}
\pdfglyphtounicode{heharabic}{0647}
\pdfglyphtounicode{hehebrew}{05D4}
\pdfglyphtounicode{hehfinalaltonearabic}{FBA7}
\pdfglyphtounicode{hehfinalalttwoarabic}{FEEA}
\pdfglyphtounicode{hehfinalarabic}{FEEA}
\pdfglyphtounicode{hehhamzaabovefinalarabic}{FBA5}
\pdfglyphtounicode{hehhamzaaboveisolatedarabic}{FBA4}
\pdfglyphtounicode{hehinitialaltonearabic}{FBA8}
\pdfglyphtounicode{hehinitialarabic}{FEEB}
\pdfglyphtounicode{hehiragana}{3078}
\pdfglyphtounicode{hehmedialaltonearabic}{FBA9}
\pdfglyphtounicode{hehmedialarabic}{FEEC}
\pdfglyphtounicode{heiseierasquare}{337B}
\pdfglyphtounicode{hekatakana}{30D8}
\pdfglyphtounicode{hekatakanahalfwidth}{FF8D}
\pdfglyphtounicode{hekutaarusquare}{3336}
\pdfglyphtounicode{henghook}{0267}
\pdfglyphtounicode{herutusquare}{3339}
\pdfglyphtounicode{het}{05D7}
\pdfglyphtounicode{hethebrew}{05D7}
\pdfglyphtounicode{hhook}{0266}
\pdfglyphtounicode{hhooksuperior}{02B1}
\pdfglyphtounicode{hieuhacirclekorean}{327B}
\pdfglyphtounicode{hieuhaparenkorean}{321B}
\pdfglyphtounicode{hieuhcirclekorean}{326D}
\pdfglyphtounicode{hieuhkorean}{314E}
\pdfglyphtounicode{hieuhparenkorean}{320D}
\pdfglyphtounicode{hihiragana}{3072}
\pdfglyphtounicode{hikatakana}{30D2}
\pdfglyphtounicode{hikatakanahalfwidth}{FF8B}
\pdfglyphtounicode{hiriq}{05B4}
\pdfglyphtounicode{hiriq14}{05B4}
\pdfglyphtounicode{hiriq21}{05B4}
\pdfglyphtounicode{hiriq2d}{05B4}
\pdfglyphtounicode{hiriqhebrew}{05B4}
\pdfglyphtounicode{hiriqnarrowhebrew}{05B4}
\pdfglyphtounicode{hiriqquarterhebrew}{05B4}
\pdfglyphtounicode{hiriqwidehebrew}{05B4}
\pdfglyphtounicode{hlinebelow}{1E96}
\pdfglyphtounicode{hmonospace}{FF48}
\pdfglyphtounicode{hoarmenian}{0570}
\pdfglyphtounicode{hohipthai}{0E2B}
\pdfglyphtounicode{hohiragana}{307B}
\pdfglyphtounicode{hokatakana}{30DB}
\pdfglyphtounicode{hokatakanahalfwidth}{FF8E}
\pdfglyphtounicode{holam}{05B9}
\pdfglyphtounicode{holam19}{05B9}
\pdfglyphtounicode{holam26}{05B9}
\pdfglyphtounicode{holam32}{05B9}
\pdfglyphtounicode{holamhebrew}{05B9}
\pdfglyphtounicode{holamnarrowhebrew}{05B9}
\pdfglyphtounicode{holamquarterhebrew}{05B9}
\pdfglyphtounicode{holamwidehebrew}{05B9}
\pdfglyphtounicode{honokhukthai}{0E2E}
\pdfglyphtounicode{hookabovecomb}{0309}
\pdfglyphtounicode{hookcmb}{0309}
\pdfglyphtounicode{hookpalatalizedbelowcmb}{0321}
\pdfglyphtounicode{hookretroflexbelowcmb}{0322}
\pdfglyphtounicode{hoonsquare}{3342}
\pdfglyphtounicode{horicoptic}{03E9}
\pdfglyphtounicode{horizontalbar}{2015}
\pdfglyphtounicode{horncmb}{031B}
\pdfglyphtounicode{hotsprings}{2668}
\pdfglyphtounicode{house}{2302}
\pdfglyphtounicode{hparen}{24A3}
\pdfglyphtounicode{hsuperior}{02B0}
\pdfglyphtounicode{hturned}{0265}
\pdfglyphtounicode{huhiragana}{3075}
\pdfglyphtounicode{huiitosquare}{3333}
\pdfglyphtounicode{hukatakana}{30D5}
\pdfglyphtounicode{hukatakanahalfwidth}{FF8C}
\pdfglyphtounicode{hungarumlaut}{02DD}
\pdfglyphtounicode{hungarumlautcmb}{030B}
\pdfglyphtounicode{hv}{0195}
\pdfglyphtounicode{hyphen}{002D}
\pdfglyphtounicode{hyphenchar}{002D}
\pdfglyphtounicode{hypheninferior}{002D}
\pdfglyphtounicode{hyphenmonospace}{FF0D}
\pdfglyphtounicode{hyphensmall}{FE63}
\pdfglyphtounicode{hyphensuperior}{002D}
\pdfglyphtounicode{hyphentwo}{2010}
\pdfglyphtounicode{i}{0069}
\pdfglyphtounicode{iacute}{00ED}
\pdfglyphtounicode{iacyrillic}{044F}
\pdfglyphtounicode{ibengali}{0987}
\pdfglyphtounicode{ibopomofo}{3127}
\pdfglyphtounicode{ibreve}{012D}
\pdfglyphtounicode{icaron}{01D0}
\pdfglyphtounicode{icircle}{24D8}
\pdfglyphtounicode{icircumflex}{00EE}
\pdfglyphtounicode{icyrillic}{0456}
\pdfglyphtounicode{idblgrave}{0209}
\pdfglyphtounicode{ideographearthcircle}{328F}
\pdfglyphtounicode{ideographfirecircle}{328B}
\pdfglyphtounicode{ideographicallianceparen}{323F}
\pdfglyphtounicode{ideographiccallparen}{323A}
\pdfglyphtounicode{ideographiccentrecircle}{32A5}
\pdfglyphtounicode{ideographicclose}{3006}
\pdfglyphtounicode{ideographiccomma}{3001}
\pdfglyphtounicode{ideographiccommaleft}{FF64}
\pdfglyphtounicode{ideographiccongratulationparen}{3237}
\pdfglyphtounicode{ideographiccorrectcircle}{32A3}
\pdfglyphtounicode{ideographicearthparen}{322F}
\pdfglyphtounicode{ideographicenterpriseparen}{323D}
\pdfglyphtounicode{ideographicexcellentcircle}{329D}
\pdfglyphtounicode{ideographicfestivalparen}{3240}
\pdfglyphtounicode{ideographicfinancialcircle}{3296}
\pdfglyphtounicode{ideographicfinancialparen}{3236}
\pdfglyphtounicode{ideographicfireparen}{322B}
\pdfglyphtounicode{ideographichaveparen}{3232}
\pdfglyphtounicode{ideographichighcircle}{32A4}
\pdfglyphtounicode{ideographiciterationmark}{3005}
\pdfglyphtounicode{ideographiclaborcircle}{3298}
\pdfglyphtounicode{ideographiclaborparen}{3238}
\pdfglyphtounicode{ideographicleftcircle}{32A7}
\pdfglyphtounicode{ideographiclowcircle}{32A6}
\pdfglyphtounicode{ideographicmedicinecircle}{32A9}
\pdfglyphtounicode{ideographicmetalparen}{322E}
\pdfglyphtounicode{ideographicmoonparen}{322A}
\pdfglyphtounicode{ideographicnameparen}{3234}
\pdfglyphtounicode{ideographicperiod}{3002}
\pdfglyphtounicode{ideographicprintcircle}{329E}
\pdfglyphtounicode{ideographicreachparen}{3243}
\pdfglyphtounicode{ideographicrepresentparen}{3239}
\pdfglyphtounicode{ideographicresourceparen}{323E}
\pdfglyphtounicode{ideographicrightcircle}{32A8}
\pdfglyphtounicode{ideographicsecretcircle}{3299}
\pdfglyphtounicode{ideographicselfparen}{3242}
\pdfglyphtounicode{ideographicsocietyparen}{3233}
\pdfglyphtounicode{ideographicspace}{3000}
\pdfglyphtounicode{ideographicspecialparen}{3235}
\pdfglyphtounicode{ideographicstockparen}{3231}
\pdfglyphtounicode{ideographicstudyparen}{323B}
\pdfglyphtounicode{ideographicsunparen}{3230}
\pdfglyphtounicode{ideographicsuperviseparen}{323C}
\pdfglyphtounicode{ideographicwaterparen}{322C}
\pdfglyphtounicode{ideographicwoodparen}{322D}
\pdfglyphtounicode{ideographiczero}{3007}
\pdfglyphtounicode{ideographmetalcircle}{328E}
\pdfglyphtounicode{ideographmooncircle}{328A}
\pdfglyphtounicode{ideographnamecircle}{3294}
\pdfglyphtounicode{ideographsuncircle}{3290}
\pdfglyphtounicode{ideographwatercircle}{328C}
\pdfglyphtounicode{ideographwoodcircle}{328D}
\pdfglyphtounicode{ideva}{0907}
\pdfglyphtounicode{idieresis}{00EF}
\pdfglyphtounicode{idieresisacute}{1E2F}
\pdfglyphtounicode{idieresiscyrillic}{04E5}
\pdfglyphtounicode{idotbelow}{1ECB}
\pdfglyphtounicode{iebrevecyrillic}{04D7}
\pdfglyphtounicode{iecyrillic}{0435}
\pdfglyphtounicode{ieungacirclekorean}{3275}
\pdfglyphtounicode{ieungaparenkorean}{3215}
\pdfglyphtounicode{ieungcirclekorean}{3267}
\pdfglyphtounicode{ieungkorean}{3147}
\pdfglyphtounicode{ieungparenkorean}{3207}
\pdfglyphtounicode{igrave}{00EC}
\pdfglyphtounicode{igujarati}{0A87}
\pdfglyphtounicode{igurmukhi}{0A07}
\pdfglyphtounicode{ihiragana}{3044}
\pdfglyphtounicode{ihookabove}{1EC9}
\pdfglyphtounicode{iibengali}{0988}
\pdfglyphtounicode{iicyrillic}{0438}
\pdfglyphtounicode{iideva}{0908}
\pdfglyphtounicode{iigujarati}{0A88}
\pdfglyphtounicode{iigurmukhi}{0A08}
\pdfglyphtounicode{iimatragurmukhi}{0A40}
\pdfglyphtounicode{iinvertedbreve}{020B}
\pdfglyphtounicode{iishortcyrillic}{0439}
\pdfglyphtounicode{iivowelsignbengali}{09C0}
\pdfglyphtounicode{iivowelsigndeva}{0940}
\pdfglyphtounicode{iivowelsigngujarati}{0AC0}
\pdfglyphtounicode{ij}{0133}
\pdfglyphtounicode{ikatakana}{30A4}
\pdfglyphtounicode{ikatakanahalfwidth}{FF72}
\pdfglyphtounicode{ikorean}{3163}
\pdfglyphtounicode{ilde}{02DC}
\pdfglyphtounicode{iluyhebrew}{05AC}
\pdfglyphtounicode{imacron}{012B}
\pdfglyphtounicode{imacroncyrillic}{04E3}
\pdfglyphtounicode{imageorapproximatelyequal}{2253}
\pdfglyphtounicode{imatragurmukhi}{0A3F}
\pdfglyphtounicode{imonospace}{FF49}
\pdfglyphtounicode{increment}{2206}
\pdfglyphtounicode{infinity}{221E}
\pdfglyphtounicode{iniarmenian}{056B}
\pdfglyphtounicode{integerdivide}{2216}
\pdfglyphtounicode{integral}{222B}
\pdfglyphtounicode{integralbottom}{2321}
\pdfglyphtounicode{integralbt}{2321}
\pdfglyphtounicode{integralex}{F8F5}
\pdfglyphtounicode{integraltop}{2320}
\pdfglyphtounicode{integraltp}{2320}
\pdfglyphtounicode{intercal}{22BA}
\pdfglyphtounicode{interrobang}{203D}
\pdfglyphtounicode{interrobangdown}{2E18}
\pdfglyphtounicode{intersection}{2229}
\pdfglyphtounicode{intersectiondbl}{22D2}
\pdfglyphtounicode{intersectionsq}{2293}
\pdfglyphtounicode{intisquare}{3305}
\pdfglyphtounicode{invbullet}{25D8}
\pdfglyphtounicode{invcircle}{25D9}
\pdfglyphtounicode{invsmileface}{263B}
\pdfglyphtounicode{iocyrillic}{0451}
\pdfglyphtounicode{iogonek}{012F}
\pdfglyphtounicode{iota}{03B9}
\pdfglyphtounicode{iotadieresis}{03CA}
\pdfglyphtounicode{iotadieresistonos}{0390}
\pdfglyphtounicode{iotalatin}{0269}
\pdfglyphtounicode{iotatonos}{03AF}
\pdfglyphtounicode{iparen}{24A4}
\pdfglyphtounicode{irigurmukhi}{0A72}
\pdfglyphtounicode{ismallhiragana}{3043}
\pdfglyphtounicode{ismallkatakana}{30A3}
\pdfglyphtounicode{ismallkatakanahalfwidth}{FF68}
\pdfglyphtounicode{issharbengali}{09FA}
\pdfglyphtounicode{istroke}{0268}
\pdfglyphtounicode{isuperior}{0069}
\pdfglyphtounicode{iterationhiragana}{309D}
\pdfglyphtounicode{iterationkatakana}{30FD}
\pdfglyphtounicode{itilde}{0129}
\pdfglyphtounicode{itildebelow}{1E2D}
\pdfglyphtounicode{iubopomofo}{3129}
\pdfglyphtounicode{iucyrillic}{044E}
\pdfglyphtounicode{ivowelsignbengali}{09BF}
\pdfglyphtounicode{ivowelsigndeva}{093F}
\pdfglyphtounicode{ivowelsigngujarati}{0ABF}
\pdfglyphtounicode{izhitsacyrillic}{0475}
\pdfglyphtounicode{izhitsadblgravecyrillic}{0477}
\pdfglyphtounicode{j}{006A}
\pdfglyphtounicode{jaarmenian}{0571}
\pdfglyphtounicode{jabengali}{099C}
\pdfglyphtounicode{jadeva}{091C}
\pdfglyphtounicode{jagujarati}{0A9C}
\pdfglyphtounicode{jagurmukhi}{0A1C}
\pdfglyphtounicode{jbopomofo}{3110}
\pdfglyphtounicode{jcaron}{01F0}
\pdfglyphtounicode{jcircle}{24D9}
\pdfglyphtounicode{jcircumflex}{0135}
\pdfglyphtounicode{jcrossedtail}{029D}
\pdfglyphtounicode{jdotlessstroke}{025F}
\pdfglyphtounicode{jecyrillic}{0458}
\pdfglyphtounicode{jeemarabic}{062C}
\pdfglyphtounicode{jeemfinalarabic}{FE9E}
\pdfglyphtounicode{jeeminitialarabic}{FE9F}
\pdfglyphtounicode{jeemmedialarabic}{FEA0}
\pdfglyphtounicode{jeharabic}{0698}
\pdfglyphtounicode{jehfinalarabic}{FB8B}
\pdfglyphtounicode{jhabengali}{099D}
\pdfglyphtounicode{jhadeva}{091D}
\pdfglyphtounicode{jhagujarati}{0A9D}
\pdfglyphtounicode{jhagurmukhi}{0A1D}
\pdfglyphtounicode{jheharmenian}{057B}
\pdfglyphtounicode{jis}{3004}
\pdfglyphtounicode{jmonospace}{FF4A}
\pdfglyphtounicode{jparen}{24A5}
\pdfglyphtounicode{jsuperior}{02B2}
\pdfglyphtounicode{k}{006B}
\pdfglyphtounicode{kabashkircyrillic}{04A1}
\pdfglyphtounicode{kabengali}{0995}
\pdfglyphtounicode{kacute}{1E31}
\pdfglyphtounicode{kacyrillic}{043A}
\pdfglyphtounicode{kadescendercyrillic}{049B}
\pdfglyphtounicode{kadeva}{0915}
\pdfglyphtounicode{kaf}{05DB}
\pdfglyphtounicode{kafarabic}{0643}
\pdfglyphtounicode{kafdagesh}{FB3B}
\pdfglyphtounicode{kafdageshhebrew}{FB3B}
\pdfglyphtounicode{kaffinalarabic}{FEDA}
\pdfglyphtounicode{kafhebrew}{05DB}
\pdfglyphtounicode{kafinitialarabic}{FEDB}
\pdfglyphtounicode{kafmedialarabic}{FEDC}
\pdfglyphtounicode{kafrafehebrew}{FB4D}
\pdfglyphtounicode{kagujarati}{0A95}
\pdfglyphtounicode{kagurmukhi}{0A15}
\pdfglyphtounicode{kahiragana}{304B}
\pdfglyphtounicode{kahookcyrillic}{04C4}
\pdfglyphtounicode{kakatakana}{30AB}
\pdfglyphtounicode{kakatakanahalfwidth}{FF76}
\pdfglyphtounicode{kappa}{03BA}
\pdfglyphtounicode{kappasymbolgreek}{03F0}
\pdfglyphtounicode{kapyeounmieumkorean}{3171}
\pdfglyphtounicode{kapyeounphieuphkorean}{3184}
\pdfglyphtounicode{kapyeounpieupkorean}{3178}
\pdfglyphtounicode{kapyeounssangpieupkorean}{3179}
\pdfglyphtounicode{karoriisquare}{330D}
\pdfglyphtounicode{kashidaautoarabic}{0640}
\pdfglyphtounicode{kashidaautonosidebearingarabic}{0640}
\pdfglyphtounicode{kasmallkatakana}{30F5}
\pdfglyphtounicode{kasquare}{3384}
\pdfglyphtounicode{kasraarabic}{0650}
\pdfglyphtounicode{kasratanarabic}{064D}
\pdfglyphtounicode{kastrokecyrillic}{049F}
\pdfglyphtounicode{katahiraprolongmarkhalfwidth}{FF70}
\pdfglyphtounicode{kaverticalstrokecyrillic}{049D}
\pdfglyphtounicode{kbopomofo}{310E}
\pdfglyphtounicode{kcalsquare}{3389}
\pdfglyphtounicode{kcaron}{01E9}
\pdfglyphtounicode{kcedilla}{0137}
\pdfglyphtounicode{kcircle}{24DA}
\pdfglyphtounicode{kcommaaccent}{0137}
\pdfglyphtounicode{kdotbelow}{1E33}
\pdfglyphtounicode{keharmenian}{0584}
\pdfglyphtounicode{kehiragana}{3051}
\pdfglyphtounicode{kekatakana}{30B1}
\pdfglyphtounicode{kekatakanahalfwidth}{FF79}
\pdfglyphtounicode{kenarmenian}{056F}
\pdfglyphtounicode{kesmallkatakana}{30F6}
\pdfglyphtounicode{kgreenlandic}{0138}
\pdfglyphtounicode{khabengali}{0996}
\pdfglyphtounicode{khacyrillic}{0445}
\pdfglyphtounicode{khadeva}{0916}
\pdfglyphtounicode{khagujarati}{0A96}
\pdfglyphtounicode{khagurmukhi}{0A16}
\pdfglyphtounicode{khaharabic}{062E}
\pdfglyphtounicode{khahfinalarabic}{FEA6}
\pdfglyphtounicode{khahinitialarabic}{FEA7}
\pdfglyphtounicode{khahmedialarabic}{FEA8}
\pdfglyphtounicode{kheicoptic}{03E7}
\pdfglyphtounicode{khhadeva}{0959}
\pdfglyphtounicode{khhagurmukhi}{0A59}
\pdfglyphtounicode{khieukhacirclekorean}{3278}
\pdfglyphtounicode{khieukhaparenkorean}{3218}
\pdfglyphtounicode{khieukhcirclekorean}{326A}
\pdfglyphtounicode{khieukhkorean}{314B}
\pdfglyphtounicode{khieukhparenkorean}{320A}
\pdfglyphtounicode{khokhaithai}{0E02}
\pdfglyphtounicode{khokhonthai}{0E05}
\pdfglyphtounicode{khokhuatthai}{0E03}
\pdfglyphtounicode{khokhwaithai}{0E04}
\pdfglyphtounicode{khomutthai}{0E5B}
\pdfglyphtounicode{khook}{0199}
\pdfglyphtounicode{khorakhangthai}{0E06}
\pdfglyphtounicode{khzsquare}{3391}
\pdfglyphtounicode{kihiragana}{304D}
\pdfglyphtounicode{kikatakana}{30AD}
\pdfglyphtounicode{kikatakanahalfwidth}{FF77}
\pdfglyphtounicode{kiroguramusquare}{3315}
\pdfglyphtounicode{kiromeetorusquare}{3316}
\pdfglyphtounicode{kirosquare}{3314}
\pdfglyphtounicode{kiyeokacirclekorean}{326E}
\pdfglyphtounicode{kiyeokaparenkorean}{320E}
\pdfglyphtounicode{kiyeokcirclekorean}{3260}
\pdfglyphtounicode{kiyeokkorean}{3131}
\pdfglyphtounicode{kiyeokparenkorean}{3200}
\pdfglyphtounicode{kiyeoksioskorean}{3133}
\pdfglyphtounicode{kjecyrillic}{045C}
\pdfglyphtounicode{klinebelow}{1E35}
\pdfglyphtounicode{klsquare}{3398}
\pdfglyphtounicode{kmcubedsquare}{33A6}
\pdfglyphtounicode{kmonospace}{FF4B}
\pdfglyphtounicode{kmsquaredsquare}{33A2}
\pdfglyphtounicode{kohiragana}{3053}
\pdfglyphtounicode{kohmsquare}{33C0}
\pdfglyphtounicode{kokaithai}{0E01}
\pdfglyphtounicode{kokatakana}{30B3}
\pdfglyphtounicode{kokatakanahalfwidth}{FF7A}
\pdfglyphtounicode{kooposquare}{331E}
\pdfglyphtounicode{koppacyrillic}{0481}
\pdfglyphtounicode{koreanstandardsymbol}{327F}
\pdfglyphtounicode{koroniscmb}{0343}
\pdfglyphtounicode{kparen}{24A6}
\pdfglyphtounicode{kpasquare}{33AA}
\pdfglyphtounicode{ksicyrillic}{046F}
\pdfglyphtounicode{ktsquare}{33CF}
\pdfglyphtounicode{kturned}{029E}
\pdfglyphtounicode{kuhiragana}{304F}
\pdfglyphtounicode{kukatakana}{30AF}
\pdfglyphtounicode{kukatakanahalfwidth}{FF78}
\pdfglyphtounicode{kvsquare}{33B8}
\pdfglyphtounicode{kwsquare}{33BE}
\pdfglyphtounicode{l}{006C}
\pdfglyphtounicode{labengali}{09B2}
\pdfglyphtounicode{lacute}{013A}
\pdfglyphtounicode{ladeva}{0932}
\pdfglyphtounicode{lagujarati}{0AB2}
\pdfglyphtounicode{lagurmukhi}{0A32}
\pdfglyphtounicode{lakkhangyaothai}{0E45}
\pdfglyphtounicode{lamaleffinalarabic}{FEFC}
\pdfglyphtounicode{lamalefhamzaabovefinalarabic}{FEF8}
\pdfglyphtounicode{lamalefhamzaaboveisolatedarabic}{FEF7}
\pdfglyphtounicode{lamalefhamzabelowfinalarabic}{FEFA}
\pdfglyphtounicode{lamalefhamzabelowisolatedarabic}{FEF9}
\pdfglyphtounicode{lamalefisolatedarabic}{FEFB}
\pdfglyphtounicode{lamalefmaddaabovefinalarabic}{FEF6}
\pdfglyphtounicode{lamalefmaddaaboveisolatedarabic}{FEF5}
\pdfglyphtounicode{lamarabic}{0644}
\pdfglyphtounicode{lambda}{03BB}
\pdfglyphtounicode{lambdastroke}{019B}
\pdfglyphtounicode{lamed}{05DC}
\pdfglyphtounicode{lameddagesh}{FB3C}
\pdfglyphtounicode{lameddageshhebrew}{FB3C}
\pdfglyphtounicode{lamedhebrew}{05DC}
\pdfglyphtounicode{lamedholam}{05DC 05B9}
\pdfglyphtounicode{lamedholamdagesh}{05DC 05B9 05BC}
\pdfglyphtounicode{lamedholamdageshhebrew}{05DC 05B9 05BC}
\pdfglyphtounicode{lamedholamhebrew}{05DC 05B9}
\pdfglyphtounicode{lamfinalarabic}{FEDE}
\pdfglyphtounicode{lamhahinitialarabic}{FCCA}
\pdfglyphtounicode{laminitialarabic}{FEDF}
\pdfglyphtounicode{lamjeeminitialarabic}{FCC9}
\pdfglyphtounicode{lamkhahinitialarabic}{FCCB}
\pdfglyphtounicode{lamlamhehisolatedarabic}{FDF2}
\pdfglyphtounicode{lammedialarabic}{FEE0}
\pdfglyphtounicode{lammeemhahinitialarabic}{FD88}
\pdfglyphtounicode{lammeeminitialarabic}{FCCC}
\pdfglyphtounicode{lammeemjeeminitialarabic}{FEDF FEE4 FEA0}
\pdfglyphtounicode{lammeemkhahinitialarabic}{FEDF FEE4 FEA8}
\pdfglyphtounicode{largecircle}{25EF}
\pdfglyphtounicode{latticetop}{22A4}
\pdfglyphtounicode{lbar}{019A}
\pdfglyphtounicode{lbelt}{026C}
\pdfglyphtounicode{lbopomofo}{310C}
\pdfglyphtounicode{lcaron}{013E}
\pdfglyphtounicode{lcedilla}{013C}
\pdfglyphtounicode{lcircle}{24DB}
\pdfglyphtounicode{lcircumflexbelow}{1E3D}
\pdfglyphtounicode{lcommaaccent}{013C}
\pdfglyphtounicode{ldot}{0140}
\pdfglyphtounicode{ldotaccent}{0140}
\pdfglyphtounicode{ldotbelow}{1E37}
\pdfglyphtounicode{ldotbelowmacron}{1E39}
\pdfglyphtounicode{leftangleabovecmb}{031A}
\pdfglyphtounicode{lefttackbelowcmb}{0318}
\pdfglyphtounicode{less}{003C}
\pdfglyphtounicode{lessdbleqlgreater}{2A8B}
\pdfglyphtounicode{lessdblequal}{2266}
\pdfglyphtounicode{lessdot}{22D6}
\pdfglyphtounicode{lessequal}{2264}
\pdfglyphtounicode{lessequalgreater}{22DA}
\pdfglyphtounicode{lessequalorgreater}{22DA}
\pdfglyphtounicode{lessmonospace}{FF1C}
\pdfglyphtounicode{lessmuch}{226A}
\pdfglyphtounicode{lessnotdblequal}{2A89}
\pdfglyphtounicode{lessnotequal}{2A87}
\pdfglyphtounicode{lessorapproxeql}{2A85}
\pdfglyphtounicode{lessorequalslant}{2A7D}
\pdfglyphtounicode{lessorequivalent}{2272}
\pdfglyphtounicode{lessorgreater}{2276}
\pdfglyphtounicode{lessornotdbleql}{2268}
\pdfglyphtounicode{lessornotequal}{2268}
\pdfglyphtounicode{lessorsimilar}{2272}
\pdfglyphtounicode{lessoverequal}{2266}
\pdfglyphtounicode{lesssmall}{FE64}
\pdfglyphtounicode{lezh}{026E}
\pdfglyphtounicode{lfblock}{258C}
\pdfglyphtounicode{lhookretroflex}{026D}
\pdfglyphtounicode{lira}{20A4}
\pdfglyphtounicode{liwnarmenian}{056C}
\pdfglyphtounicode{lj}{01C9}
\pdfglyphtounicode{ljecyrillic}{0459}
\pdfglyphtounicode{ll}{006C 006C}
\pdfglyphtounicode{lladeva}{0933}
\pdfglyphtounicode{llagujarati}{0AB3}
\pdfglyphtounicode{llinebelow}{1E3B}
\pdfglyphtounicode{llladeva}{0934}
\pdfglyphtounicode{llvocalicbengali}{09E1}
\pdfglyphtounicode{llvocalicdeva}{0961}
\pdfglyphtounicode{llvocalicvowelsignbengali}{09E3}
\pdfglyphtounicode{llvocalicvowelsigndeva}{0963}
\pdfglyphtounicode{lmiddletilde}{026B}
\pdfglyphtounicode{lmonospace}{FF4C}
\pdfglyphtounicode{lmsquare}{33D0}
\pdfglyphtounicode{lochulathai}{0E2C}
\pdfglyphtounicode{logicaland}{2227}
\pdfglyphtounicode{logicalnot}{00AC}
\pdfglyphtounicode{logicalnotreversed}{2310}
\pdfglyphtounicode{logicalor}{2228}
\pdfglyphtounicode{lolingthai}{0E25}
\pdfglyphtounicode{longdbls}{017F 017F}
\pdfglyphtounicode{longs}{017F}
\pdfglyphtounicode{longsh}{017F 0068}
\pdfglyphtounicode{longsi}{017F 0069}
\pdfglyphtounicode{longsl}{017F 006C}
\pdfglyphtounicode{longst}{017F 0074}
\pdfglyphtounicode{lowlinecenterline}{FE4E}
\pdfglyphtounicode{lowlinecmb}{0332}
\pdfglyphtounicode{lowlinedashed}{FE4D}
\pdfglyphtounicode{lozenge}{25CA}
\pdfglyphtounicode{lparen}{24A7}
\pdfglyphtounicode{lscript}{2113}
\pdfglyphtounicode{lslash}{0142}
\pdfglyphtounicode{lsquare}{2113}
\pdfglyphtounicode{lsuperior}{006C}
\pdfglyphtounicode{ltshade}{2591}
\pdfglyphtounicode{luthai}{0E26}
\pdfglyphtounicode{lvocalicbengali}{098C}
\pdfglyphtounicode{lvocalicdeva}{090C}
\pdfglyphtounicode{lvocalicvowelsignbengali}{09E2}
\pdfglyphtounicode{lvocalicvowelsigndeva}{0962}
\pdfglyphtounicode{lxsquare}{33D3}
\pdfglyphtounicode{m}{006D}
\pdfglyphtounicode{mabengali}{09AE}
\pdfglyphtounicode{macron}{00AF}
\pdfglyphtounicode{macronbelowcmb}{0331}
\pdfglyphtounicode{macroncmb}{0304}
\pdfglyphtounicode{macronlowmod}{02CD}
\pdfglyphtounicode{macronmonospace}{FFE3}
\pdfglyphtounicode{macute}{1E3F}
\pdfglyphtounicode{madeva}{092E}
\pdfglyphtounicode{magujarati}{0AAE}
\pdfglyphtounicode{magurmukhi}{0A2E}
\pdfglyphtounicode{mahapakhhebrew}{05A4}
\pdfglyphtounicode{mahapakhlefthebrew}{05A4}
\pdfglyphtounicode{mahiragana}{307E}
\pdfglyphtounicode{maichattawalowleftthai}{F895}
\pdfglyphtounicode{maichattawalowrightthai}{F894}
\pdfglyphtounicode{maichattawathai}{0E4B}
\pdfglyphtounicode{maichattawaupperleftthai}{F893}
\pdfglyphtounicode{maieklowleftthai}{F88C}
\pdfglyphtounicode{maieklowrightthai}{F88B}
\pdfglyphtounicode{maiekthai}{0E48}
\pdfglyphtounicode{maiekupperleftthai}{F88A}
\pdfglyphtounicode{maihanakatleftthai}{F884}
\pdfglyphtounicode{maihanakatthai}{0E31}
\pdfglyphtounicode{maitaikhuleftthai}{F889}
\pdfglyphtounicode{maitaikhuthai}{0E47}
\pdfglyphtounicode{maitholowleftthai}{F88F}
\pdfglyphtounicode{maitholowrightthai}{F88E}
\pdfglyphtounicode{maithothai}{0E49}
\pdfglyphtounicode{maithoupperleftthai}{F88D}
\pdfglyphtounicode{maitrilowleftthai}{F892}
\pdfglyphtounicode{maitrilowrightthai}{F891}
\pdfglyphtounicode{maitrithai}{0E4A}
\pdfglyphtounicode{maitriupperleftthai}{F890}
\pdfglyphtounicode{maiyamokthai}{0E46}
\pdfglyphtounicode{makatakana}{30DE}
\pdfglyphtounicode{makatakanahalfwidth}{FF8F}
\pdfglyphtounicode{male}{2642}
\pdfglyphtounicode{maltesecross}{2720}
\pdfglyphtounicode{mansyonsquare}{3347}
\pdfglyphtounicode{maqafhebrew}{05BE}
\pdfglyphtounicode{mars}{2642}
\pdfglyphtounicode{masoracirclehebrew}{05AF}
\pdfglyphtounicode{masquare}{3383}
\pdfglyphtounicode{mbopomofo}{3107}
\pdfglyphtounicode{mbsquare}{33D4}
\pdfglyphtounicode{mcircle}{24DC}
\pdfglyphtounicode{mcubedsquare}{33A5}
\pdfglyphtounicode{mdotaccent}{1E41}
\pdfglyphtounicode{mdotbelow}{1E43}
\pdfglyphtounicode{measuredangle}{2221}
\pdfglyphtounicode{meemarabic}{0645}
\pdfglyphtounicode{meemfinalarabic}{FEE2}
\pdfglyphtounicode{meeminitialarabic}{FEE3}
\pdfglyphtounicode{meemmedialarabic}{FEE4}
\pdfglyphtounicode{meemmeeminitialarabic}{FCD1}
\pdfglyphtounicode{meemmeemisolatedarabic}{FC48}
\pdfglyphtounicode{meetorusquare}{334D}
\pdfglyphtounicode{mehiragana}{3081}
\pdfglyphtounicode{meizierasquare}{337E}
\pdfglyphtounicode{mekatakana}{30E1}
\pdfglyphtounicode{mekatakanahalfwidth}{FF92}
\pdfglyphtounicode{mem}{05DE}
\pdfglyphtounicode{memdagesh}{FB3E}
\pdfglyphtounicode{memdageshhebrew}{FB3E}
\pdfglyphtounicode{memhebrew}{05DE}
\pdfglyphtounicode{menarmenian}{0574}
\pdfglyphtounicode{merkhahebrew}{05A5}
\pdfglyphtounicode{merkhakefulahebrew}{05A6}
\pdfglyphtounicode{merkhakefulalefthebrew}{05A6}
\pdfglyphtounicode{merkhalefthebrew}{05A5}
\pdfglyphtounicode{mhook}{0271}
\pdfglyphtounicode{mhzsquare}{3392}
\pdfglyphtounicode{middledotkatakanahalfwidth}{FF65}
\pdfglyphtounicode{middot}{00B7}
\pdfglyphtounicode{mieumacirclekorean}{3272}
\pdfglyphtounicode{mieumaparenkorean}{3212}
\pdfglyphtounicode{mieumcirclekorean}{3264}
\pdfglyphtounicode{mieumkorean}{3141}
\pdfglyphtounicode{mieumpansioskorean}{3170}
\pdfglyphtounicode{mieumparenkorean}{3204}
\pdfglyphtounicode{mieumpieupkorean}{316E}
\pdfglyphtounicode{mieumsioskorean}{316F}
\pdfglyphtounicode{mihiragana}{307F}
\pdfglyphtounicode{mikatakana}{30DF}
\pdfglyphtounicode{mikatakanahalfwidth}{FF90}
\pdfglyphtounicode{minus}{2212}
\pdfglyphtounicode{minusbelowcmb}{0320}
\pdfglyphtounicode{minuscircle}{2296}
\pdfglyphtounicode{minusmod}{02D7}
\pdfglyphtounicode{minusplus}{2213}
\pdfglyphtounicode{minute}{2032}
\pdfglyphtounicode{miribaarusquare}{334A}
\pdfglyphtounicode{mirisquare}{3349}
\pdfglyphtounicode{mlonglegturned}{0270}
\pdfglyphtounicode{mlsquare}{3396}
\pdfglyphtounicode{mmcubedsquare}{33A3}
\pdfglyphtounicode{mmonospace}{FF4D}
\pdfglyphtounicode{mmsquaredsquare}{339F}
\pdfglyphtounicode{mohiragana}{3082}
\pdfglyphtounicode{mohmsquare}{33C1}
\pdfglyphtounicode{mokatakana}{30E2}
\pdfglyphtounicode{mokatakanahalfwidth}{FF93}
\pdfglyphtounicode{molsquare}{33D6}
\pdfglyphtounicode{momathai}{0E21}
\pdfglyphtounicode{moverssquare}{33A7}
\pdfglyphtounicode{moverssquaredsquare}{33A8}
\pdfglyphtounicode{mparen}{24A8}
\pdfglyphtounicode{mpasquare}{33AB}
\pdfglyphtounicode{mssquare}{33B3}
\pdfglyphtounicode{msuperior}{006D}
\pdfglyphtounicode{mturned}{026F}
\pdfglyphtounicode{mu}{00B5}
\pdfglyphtounicode{mu1}{00B5}
\pdfglyphtounicode{muasquare}{3382}
\pdfglyphtounicode{muchgreater}{226B}
\pdfglyphtounicode{muchless}{226A}
\pdfglyphtounicode{mufsquare}{338C}
\pdfglyphtounicode{mugreek}{03BC}
\pdfglyphtounicode{mugsquare}{338D}
\pdfglyphtounicode{muhiragana}{3080}
\pdfglyphtounicode{mukatakana}{30E0}
\pdfglyphtounicode{mukatakanahalfwidth}{FF91}
\pdfglyphtounicode{mulsquare}{3395}
\pdfglyphtounicode{multicloseleft}{22C9}
\pdfglyphtounicode{multicloseright}{22CA}
\pdfglyphtounicode{multimap}{22B8}
\pdfglyphtounicode{multiopenleft}{22CB}
\pdfglyphtounicode{multiopenright}{22CC}
\pdfglyphtounicode{multiply}{00D7}
\pdfglyphtounicode{mumsquare}{339B}
\pdfglyphtounicode{munahhebrew}{05A3}
\pdfglyphtounicode{munahlefthebrew}{05A3}
\pdfglyphtounicode{musicalnote}{266A}
\pdfglyphtounicode{musicalnotedbl}{266B}
\pdfglyphtounicode{musicflatsign}{266D}
\pdfglyphtounicode{musicsharpsign}{266F}
\pdfglyphtounicode{mussquare}{33B2}
\pdfglyphtounicode{muvsquare}{33B6}
\pdfglyphtounicode{muwsquare}{33BC}
\pdfglyphtounicode{mvmegasquare}{33B9}
\pdfglyphtounicode{mvsquare}{33B7}
\pdfglyphtounicode{mwmegasquare}{33BF}
\pdfglyphtounicode{mwsquare}{33BD}
\pdfglyphtounicode{n}{006E}
\pdfglyphtounicode{nabengali}{09A8}
\pdfglyphtounicode{nabla}{2207}
\pdfglyphtounicode{nacute}{0144}
\pdfglyphtounicode{nadeva}{0928}
\pdfglyphtounicode{nagujarati}{0AA8}
\pdfglyphtounicode{nagurmukhi}{0A28}
\pdfglyphtounicode{nahiragana}{306A}
\pdfglyphtounicode{nakatakana}{30CA}
\pdfglyphtounicode{nakatakanahalfwidth}{FF85}
\pdfglyphtounicode{nand}{22BC}
\pdfglyphtounicode{napostrophe}{0149}
\pdfglyphtounicode{nasquare}{3381}
\pdfglyphtounicode{natural}{266E}
\pdfglyphtounicode{nbopomofo}{310B}
\pdfglyphtounicode{nbspace}{00A0}
\pdfglyphtounicode{ncaron}{0148}
\pdfglyphtounicode{ncedilla}{0146}
\pdfglyphtounicode{ncircle}{24DD}
\pdfglyphtounicode{ncircumflexbelow}{1E4B}
\pdfglyphtounicode{ncommaaccent}{0146}
\pdfglyphtounicode{ndotaccent}{1E45}
\pdfglyphtounicode{ndotbelow}{1E47}
\pdfglyphtounicode{negationslash}{0338}
\pdfglyphtounicode{nehiragana}{306D}
\pdfglyphtounicode{nekatakana}{30CD}
\pdfglyphtounicode{nekatakanahalfwidth}{FF88}
\pdfglyphtounicode{newsheqelsign}{20AA}
\pdfglyphtounicode{nfsquare}{338B}
\pdfglyphtounicode{ng}{014B}
\pdfglyphtounicode{ngabengali}{0999}
\pdfglyphtounicode{ngadeva}{0919}
\pdfglyphtounicode{ngagujarati}{0A99}
\pdfglyphtounicode{ngagurmukhi}{0A19}
\pdfglyphtounicode{ngonguthai}{0E07}
\pdfglyphtounicode{nhiragana}{3093}
\pdfglyphtounicode{nhookleft}{0272}
\pdfglyphtounicode{nhookretroflex}{0273}
\pdfglyphtounicode{nieunacirclekorean}{326F}
\pdfglyphtounicode{nieunaparenkorean}{320F}
\pdfglyphtounicode{nieuncieuckorean}{3135}
\pdfglyphtounicode{nieuncirclekorean}{3261}
\pdfglyphtounicode{nieunhieuhkorean}{3136}
\pdfglyphtounicode{nieunkorean}{3134}
\pdfglyphtounicode{nieunpansioskorean}{3168}
\pdfglyphtounicode{nieunparenkorean}{3201}
\pdfglyphtounicode{nieunsioskorean}{3167}
\pdfglyphtounicode{nieuntikeutkorean}{3166}
\pdfglyphtounicode{nihiragana}{306B}
\pdfglyphtounicode{nikatakana}{30CB}
\pdfglyphtounicode{nikatakanahalfwidth}{FF86}
\pdfglyphtounicode{nikhahitleftthai}{F899}
\pdfglyphtounicode{nikhahitthai}{0E4D}
\pdfglyphtounicode{nine}{0039}
\pdfglyphtounicode{ninearabic}{0669}
\pdfglyphtounicode{ninebengali}{09EF}
\pdfglyphtounicode{ninecircle}{2468}
\pdfglyphtounicode{ninecircleinversesansserif}{2792}
\pdfglyphtounicode{ninedeva}{096F}
\pdfglyphtounicode{ninegujarati}{0AEF}
\pdfglyphtounicode{ninegurmukhi}{0A6F}
\pdfglyphtounicode{ninehackarabic}{0669}
\pdfglyphtounicode{ninehangzhou}{3029}
\pdfglyphtounicode{nineideographicparen}{3228}
\pdfglyphtounicode{nineinferior}{2089}
\pdfglyphtounicode{ninemonospace}{FF19}
\pdfglyphtounicode{nineoldstyle}{0039}
\pdfglyphtounicode{nineparen}{247C}
\pdfglyphtounicode{nineperiod}{2490}
\pdfglyphtounicode{ninepersian}{06F9}
\pdfglyphtounicode{nineroman}{2178}
\pdfglyphtounicode{ninesuperior}{2079}
\pdfglyphtounicode{nineteencircle}{2472}
\pdfglyphtounicode{nineteenparen}{2486}
\pdfglyphtounicode{nineteenperiod}{249A}
\pdfglyphtounicode{ninethai}{0E59}
\pdfglyphtounicode{nj}{01CC}
\pdfglyphtounicode{njecyrillic}{045A}
\pdfglyphtounicode{nkatakana}{30F3}
\pdfglyphtounicode{nkatakanahalfwidth}{FF9D}
\pdfglyphtounicode{nlegrightlong}{019E}
\pdfglyphtounicode{nlinebelow}{1E49}
\pdfglyphtounicode{nmonospace}{FF4E}
\pdfglyphtounicode{nmsquare}{339A}
\pdfglyphtounicode{nnabengali}{09A3}
\pdfglyphtounicode{nnadeva}{0923}
\pdfglyphtounicode{nnagujarati}{0AA3}
\pdfglyphtounicode{nnagurmukhi}{0A23}
\pdfglyphtounicode{nnnadeva}{0929}
\pdfglyphtounicode{nohiragana}{306E}
\pdfglyphtounicode{nokatakana}{30CE}
\pdfglyphtounicode{nokatakanahalfwidth}{FF89}
\pdfglyphtounicode{nonbreakingspace}{00A0}
\pdfglyphtounicode{nonenthai}{0E13}
\pdfglyphtounicode{nonuthai}{0E19}
\pdfglyphtounicode{noonarabic}{0646}
\pdfglyphtounicode{noonfinalarabic}{FEE6}
\pdfglyphtounicode{noonghunnaarabic}{06BA}
\pdfglyphtounicode{noonghunnafinalarabic}{FB9F}
\pdfglyphtounicode{noonhehinitialarabic}{FEE7 FEEC}
\pdfglyphtounicode{nooninitialarabic}{FEE7}
\pdfglyphtounicode{noonjeeminitialarabic}{FCD2}
\pdfglyphtounicode{noonjeemisolatedarabic}{FC4B}
\pdfglyphtounicode{noonmedialarabic}{FEE8}
\pdfglyphtounicode{noonmeeminitialarabic}{FCD5}
\pdfglyphtounicode{noonmeemisolatedarabic}{FC4E}
\pdfglyphtounicode{noonnoonfinalarabic}{FC8D}
\pdfglyphtounicode{notapproxequal}{2247}
\pdfglyphtounicode{notarrowboth}{21AE}
\pdfglyphtounicode{notarrowleft}{219A}
\pdfglyphtounicode{notarrowright}{219B}
\pdfglyphtounicode{notbar}{2224}
\pdfglyphtounicode{notcontains}{220C}
\pdfglyphtounicode{notdblarrowboth}{21CE}
\pdfglyphtounicode{notdblarrowleft}{21CD}
\pdfglyphtounicode{notdblarrowright}{21CF}
\pdfglyphtounicode{notelement}{2209}
\pdfglyphtounicode{notelementof}{2209}
\pdfglyphtounicode{notequal}{2260}
\pdfglyphtounicode{notexistential}{2204}
\pdfglyphtounicode{notfollows}{2281}
\pdfglyphtounicode{notfollowsoreql}{2AB0 0338}
\pdfglyphtounicode{notforces}{22AE}
\pdfglyphtounicode{notforcesextra}{22AF}
\pdfglyphtounicode{notgreater}{226F}
\pdfglyphtounicode{notgreaterdblequal}{2267 0338}
\pdfglyphtounicode{notgreaterequal}{2271}
\pdfglyphtounicode{notgreaternorequal}{2271}
\pdfglyphtounicode{notgreaternorless}{2279}
\pdfglyphtounicode{notgreaterorslnteql}{2A7E 0338}
\pdfglyphtounicode{notidentical}{2262}
\pdfglyphtounicode{notless}{226E}
\pdfglyphtounicode{notlessdblequal}{2266 0338}
\pdfglyphtounicode{notlessequal}{2270}
\pdfglyphtounicode{notlessnorequal}{2270}
\pdfglyphtounicode{notlessorslnteql}{2A7D 0338}
\pdfglyphtounicode{notparallel}{2226}
\pdfglyphtounicode{notprecedes}{2280}
\pdfglyphtounicode{notprecedesoreql}{2AAF 0338}
\pdfglyphtounicode{notsatisfies}{22AD}
\pdfglyphtounicode{notsimilar}{2241}
\pdfglyphtounicode{notsubset}{2284}
\pdfglyphtounicode{notsubseteql}{2288}
\pdfglyphtounicode{notsubsetordbleql}{2AC5 0338}
\pdfglyphtounicode{notsubsetoreql}{228A}
\pdfglyphtounicode{notsucceeds}{2281}
\pdfglyphtounicode{notsuperset}{2285}
\pdfglyphtounicode{notsuperseteql}{2289}
\pdfglyphtounicode{notsupersetordbleql}{2AC6 0338}
\pdfglyphtounicode{notsupersetoreql}{228B}
\pdfglyphtounicode{nottriangeqlleft}{22EC}
\pdfglyphtounicode{nottriangeqlright}{22ED}
\pdfglyphtounicode{nottriangleleft}{22EA}
\pdfglyphtounicode{nottriangleright}{22EB}
\pdfglyphtounicode{notturnstile}{22AC}
\pdfglyphtounicode{nowarmenian}{0576}
\pdfglyphtounicode{nparen}{24A9}
\pdfglyphtounicode{nssquare}{33B1}
\pdfglyphtounicode{nsuperior}{207F}
\pdfglyphtounicode{ntilde}{00F1}
\pdfglyphtounicode{nu}{03BD}
\pdfglyphtounicode{nuhiragana}{306C}
\pdfglyphtounicode{nukatakana}{30CC}
\pdfglyphtounicode{nukatakanahalfwidth}{FF87}
\pdfglyphtounicode{nuktabengali}{09BC}
\pdfglyphtounicode{nuktadeva}{093C}
\pdfglyphtounicode{nuktagujarati}{0ABC}
\pdfglyphtounicode{nuktagurmukhi}{0A3C}
\pdfglyphtounicode{numbersign}{0023}
\pdfglyphtounicode{numbersignmonospace}{FF03}
\pdfglyphtounicode{numbersignsmall}{FE5F}
\pdfglyphtounicode{numeralsigngreek}{0374}
\pdfglyphtounicode{numeralsignlowergreek}{0375}
\pdfglyphtounicode{numero}{2116}
\pdfglyphtounicode{nun}{05E0}
\pdfglyphtounicode{nundagesh}{FB40}
\pdfglyphtounicode{nundageshhebrew}{FB40}
\pdfglyphtounicode{nunhebrew}{05E0}
\pdfglyphtounicode{nvsquare}{33B5}
\pdfglyphtounicode{nwsquare}{33BB}
\pdfglyphtounicode{nyabengali}{099E}
\pdfglyphtounicode{nyadeva}{091E}
\pdfglyphtounicode{nyagujarati}{0A9E}
\pdfglyphtounicode{nyagurmukhi}{0A1E}
\pdfglyphtounicode{o}{006F}
\pdfglyphtounicode{oacute}{00F3}
\pdfglyphtounicode{oangthai}{0E2D}
\pdfglyphtounicode{obarred}{0275}
\pdfglyphtounicode{obarredcyrillic}{04E9}
\pdfglyphtounicode{obarreddieresiscyrillic}{04EB}
\pdfglyphtounicode{obengali}{0993}
\pdfglyphtounicode{obopomofo}{311B}
\pdfglyphtounicode{obreve}{014F}
\pdfglyphtounicode{ocandradeva}{0911}
\pdfglyphtounicode{ocandragujarati}{0A91}
\pdfglyphtounicode{ocandravowelsigndeva}{0949}
\pdfglyphtounicode{ocandravowelsigngujarati}{0AC9}
\pdfglyphtounicode{ocaron}{01D2}
\pdfglyphtounicode{ocircle}{24DE}
\pdfglyphtounicode{ocircumflex}{00F4}
\pdfglyphtounicode{ocircumflexacute}{1ED1}
\pdfglyphtounicode{ocircumflexdotbelow}{1ED9}
\pdfglyphtounicode{ocircumflexgrave}{1ED3}
\pdfglyphtounicode{ocircumflexhookabove}{1ED5}
\pdfglyphtounicode{ocircumflextilde}{1ED7}
\pdfglyphtounicode{ocyrillic}{043E}
\pdfglyphtounicode{odblacute}{0151}
\pdfglyphtounicode{odblgrave}{020D}
\pdfglyphtounicode{odeva}{0913}
\pdfglyphtounicode{odieresis}{00F6}
\pdfglyphtounicode{odieresiscyrillic}{04E7}
\pdfglyphtounicode{odotbelow}{1ECD}
\pdfglyphtounicode{oe}{0153}
\pdfglyphtounicode{oekorean}{315A}
\pdfglyphtounicode{ogonek}{02DB}
\pdfglyphtounicode{ogonekcmb}{0328}
\pdfglyphtounicode{ograve}{00F2}
\pdfglyphtounicode{ogujarati}{0A93}
\pdfglyphtounicode{oharmenian}{0585}
\pdfglyphtounicode{ohiragana}{304A}
\pdfglyphtounicode{ohookabove}{1ECF}
\pdfglyphtounicode{ohorn}{01A1}
\pdfglyphtounicode{ohornacute}{1EDB}
\pdfglyphtounicode{ohorndotbelow}{1EE3}
\pdfglyphtounicode{ohorngrave}{1EDD}
\pdfglyphtounicode{ohornhookabove}{1EDF}
\pdfglyphtounicode{ohorntilde}{1EE1}
\pdfglyphtounicode{ohungarumlaut}{0151}
\pdfglyphtounicode{oi}{01A3}
\pdfglyphtounicode{oinvertedbreve}{020F}
\pdfglyphtounicode{okatakana}{30AA}
\pdfglyphtounicode{okatakanahalfwidth}{FF75}
\pdfglyphtounicode{okorean}{3157}
\pdfglyphtounicode{olehebrew}{05AB}
\pdfglyphtounicode{omacron}{014D}
\pdfglyphtounicode{omacronacute}{1E53}
\pdfglyphtounicode{omacrongrave}{1E51}
\pdfglyphtounicode{omdeva}{0950}
\pdfglyphtounicode{omega}{03C9}
\pdfglyphtounicode{omega1}{03D6}
\pdfglyphtounicode{omegacyrillic}{0461}
\pdfglyphtounicode{omegalatinclosed}{0277}
\pdfglyphtounicode{omegaroundcyrillic}{047B}
\pdfglyphtounicode{omegatitlocyrillic}{047D}
\pdfglyphtounicode{omegatonos}{03CE}
\pdfglyphtounicode{omgujarati}{0AD0}
\pdfglyphtounicode{omicron}{03BF}
\pdfglyphtounicode{omicrontonos}{03CC}
\pdfglyphtounicode{omonospace}{FF4F}
\pdfglyphtounicode{one}{0031}
\pdfglyphtounicode{onearabic}{0661}
\pdfglyphtounicode{onebengali}{09E7}
\pdfglyphtounicode{onecircle}{2460}
\pdfglyphtounicode{onecircleinversesansserif}{278A}
\pdfglyphtounicode{onedeva}{0967}
\pdfglyphtounicode{onedotenleader}{2024}
\pdfglyphtounicode{oneeighth}{215B}
\pdfglyphtounicode{onefitted}{0031}
\pdfglyphtounicode{onegujarati}{0AE7}
\pdfglyphtounicode{onegurmukhi}{0A67}
\pdfglyphtounicode{onehackarabic}{0661}
\pdfglyphtounicode{onehalf}{00BD}
\pdfglyphtounicode{onehangzhou}{3021}
\pdfglyphtounicode{oneideographicparen}{3220}
\pdfglyphtounicode{oneinferior}{2081}
\pdfglyphtounicode{onemonospace}{FF11}
\pdfglyphtounicode{onenumeratorbengali}{09F4}
\pdfglyphtounicode{oneoldstyle}{0031}
\pdfglyphtounicode{oneparen}{2474}
\pdfglyphtounicode{oneperiod}{2488}
\pdfglyphtounicode{onepersian}{06F1}
\pdfglyphtounicode{onequarter}{00BC}
\pdfglyphtounicode{oneroman}{2170}
\pdfglyphtounicode{onesuperior}{00B9}
\pdfglyphtounicode{onethai}{0E51}
\pdfglyphtounicode{onethird}{2153}
\pdfglyphtounicode{oogonek}{01EB}
\pdfglyphtounicode{oogonekmacron}{01ED}
\pdfglyphtounicode{oogurmukhi}{0A13}
\pdfglyphtounicode{oomatragurmukhi}{0A4B}
\pdfglyphtounicode{oopen}{0254}
\pdfglyphtounicode{oparen}{24AA}
\pdfglyphtounicode{openbullet}{25E6}
\pdfglyphtounicode{option}{2325}
\pdfglyphtounicode{ordfeminine}{00AA}
\pdfglyphtounicode{ordmasculine}{00BA}
\pdfglyphtounicode{orthogonal}{221F}
\pdfglyphtounicode{orunderscore}{22BB}
\pdfglyphtounicode{oshortdeva}{0912}
\pdfglyphtounicode{oshortvowelsigndeva}{094A}
\pdfglyphtounicode{oslash}{00F8}
\pdfglyphtounicode{oslashacute}{01FF}
\pdfglyphtounicode{osmallhiragana}{3049}
\pdfglyphtounicode{osmallkatakana}{30A9}
\pdfglyphtounicode{osmallkatakanahalfwidth}{FF6B}
\pdfglyphtounicode{ostrokeacute}{01FF}
\pdfglyphtounicode{osuperior}{006F}
\pdfglyphtounicode{otcyrillic}{047F}
\pdfglyphtounicode{otilde}{00F5}
\pdfglyphtounicode{otildeacute}{1E4D}
\pdfglyphtounicode{otildedieresis}{1E4F}
\pdfglyphtounicode{oubopomofo}{3121}
\pdfglyphtounicode{overline}{203E}
\pdfglyphtounicode{overlinecenterline}{FE4A}
\pdfglyphtounicode{overlinecmb}{0305}
\pdfglyphtounicode{overlinedashed}{FE49}
\pdfglyphtounicode{overlinedblwavy}{FE4C}
\pdfglyphtounicode{overlinewavy}{FE4B}
\pdfglyphtounicode{overscore}{00AF}
\pdfglyphtounicode{ovowelsignbengali}{09CB}
\pdfglyphtounicode{ovowelsigndeva}{094B}
\pdfglyphtounicode{ovowelsigngujarati}{0ACB}
\pdfglyphtounicode{owner}{220B}
\pdfglyphtounicode{p}{0070}
\pdfglyphtounicode{paampssquare}{3380}
\pdfglyphtounicode{paasentosquare}{332B}
\pdfglyphtounicode{pabengali}{09AA}
\pdfglyphtounicode{pacute}{1E55}
\pdfglyphtounicode{padeva}{092A}
\pdfglyphtounicode{pagedown}{21DF}
\pdfglyphtounicode{pageup}{21DE}
\pdfglyphtounicode{pagujarati}{0AAA}
\pdfglyphtounicode{pagurmukhi}{0A2A}
\pdfglyphtounicode{pahiragana}{3071}
\pdfglyphtounicode{paiyannoithai}{0E2F}
\pdfglyphtounicode{pakatakana}{30D1}
\pdfglyphtounicode{palatalizationcyrilliccmb}{0484}
\pdfglyphtounicode{palochkacyrillic}{04C0}
\pdfglyphtounicode{pansioskorean}{317F}
\pdfglyphtounicode{paragraph}{00B6}
\pdfglyphtounicode{parallel}{2225}
\pdfglyphtounicode{parenleft}{0028}
\pdfglyphtounicode{parenleftaltonearabic}{FD3E}
\pdfglyphtounicode{parenleftbt}{F8ED}
\pdfglyphtounicode{parenleftex}{F8EC}
\pdfglyphtounicode{parenleftinferior}{208D}
\pdfglyphtounicode{parenleftmonospace}{FF08}
\pdfglyphtounicode{parenleftsmall}{FE59}
\pdfglyphtounicode{parenleftsuperior}{207D}
\pdfglyphtounicode{parenlefttp}{F8EB}
\pdfglyphtounicode{parenleftvertical}{FE35}
\pdfglyphtounicode{parenright}{0029}
\pdfglyphtounicode{parenrightaltonearabic}{FD3F}
\pdfglyphtounicode{parenrightbt}{F8F8}
\pdfglyphtounicode{parenrightex}{F8F7}
\pdfglyphtounicode{parenrightinferior}{208E}
\pdfglyphtounicode{parenrightmonospace}{FF09}
\pdfglyphtounicode{parenrightsmall}{FE5A}
\pdfglyphtounicode{parenrightsuperior}{207E}
\pdfglyphtounicode{parenrighttp}{F8F6}
\pdfglyphtounicode{parenrightvertical}{FE36}
\pdfglyphtounicode{partialdiff}{2202}
\pdfglyphtounicode{paseqhebrew}{05C0}
\pdfglyphtounicode{pashtahebrew}{0599}
\pdfglyphtounicode{pasquare}{33A9}
\pdfglyphtounicode{patah}{05B7}
\pdfglyphtounicode{patah11}{05B7}
\pdfglyphtounicode{patah1d}{05B7}
\pdfglyphtounicode{patah2a}{05B7}
\pdfglyphtounicode{patahhebrew}{05B7}
\pdfglyphtounicode{patahnarrowhebrew}{05B7}
\pdfglyphtounicode{patahquarterhebrew}{05B7}
\pdfglyphtounicode{patahwidehebrew}{05B7}
\pdfglyphtounicode{pazerhebrew}{05A1}
\pdfglyphtounicode{pbopomofo}{3106}
\pdfglyphtounicode{pcircle}{24DF}
\pdfglyphtounicode{pdotaccent}{1E57}
\pdfglyphtounicode{pe}{05E4}
\pdfglyphtounicode{pecyrillic}{043F}
\pdfglyphtounicode{pedagesh}{FB44}
\pdfglyphtounicode{pedageshhebrew}{FB44}
\pdfglyphtounicode{peezisquare}{333B}
\pdfglyphtounicode{pefinaldageshhebrew}{FB43}
\pdfglyphtounicode{peharabic}{067E}
\pdfglyphtounicode{peharmenian}{057A}
\pdfglyphtounicode{pehebrew}{05E4}
\pdfglyphtounicode{pehfinalarabic}{FB57}
\pdfglyphtounicode{pehinitialarabic}{FB58}
\pdfglyphtounicode{pehiragana}{307A}
\pdfglyphtounicode{pehmedialarabic}{FB59}
\pdfglyphtounicode{pekatakana}{30DA}
\pdfglyphtounicode{pemiddlehookcyrillic}{04A7}
\pdfglyphtounicode{perafehebrew}{FB4E}
\pdfglyphtounicode{percent}{0025}
\pdfglyphtounicode{percentarabic}{066A}
\pdfglyphtounicode{percentmonospace}{FF05}
\pdfglyphtounicode{percentsmall}{FE6A}
\pdfglyphtounicode{period}{002E}
\pdfglyphtounicode{periodarmenian}{0589}
\pdfglyphtounicode{periodcentered}{00B7}
\pdfglyphtounicode{periodhalfwidth}{FF61}
\pdfglyphtounicode{periodinferior}{002E}
\pdfglyphtounicode{periodmonospace}{FF0E}
\pdfglyphtounicode{periodsmall}{FE52}
\pdfglyphtounicode{periodsuperior}{002E}
\pdfglyphtounicode{perispomenigreekcmb}{0342}
\pdfglyphtounicode{perpcorrespond}{2A5E}
\pdfglyphtounicode{perpendicular}{22A5}
\pdfglyphtounicode{pertenthousand}{2031}
\pdfglyphtounicode{perthousand}{2030}
\pdfglyphtounicode{peseta}{20A7}
\pdfglyphtounicode{pfsquare}{338A}
\pdfglyphtounicode{phabengali}{09AB}
\pdfglyphtounicode{phadeva}{092B}
\pdfglyphtounicode{phagujarati}{0AAB}
\pdfglyphtounicode{phagurmukhi}{0A2B}
\pdfglyphtounicode{phi}{03C6}
\pdfglyphtounicode{phi1}{03D5}
\pdfglyphtounicode{phieuphacirclekorean}{327A}
\pdfglyphtounicode{phieuphaparenkorean}{321A}
\pdfglyphtounicode{phieuphcirclekorean}{326C}
\pdfglyphtounicode{phieuphkorean}{314D}
\pdfglyphtounicode{phieuphparenkorean}{320C}
\pdfglyphtounicode{philatin}{0278}
\pdfglyphtounicode{phinthuthai}{0E3A}
\pdfglyphtounicode{phisymbolgreek}{03D5}
\pdfglyphtounicode{phook}{01A5}
\pdfglyphtounicode{phophanthai}{0E1E}
\pdfglyphtounicode{phophungthai}{0E1C}
\pdfglyphtounicode{phosamphaothai}{0E20}
\pdfglyphtounicode{pi}{03C0}
\pdfglyphtounicode{pi1}{03D6}
\pdfglyphtounicode{pieupacirclekorean}{3273}
\pdfglyphtounicode{pieupaparenkorean}{3213}
\pdfglyphtounicode{pieupcieuckorean}{3176}
\pdfglyphtounicode{pieupcirclekorean}{3265}
\pdfglyphtounicode{pieupkiyeokkorean}{3172}
\pdfglyphtounicode{pieupkorean}{3142}
\pdfglyphtounicode{pieupparenkorean}{3205}
\pdfglyphtounicode{pieupsioskiyeokkorean}{3174}
\pdfglyphtounicode{pieupsioskorean}{3144}
\pdfglyphtounicode{pieupsiostikeutkorean}{3175}
\pdfglyphtounicode{pieupthieuthkorean}{3177}
\pdfglyphtounicode{pieuptikeutkorean}{3173}
\pdfglyphtounicode{pihiragana}{3074}
\pdfglyphtounicode{pikatakana}{30D4}
\pdfglyphtounicode{pisymbolgreek}{03D6}
\pdfglyphtounicode{piwrarmenian}{0583}
\pdfglyphtounicode{planckover2pi}{210F}
\pdfglyphtounicode{planckover2pi1}{210F}
\pdfglyphtounicode{plus}{002B}
\pdfglyphtounicode{plusbelowcmb}{031F}
\pdfglyphtounicode{pluscircle}{2295}
\pdfglyphtounicode{plusminus}{00B1}
\pdfglyphtounicode{plusmod}{02D6}
\pdfglyphtounicode{plusmonospace}{FF0B}
\pdfglyphtounicode{plussmall}{FE62}
\pdfglyphtounicode{plussuperior}{207A}
\pdfglyphtounicode{pmonospace}{FF50}
\pdfglyphtounicode{pmsquare}{33D8}
\pdfglyphtounicode{pohiragana}{307D}
\pdfglyphtounicode{pointingindexdownwhite}{261F}
\pdfglyphtounicode{pointingindexleftwhite}{261C}
\pdfglyphtounicode{pointingindexrightwhite}{261E}
\pdfglyphtounicode{pointingindexupwhite}{261D}
\pdfglyphtounicode{pokatakana}{30DD}
\pdfglyphtounicode{poplathai}{0E1B}
\pdfglyphtounicode{postalmark}{3012}
\pdfglyphtounicode{postalmarkface}{3020}
\pdfglyphtounicode{pparen}{24AB}
\pdfglyphtounicode{precedenotdbleqv}{2AB9}
\pdfglyphtounicode{precedenotslnteql}{2AB5}
\pdfglyphtounicode{precedeornoteqvlnt}{22E8}
\pdfglyphtounicode{precedes}{227A}
\pdfglyphtounicode{precedesequal}{2AAF}
\pdfglyphtounicode{precedesorcurly}{227C}
\pdfglyphtounicode{precedesorequal}{227E}
\pdfglyphtounicode{prescription}{211E}
\pdfglyphtounicode{prime}{2032}
\pdfglyphtounicode{primemod}{02B9}
\pdfglyphtounicode{primereverse}{2035}
\pdfglyphtounicode{primereversed}{2035}
\pdfglyphtounicode{product}{220F}
\pdfglyphtounicode{projective}{2305}
\pdfglyphtounicode{prolongedkana}{30FC}
\pdfglyphtounicode{propellor}{2318}
\pdfglyphtounicode{propersubset}{2282}
\pdfglyphtounicode{propersuperset}{2283}
\pdfglyphtounicode{proportion}{2237}
\pdfglyphtounicode{proportional}{221D}
\pdfglyphtounicode{psi}{03C8}
\pdfglyphtounicode{psicyrillic}{0471}
\pdfglyphtounicode{psilipneumatacyrilliccmb}{0486}
\pdfglyphtounicode{pssquare}{33B0}
\pdfglyphtounicode{puhiragana}{3077}
\pdfglyphtounicode{pukatakana}{30D7}
\pdfglyphtounicode{punctdash}{2014}
\pdfglyphtounicode{pvsquare}{33B4}
\pdfglyphtounicode{pwsquare}{33BA}
\pdfglyphtounicode{q}{0071}
\pdfglyphtounicode{qadeva}{0958}
\pdfglyphtounicode{qadmahebrew}{05A8}
\pdfglyphtounicode{qafarabic}{0642}
\pdfglyphtounicode{qaffinalarabic}{FED6}
\pdfglyphtounicode{qafinitialarabic}{FED7}
\pdfglyphtounicode{qafmedialarabic}{FED8}
\pdfglyphtounicode{qamats}{05B8}
\pdfglyphtounicode{qamats10}{05B8}
\pdfglyphtounicode{qamats1a}{05B8}
\pdfglyphtounicode{qamats1c}{05B8}
\pdfglyphtounicode{qamats27}{05B8}
\pdfglyphtounicode{qamats29}{05B8}
\pdfglyphtounicode{qamats33}{05B8}
\pdfglyphtounicode{qamatsde}{05B8}
\pdfglyphtounicode{qamatshebrew}{05B8}
\pdfglyphtounicode{qamatsnarrowhebrew}{05B8}
\pdfglyphtounicode{qamatsqatanhebrew}{05B8}
\pdfglyphtounicode{qamatsqatannarrowhebrew}{05B8}
\pdfglyphtounicode{qamatsqatanquarterhebrew}{05B8}
\pdfglyphtounicode{qamatsqatanwidehebrew}{05B8}
\pdfglyphtounicode{qamatsquarterhebrew}{05B8}
\pdfglyphtounicode{qamatswidehebrew}{05B8}
\pdfglyphtounicode{qarneyparahebrew}{059F}
\pdfglyphtounicode{qbopomofo}{3111}
\pdfglyphtounicode{qcircle}{24E0}
\pdfglyphtounicode{qhook}{02A0}
\pdfglyphtounicode{qmonospace}{FF51}
\pdfglyphtounicode{qof}{05E7}
\pdfglyphtounicode{qofdagesh}{FB47}
\pdfglyphtounicode{qofdageshhebrew}{FB47}
\pdfglyphtounicode{qofhatafpatah}{05E7 05B2}
\pdfglyphtounicode{qofhatafpatahhebrew}{05E7 05B2}
\pdfglyphtounicode{qofhatafsegol}{05E7 05B1}
\pdfglyphtounicode{qofhatafsegolhebrew}{05E7 05B1}
\pdfglyphtounicode{qofhebrew}{05E7}
\pdfglyphtounicode{qofhiriq}{05E7 05B4}
\pdfglyphtounicode{qofhiriqhebrew}{05E7 05B4}
\pdfglyphtounicode{qofholam}{05E7 05B9}
\pdfglyphtounicode{qofholamhebrew}{05E7 05B9}
\pdfglyphtounicode{qofpatah}{05E7 05B7}
\pdfglyphtounicode{qofpatahhebrew}{05E7 05B7}
\pdfglyphtounicode{qofqamats}{05E7 05B8}
\pdfglyphtounicode{qofqamatshebrew}{05E7 05B8}
\pdfglyphtounicode{qofqubuts}{05E7 05BB}
\pdfglyphtounicode{qofqubutshebrew}{05E7 05BB}
\pdfglyphtounicode{qofsegol}{05E7 05B6}
\pdfglyphtounicode{qofsegolhebrew}{05E7 05B6}
\pdfglyphtounicode{qofsheva}{05E7 05B0}
\pdfglyphtounicode{qofshevahebrew}{05E7 05B0}
\pdfglyphtounicode{qoftsere}{05E7 05B5}
\pdfglyphtounicode{qoftserehebrew}{05E7 05B5}
\pdfglyphtounicode{qparen}{24AC}
\pdfglyphtounicode{quarternote}{2669}
\pdfglyphtounicode{qubuts}{05BB}
\pdfglyphtounicode{qubuts18}{05BB}
\pdfglyphtounicode{qubuts25}{05BB}
\pdfglyphtounicode{qubuts31}{05BB}
\pdfglyphtounicode{qubutshebrew}{05BB}
\pdfglyphtounicode{qubutsnarrowhebrew}{05BB}
\pdfglyphtounicode{qubutsquarterhebrew}{05BB}
\pdfglyphtounicode{qubutswidehebrew}{05BB}
\pdfglyphtounicode{question}{003F}
\pdfglyphtounicode{questionarabic}{061F}
\pdfglyphtounicode{questionarmenian}{055E}
\pdfglyphtounicode{questiondown}{00BF}
\pdfglyphtounicode{questiondownsmall}{00BF}
\pdfglyphtounicode{questiongreek}{037E}
\pdfglyphtounicode{questionmonospace}{FF1F}
\pdfglyphtounicode{questionsmall}{003F}
\pdfglyphtounicode{quotedbl}{0022}
\pdfglyphtounicode{quotedblbase}{201E}
\pdfglyphtounicode{quotedblleft}{201C}
\pdfglyphtounicode{quotedblmonospace}{FF02}
\pdfglyphtounicode{quotedblprime}{301E}
\pdfglyphtounicode{quotedblprimereversed}{301D}
\pdfglyphtounicode{quotedblright}{201D}
\pdfglyphtounicode{quoteleft}{2018}
\pdfglyphtounicode{quoteleftreversed}{201B}
\pdfglyphtounicode{quotereversed}{201B}
\pdfglyphtounicode{quoteright}{2019}
\pdfglyphtounicode{quoterightn}{0149}
\pdfglyphtounicode{quotesinglbase}{201A}
\pdfglyphtounicode{quotesingle}{0027}
\pdfglyphtounicode{quotesinglemonospace}{FF07}
\pdfglyphtounicode{r}{0072}
\pdfglyphtounicode{raarmenian}{057C}
\pdfglyphtounicode{rabengali}{09B0}
\pdfglyphtounicode{racute}{0155}
\pdfglyphtounicode{radeva}{0930}
\pdfglyphtounicode{radical}{221A}
\pdfglyphtounicode{radicalex}{F8E5}
\pdfglyphtounicode{radoverssquare}{33AE}
\pdfglyphtounicode{radoverssquaredsquare}{33AF}
\pdfglyphtounicode{radsquare}{33AD}
\pdfglyphtounicode{rafe}{05BF}
\pdfglyphtounicode{rafehebrew}{05BF}
\pdfglyphtounicode{ragujarati}{0AB0}
\pdfglyphtounicode{ragurmukhi}{0A30}
\pdfglyphtounicode{rahiragana}{3089}
\pdfglyphtounicode{rakatakana}{30E9}
\pdfglyphtounicode{rakatakanahalfwidth}{FF97}
\pdfglyphtounicode{ralowerdiagonalbengali}{09F1}
\pdfglyphtounicode{ramiddlediagonalbengali}{09F0}
\pdfglyphtounicode{ramshorn}{0264}
\pdfglyphtounicode{rangedash}{2013}
\pdfglyphtounicode{ratio}{2236}
\pdfglyphtounicode{rbopomofo}{3116}
\pdfglyphtounicode{rcaron}{0159}
\pdfglyphtounicode{rcedilla}{0157}
\pdfglyphtounicode{rcircle}{24E1}
\pdfglyphtounicode{rcommaaccent}{0157}
\pdfglyphtounicode{rdblgrave}{0211}
\pdfglyphtounicode{rdotaccent}{1E59}
\pdfglyphtounicode{rdotbelow}{1E5B}
\pdfglyphtounicode{rdotbelowmacron}{1E5D}
\pdfglyphtounicode{referencemark}{203B}
\pdfglyphtounicode{reflexsubset}{2286}
\pdfglyphtounicode{reflexsuperset}{2287}
\pdfglyphtounicode{registered}{00AE}
\pdfglyphtounicode{registersans}{00AE}
\pdfglyphtounicode{registerserif}{00AE}
\pdfglyphtounicode{reharabic}{0631}
\pdfglyphtounicode{reharmenian}{0580}
\pdfglyphtounicode{rehfinalarabic}{FEAE}
\pdfglyphtounicode{rehiragana}{308C}
\pdfglyphtounicode{rehyehaleflamarabic}{0631 FEF3 FE8E 0644}
\pdfglyphtounicode{rekatakana}{30EC}
\pdfglyphtounicode{rekatakanahalfwidth}{FF9A}
\pdfglyphtounicode{resh}{05E8}
\pdfglyphtounicode{reshdageshhebrew}{FB48}
\pdfglyphtounicode{reshhatafpatah}{05E8 05B2}
\pdfglyphtounicode{reshhatafpatahhebrew}{05E8 05B2}
\pdfglyphtounicode{reshhatafsegol}{05E8 05B1}
\pdfglyphtounicode{reshhatafsegolhebrew}{05E8 05B1}
\pdfglyphtounicode{reshhebrew}{05E8}
\pdfglyphtounicode{reshhiriq}{05E8 05B4}
\pdfglyphtounicode{reshhiriqhebrew}{05E8 05B4}
\pdfglyphtounicode{reshholam}{05E8 05B9}
\pdfglyphtounicode{reshholamhebrew}{05E8 05B9}
\pdfglyphtounicode{reshpatah}{05E8 05B7}
\pdfglyphtounicode{reshpatahhebrew}{05E8 05B7}
\pdfglyphtounicode{reshqamats}{05E8 05B8}
\pdfglyphtounicode{reshqamatshebrew}{05E8 05B8}
\pdfglyphtounicode{reshqubuts}{05E8 05BB}
\pdfglyphtounicode{reshqubutshebrew}{05E8 05BB}
\pdfglyphtounicode{reshsegol}{05E8 05B6}
\pdfglyphtounicode{reshsegolhebrew}{05E8 05B6}
\pdfglyphtounicode{reshsheva}{05E8 05B0}
\pdfglyphtounicode{reshshevahebrew}{05E8 05B0}
\pdfglyphtounicode{reshtsere}{05E8 05B5}
\pdfglyphtounicode{reshtserehebrew}{05E8 05B5}
\pdfglyphtounicode{revasymptequal}{22CD}
\pdfglyphtounicode{reversedtilde}{223D}
\pdfglyphtounicode{reviahebrew}{0597}
\pdfglyphtounicode{reviamugrashhebrew}{0597}
\pdfglyphtounicode{revlogicalnot}{2310}
\pdfglyphtounicode{revsimilar}{223D}
\pdfglyphtounicode{rfishhook}{027E}
\pdfglyphtounicode{rfishhookreversed}{027F}
\pdfglyphtounicode{rhabengali}{09DD}
\pdfglyphtounicode{rhadeva}{095D}
\pdfglyphtounicode{rho}{03C1}
\pdfglyphtounicode{rho1}{03F1}
\pdfglyphtounicode{rhook}{027D}
\pdfglyphtounicode{rhookturned}{027B}
\pdfglyphtounicode{rhookturnedsuperior}{02B5}
\pdfglyphtounicode{rhosymbolgreek}{03F1}
\pdfglyphtounicode{rhotichookmod}{02DE}
\pdfglyphtounicode{rieulacirclekorean}{3271}
\pdfglyphtounicode{rieulaparenkorean}{3211}
\pdfglyphtounicode{rieulcirclekorean}{3263}
\pdfglyphtounicode{rieulhieuhkorean}{3140}
\pdfglyphtounicode{rieulkiyeokkorean}{313A}
\pdfglyphtounicode{rieulkiyeoksioskorean}{3169}
\pdfglyphtounicode{rieulkorean}{3139}
\pdfglyphtounicode{rieulmieumkorean}{313B}
\pdfglyphtounicode{rieulpansioskorean}{316C}
\pdfglyphtounicode{rieulparenkorean}{3203}
\pdfglyphtounicode{rieulphieuphkorean}{313F}
\pdfglyphtounicode{rieulpieupkorean}{313C}
\pdfglyphtounicode{rieulpieupsioskorean}{316B}
\pdfglyphtounicode{rieulsioskorean}{313D}
\pdfglyphtounicode{rieulthieuthkorean}{313E}
\pdfglyphtounicode{rieultikeutkorean}{316A}
\pdfglyphtounicode{rieulyeorinhieuhkorean}{316D}
\pdfglyphtounicode{rightangle}{221F}
\pdfglyphtounicode{rightanglene}{231D}
\pdfglyphtounicode{rightanglenw}{231C}
\pdfglyphtounicode{rightanglese}{231F}
\pdfglyphtounicode{rightanglesw}{231E}
\pdfglyphtounicode{righttackbelowcmb}{0319}
\pdfglyphtounicode{righttriangle}{22BF}
\pdfglyphtounicode{rihiragana}{308A}
\pdfglyphtounicode{rikatakana}{30EA}
\pdfglyphtounicode{rikatakanahalfwidth}{FF98}
\pdfglyphtounicode{ring}{02DA}
\pdfglyphtounicode{ringbelowcmb}{0325}
\pdfglyphtounicode{ringcmb}{030A}
\pdfglyphtounicode{ringhalfleft}{02BF}
\pdfglyphtounicode{ringhalfleftarmenian}{0559}
\pdfglyphtounicode{ringhalfleftbelowcmb}{031C}
\pdfglyphtounicode{ringhalfleftcentered}{02D3}
\pdfglyphtounicode{ringhalfright}{02BE}
\pdfglyphtounicode{ringhalfrightbelowcmb}{0339}
\pdfglyphtounicode{ringhalfrightcentered}{02D2}
\pdfglyphtounicode{ringinequal}{2256}
\pdfglyphtounicode{rinvertedbreve}{0213}
\pdfglyphtounicode{rittorusquare}{3351}
\pdfglyphtounicode{rlinebelow}{1E5F}
\pdfglyphtounicode{rlongleg}{027C}
\pdfglyphtounicode{rlonglegturned}{027A}
\pdfglyphtounicode{rmonospace}{FF52}
\pdfglyphtounicode{rohiragana}{308D}
\pdfglyphtounicode{rokatakana}{30ED}
\pdfglyphtounicode{rokatakanahalfwidth}{FF9B}
\pdfglyphtounicode{roruathai}{0E23}
\pdfglyphtounicode{rparen}{24AD}
\pdfglyphtounicode{rrabengali}{09DC}
\pdfglyphtounicode{rradeva}{0931}
\pdfglyphtounicode{rragurmukhi}{0A5C}
\pdfglyphtounicode{rreharabic}{0691}
\pdfglyphtounicode{rrehfinalarabic}{FB8D}
\pdfglyphtounicode{rrvocalicbengali}{09E0}
\pdfglyphtounicode{rrvocalicdeva}{0960}
\pdfglyphtounicode{rrvocalicgujarati}{0AE0}
\pdfglyphtounicode{rrvocalicvowelsignbengali}{09C4}
\pdfglyphtounicode{rrvocalicvowelsigndeva}{0944}
\pdfglyphtounicode{rrvocalicvowelsigngujarati}{0AC4}
\pdfglyphtounicode{rsuperior}{0072}
\pdfglyphtounicode{rtblock}{2590}
\pdfglyphtounicode{rturned}{0279}
\pdfglyphtounicode{rturnedsuperior}{02B4}
\pdfglyphtounicode{ruhiragana}{308B}
\pdfglyphtounicode{rukatakana}{30EB}
\pdfglyphtounicode{rukatakanahalfwidth}{FF99}
\pdfglyphtounicode{rupeemarkbengali}{09F2}
\pdfglyphtounicode{rupeesignbengali}{09F3}
\pdfglyphtounicode{rupiah}{20A8}
\pdfglyphtounicode{ruthai}{0E24}
\pdfglyphtounicode{rvocalicbengali}{098B}
\pdfglyphtounicode{rvocalicdeva}{090B}
\pdfglyphtounicode{rvocalicgujarati}{0A8B}
\pdfglyphtounicode{rvocalicvowelsignbengali}{09C3}
\pdfglyphtounicode{rvocalicvowelsigndeva}{0943}
\pdfglyphtounicode{rvocalicvowelsigngujarati}{0AC3}
\pdfglyphtounicode{s}{0073}
\pdfglyphtounicode{sabengali}{09B8}
\pdfglyphtounicode{sacute}{015B}
\pdfglyphtounicode{sacutedotaccent}{1E65}
\pdfglyphtounicode{sadarabic}{0635}
\pdfglyphtounicode{sadeva}{0938}
\pdfglyphtounicode{sadfinalarabic}{FEBA}
\pdfglyphtounicode{sadinitialarabic}{FEBB}
\pdfglyphtounicode{sadmedialarabic}{FEBC}
\pdfglyphtounicode{sagujarati}{0AB8}
\pdfglyphtounicode{sagurmukhi}{0A38}
\pdfglyphtounicode{sahiragana}{3055}
\pdfglyphtounicode{sakatakana}{30B5}
\pdfglyphtounicode{sakatakanahalfwidth}{FF7B}
\pdfglyphtounicode{sallallahoualayhewasallamarabic}{FDFA}
\pdfglyphtounicode{samekh}{05E1}
\pdfglyphtounicode{samekhdagesh}{FB41}
\pdfglyphtounicode{samekhdageshhebrew}{FB41}
\pdfglyphtounicode{samekhhebrew}{05E1}
\pdfglyphtounicode{saraaathai}{0E32}
\pdfglyphtounicode{saraaethai}{0E41}
\pdfglyphtounicode{saraaimaimalaithai}{0E44}
\pdfglyphtounicode{saraaimaimuanthai}{0E43}
\pdfglyphtounicode{saraamthai}{0E33}
\pdfglyphtounicode{saraathai}{0E30}
\pdfglyphtounicode{saraethai}{0E40}
\pdfglyphtounicode{saraiileftthai}{F886}
\pdfglyphtounicode{saraiithai}{0E35}
\pdfglyphtounicode{saraileftthai}{F885}
\pdfglyphtounicode{saraithai}{0E34}
\pdfglyphtounicode{saraothai}{0E42}
\pdfglyphtounicode{saraueeleftthai}{F888}
\pdfglyphtounicode{saraueethai}{0E37}
\pdfglyphtounicode{saraueleftthai}{F887}
\pdfglyphtounicode{sarauethai}{0E36}
\pdfglyphtounicode{sarauthai}{0E38}
\pdfglyphtounicode{sarauuthai}{0E39}
\pdfglyphtounicode{satisfies}{22A8}
\pdfglyphtounicode{sbopomofo}{3119}
\pdfglyphtounicode{scaron}{0161}
\pdfglyphtounicode{scarondotaccent}{1E67}
\pdfglyphtounicode{scedilla}{015F}
\pdfglyphtounicode{schwa}{0259}
\pdfglyphtounicode{schwacyrillic}{04D9}
\pdfglyphtounicode{schwadieresiscyrillic}{04DB}
\pdfglyphtounicode{schwahook}{025A}
\pdfglyphtounicode{scircle}{24E2}
\pdfglyphtounicode{scircumflex}{015D}
\pdfglyphtounicode{scommaaccent}{0219}
\pdfglyphtounicode{sdotaccent}{1E61}
\pdfglyphtounicode{sdotbelow}{1E63}
\pdfglyphtounicode{sdotbelowdotaccent}{1E69}
\pdfglyphtounicode{seagullbelowcmb}{033C}
\pdfglyphtounicode{second}{2033}
\pdfglyphtounicode{secondtonechinese}{02CA}
\pdfglyphtounicode{section}{00A7}
\pdfglyphtounicode{seenarabic}{0633}
\pdfglyphtounicode{seenfinalarabic}{FEB2}
\pdfglyphtounicode{seeninitialarabic}{FEB3}
\pdfglyphtounicode{seenmedialarabic}{FEB4}
\pdfglyphtounicode{segol}{05B6}
\pdfglyphtounicode{segol13}{05B6}
\pdfglyphtounicode{segol1f}{05B6}
\pdfglyphtounicode{segol2c}{05B6}
\pdfglyphtounicode{segolhebrew}{05B6}
\pdfglyphtounicode{segolnarrowhebrew}{05B6}
\pdfglyphtounicode{segolquarterhebrew}{05B6}
\pdfglyphtounicode{segoltahebrew}{0592}
\pdfglyphtounicode{segolwidehebrew}{05B6}
\pdfglyphtounicode{seharmenian}{057D}
\pdfglyphtounicode{sehiragana}{305B}
\pdfglyphtounicode{sekatakana}{30BB}
\pdfglyphtounicode{sekatakanahalfwidth}{FF7E}
\pdfglyphtounicode{semicolon}{003B}
\pdfglyphtounicode{semicolonarabic}{061B}
\pdfglyphtounicode{semicolonmonospace}{FF1B}
\pdfglyphtounicode{semicolonsmall}{FE54}
\pdfglyphtounicode{semivoicedmarkkana}{309C}
\pdfglyphtounicode{semivoicedmarkkanahalfwidth}{FF9F}
\pdfglyphtounicode{sentisquare}{3322}
\pdfglyphtounicode{sentosquare}{3323}
\pdfglyphtounicode{seven}{0037}
\pdfglyphtounicode{sevenarabic}{0667}
\pdfglyphtounicode{sevenbengali}{09ED}
\pdfglyphtounicode{sevencircle}{2466}
\pdfglyphtounicode{sevencircleinversesansserif}{2790}
\pdfglyphtounicode{sevendeva}{096D}
\pdfglyphtounicode{seveneighths}{215E}
\pdfglyphtounicode{sevengujarati}{0AED}
\pdfglyphtounicode{sevengurmukhi}{0A6D}
\pdfglyphtounicode{sevenhackarabic}{0667}
\pdfglyphtounicode{sevenhangzhou}{3027}
\pdfglyphtounicode{sevenideographicparen}{3226}
\pdfglyphtounicode{seveninferior}{2087}
\pdfglyphtounicode{sevenmonospace}{FF17}
\pdfglyphtounicode{sevenoldstyle}{0037}
\pdfglyphtounicode{sevenparen}{247A}
\pdfglyphtounicode{sevenperiod}{248E}
\pdfglyphtounicode{sevenpersian}{06F7}
\pdfglyphtounicode{sevenroman}{2176}
\pdfglyphtounicode{sevensuperior}{2077}
\pdfglyphtounicode{seventeencircle}{2470}
\pdfglyphtounicode{seventeenparen}{2484}
\pdfglyphtounicode{seventeenperiod}{2498}
\pdfglyphtounicode{seventhai}{0E57}
\pdfglyphtounicode{sfthyphen}{00AD}
\pdfglyphtounicode{shaarmenian}{0577}
\pdfglyphtounicode{shabengali}{09B6}
\pdfglyphtounicode{shacyrillic}{0448}
\pdfglyphtounicode{shaddaarabic}{0651}
\pdfglyphtounicode{shaddadammaarabic}{FC61}
\pdfglyphtounicode{shaddadammatanarabic}{FC5E}
\pdfglyphtounicode{shaddafathaarabic}{FC60}
\pdfglyphtounicode{shaddafathatanarabic}{0651 064B}
\pdfglyphtounicode{shaddakasraarabic}{FC62}
\pdfglyphtounicode{shaddakasratanarabic}{FC5F}
\pdfglyphtounicode{shade}{2592}
\pdfglyphtounicode{shadedark}{2593}
\pdfglyphtounicode{shadelight}{2591}
\pdfglyphtounicode{shademedium}{2592}
\pdfglyphtounicode{shadeva}{0936}
\pdfglyphtounicode{shagujarati}{0AB6}
\pdfglyphtounicode{shagurmukhi}{0A36}
\pdfglyphtounicode{shalshelethebrew}{0593}
\pdfglyphtounicode{sharp}{266F}
\pdfglyphtounicode{shbopomofo}{3115}
\pdfglyphtounicode{shchacyrillic}{0449}
\pdfglyphtounicode{sheenarabic}{0634}
\pdfglyphtounicode{sheenfinalarabic}{FEB6}
\pdfglyphtounicode{sheeninitialarabic}{FEB7}
\pdfglyphtounicode{sheenmedialarabic}{FEB8}
\pdfglyphtounicode{sheicoptic}{03E3}
\pdfglyphtounicode{sheqel}{20AA}
\pdfglyphtounicode{sheqelhebrew}{20AA}
\pdfglyphtounicode{sheva}{05B0}
\pdfglyphtounicode{sheva115}{05B0}
\pdfglyphtounicode{sheva15}{05B0}
\pdfglyphtounicode{sheva22}{05B0}
\pdfglyphtounicode{sheva2e}{05B0}
\pdfglyphtounicode{shevahebrew}{05B0}
\pdfglyphtounicode{shevanarrowhebrew}{05B0}
\pdfglyphtounicode{shevaquarterhebrew}{05B0}
\pdfglyphtounicode{shevawidehebrew}{05B0}
\pdfglyphtounicode{shhacyrillic}{04BB}
\pdfglyphtounicode{shiftleft}{21B0}
\pdfglyphtounicode{shiftright}{21B1}
\pdfglyphtounicode{shimacoptic}{03ED}
\pdfglyphtounicode{shin}{05E9}
\pdfglyphtounicode{shindagesh}{FB49}
\pdfglyphtounicode{shindageshhebrew}{FB49}
\pdfglyphtounicode{shindageshshindot}{FB2C}
\pdfglyphtounicode{shindageshshindothebrew}{FB2C}
\pdfglyphtounicode{shindageshsindot}{FB2D}
\pdfglyphtounicode{shindageshsindothebrew}{FB2D}
\pdfglyphtounicode{shindothebrew}{05C1}
\pdfglyphtounicode{shinhebrew}{05E9}
\pdfglyphtounicode{shinshindot}{FB2A}
\pdfglyphtounicode{shinshindothebrew}{FB2A}
\pdfglyphtounicode{shinsindot}{FB2B}
\pdfglyphtounicode{shinsindothebrew}{FB2B}
\pdfglyphtounicode{shook}{0282}
\pdfglyphtounicode{sigma}{03C3}
\pdfglyphtounicode{sigma1}{03C2}
\pdfglyphtounicode{sigmafinal}{03C2}
\pdfglyphtounicode{sigmalunatesymbolgreek}{03F2}
\pdfglyphtounicode{sihiragana}{3057}
\pdfglyphtounicode{sikatakana}{30B7}
\pdfglyphtounicode{sikatakanahalfwidth}{FF7C}
\pdfglyphtounicode{siluqhebrew}{05BD}
\pdfglyphtounicode{siluqlefthebrew}{05BD}
\pdfglyphtounicode{similar}{223C}
\pdfglyphtounicode{similarequal}{2243}
\pdfglyphtounicode{sindothebrew}{05C2}
\pdfglyphtounicode{siosacirclekorean}{3274}
\pdfglyphtounicode{siosaparenkorean}{3214}
\pdfglyphtounicode{sioscieuckorean}{317E}
\pdfglyphtounicode{sioscirclekorean}{3266}
\pdfglyphtounicode{sioskiyeokkorean}{317A}
\pdfglyphtounicode{sioskorean}{3145}
\pdfglyphtounicode{siosnieunkorean}{317B}
\pdfglyphtounicode{siosparenkorean}{3206}
\pdfglyphtounicode{siospieupkorean}{317D}
\pdfglyphtounicode{siostikeutkorean}{317C}
\pdfglyphtounicode{six}{0036}
\pdfglyphtounicode{sixarabic}{0666}
\pdfglyphtounicode{sixbengali}{09EC}
\pdfglyphtounicode{sixcircle}{2465}
\pdfglyphtounicode{sixcircleinversesansserif}{278F}
\pdfglyphtounicode{sixdeva}{096C}
\pdfglyphtounicode{sixgujarati}{0AEC}
\pdfglyphtounicode{sixgurmukhi}{0A6C}
\pdfglyphtounicode{sixhackarabic}{0666}
\pdfglyphtounicode{sixhangzhou}{3026}
\pdfglyphtounicode{sixideographicparen}{3225}
\pdfglyphtounicode{sixinferior}{2086}
\pdfglyphtounicode{sixmonospace}{FF16}
\pdfglyphtounicode{sixoldstyle}{0036}
\pdfglyphtounicode{sixparen}{2479}
\pdfglyphtounicode{sixperiod}{248D}
\pdfglyphtounicode{sixpersian}{06F6}
\pdfglyphtounicode{sixroman}{2175}
\pdfglyphtounicode{sixsuperior}{2076}
\pdfglyphtounicode{sixteencircle}{246F}
\pdfglyphtounicode{sixteencurrencydenominatorbengali}{09F9}
\pdfglyphtounicode{sixteenparen}{2483}
\pdfglyphtounicode{sixteenperiod}{2497}
\pdfglyphtounicode{sixthai}{0E56}
\pdfglyphtounicode{slash}{002F}
\pdfglyphtounicode{slashmonospace}{FF0F}
\pdfglyphtounicode{slong}{017F}
\pdfglyphtounicode{slongdotaccent}{1E9B}
\pdfglyphtounicode{slurabove}{2322}
\pdfglyphtounicode{slurbelow}{2323}
\pdfglyphtounicode{smile}{2323}
\pdfglyphtounicode{smileface}{263A}
\pdfglyphtounicode{smonospace}{FF53}
\pdfglyphtounicode{sofpasuqhebrew}{05C3}
\pdfglyphtounicode{softhyphen}{00AD}
\pdfglyphtounicode{softsigncyrillic}{044C}
\pdfglyphtounicode{sohiragana}{305D}
\pdfglyphtounicode{sokatakana}{30BD}
\pdfglyphtounicode{sokatakanahalfwidth}{FF7F}
\pdfglyphtounicode{soliduslongoverlaycmb}{0338}
\pdfglyphtounicode{solidusshortoverlaycmb}{0337}
\pdfglyphtounicode{sorusithai}{0E29}
\pdfglyphtounicode{sosalathai}{0E28}
\pdfglyphtounicode{sosothai}{0E0B}
\pdfglyphtounicode{sosuathai}{0E2A}
\pdfglyphtounicode{space}{0020}
\pdfglyphtounicode{spacehackarabic}{0020}
\pdfglyphtounicode{spade}{2660}
\pdfglyphtounicode{spadesuitblack}{2660}
\pdfglyphtounicode{spadesuitwhite}{2664}
\pdfglyphtounicode{sparen}{24AE}
\pdfglyphtounicode{sphericalangle}{2222}
\pdfglyphtounicode{square}{25A1}
\pdfglyphtounicode{squarebelowcmb}{033B}
\pdfglyphtounicode{squarecc}{33C4}
\pdfglyphtounicode{squarecm}{339D}
\pdfglyphtounicode{squarediagonalcrosshatchfill}{25A9}
\pdfglyphtounicode{squaredot}{22A1}
\pdfglyphtounicode{squarehorizontalfill}{25A4}
\pdfglyphtounicode{squareimage}{228F}
\pdfglyphtounicode{squarekg}{338F}
\pdfglyphtounicode{squarekm}{339E}
\pdfglyphtounicode{squarekmcapital}{33CE}
\pdfglyphtounicode{squareln}{33D1}
\pdfglyphtounicode{squarelog}{33D2}
\pdfglyphtounicode{squaremg}{338E}
\pdfglyphtounicode{squaremil}{33D5}
\pdfglyphtounicode{squareminus}{229F}
\pdfglyphtounicode{squaremm}{339C}
\pdfglyphtounicode{squaremsquared}{33A1}
\pdfglyphtounicode{squaremultiply}{22A0}
\pdfglyphtounicode{squareoriginal}{2290}
\pdfglyphtounicode{squareorthogonalcrosshatchfill}{25A6}
\pdfglyphtounicode{squareplus}{229E}
\pdfglyphtounicode{squaresolid}{25A0}
\pdfglyphtounicode{squareupperlefttolowerrightfill}{25A7}
\pdfglyphtounicode{squareupperrighttolowerleftfill}{25A8}
\pdfglyphtounicode{squareverticalfill}{25A5}
\pdfglyphtounicode{squarewhitewithsmallblack}{25A3}
\pdfglyphtounicode{squiggleleftright}{21AD}
\pdfglyphtounicode{squiggleright}{21DD}
\pdfglyphtounicode{srsquare}{33DB}
\pdfglyphtounicode{ssabengali}{09B7}
\pdfglyphtounicode{ssadeva}{0937}
\pdfglyphtounicode{ssagujarati}{0AB7}
\pdfglyphtounicode{ssangcieuckorean}{3149}
\pdfglyphtounicode{ssanghieuhkorean}{3185}
\pdfglyphtounicode{ssangieungkorean}{3180}
\pdfglyphtounicode{ssangkiyeokkorean}{3132}
\pdfglyphtounicode{ssangnieunkorean}{3165}
\pdfglyphtounicode{ssangpieupkorean}{3143}
\pdfglyphtounicode{ssangsioskorean}{3146}
\pdfglyphtounicode{ssangtikeutkorean}{3138}
\pdfglyphtounicode{ssuperior}{0073}
\pdfglyphtounicode{st}{0073 0074}
\pdfglyphtounicode{star}{22C6}
\pdfglyphtounicode{sterling}{00A3}
\pdfglyphtounicode{sterlingmonospace}{FFE1}
\pdfglyphtounicode{strokelongoverlaycmb}{0336}
\pdfglyphtounicode{strokeshortoverlaycmb}{0335}
\pdfglyphtounicode{subset}{2282}
\pdfglyphtounicode{subsetdbl}{22D0}
\pdfglyphtounicode{subsetdblequal}{2AC5}
\pdfglyphtounicode{subsetnoteql}{228A}
\pdfglyphtounicode{subsetnotequal}{228A}
\pdfglyphtounicode{subsetorequal}{2286}
\pdfglyphtounicode{subsetornotdbleql}{2ACB}
\pdfglyphtounicode{subsetsqequal}{2291}
\pdfglyphtounicode{succeeds}{227B}
\pdfglyphtounicode{suchthat}{220B}
\pdfglyphtounicode{suhiragana}{3059}
\pdfglyphtounicode{sukatakana}{30B9}
\pdfglyphtounicode{sukatakanahalfwidth}{FF7D}
\pdfglyphtounicode{sukunarabic}{0652}
\pdfglyphtounicode{summation}{2211}
\pdfglyphtounicode{sun}{263C}
\pdfglyphtounicode{superset}{2283}
\pdfglyphtounicode{supersetdbl}{22D1}
\pdfglyphtounicode{supersetdblequal}{2AC6}
\pdfglyphtounicode{supersetnoteql}{228B}
\pdfglyphtounicode{supersetnotequal}{228B}
\pdfglyphtounicode{supersetorequal}{2287}
\pdfglyphtounicode{supersetornotdbleql}{2ACC}
\pdfglyphtounicode{supersetsqequal}{2292}
\pdfglyphtounicode{svsquare}{33DC}
\pdfglyphtounicode{syouwaerasquare}{337C}
\pdfglyphtounicode{t}{0074}
\pdfglyphtounicode{tabengali}{09A4}
\pdfglyphtounicode{tackdown}{22A4}
\pdfglyphtounicode{tackleft}{22A3}
\pdfglyphtounicode{tadeva}{0924}
\pdfglyphtounicode{tagujarati}{0AA4}
\pdfglyphtounicode{tagurmukhi}{0A24}
\pdfglyphtounicode{taharabic}{0637}
\pdfglyphtounicode{tahfinalarabic}{FEC2}
\pdfglyphtounicode{tahinitialarabic}{FEC3}
\pdfglyphtounicode{tahiragana}{305F}
\pdfglyphtounicode{tahmedialarabic}{FEC4}
\pdfglyphtounicode{taisyouerasquare}{337D}
\pdfglyphtounicode{takatakana}{30BF}
\pdfglyphtounicode{takatakanahalfwidth}{FF80}
\pdfglyphtounicode{tatweelarabic}{0640}
\pdfglyphtounicode{tau}{03C4}
\pdfglyphtounicode{tav}{05EA}
\pdfglyphtounicode{tavdages}{FB4A}
\pdfglyphtounicode{tavdagesh}{FB4A}
\pdfglyphtounicode{tavdageshhebrew}{FB4A}
\pdfglyphtounicode{tavhebrew}{05EA}
\pdfglyphtounicode{tbar}{0167}
\pdfglyphtounicode{tbopomofo}{310A}
\pdfglyphtounicode{tcaron}{0165}
\pdfglyphtounicode{tccurl}{02A8}
\pdfglyphtounicode{tcedilla}{0163}
\pdfglyphtounicode{tcheharabic}{0686}
\pdfglyphtounicode{tchehfinalarabic}{FB7B}
\pdfglyphtounicode{tchehinitialarabic}{FB7C}
\pdfglyphtounicode{tchehmedialarabic}{FB7D}
\pdfglyphtounicode{tchehmeeminitialarabic}{FB7C FEE4}
\pdfglyphtounicode{tcircle}{24E3}
\pdfglyphtounicode{tcircumflexbelow}{1E71}
\pdfglyphtounicode{tcommaaccent}{0163}
\pdfglyphtounicode{tdieresis}{1E97}
\pdfglyphtounicode{tdotaccent}{1E6B}
\pdfglyphtounicode{tdotbelow}{1E6D}
\pdfglyphtounicode{tecyrillic}{0442}
\pdfglyphtounicode{tedescendercyrillic}{04AD}
\pdfglyphtounicode{teharabic}{062A}
\pdfglyphtounicode{tehfinalarabic}{FE96}
\pdfglyphtounicode{tehhahinitialarabic}{FCA2}
\pdfglyphtounicode{tehhahisolatedarabic}{FC0C}
\pdfglyphtounicode{tehinitialarabic}{FE97}
\pdfglyphtounicode{tehiragana}{3066}
\pdfglyphtounicode{tehjeeminitialarabic}{FCA1}
\pdfglyphtounicode{tehjeemisolatedarabic}{FC0B}
\pdfglyphtounicode{tehmarbutaarabic}{0629}
\pdfglyphtounicode{tehmarbutafinalarabic}{FE94}
\pdfglyphtounicode{tehmedialarabic}{FE98}
\pdfglyphtounicode{tehmeeminitialarabic}{FCA4}
\pdfglyphtounicode{tehmeemisolatedarabic}{FC0E}
\pdfglyphtounicode{tehnoonfinalarabic}{FC73}
\pdfglyphtounicode{tekatakana}{30C6}
\pdfglyphtounicode{tekatakanahalfwidth}{FF83}
\pdfglyphtounicode{telephone}{2121}
\pdfglyphtounicode{telephoneblack}{260E}
\pdfglyphtounicode{telishagedolahebrew}{05A0}
\pdfglyphtounicode{telishaqetanahebrew}{05A9}
\pdfglyphtounicode{tencircle}{2469}
\pdfglyphtounicode{tenideographicparen}{3229}
\pdfglyphtounicode{tenparen}{247D}
\pdfglyphtounicode{tenperiod}{2491}
\pdfglyphtounicode{tenroman}{2179}
\pdfglyphtounicode{tesh}{02A7}
\pdfglyphtounicode{tet}{05D8}
\pdfglyphtounicode{tetdagesh}{FB38}
\pdfglyphtounicode{tetdageshhebrew}{FB38}
\pdfglyphtounicode{tethebrew}{05D8}
\pdfglyphtounicode{tetsecyrillic}{04B5}
\pdfglyphtounicode{tevirhebrew}{059B}
\pdfglyphtounicode{tevirlefthebrew}{059B}
\pdfglyphtounicode{tfm:cmbsy10/diamond}{2662}
\pdfglyphtounicode{tfm:cmbsy10/heart}{2661}
\pdfglyphtounicode{tfm:cmbsy5/diamond}{2662}
\pdfglyphtounicode{tfm:cmbsy5/heart}{2661}
\pdfglyphtounicode{tfm:cmbsy6/diamond}{2662}
\pdfglyphtounicode{tfm:cmbsy6/heart}{2661}
\pdfglyphtounicode{tfm:cmbsy7/diamond}{2662}
\pdfglyphtounicode{tfm:cmbsy7/heart}{2661}
\pdfglyphtounicode{tfm:cmbsy8/diamond}{2662}
\pdfglyphtounicode{tfm:cmbsy8/heart}{2661}
\pdfglyphtounicode{tfm:cmbsy9/diamond}{2662}
\pdfglyphtounicode{tfm:cmbsy9/heart}{2661}
\pdfglyphtounicode{tfm:cmmi10/phi}{03D5}
\pdfglyphtounicode{tfm:cmmi10/phi1}{03C6}
\pdfglyphtounicode{tfm:cmmi12/phi}{03D5}
\pdfglyphtounicode{tfm:cmmi12/phi1}{03C6}
\pdfglyphtounicode{tfm:cmmi5/phi}{03D5}
\pdfglyphtounicode{tfm:cmmi5/phi1}{03C6}
\pdfglyphtounicode{tfm:cmmi6/phi}{03D5}
\pdfglyphtounicode{tfm:cmmi6/phi1}{03C6}
\pdfglyphtounicode{tfm:cmmi7/phi}{03D5}
\pdfglyphtounicode{tfm:cmmi7/phi1}{03C6}
\pdfglyphtounicode{tfm:cmmi8/phi}{03D5}
\pdfglyphtounicode{tfm:cmmi8/phi1}{03C6}
\pdfglyphtounicode{tfm:cmmi9/phi}{03D5}
\pdfglyphtounicode{tfm:cmmi9/phi1}{03C6}
\pdfglyphtounicode{tfm:cmmib10/phi}{03D5}
\pdfglyphtounicode{tfm:cmmib10/phi1}{03C6}
\pdfglyphtounicode{tfm:cmmib5/phi}{03D5}
\pdfglyphtounicode{tfm:cmmib5/phi1}{03C6}
\pdfglyphtounicode{tfm:cmmib6/phi}{03D5}
\pdfglyphtounicode{tfm:cmmib6/phi1}{03C6}
\pdfglyphtounicode{tfm:cmmib7/phi}{03D5}
\pdfglyphtounicode{tfm:cmmib7/phi1}{03C6}
\pdfglyphtounicode{tfm:cmmib8/phi}{03D5}
\pdfglyphtounicode{tfm:cmmib8/phi1}{03C6}
\pdfglyphtounicode{tfm:cmmib9/phi}{03D5}
\pdfglyphtounicode{tfm:cmmib9/phi1}{03C6}
\pdfglyphtounicode{tfm:cmsy10/diamond}{2662}
\pdfglyphtounicode{tfm:cmsy10/heart}{2661}
\pdfglyphtounicode{tfm:cmsy5/heart}{2661}
\pdfglyphtounicode{tfm:cmsy6/diamond}{2662}
\pdfglyphtounicode{tfm:cmsy6/heart}{2661}
\pdfglyphtounicode{tfm:cmsy7/diamond}{2662}
\pdfglyphtounicode{tfm:cmsy7/heart}{2661}
\pdfglyphtounicode{tfm:cmsy8/diamond}{2662}
\pdfglyphtounicode{tfm:cmsy8/heart}{2661}
\pdfglyphtounicode{tfm:cmsy9/diamond}{2662}
\pdfglyphtounicode{tfm:cmsy9/heart}{2661}
\pdfglyphtounicode{tfm:eurb10/phi}{03D5}
\pdfglyphtounicode{tfm:eurb10/phi1}{03C6}
\pdfglyphtounicode{tfm:eurb5/phi}{03D5}
\pdfglyphtounicode{tfm:eurb5/phi1}{03C6}
\pdfglyphtounicode{tfm:eurb6/phi}{03D5}
\pdfglyphtounicode{tfm:eurb6/phi1}{03C6}
\pdfglyphtounicode{tfm:eurb7/phi}{03D5}
\pdfglyphtounicode{tfm:eurb7/phi1}{03C6}
\pdfglyphtounicode{tfm:eurb8/phi}{03D5}
\pdfglyphtounicode{tfm:eurb8/phi1}{03C6}
\pdfglyphtounicode{tfm:eurb9/phi}{03D5}
\pdfglyphtounicode{tfm:eurb9/phi1}{03C6}
\pdfglyphtounicode{tfm:eurm10/phi}{03D5}
\pdfglyphtounicode{tfm:eurm10/phi1}{03C6}
\pdfglyphtounicode{tfm:eurm5/phi}{03D5}
\pdfglyphtounicode{tfm:eurm5/phi1}{03C6}
\pdfglyphtounicode{tfm:eurm6/phi}{03D5}
\pdfglyphtounicode{tfm:eurm6/phi1}{03C6}
\pdfglyphtounicode{tfm:eurm7/phi}{03D5}
\pdfglyphtounicode{tfm:eurm7/phi1}{03C6}
\pdfglyphtounicode{tfm:eurm8/phi}{03D5}
\pdfglyphtounicode{tfm:eurm8/phi1}{03C6}
\pdfglyphtounicode{tfm:eurm9/phi}{03D5}
\pdfglyphtounicode{tfm:eurm9/phi1}{03C6}
\pdfglyphtounicode{tfm:fplmbi/phi}{03D5}
\pdfglyphtounicode{tfm:fplmbi/phi1}{03C6}
\pdfglyphtounicode{tfm:fplmri/phi}{03D5}
\pdfglyphtounicode{tfm:fplmri/phi1}{03C6}
\pdfglyphtounicode{tfm:lmbsy10/diamond}{2662}
\pdfglyphtounicode{tfm:lmbsy10/heart}{2661}
\pdfglyphtounicode{tfm:lmbsy5/diamond}{2662}
\pdfglyphtounicode{tfm:lmbsy5/heart}{2661}
\pdfglyphtounicode{tfm:lmbsy7/diamond}{2662}
\pdfglyphtounicode{tfm:lmbsy7/heart}{2661}
\pdfglyphtounicode{tfm:lmmi10/phi}{03D5}
\pdfglyphtounicode{tfm:lmmi10/phi1}{03C6}
\pdfglyphtounicode{tfm:lmmi12/phi}{03D5}
\pdfglyphtounicode{tfm:lmmi12/phi1}{03C6}
\pdfglyphtounicode{tfm:lmmi5/phi}{03D5}
\pdfglyphtounicode{tfm:lmmi5/phi1}{03C6}
\pdfglyphtounicode{tfm:lmmi6/phi}{03D5}
\pdfglyphtounicode{tfm:lmmi6/phi1}{03C6}
\pdfglyphtounicode{tfm:lmmi7/phi}{03D5}
\pdfglyphtounicode{tfm:lmmi7/phi1}{03C6}
\pdfglyphtounicode{tfm:lmmi8/phi}{03D5}
\pdfglyphtounicode{tfm:lmmi8/phi1}{03C6}
\pdfglyphtounicode{tfm:lmmi9/phi}{03D5}
\pdfglyphtounicode{tfm:lmmi9/phi1}{03C6}
\pdfglyphtounicode{tfm:lmmib10/phi}{03D5}
\pdfglyphtounicode{tfm:lmmib10/phi1}{03C6}
\pdfglyphtounicode{tfm:lmmib5/phi}{03D5}
\pdfglyphtounicode{tfm:lmmib5/phi1}{03C6}
\pdfglyphtounicode{tfm:lmmib7/phi}{03D5}
\pdfglyphtounicode{tfm:lmmib7/phi1}{03C6}
\pdfglyphtounicode{tfm:lmsy10/diamond}{2662}
\pdfglyphtounicode{tfm:lmsy10/heart}{2661}
\pdfglyphtounicode{tfm:lmsy5/diamond}{2662}
\pdfglyphtounicode{tfm:lmsy5/heart}{2661}
\pdfglyphtounicode{tfm:lmsy6/diamond}{2662}
\pdfglyphtounicode{tfm:lmsy6/heart}{2661}
\pdfglyphtounicode{tfm:lmsy7/diamond}{2662}
\pdfglyphtounicode{tfm:lmsy7/heart}{2661}
\pdfglyphtounicode{tfm:lmsy8/diamond}{2662}
\pdfglyphtounicode{tfm:lmsy8/heart}{2661}
\pdfglyphtounicode{tfm:lmsy9/diamond}{2662}
\pdfglyphtounicode{tfm:lmsy9/heart}{2661}
\pdfglyphtounicode{tfm:msam10/diamond}{2662}
\pdfglyphtounicode{tfm:msam5/diamond}{2662}
\pdfglyphtounicode{tfm:msam6/diamond}{2662}
\pdfglyphtounicode{tfm:msam7/diamond}{2662}
\pdfglyphtounicode{tfm:msam8/diamond}{2662}
\pdfglyphtounicode{tfm:msam9/diamond}{2662}
\pdfglyphtounicode{tfm:pxbmia/phi}{03D5}
\pdfglyphtounicode{tfm:pxbmia/phi1}{03C6}
\pdfglyphtounicode{tfm:pxbsy/diamond}{2662}
\pdfglyphtounicode{tfm:pxbsy/heart}{2661}
\pdfglyphtounicode{tfm:pxbsya/diamond}{2662}
\pdfglyphtounicode{tfm:pxmia/phi}{03D5}
\pdfglyphtounicode{tfm:pxmia/phi1}{03C6}
\pdfglyphtounicode{tfm:pxsy/diamond}{2662}
\pdfglyphtounicode{tfm:pxsy/heart}{2661}
\pdfglyphtounicode{tfm:pxsya/diamond}{2662}
\pdfglyphtounicode{tfm:pzdr/a1}{2701}
\pdfglyphtounicode{tfm:pzdr/a10}{2721}
\pdfglyphtounicode{tfm:pzdr/a100}{275E}
\pdfglyphtounicode{tfm:pzdr/a101}{2761}
\pdfglyphtounicode{tfm:pzdr/a102}{2762}
\pdfglyphtounicode{tfm:pzdr/a103}{2763}
\pdfglyphtounicode{tfm:pzdr/a104}{2764}
\pdfglyphtounicode{tfm:pzdr/a105}{2710}
\pdfglyphtounicode{tfm:pzdr/a106}{2765}
\pdfglyphtounicode{tfm:pzdr/a107}{2766}
\pdfglyphtounicode{tfm:pzdr/a108}{2767}
\pdfglyphtounicode{tfm:pzdr/a109}{2660}
\pdfglyphtounicode{tfm:pzdr/a11}{261B}
\pdfglyphtounicode{tfm:pzdr/a110}{2665}
\pdfglyphtounicode{tfm:pzdr/a111}{2666}
\pdfglyphtounicode{tfm:pzdr/a112}{2663}
\pdfglyphtounicode{tfm:pzdr/a117}{2709}
\pdfglyphtounicode{tfm:pzdr/a118}{2708}
\pdfglyphtounicode{tfm:pzdr/a119}{2707}
\pdfglyphtounicode{tfm:pzdr/a12}{261E}
\pdfglyphtounicode{tfm:pzdr/a120}{2460}
\pdfglyphtounicode{tfm:pzdr/a121}{2461}
\pdfglyphtounicode{tfm:pzdr/a122}{2462}
\pdfglyphtounicode{tfm:pzdr/a123}{2463}
\pdfglyphtounicode{tfm:pzdr/a124}{2464}
\pdfglyphtounicode{tfm:pzdr/a125}{2465}
\pdfglyphtounicode{tfm:pzdr/a126}{2466}
\pdfglyphtounicode{tfm:pzdr/a127}{2467}
\pdfglyphtounicode{tfm:pzdr/a128}{2468}
\pdfglyphtounicode{tfm:pzdr/a129}{2469}
\pdfglyphtounicode{tfm:pzdr/a13}{270C}
\pdfglyphtounicode{tfm:pzdr/a130}{2776}
\pdfglyphtounicode{tfm:pzdr/a131}{2777}
\pdfglyphtounicode{tfm:pzdr/a132}{2778}
\pdfglyphtounicode{tfm:pzdr/a133}{2779}
\pdfglyphtounicode{tfm:pzdr/a134}{277A}
\pdfglyphtounicode{tfm:pzdr/a135}{277B}
\pdfglyphtounicode{tfm:pzdr/a136}{277C}
\pdfglyphtounicode{tfm:pzdr/a137}{277D}
\pdfglyphtounicode{tfm:pzdr/a138}{277E}
\pdfglyphtounicode{tfm:pzdr/a139}{277F}
\pdfglyphtounicode{tfm:pzdr/a14}{270D}
\pdfglyphtounicode{tfm:pzdr/a140}{2780}
\pdfglyphtounicode{tfm:pzdr/a141}{2781}
\pdfglyphtounicode{tfm:pzdr/a142}{2782}
\pdfglyphtounicode{tfm:pzdr/a143}{2783}
\pdfglyphtounicode{tfm:pzdr/a144}{2784}
\pdfglyphtounicode{tfm:pzdr/a145}{2785}
\pdfglyphtounicode{tfm:pzdr/a146}{2786}
\pdfglyphtounicode{tfm:pzdr/a147}{2787}
\pdfglyphtounicode{tfm:pzdr/a148}{2788}
\pdfglyphtounicode{tfm:pzdr/a149}{2789}
\pdfglyphtounicode{tfm:pzdr/a15}{270E}
\pdfglyphtounicode{tfm:pzdr/a150}{278A}
\pdfglyphtounicode{tfm:pzdr/a151}{278B}
\pdfglyphtounicode{tfm:pzdr/a152}{278C}
\pdfglyphtounicode{tfm:pzdr/a153}{278D}
\pdfglyphtounicode{tfm:pzdr/a154}{278E}
\pdfglyphtounicode{tfm:pzdr/a155}{278F}
\pdfglyphtounicode{tfm:pzdr/a156}{2790}
\pdfglyphtounicode{tfm:pzdr/a157}{2791}
\pdfglyphtounicode{tfm:pzdr/a158}{2792}
\pdfglyphtounicode{tfm:pzdr/a159}{2793}
\pdfglyphtounicode{tfm:pzdr/a16}{270F}
\pdfglyphtounicode{tfm:pzdr/a160}{2794}
\pdfglyphtounicode{tfm:pzdr/a161}{2192}
\pdfglyphtounicode{tfm:pzdr/a162}{27A3}
\pdfglyphtounicode{tfm:pzdr/a163}{2194}
\pdfglyphtounicode{tfm:pzdr/a164}{2195}
\pdfglyphtounicode{tfm:pzdr/a165}{2799}
\pdfglyphtounicode{tfm:pzdr/a166}{279B}
\pdfglyphtounicode{tfm:pzdr/a167}{279C}
\pdfglyphtounicode{tfm:pzdr/a168}{279D}
\pdfglyphtounicode{tfm:pzdr/a169}{279E}
\pdfglyphtounicode{tfm:pzdr/a17}{2711}
\pdfglyphtounicode{tfm:pzdr/a170}{279F}
\pdfglyphtounicode{tfm:pzdr/a171}{27A0}
\pdfglyphtounicode{tfm:pzdr/a172}{27A1}
\pdfglyphtounicode{tfm:pzdr/a173}{27A2}
\pdfglyphtounicode{tfm:pzdr/a174}{27A4}
\pdfglyphtounicode{tfm:pzdr/a175}{27A5}
\pdfglyphtounicode{tfm:pzdr/a176}{27A6}
\pdfglyphtounicode{tfm:pzdr/a177}{27A7}
\pdfglyphtounicode{tfm:pzdr/a178}{27A8}
\pdfglyphtounicode{tfm:pzdr/a179}{27A9}
\pdfglyphtounicode{tfm:pzdr/a18}{2712}
\pdfglyphtounicode{tfm:pzdr/a180}{27AB}
\pdfglyphtounicode{tfm:pzdr/a181}{27AD}
\pdfglyphtounicode{tfm:pzdr/a182}{27AF}
\pdfglyphtounicode{tfm:pzdr/a183}{27B2}
\pdfglyphtounicode{tfm:pzdr/a184}{27B3}
\pdfglyphtounicode{tfm:pzdr/a185}{27B5}
\pdfglyphtounicode{tfm:pzdr/a186}{27B8}
\pdfglyphtounicode{tfm:pzdr/a187}{27BA}
\pdfglyphtounicode{tfm:pzdr/a188}{27BB}
\pdfglyphtounicode{tfm:pzdr/a189}{27BC}
\pdfglyphtounicode{tfm:pzdr/a19}{2713}
\pdfglyphtounicode{tfm:pzdr/a190}{27BD}
\pdfglyphtounicode{tfm:pzdr/a191}{27BE}
\pdfglyphtounicode{tfm:pzdr/a192}{279A}
\pdfglyphtounicode{tfm:pzdr/a193}{27AA}
\pdfglyphtounicode{tfm:pzdr/a194}{27B6}
\pdfglyphtounicode{tfm:pzdr/a195}{27B9}
\pdfglyphtounicode{tfm:pzdr/a196}{2798}
\pdfglyphtounicode{tfm:pzdr/a197}{27B4}
\pdfglyphtounicode{tfm:pzdr/a198}{27B7}
\pdfglyphtounicode{tfm:pzdr/a199}{27AC}
\pdfglyphtounicode{tfm:pzdr/a2}{2702}
\pdfglyphtounicode{tfm:pzdr/a20}{2714}
\pdfglyphtounicode{tfm:pzdr/a200}{27AE}
\pdfglyphtounicode{tfm:pzdr/a201}{27B1}
\pdfglyphtounicode{tfm:pzdr/a202}{2703}
\pdfglyphtounicode{tfm:pzdr/a203}{2750}
\pdfglyphtounicode{tfm:pzdr/a204}{2752}
\pdfglyphtounicode{tfm:pzdr/a205}{276E}
\pdfglyphtounicode{tfm:pzdr/a206}{2770}
\pdfglyphtounicode{tfm:pzdr/a21}{2715}
\pdfglyphtounicode{tfm:pzdr/a22}{2716}
\pdfglyphtounicode{tfm:pzdr/a23}{2717}
\pdfglyphtounicode{tfm:pzdr/a24}{2718}
\pdfglyphtounicode{tfm:pzdr/a25}{2719}
\pdfglyphtounicode{tfm:pzdr/a26}{271A}
\pdfglyphtounicode{tfm:pzdr/a27}{271B}
\pdfglyphtounicode{tfm:pzdr/a28}{271C}
\pdfglyphtounicode{tfm:pzdr/a29}{2722}
\pdfglyphtounicode{tfm:pzdr/a3}{2704}
\pdfglyphtounicode{tfm:pzdr/a30}{2723}
\pdfglyphtounicode{tfm:pzdr/a31}{2724}
\pdfglyphtounicode{tfm:pzdr/a32}{2725}
\pdfglyphtounicode{tfm:pzdr/a33}{2726}
\pdfglyphtounicode{tfm:pzdr/a34}{2727}
\pdfglyphtounicode{tfm:pzdr/a35}{2605}
\pdfglyphtounicode{tfm:pzdr/a36}{2729}
\pdfglyphtounicode{tfm:pzdr/a37}{272A}
\pdfglyphtounicode{tfm:pzdr/a38}{272B}
\pdfglyphtounicode{tfm:pzdr/a39}{272C}
\pdfglyphtounicode{tfm:pzdr/a4}{260E}
\pdfglyphtounicode{tfm:pzdr/a40}{272D}
\pdfglyphtounicode{tfm:pzdr/a41}{272E}
\pdfglyphtounicode{tfm:pzdr/a42}{272F}
\pdfglyphtounicode{tfm:pzdr/a43}{2730}
\pdfglyphtounicode{tfm:pzdr/a44}{2731}
\pdfglyphtounicode{tfm:pzdr/a45}{2732}
\pdfglyphtounicode{tfm:pzdr/a46}{2733}
\pdfglyphtounicode{tfm:pzdr/a47}{2734}
\pdfglyphtounicode{tfm:pzdr/a48}{2735}
\pdfglyphtounicode{tfm:pzdr/a49}{2736}
\pdfglyphtounicode{tfm:pzdr/a5}{2706}
\pdfglyphtounicode{tfm:pzdr/a50}{2737}
\pdfglyphtounicode{tfm:pzdr/a51}{2738}
\pdfglyphtounicode{tfm:pzdr/a52}{2739}
\pdfglyphtounicode{tfm:pzdr/a53}{273A}
\pdfglyphtounicode{tfm:pzdr/a54}{273B}
\pdfglyphtounicode{tfm:pzdr/a55}{273C}
\pdfglyphtounicode{tfm:pzdr/a56}{273D}
\pdfglyphtounicode{tfm:pzdr/a57}{273E}
\pdfglyphtounicode{tfm:pzdr/a58}{273F}
\pdfglyphtounicode{tfm:pzdr/a59}{2740}
\pdfglyphtounicode{tfm:pzdr/a6}{271D}
\pdfglyphtounicode{tfm:pzdr/a60}{2741}
\pdfglyphtounicode{tfm:pzdr/a61}{2742}
\pdfglyphtounicode{tfm:pzdr/a62}{2743}
\pdfglyphtounicode{tfm:pzdr/a63}{2744}
\pdfglyphtounicode{tfm:pzdr/a64}{2745}
\pdfglyphtounicode{tfm:pzdr/a65}{2746}
\pdfglyphtounicode{tfm:pzdr/a66}{2747}
\pdfglyphtounicode{tfm:pzdr/a67}{2748}
\pdfglyphtounicode{tfm:pzdr/a68}{2749}
\pdfglyphtounicode{tfm:pzdr/a69}{274A}
\pdfglyphtounicode{tfm:pzdr/a7}{271E}
\pdfglyphtounicode{tfm:pzdr/a70}{274B}
\pdfglyphtounicode{tfm:pzdr/a71}{25CF}
\pdfglyphtounicode{tfm:pzdr/a72}{274D}
\pdfglyphtounicode{tfm:pzdr/a73}{25A0}
\pdfglyphtounicode{tfm:pzdr/a74}{274F}
\pdfglyphtounicode{tfm:pzdr/a75}{2751}
\pdfglyphtounicode{tfm:pzdr/a76}{25B2}
\pdfglyphtounicode{tfm:pzdr/a77}{25BC}
\pdfglyphtounicode{tfm:pzdr/a78}{25C6}
\pdfglyphtounicode{tfm:pzdr/a79}{2756}
\pdfglyphtounicode{tfm:pzdr/a8}{271F}
\pdfglyphtounicode{tfm:pzdr/a81}{25D7}
\pdfglyphtounicode{tfm:pzdr/a82}{2758}
\pdfglyphtounicode{tfm:pzdr/a83}{2759}
\pdfglyphtounicode{tfm:pzdr/a84}{275A}
\pdfglyphtounicode{tfm:pzdr/a85}{276F}
\pdfglyphtounicode{tfm:pzdr/a86}{2771}
\pdfglyphtounicode{tfm:pzdr/a87}{2772}
\pdfglyphtounicode{tfm:pzdr/a88}{2773}
\pdfglyphtounicode{tfm:pzdr/a89}{2768}
\pdfglyphtounicode{tfm:pzdr/a9}{2720}
\pdfglyphtounicode{tfm:pzdr/a90}{2769}
\pdfglyphtounicode{tfm:pzdr/a91}{276C}
\pdfglyphtounicode{tfm:pzdr/a92}{276D}
\pdfglyphtounicode{tfm:pzdr/a93}{276A}
\pdfglyphtounicode{tfm:pzdr/a94}{276B}
\pdfglyphtounicode{tfm:pzdr/a95}{2774}
\pdfglyphtounicode{tfm:pzdr/a96}{2775}
\pdfglyphtounicode{tfm:pzdr/a97}{275B}
\pdfglyphtounicode{tfm:pzdr/a98}{275C}
\pdfglyphtounicode{tfm:pzdr/a99}{275D}
\pdfglyphtounicode{tfm:rpxbmi/phi}{03D5}
\pdfglyphtounicode{tfm:rpxbmi/phi1}{03C6}
\pdfglyphtounicode{tfm:rpxmi/phi}{03D5}
\pdfglyphtounicode{tfm:rpxmi/phi1}{03C6}
\pdfglyphtounicode{tfm:rpzdr/a1}{2701}
\pdfglyphtounicode{tfm:rpzdr/a10}{2721}
\pdfglyphtounicode{tfm:rpzdr/a100}{275E}
\pdfglyphtounicode{tfm:rpzdr/a101}{2761}
\pdfglyphtounicode{tfm:rpzdr/a102}{2762}
\pdfglyphtounicode{tfm:rpzdr/a103}{2763}
\pdfglyphtounicode{tfm:rpzdr/a104}{2764}
\pdfglyphtounicode{tfm:rpzdr/a105}{2710}
\pdfglyphtounicode{tfm:rpzdr/a106}{2765}
\pdfglyphtounicode{tfm:rpzdr/a107}{2766}
\pdfglyphtounicode{tfm:rpzdr/a108}{2767}
\pdfglyphtounicode{tfm:rpzdr/a109}{2660}
\pdfglyphtounicode{tfm:rpzdr/a11}{261B}
\pdfglyphtounicode{tfm:rpzdr/a110}{2665}
\pdfglyphtounicode{tfm:rpzdr/a111}{2666}
\pdfglyphtounicode{tfm:rpzdr/a112}{2663}
\pdfglyphtounicode{tfm:rpzdr/a117}{2709}
\pdfglyphtounicode{tfm:rpzdr/a118}{2708}
\pdfglyphtounicode{tfm:rpzdr/a119}{2707}
\pdfglyphtounicode{tfm:rpzdr/a12}{261E}
\pdfglyphtounicode{tfm:rpzdr/a120}{2460}
\pdfglyphtounicode{tfm:rpzdr/a121}{2461}
\pdfglyphtounicode{tfm:rpzdr/a122}{2462}
\pdfglyphtounicode{tfm:rpzdr/a123}{2463}
\pdfglyphtounicode{tfm:rpzdr/a124}{2464}
\pdfglyphtounicode{tfm:rpzdr/a125}{2465}
\pdfglyphtounicode{tfm:rpzdr/a126}{2466}
\pdfglyphtounicode{tfm:rpzdr/a127}{2467}
\pdfglyphtounicode{tfm:rpzdr/a128}{2468}
\pdfglyphtounicode{tfm:rpzdr/a129}{2469}
\pdfglyphtounicode{tfm:rpzdr/a13}{270C}
\pdfglyphtounicode{tfm:rpzdr/a130}{2776}
\pdfglyphtounicode{tfm:rpzdr/a131}{2777}
\pdfglyphtounicode{tfm:rpzdr/a132}{2778}
\pdfglyphtounicode{tfm:rpzdr/a133}{2779}
\pdfglyphtounicode{tfm:rpzdr/a134}{277A}
\pdfglyphtounicode{tfm:rpzdr/a135}{277B}
\pdfglyphtounicode{tfm:rpzdr/a136}{277C}
\pdfglyphtounicode{tfm:rpzdr/a137}{277D}
\pdfglyphtounicode{tfm:rpzdr/a138}{277E}
\pdfglyphtounicode{tfm:rpzdr/a139}{277F}
\pdfglyphtounicode{tfm:rpzdr/a14}{270D}
\pdfglyphtounicode{tfm:rpzdr/a140}{2780}
\pdfglyphtounicode{tfm:rpzdr/a141}{2781}
\pdfglyphtounicode{tfm:rpzdr/a142}{2782}
\pdfglyphtounicode{tfm:rpzdr/a143}{2783}
\pdfglyphtounicode{tfm:rpzdr/a144}{2784}
\pdfglyphtounicode{tfm:rpzdr/a145}{2785}
\pdfglyphtounicode{tfm:rpzdr/a146}{2786}
\pdfglyphtounicode{tfm:rpzdr/a147}{2787}
\pdfglyphtounicode{tfm:rpzdr/a148}{2788}
\pdfglyphtounicode{tfm:rpzdr/a149}{2789}
\pdfglyphtounicode{tfm:rpzdr/a15}{270E}
\pdfglyphtounicode{tfm:rpzdr/a150}{278A}
\pdfglyphtounicode{tfm:rpzdr/a151}{278B}
\pdfglyphtounicode{tfm:rpzdr/a152}{278C}
\pdfglyphtounicode{tfm:rpzdr/a153}{278D}
\pdfglyphtounicode{tfm:rpzdr/a154}{278E}
\pdfglyphtounicode{tfm:rpzdr/a155}{278F}
\pdfglyphtounicode{tfm:rpzdr/a156}{2790}
\pdfglyphtounicode{tfm:rpzdr/a157}{2791}
\pdfglyphtounicode{tfm:rpzdr/a158}{2792}
\pdfglyphtounicode{tfm:rpzdr/a159}{2793}
\pdfglyphtounicode{tfm:rpzdr/a16}{270F}
\pdfglyphtounicode{tfm:rpzdr/a160}{2794}
\pdfglyphtounicode{tfm:rpzdr/a161}{2192}
\pdfglyphtounicode{tfm:rpzdr/a162}{27A3}
\pdfglyphtounicode{tfm:rpzdr/a163}{2194}
\pdfglyphtounicode{tfm:rpzdr/a164}{2195}
\pdfglyphtounicode{tfm:rpzdr/a165}{2799}
\pdfglyphtounicode{tfm:rpzdr/a166}{279B}
\pdfglyphtounicode{tfm:rpzdr/a167}{279C}
\pdfglyphtounicode{tfm:rpzdr/a168}{279D}
\pdfglyphtounicode{tfm:rpzdr/a169}{279E}
\pdfglyphtounicode{tfm:rpzdr/a17}{2711}
\pdfglyphtounicode{tfm:rpzdr/a170}{279F}
\pdfglyphtounicode{tfm:rpzdr/a171}{27A0}
\pdfglyphtounicode{tfm:rpzdr/a172}{27A1}
\pdfglyphtounicode{tfm:rpzdr/a173}{27A2}
\pdfglyphtounicode{tfm:rpzdr/a174}{27A4}
\pdfglyphtounicode{tfm:rpzdr/a175}{27A5}
\pdfglyphtounicode{tfm:rpzdr/a176}{27A6}
\pdfglyphtounicode{tfm:rpzdr/a177}{27A7}
\pdfglyphtounicode{tfm:rpzdr/a178}{27A8}
\pdfglyphtounicode{tfm:rpzdr/a179}{27A9}
\pdfglyphtounicode{tfm:rpzdr/a18}{2712}
\pdfglyphtounicode{tfm:rpzdr/a180}{27AB}
\pdfglyphtounicode{tfm:rpzdr/a181}{27AD}
\pdfglyphtounicode{tfm:rpzdr/a182}{27AF}
\pdfglyphtounicode{tfm:rpzdr/a183}{27B2}
\pdfglyphtounicode{tfm:rpzdr/a184}{27B3}
\pdfglyphtounicode{tfm:rpzdr/a185}{27B5}
\pdfglyphtounicode{tfm:rpzdr/a186}{27B8}
\pdfglyphtounicode{tfm:rpzdr/a187}{27BA}
\pdfglyphtounicode{tfm:rpzdr/a188}{27BB}
\pdfglyphtounicode{tfm:rpzdr/a189}{27BC}
\pdfglyphtounicode{tfm:rpzdr/a19}{2713}
\pdfglyphtounicode{tfm:rpzdr/a190}{27BD}
\pdfglyphtounicode{tfm:rpzdr/a191}{27BE}
\pdfglyphtounicode{tfm:rpzdr/a192}{279A}
\pdfglyphtounicode{tfm:rpzdr/a193}{27AA}
\pdfglyphtounicode{tfm:rpzdr/a194}{27B6}
\pdfglyphtounicode{tfm:rpzdr/a195}{27B9}
\pdfglyphtounicode{tfm:rpzdr/a196}{2798}
\pdfglyphtounicode{tfm:rpzdr/a197}{27B4}
\pdfglyphtounicode{tfm:rpzdr/a198}{27B7}
\pdfglyphtounicode{tfm:rpzdr/a199}{27AC}
\pdfglyphtounicode{tfm:rpzdr/a2}{2702}
\pdfglyphtounicode{tfm:rpzdr/a20}{2714}
\pdfglyphtounicode{tfm:rpzdr/a200}{27AE}
\pdfglyphtounicode{tfm:rpzdr/a201}{27B1}
\pdfglyphtounicode{tfm:rpzdr/a202}{2703}
\pdfglyphtounicode{tfm:rpzdr/a203}{2750}
\pdfglyphtounicode{tfm:rpzdr/a204}{2752}
\pdfglyphtounicode{tfm:rpzdr/a205}{276E}
\pdfglyphtounicode{tfm:rpzdr/a206}{2770}
\pdfglyphtounicode{tfm:rpzdr/a21}{2715}
\pdfglyphtounicode{tfm:rpzdr/a22}{2716}
\pdfglyphtounicode{tfm:rpzdr/a23}{2717}
\pdfglyphtounicode{tfm:rpzdr/a24}{2718}
\pdfglyphtounicode{tfm:rpzdr/a25}{2719}
\pdfglyphtounicode{tfm:rpzdr/a26}{271A}
\pdfglyphtounicode{tfm:rpzdr/a27}{271B}
\pdfglyphtounicode{tfm:rpzdr/a28}{271C}
\pdfglyphtounicode{tfm:rpzdr/a29}{2722}
\pdfglyphtounicode{tfm:rpzdr/a3}{2704}
\pdfglyphtounicode{tfm:rpzdr/a30}{2723}
\pdfglyphtounicode{tfm:rpzdr/a31}{2724}
\pdfglyphtounicode{tfm:rpzdr/a32}{2725}
\pdfglyphtounicode{tfm:rpzdr/a33}{2726}
\pdfglyphtounicode{tfm:rpzdr/a34}{2727}
\pdfglyphtounicode{tfm:rpzdr/a35}{2605}
\pdfglyphtounicode{tfm:rpzdr/a36}{2729}
\pdfglyphtounicode{tfm:rpzdr/a37}{272A}
\pdfglyphtounicode{tfm:rpzdr/a38}{272B}
\pdfglyphtounicode{tfm:rpzdr/a39}{272C}
\pdfglyphtounicode{tfm:rpzdr/a4}{260E}
\pdfglyphtounicode{tfm:rpzdr/a40}{272D}
\pdfglyphtounicode{tfm:rpzdr/a41}{272E}
\pdfglyphtounicode{tfm:rpzdr/a42}{272F}
\pdfglyphtounicode{tfm:rpzdr/a43}{2730}
\pdfglyphtounicode{tfm:rpzdr/a44}{2731}
\pdfglyphtounicode{tfm:rpzdr/a45}{2732}
\pdfglyphtounicode{tfm:rpzdr/a46}{2733}
\pdfglyphtounicode{tfm:rpzdr/a47}{2734}
\pdfglyphtounicode{tfm:rpzdr/a48}{2735}
\pdfglyphtounicode{tfm:rpzdr/a49}{2736}
\pdfglyphtounicode{tfm:rpzdr/a5}{2706}
\pdfglyphtounicode{tfm:rpzdr/a50}{2737}
\pdfglyphtounicode{tfm:rpzdr/a51}{2738}
\pdfglyphtounicode{tfm:rpzdr/a52}{2739}
\pdfglyphtounicode{tfm:rpzdr/a53}{273A}
\pdfglyphtounicode{tfm:rpzdr/a54}{273B}
\pdfglyphtounicode{tfm:rpzdr/a55}{273C}
\pdfglyphtounicode{tfm:rpzdr/a56}{273D}
\pdfglyphtounicode{tfm:rpzdr/a57}{273E}
\pdfglyphtounicode{tfm:rpzdr/a58}{273F}
\pdfglyphtounicode{tfm:rpzdr/a59}{2740}
\pdfglyphtounicode{tfm:rpzdr/a6}{271D}
\pdfglyphtounicode{tfm:rpzdr/a60}{2741}
\pdfglyphtounicode{tfm:rpzdr/a61}{2742}
\pdfglyphtounicode{tfm:rpzdr/a62}{2743}
\pdfglyphtounicode{tfm:rpzdr/a63}{2744}
\pdfglyphtounicode{tfm:rpzdr/a64}{2745}
\pdfglyphtounicode{tfm:rpzdr/a65}{2746}
\pdfglyphtounicode{tfm:rpzdr/a66}{2747}
\pdfglyphtounicode{tfm:rpzdr/a67}{2748}
\pdfglyphtounicode{tfm:rpzdr/a68}{2749}
\pdfglyphtounicode{tfm:rpzdr/a69}{274A}
\pdfglyphtounicode{tfm:rpzdr/a7}{271E}
\pdfglyphtounicode{tfm:rpzdr/a70}{274B}
\pdfglyphtounicode{tfm:rpzdr/a71}{25CF}
\pdfglyphtounicode{tfm:rpzdr/a72}{274D}
\pdfglyphtounicode{tfm:rpzdr/a73}{25A0}
\pdfglyphtounicode{tfm:rpzdr/a74}{274F}
\pdfglyphtounicode{tfm:rpzdr/a75}{2751}
\pdfglyphtounicode{tfm:rpzdr/a76}{25B2}
\pdfglyphtounicode{tfm:rpzdr/a77}{25BC}
\pdfglyphtounicode{tfm:rpzdr/a78}{25C6}
\pdfglyphtounicode{tfm:rpzdr/a79}{2756}
\pdfglyphtounicode{tfm:rpzdr/a8}{271F}
\pdfglyphtounicode{tfm:rpzdr/a81}{25D7}
\pdfglyphtounicode{tfm:rpzdr/a82}{2758}
\pdfglyphtounicode{tfm:rpzdr/a83}{2759}
\pdfglyphtounicode{tfm:rpzdr/a84}{275A}
\pdfglyphtounicode{tfm:rpzdr/a85}{276F}
\pdfglyphtounicode{tfm:rpzdr/a86}{2771}
\pdfglyphtounicode{tfm:rpzdr/a87}{2772}
\pdfglyphtounicode{tfm:rpzdr/a88}{2773}
\pdfglyphtounicode{tfm:rpzdr/a89}{2768}
\pdfglyphtounicode{tfm:rpzdr/a9}{2720}
\pdfglyphtounicode{tfm:rpzdr/a90}{2769}
\pdfglyphtounicode{tfm:rpzdr/a91}{276C}
\pdfglyphtounicode{tfm:rpzdr/a92}{276D}
\pdfglyphtounicode{tfm:rpzdr/a93}{276A}
\pdfglyphtounicode{tfm:rpzdr/a94}{276B}
\pdfglyphtounicode{tfm:rpzdr/a95}{2774}
\pdfglyphtounicode{tfm:rpzdr/a96}{2775}
\pdfglyphtounicode{tfm:rpzdr/a97}{275B}
\pdfglyphtounicode{tfm:rpzdr/a98}{275C}
\pdfglyphtounicode{tfm:rpzdr/a99}{275D}
\pdfglyphtounicode{tfm:rtxbmi/phi}{03D5}
\pdfglyphtounicode{tfm:rtxbmi/phi1}{03C6}
\pdfglyphtounicode{tfm:rtxmi/phi}{03D5}
\pdfglyphtounicode{tfm:rtxmi/phi1}{03C6}
\pdfglyphtounicode{tfm:txbmia/phi}{03D5}
\pdfglyphtounicode{tfm:txbmia/phi1}{03C6}
\pdfglyphtounicode{tfm:txbsy/diamond}{2662}
\pdfglyphtounicode{tfm:txbsy/heart}{2661}
\pdfglyphtounicode{tfm:txbsya/diamond}{2662}
\pdfglyphtounicode{tfm:txmia/phi}{03D5}
\pdfglyphtounicode{tfm:txmia/phi1}{03C6}
\pdfglyphtounicode{tfm:txsy/diamond}{2662}
\pdfglyphtounicode{tfm:txsy/heart}{2661}
\pdfglyphtounicode{tfm:txsya/diamond}{2662}
\pdfglyphtounicode{tfm:zd/a1}{2701}
\pdfglyphtounicode{tfm:zd/a10}{2721}
\pdfglyphtounicode{tfm:zd/a100}{275E}
\pdfglyphtounicode{tfm:zd/a101}{2761}
\pdfglyphtounicode{tfm:zd/a102}{2762}
\pdfglyphtounicode{tfm:zd/a103}{2763}
\pdfglyphtounicode{tfm:zd/a104}{2764}
\pdfglyphtounicode{tfm:zd/a105}{2710}
\pdfglyphtounicode{tfm:zd/a106}{2765}
\pdfglyphtounicode{tfm:zd/a107}{2766}
\pdfglyphtounicode{tfm:zd/a108}{2767}
\pdfglyphtounicode{tfm:zd/a109}{2660}
\pdfglyphtounicode{tfm:zd/a11}{261B}
\pdfglyphtounicode{tfm:zd/a110}{2665}
\pdfglyphtounicode{tfm:zd/a111}{2666}
\pdfglyphtounicode{tfm:zd/a112}{2663}
\pdfglyphtounicode{tfm:zd/a117}{2709}
\pdfglyphtounicode{tfm:zd/a118}{2708}
\pdfglyphtounicode{tfm:zd/a119}{2707}
\pdfglyphtounicode{tfm:zd/a12}{261E}
\pdfglyphtounicode{tfm:zd/a120}{2460}
\pdfglyphtounicode{tfm:zd/a121}{2461}
\pdfglyphtounicode{tfm:zd/a122}{2462}
\pdfglyphtounicode{tfm:zd/a123}{2463}
\pdfglyphtounicode{tfm:zd/a124}{2464}
\pdfglyphtounicode{tfm:zd/a125}{2465}
\pdfglyphtounicode{tfm:zd/a126}{2466}
\pdfglyphtounicode{tfm:zd/a127}{2467}
\pdfglyphtounicode{tfm:zd/a128}{2468}
\pdfglyphtounicode{tfm:zd/a129}{2469}
\pdfglyphtounicode{tfm:zd/a13}{270C}
\pdfglyphtounicode{tfm:zd/a130}{2776}
\pdfglyphtounicode{tfm:zd/a131}{2777}
\pdfglyphtounicode{tfm:zd/a132}{2778}
\pdfglyphtounicode{tfm:zd/a133}{2779}
\pdfglyphtounicode{tfm:zd/a134}{277A}
\pdfglyphtounicode{tfm:zd/a135}{277B}
\pdfglyphtounicode{tfm:zd/a136}{277C}
\pdfglyphtounicode{tfm:zd/a137}{277D}
\pdfglyphtounicode{tfm:zd/a138}{277E}
\pdfglyphtounicode{tfm:zd/a139}{277F}
\pdfglyphtounicode{tfm:zd/a14}{270D}
\pdfglyphtounicode{tfm:zd/a140}{2780}
\pdfglyphtounicode{tfm:zd/a141}{2781}
\pdfglyphtounicode{tfm:zd/a142}{2782}
\pdfglyphtounicode{tfm:zd/a143}{2783}
\pdfglyphtounicode{tfm:zd/a144}{2784}
\pdfglyphtounicode{tfm:zd/a145}{2785}
\pdfglyphtounicode{tfm:zd/a146}{2786}
\pdfglyphtounicode{tfm:zd/a147}{2787}
\pdfglyphtounicode{tfm:zd/a148}{2788}
\pdfglyphtounicode{tfm:zd/a149}{2789}
\pdfglyphtounicode{tfm:zd/a15}{270E}
\pdfglyphtounicode{tfm:zd/a150}{278A}
\pdfglyphtounicode{tfm:zd/a151}{278B}
\pdfglyphtounicode{tfm:zd/a152}{278C}
\pdfglyphtounicode{tfm:zd/a153}{278D}
\pdfglyphtounicode{tfm:zd/a154}{278E}
\pdfglyphtounicode{tfm:zd/a155}{278F}
\pdfglyphtounicode{tfm:zd/a156}{2790}
\pdfglyphtounicode{tfm:zd/a157}{2791}
\pdfglyphtounicode{tfm:zd/a158}{2792}
\pdfglyphtounicode{tfm:zd/a159}{2793}
\pdfglyphtounicode{tfm:zd/a16}{270F}
\pdfglyphtounicode{tfm:zd/a160}{2794}
\pdfglyphtounicode{tfm:zd/a161}{2192}
\pdfglyphtounicode{tfm:zd/a162}{27A3}
\pdfglyphtounicode{tfm:zd/a163}{2194}
\pdfglyphtounicode{tfm:zd/a164}{2195}
\pdfglyphtounicode{tfm:zd/a165}{2799}
\pdfglyphtounicode{tfm:zd/a166}{279B}
\pdfglyphtounicode{tfm:zd/a167}{279C}
\pdfglyphtounicode{tfm:zd/a168}{279D}
\pdfglyphtounicode{tfm:zd/a169}{279E}
\pdfglyphtounicode{tfm:zd/a17}{2711}
\pdfglyphtounicode{tfm:zd/a170}{279F}
\pdfglyphtounicode{tfm:zd/a171}{27A0}
\pdfglyphtounicode{tfm:zd/a172}{27A1}
\pdfglyphtounicode{tfm:zd/a173}{27A2}
\pdfglyphtounicode{tfm:zd/a174}{27A4}
\pdfglyphtounicode{tfm:zd/a175}{27A5}
\pdfglyphtounicode{tfm:zd/a176}{27A6}
\pdfglyphtounicode{tfm:zd/a177}{27A7}
\pdfglyphtounicode{tfm:zd/a178}{27A8}
\pdfglyphtounicode{tfm:zd/a179}{27A9}
\pdfglyphtounicode{tfm:zd/a18}{2712}
\pdfglyphtounicode{tfm:zd/a180}{27AB}
\pdfglyphtounicode{tfm:zd/a181}{27AD}
\pdfglyphtounicode{tfm:zd/a182}{27AF}
\pdfglyphtounicode{tfm:zd/a183}{27B2}
\pdfglyphtounicode{tfm:zd/a184}{27B3}
\pdfglyphtounicode{tfm:zd/a185}{27B5}
\pdfglyphtounicode{tfm:zd/a186}{27B8}
\pdfglyphtounicode{tfm:zd/a187}{27BA}
\pdfglyphtounicode{tfm:zd/a188}{27BB}
\pdfglyphtounicode{tfm:zd/a189}{27BC}
\pdfglyphtounicode{tfm:zd/a19}{2713}
\pdfglyphtounicode{tfm:zd/a190}{27BD}
\pdfglyphtounicode{tfm:zd/a191}{27BE}
\pdfglyphtounicode{tfm:zd/a192}{279A}
\pdfglyphtounicode{tfm:zd/a193}{27AA}
\pdfglyphtounicode{tfm:zd/a194}{27B6}
\pdfglyphtounicode{tfm:zd/a195}{27B9}
\pdfglyphtounicode{tfm:zd/a196}{2798}
\pdfglyphtounicode{tfm:zd/a197}{27B4}
\pdfglyphtounicode{tfm:zd/a198}{27B7}
\pdfglyphtounicode{tfm:zd/a199}{27AC}
\pdfglyphtounicode{tfm:zd/a2}{2702}
\pdfglyphtounicode{tfm:zd/a20}{2714}
\pdfglyphtounicode{tfm:zd/a200}{27AE}
\pdfglyphtounicode{tfm:zd/a201}{27B1}
\pdfglyphtounicode{tfm:zd/a202}{2703}
\pdfglyphtounicode{tfm:zd/a203}{2750}
\pdfglyphtounicode{tfm:zd/a204}{2752}
\pdfglyphtounicode{tfm:zd/a205}{276E}
\pdfglyphtounicode{tfm:zd/a206}{2770}
\pdfglyphtounicode{tfm:zd/a21}{2715}
\pdfglyphtounicode{tfm:zd/a22}{2716}
\pdfglyphtounicode{tfm:zd/a23}{2717}
\pdfglyphtounicode{tfm:zd/a24}{2718}
\pdfglyphtounicode{tfm:zd/a25}{2719}
\pdfglyphtounicode{tfm:zd/a26}{271A}
\pdfglyphtounicode{tfm:zd/a27}{271B}
\pdfglyphtounicode{tfm:zd/a28}{271C}
\pdfglyphtounicode{tfm:zd/a29}{2722}
\pdfglyphtounicode{tfm:zd/a3}{2704}
\pdfglyphtounicode{tfm:zd/a30}{2723}
\pdfglyphtounicode{tfm:zd/a31}{2724}
\pdfglyphtounicode{tfm:zd/a32}{2725}
\pdfglyphtounicode{tfm:zd/a33}{2726}
\pdfglyphtounicode{tfm:zd/a34}{2727}
\pdfglyphtounicode{tfm:zd/a35}{2605}
\pdfglyphtounicode{tfm:zd/a36}{2729}
\pdfglyphtounicode{tfm:zd/a37}{272A}
\pdfglyphtounicode{tfm:zd/a38}{272B}
\pdfglyphtounicode{tfm:zd/a39}{272C}
\pdfglyphtounicode{tfm:zd/a4}{260E}
\pdfglyphtounicode{tfm:zd/a40}{272D}
\pdfglyphtounicode{tfm:zd/a41}{272E}
\pdfglyphtounicode{tfm:zd/a42}{272F}
\pdfglyphtounicode{tfm:zd/a43}{2730}
\pdfglyphtounicode{tfm:zd/a44}{2731}
\pdfglyphtounicode{tfm:zd/a45}{2732}
\pdfglyphtounicode{tfm:zd/a46}{2733}
\pdfglyphtounicode{tfm:zd/a47}{2734}
\pdfglyphtounicode{tfm:zd/a48}{2735}
\pdfglyphtounicode{tfm:zd/a49}{2736}
\pdfglyphtounicode{tfm:zd/a5}{2706}
\pdfglyphtounicode{tfm:zd/a50}{2737}
\pdfglyphtounicode{tfm:zd/a51}{2738}
\pdfglyphtounicode{tfm:zd/a52}{2739}
\pdfglyphtounicode{tfm:zd/a53}{273A}
\pdfglyphtounicode{tfm:zd/a54}{273B}
\pdfglyphtounicode{tfm:zd/a55}{273C}
\pdfglyphtounicode{tfm:zd/a56}{273D}
\pdfglyphtounicode{tfm:zd/a57}{273E}
\pdfglyphtounicode{tfm:zd/a58}{273F}
\pdfglyphtounicode{tfm:zd/a59}{2740}
\pdfglyphtounicode{tfm:zd/a6}{271D}
\pdfglyphtounicode{tfm:zd/a60}{2741}
\pdfglyphtounicode{tfm:zd/a61}{2742}
\pdfglyphtounicode{tfm:zd/a62}{2743}
\pdfglyphtounicode{tfm:zd/a63}{2744}
\pdfglyphtounicode{tfm:zd/a64}{2745}
\pdfglyphtounicode{tfm:zd/a65}{2746}
\pdfglyphtounicode{tfm:zd/a66}{2747}
\pdfglyphtounicode{tfm:zd/a67}{2748}
\pdfglyphtounicode{tfm:zd/a68}{2749}
\pdfglyphtounicode{tfm:zd/a69}{274A}
\pdfglyphtounicode{tfm:zd/a7}{271E}
\pdfglyphtounicode{tfm:zd/a70}{274B}
\pdfglyphtounicode{tfm:zd/a71}{25CF}
\pdfglyphtounicode{tfm:zd/a72}{274D}
\pdfglyphtounicode{tfm:zd/a73}{25A0}
\pdfglyphtounicode{tfm:zd/a74}{274F}
\pdfglyphtounicode{tfm:zd/a75}{2751}
\pdfglyphtounicode{tfm:zd/a76}{25B2}
\pdfglyphtounicode{tfm:zd/a77}{25BC}
\pdfglyphtounicode{tfm:zd/a78}{25C6}
\pdfglyphtounicode{tfm:zd/a79}{2756}
\pdfglyphtounicode{tfm:zd/a8}{271F}
\pdfglyphtounicode{tfm:zd/a81}{25D7}
\pdfglyphtounicode{tfm:zd/a82}{2758}
\pdfglyphtounicode{tfm:zd/a83}{2759}
\pdfglyphtounicode{tfm:zd/a84}{275A}
\pdfglyphtounicode{tfm:zd/a85}{276F}
\pdfglyphtounicode{tfm:zd/a86}{2771}
\pdfglyphtounicode{tfm:zd/a87}{2772}
\pdfglyphtounicode{tfm:zd/a88}{2773}
\pdfglyphtounicode{tfm:zd/a89}{2768}
\pdfglyphtounicode{tfm:zd/a9}{2720}
\pdfglyphtounicode{tfm:zd/a90}{2769}
\pdfglyphtounicode{tfm:zd/a91}{276C}
\pdfglyphtounicode{tfm:zd/a92}{276D}
\pdfglyphtounicode{tfm:zd/a93}{276A}
\pdfglyphtounicode{tfm:zd/a94}{276B}
\pdfglyphtounicode{tfm:zd/a95}{2774}
\pdfglyphtounicode{tfm:zd/a96}{2775}
\pdfglyphtounicode{tfm:zd/a97}{275B}
\pdfglyphtounicode{tfm:zd/a98}{275C}
\pdfglyphtounicode{tfm:zd/a99}{275D}
\pdfglyphtounicode{tfm:zpzdr-reversed/a1}{2701}
\pdfglyphtounicode{tfm:zpzdr-reversed/a10}{2721}
\pdfglyphtounicode{tfm:zpzdr-reversed/a100}{275E}
\pdfglyphtounicode{tfm:zpzdr-reversed/a101}{2761}
\pdfglyphtounicode{tfm:zpzdr-reversed/a102}{2762}
\pdfglyphtounicode{tfm:zpzdr-reversed/a103}{2763}
\pdfglyphtounicode{tfm:zpzdr-reversed/a104}{2764}
\pdfglyphtounicode{tfm:zpzdr-reversed/a105}{2710}
\pdfglyphtounicode{tfm:zpzdr-reversed/a106}{2765}
\pdfglyphtounicode{tfm:zpzdr-reversed/a107}{2766}
\pdfglyphtounicode{tfm:zpzdr-reversed/a108}{2767}
\pdfglyphtounicode{tfm:zpzdr-reversed/a109}{2660}
\pdfglyphtounicode{tfm:zpzdr-reversed/a11}{261B}
\pdfglyphtounicode{tfm:zpzdr-reversed/a110}{2665}
\pdfglyphtounicode{tfm:zpzdr-reversed/a111}{2666}
\pdfglyphtounicode{tfm:zpzdr-reversed/a112}{2663}
\pdfglyphtounicode{tfm:zpzdr-reversed/a117}{2709}
\pdfglyphtounicode{tfm:zpzdr-reversed/a118}{2708}
\pdfglyphtounicode{tfm:zpzdr-reversed/a119}{2707}
\pdfglyphtounicode{tfm:zpzdr-reversed/a12}{261E}
\pdfglyphtounicode{tfm:zpzdr-reversed/a120}{2460}
\pdfglyphtounicode{tfm:zpzdr-reversed/a121}{2461}
\pdfglyphtounicode{tfm:zpzdr-reversed/a122}{2462}
\pdfglyphtounicode{tfm:zpzdr-reversed/a123}{2463}
\pdfglyphtounicode{tfm:zpzdr-reversed/a124}{2464}
\pdfglyphtounicode{tfm:zpzdr-reversed/a125}{2465}
\pdfglyphtounicode{tfm:zpzdr-reversed/a126}{2466}
\pdfglyphtounicode{tfm:zpzdr-reversed/a127}{2467}
\pdfglyphtounicode{tfm:zpzdr-reversed/a128}{2468}
\pdfglyphtounicode{tfm:zpzdr-reversed/a129}{2469}
\pdfglyphtounicode{tfm:zpzdr-reversed/a13}{270C}
\pdfglyphtounicode{tfm:zpzdr-reversed/a130}{2776}
\pdfglyphtounicode{tfm:zpzdr-reversed/a131}{2777}
\pdfglyphtounicode{tfm:zpzdr-reversed/a132}{2778}
\pdfglyphtounicode{tfm:zpzdr-reversed/a133}{2779}
\pdfglyphtounicode{tfm:zpzdr-reversed/a134}{277A}
\pdfglyphtounicode{tfm:zpzdr-reversed/a135}{277B}
\pdfglyphtounicode{tfm:zpzdr-reversed/a136}{277C}
\pdfglyphtounicode{tfm:zpzdr-reversed/a137}{277D}
\pdfglyphtounicode{tfm:zpzdr-reversed/a138}{277E}
\pdfglyphtounicode{tfm:zpzdr-reversed/a139}{277F}
\pdfglyphtounicode{tfm:zpzdr-reversed/a14}{270D}
\pdfglyphtounicode{tfm:zpzdr-reversed/a140}{2780}
\pdfglyphtounicode{tfm:zpzdr-reversed/a141}{2781}
\pdfglyphtounicode{tfm:zpzdr-reversed/a142}{2782}
\pdfglyphtounicode{tfm:zpzdr-reversed/a143}{2783}
\pdfglyphtounicode{tfm:zpzdr-reversed/a144}{2784}
\pdfglyphtounicode{tfm:zpzdr-reversed/a145}{2785}
\pdfglyphtounicode{tfm:zpzdr-reversed/a146}{2786}
\pdfglyphtounicode{tfm:zpzdr-reversed/a147}{2787}
\pdfglyphtounicode{tfm:zpzdr-reversed/a148}{2788}
\pdfglyphtounicode{tfm:zpzdr-reversed/a149}{2789}
\pdfglyphtounicode{tfm:zpzdr-reversed/a15}{270E}
\pdfglyphtounicode{tfm:zpzdr-reversed/a150}{278A}
\pdfglyphtounicode{tfm:zpzdr-reversed/a151}{278B}
\pdfglyphtounicode{tfm:zpzdr-reversed/a152}{278C}
\pdfglyphtounicode{tfm:zpzdr-reversed/a153}{278D}
\pdfglyphtounicode{tfm:zpzdr-reversed/a154}{278E}
\pdfglyphtounicode{tfm:zpzdr-reversed/a155}{278F}
\pdfglyphtounicode{tfm:zpzdr-reversed/a156}{2790}
\pdfglyphtounicode{tfm:zpzdr-reversed/a157}{2791}
\pdfglyphtounicode{tfm:zpzdr-reversed/a158}{2792}
\pdfglyphtounicode{tfm:zpzdr-reversed/a159}{2793}
\pdfglyphtounicode{tfm:zpzdr-reversed/a16}{270F}
\pdfglyphtounicode{tfm:zpzdr-reversed/a160}{2794}
\pdfglyphtounicode{tfm:zpzdr-reversed/a161}{2192}
\pdfglyphtounicode{tfm:zpzdr-reversed/a162}{27A3}
\pdfglyphtounicode{tfm:zpzdr-reversed/a163}{2194}
\pdfglyphtounicode{tfm:zpzdr-reversed/a164}{2195}
\pdfglyphtounicode{tfm:zpzdr-reversed/a165}{2799}
\pdfglyphtounicode{tfm:zpzdr-reversed/a166}{279B}
\pdfglyphtounicode{tfm:zpzdr-reversed/a167}{279C}
\pdfglyphtounicode{tfm:zpzdr-reversed/a168}{279D}
\pdfglyphtounicode{tfm:zpzdr-reversed/a169}{279E}
\pdfglyphtounicode{tfm:zpzdr-reversed/a17}{2711}
\pdfglyphtounicode{tfm:zpzdr-reversed/a170}{279F}
\pdfglyphtounicode{tfm:zpzdr-reversed/a171}{27A0}
\pdfglyphtounicode{tfm:zpzdr-reversed/a172}{27A1}
\pdfglyphtounicode{tfm:zpzdr-reversed/a173}{27A2}
\pdfglyphtounicode{tfm:zpzdr-reversed/a174}{27A4}
\pdfglyphtounicode{tfm:zpzdr-reversed/a175}{27A5}
\pdfglyphtounicode{tfm:zpzdr-reversed/a176}{27A6}
\pdfglyphtounicode{tfm:zpzdr-reversed/a177}{27A7}
\pdfglyphtounicode{tfm:zpzdr-reversed/a178}{27A8}
\pdfglyphtounicode{tfm:zpzdr-reversed/a179}{27A9}
\pdfglyphtounicode{tfm:zpzdr-reversed/a18}{2712}
\pdfglyphtounicode{tfm:zpzdr-reversed/a180}{27AB}
\pdfglyphtounicode{tfm:zpzdr-reversed/a181}{27AD}
\pdfglyphtounicode{tfm:zpzdr-reversed/a182}{27AF}
\pdfglyphtounicode{tfm:zpzdr-reversed/a183}{27B2}
\pdfglyphtounicode{tfm:zpzdr-reversed/a184}{27B3}
\pdfglyphtounicode{tfm:zpzdr-reversed/a185}{27B5}
\pdfglyphtounicode{tfm:zpzdr-reversed/a186}{27B8}
\pdfglyphtounicode{tfm:zpzdr-reversed/a187}{27BA}
\pdfglyphtounicode{tfm:zpzdr-reversed/a188}{27BB}
\pdfglyphtounicode{tfm:zpzdr-reversed/a189}{27BC}
\pdfglyphtounicode{tfm:zpzdr-reversed/a19}{2713}
\pdfglyphtounicode{tfm:zpzdr-reversed/a190}{27BD}
\pdfglyphtounicode{tfm:zpzdr-reversed/a191}{27BE}
\pdfglyphtounicode{tfm:zpzdr-reversed/a192}{279A}
\pdfglyphtounicode{tfm:zpzdr-reversed/a193}{27AA}
\pdfglyphtounicode{tfm:zpzdr-reversed/a194}{27B6}
\pdfglyphtounicode{tfm:zpzdr-reversed/a195}{27B9}
\pdfglyphtounicode{tfm:zpzdr-reversed/a196}{2798}
\pdfglyphtounicode{tfm:zpzdr-reversed/a197}{27B4}
\pdfglyphtounicode{tfm:zpzdr-reversed/a198}{27B7}
\pdfglyphtounicode{tfm:zpzdr-reversed/a199}{27AC}
\pdfglyphtounicode{tfm:zpzdr-reversed/a2}{2702}
\pdfglyphtounicode{tfm:zpzdr-reversed/a20}{2714}
\pdfglyphtounicode{tfm:zpzdr-reversed/a200}{27AE}
\pdfglyphtounicode{tfm:zpzdr-reversed/a201}{27B1}
\pdfglyphtounicode{tfm:zpzdr-reversed/a202}{2703}
\pdfglyphtounicode{tfm:zpzdr-reversed/a203}{2750}
\pdfglyphtounicode{tfm:zpzdr-reversed/a204}{2752}
\pdfglyphtounicode{tfm:zpzdr-reversed/a205}{276E}
\pdfglyphtounicode{tfm:zpzdr-reversed/a206}{2770}
\pdfglyphtounicode{tfm:zpzdr-reversed/a21}{2715}
\pdfglyphtounicode{tfm:zpzdr-reversed/a22}{2716}
\pdfglyphtounicode{tfm:zpzdr-reversed/a23}{2717}
\pdfglyphtounicode{tfm:zpzdr-reversed/a24}{2718}
\pdfglyphtounicode{tfm:zpzdr-reversed/a25}{2719}
\pdfglyphtounicode{tfm:zpzdr-reversed/a26}{271A}
\pdfglyphtounicode{tfm:zpzdr-reversed/a27}{271B}
\pdfglyphtounicode{tfm:zpzdr-reversed/a28}{271C}
\pdfglyphtounicode{tfm:zpzdr-reversed/a29}{2722}
\pdfglyphtounicode{tfm:zpzdr-reversed/a3}{2704}
\pdfglyphtounicode{tfm:zpzdr-reversed/a30}{2723}
\pdfglyphtounicode{tfm:zpzdr-reversed/a31}{2724}
\pdfglyphtounicode{tfm:zpzdr-reversed/a32}{2725}
\pdfglyphtounicode{tfm:zpzdr-reversed/a33}{2726}
\pdfglyphtounicode{tfm:zpzdr-reversed/a34}{2727}
\pdfglyphtounicode{tfm:zpzdr-reversed/a35}{2605}
\pdfglyphtounicode{tfm:zpzdr-reversed/a36}{2729}
\pdfglyphtounicode{tfm:zpzdr-reversed/a37}{272A}
\pdfglyphtounicode{tfm:zpzdr-reversed/a38}{272B}
\pdfglyphtounicode{tfm:zpzdr-reversed/a39}{272C}
\pdfglyphtounicode{tfm:zpzdr-reversed/a4}{260E}
\pdfglyphtounicode{tfm:zpzdr-reversed/a40}{272D}
\pdfglyphtounicode{tfm:zpzdr-reversed/a41}{272E}
\pdfglyphtounicode{tfm:zpzdr-reversed/a42}{272F}
\pdfglyphtounicode{tfm:zpzdr-reversed/a43}{2730}
\pdfglyphtounicode{tfm:zpzdr-reversed/a44}{2731}
\pdfglyphtounicode{tfm:zpzdr-reversed/a45}{2732}
\pdfglyphtounicode{tfm:zpzdr-reversed/a46}{2733}
\pdfglyphtounicode{tfm:zpzdr-reversed/a47}{2734}
\pdfglyphtounicode{tfm:zpzdr-reversed/a48}{2735}
\pdfglyphtounicode{tfm:zpzdr-reversed/a49}{2736}
\pdfglyphtounicode{tfm:zpzdr-reversed/a5}{2706}
\pdfglyphtounicode{tfm:zpzdr-reversed/a50}{2737}
\pdfglyphtounicode{tfm:zpzdr-reversed/a51}{2738}
\pdfglyphtounicode{tfm:zpzdr-reversed/a52}{2739}
\pdfglyphtounicode{tfm:zpzdr-reversed/a53}{273A}
\pdfglyphtounicode{tfm:zpzdr-reversed/a54}{273B}
\pdfglyphtounicode{tfm:zpzdr-reversed/a55}{273C}
\pdfglyphtounicode{tfm:zpzdr-reversed/a56}{273D}
\pdfglyphtounicode{tfm:zpzdr-reversed/a57}{273E}
\pdfglyphtounicode{tfm:zpzdr-reversed/a58}{273F}
\pdfglyphtounicode{tfm:zpzdr-reversed/a59}{2740}
\pdfglyphtounicode{tfm:zpzdr-reversed/a6}{271D}
\pdfglyphtounicode{tfm:zpzdr-reversed/a60}{2741}
\pdfglyphtounicode{tfm:zpzdr-reversed/a61}{2742}
\pdfglyphtounicode{tfm:zpzdr-reversed/a62}{2743}
\pdfglyphtounicode{tfm:zpzdr-reversed/a63}{2744}
\pdfglyphtounicode{tfm:zpzdr-reversed/a64}{2745}
\pdfglyphtounicode{tfm:zpzdr-reversed/a65}{2746}
\pdfglyphtounicode{tfm:zpzdr-reversed/a66}{2747}
\pdfglyphtounicode{tfm:zpzdr-reversed/a67}{2748}
\pdfglyphtounicode{tfm:zpzdr-reversed/a68}{2749}
\pdfglyphtounicode{tfm:zpzdr-reversed/a69}{274A}
\pdfglyphtounicode{tfm:zpzdr-reversed/a7}{271E}
\pdfglyphtounicode{tfm:zpzdr-reversed/a70}{274B}
\pdfglyphtounicode{tfm:zpzdr-reversed/a71}{25CF}
\pdfglyphtounicode{tfm:zpzdr-reversed/a72}{274D}
\pdfglyphtounicode{tfm:zpzdr-reversed/a73}{25A0}
\pdfglyphtounicode{tfm:zpzdr-reversed/a74}{274F}
\pdfglyphtounicode{tfm:zpzdr-reversed/a75}{2751}
\pdfglyphtounicode{tfm:zpzdr-reversed/a76}{25B2}
\pdfglyphtounicode{tfm:zpzdr-reversed/a77}{25BC}
\pdfglyphtounicode{tfm:zpzdr-reversed/a78}{25C6}
\pdfglyphtounicode{tfm:zpzdr-reversed/a79}{2756}
\pdfglyphtounicode{tfm:zpzdr-reversed/a8}{271F}
\pdfglyphtounicode{tfm:zpzdr-reversed/a81}{25D7}
\pdfglyphtounicode{tfm:zpzdr-reversed/a82}{2758}
\pdfglyphtounicode{tfm:zpzdr-reversed/a83}{2759}
\pdfglyphtounicode{tfm:zpzdr-reversed/a84}{275A}
\pdfglyphtounicode{tfm:zpzdr-reversed/a85}{276F}
\pdfglyphtounicode{tfm:zpzdr-reversed/a86}{2771}
\pdfglyphtounicode{tfm:zpzdr-reversed/a87}{2772}
\pdfglyphtounicode{tfm:zpzdr-reversed/a88}{2773}
\pdfglyphtounicode{tfm:zpzdr-reversed/a89}{2768}
\pdfglyphtounicode{tfm:zpzdr-reversed/a9}{2720}
\pdfglyphtounicode{tfm:zpzdr-reversed/a90}{2769}
\pdfglyphtounicode{tfm:zpzdr-reversed/a91}{276C}
\pdfglyphtounicode{tfm:zpzdr-reversed/a92}{276D}
\pdfglyphtounicode{tfm:zpzdr-reversed/a93}{276A}
\pdfglyphtounicode{tfm:zpzdr-reversed/a94}{276B}
\pdfglyphtounicode{tfm:zpzdr-reversed/a95}{2774}
\pdfglyphtounicode{tfm:zpzdr-reversed/a96}{2775}
\pdfglyphtounicode{tfm:zpzdr-reversed/a97}{275B}
\pdfglyphtounicode{tfm:zpzdr-reversed/a98}{275C}
\pdfglyphtounicode{tfm:zpzdr-reversed/a99}{275D}
\pdfglyphtounicode{thabengali}{09A5}
\pdfglyphtounicode{thadeva}{0925}
\pdfglyphtounicode{thagujarati}{0AA5}
\pdfglyphtounicode{thagurmukhi}{0A25}
\pdfglyphtounicode{thalarabic}{0630}
\pdfglyphtounicode{thalfinalarabic}{FEAC}
\pdfglyphtounicode{thanthakhatlowleftthai}{F898}
\pdfglyphtounicode{thanthakhatlowrightthai}{F897}
\pdfglyphtounicode{thanthakhatthai}{0E4C}
\pdfglyphtounicode{thanthakhatupperleftthai}{F896}
\pdfglyphtounicode{theharabic}{062B}
\pdfglyphtounicode{thehfinalarabic}{FE9A}
\pdfglyphtounicode{thehinitialarabic}{FE9B}
\pdfglyphtounicode{thehmedialarabic}{FE9C}
\pdfglyphtounicode{thereexists}{2203}
\pdfglyphtounicode{therefore}{2234}
\pdfglyphtounicode{theta}{03B8}
\pdfglyphtounicode{theta1}{03D1}
\pdfglyphtounicode{thetasymbolgreek}{03D1}
\pdfglyphtounicode{thieuthacirclekorean}{3279}
\pdfglyphtounicode{thieuthaparenkorean}{3219}
\pdfglyphtounicode{thieuthcirclekorean}{326B}
\pdfglyphtounicode{thieuthkorean}{314C}
\pdfglyphtounicode{thieuthparenkorean}{320B}
\pdfglyphtounicode{thirteencircle}{246C}
\pdfglyphtounicode{thirteenparen}{2480}
\pdfglyphtounicode{thirteenperiod}{2494}
\pdfglyphtounicode{thonangmonthothai}{0E11}
\pdfglyphtounicode{thook}{01AD}
\pdfglyphtounicode{thophuthaothai}{0E12}
\pdfglyphtounicode{thorn}{00FE}
\pdfglyphtounicode{thothahanthai}{0E17}
\pdfglyphtounicode{thothanthai}{0E10}
\pdfglyphtounicode{thothongthai}{0E18}
\pdfglyphtounicode{thothungthai}{0E16}
\pdfglyphtounicode{thousandcyrillic}{0482}
\pdfglyphtounicode{thousandsseparatorarabic}{066C}
\pdfglyphtounicode{thousandsseparatorpersian}{066C}
\pdfglyphtounicode{three}{0033}
\pdfglyphtounicode{threearabic}{0663}
\pdfglyphtounicode{threebengali}{09E9}
\pdfglyphtounicode{threecircle}{2462}
\pdfglyphtounicode{threecircleinversesansserif}{278C}
\pdfglyphtounicode{threedeva}{0969}
\pdfglyphtounicode{threeeighths}{215C}
\pdfglyphtounicode{threegujarati}{0AE9}
\pdfglyphtounicode{threegurmukhi}{0A69}
\pdfglyphtounicode{threehackarabic}{0663}
\pdfglyphtounicode{threehangzhou}{3023}
\pdfglyphtounicode{threeideographicparen}{3222}
\pdfglyphtounicode{threeinferior}{2083}
\pdfglyphtounicode{threemonospace}{FF13}
\pdfglyphtounicode{threenumeratorbengali}{09F6}
\pdfglyphtounicode{threeoldstyle}{0033}
\pdfglyphtounicode{threeparen}{2476}
\pdfglyphtounicode{threeperiod}{248A}
\pdfglyphtounicode{threepersian}{06F3}
\pdfglyphtounicode{threequarters}{00BE}
\pdfglyphtounicode{threequartersemdash}{F6DE}
\pdfglyphtounicode{threeroman}{2172}
\pdfglyphtounicode{threesuperior}{00B3}
\pdfglyphtounicode{threethai}{0E53}
\pdfglyphtounicode{thzsquare}{3394}
\pdfglyphtounicode{tihiragana}{3061}
\pdfglyphtounicode{tikatakana}{30C1}
\pdfglyphtounicode{tikatakanahalfwidth}{FF81}
\pdfglyphtounicode{tikeutacirclekorean}{3270}
\pdfglyphtounicode{tikeutaparenkorean}{3210}
\pdfglyphtounicode{tikeutcirclekorean}{3262}
\pdfglyphtounicode{tikeutkorean}{3137}
\pdfglyphtounicode{tikeutparenkorean}{3202}
\pdfglyphtounicode{tilde}{02DC}
\pdfglyphtounicode{tildebelowcmb}{0330}
\pdfglyphtounicode{tildecmb}{0303}
\pdfglyphtounicode{tildecomb}{0303}
\pdfglyphtounicode{tildedoublecmb}{0360}
\pdfglyphtounicode{tildeoperator}{223C}
\pdfglyphtounicode{tildeoverlaycmb}{0334}
\pdfglyphtounicode{tildeverticalcmb}{033E}
\pdfglyphtounicode{timescircle}{2297}
\pdfglyphtounicode{tipehahebrew}{0596}
\pdfglyphtounicode{tipehalefthebrew}{0596}
\pdfglyphtounicode{tippigurmukhi}{0A70}
\pdfglyphtounicode{titlocyrilliccmb}{0483}
\pdfglyphtounicode{tiwnarmenian}{057F}
\pdfglyphtounicode{tlinebelow}{1E6F}
\pdfglyphtounicode{tmonospace}{FF54}
\pdfglyphtounicode{toarmenian}{0569}
\pdfglyphtounicode{tohiragana}{3068}
\pdfglyphtounicode{tokatakana}{30C8}
\pdfglyphtounicode{tokatakanahalfwidth}{FF84}
\pdfglyphtounicode{tonebarextrahighmod}{02E5}
\pdfglyphtounicode{tonebarextralowmod}{02E9}
\pdfglyphtounicode{tonebarhighmod}{02E6}
\pdfglyphtounicode{tonebarlowmod}{02E8}
\pdfglyphtounicode{tonebarmidmod}{02E7}
\pdfglyphtounicode{tonefive}{01BD}
\pdfglyphtounicode{tonesix}{0185}
\pdfglyphtounicode{tonetwo}{01A8}
\pdfglyphtounicode{tonos}{0384}
\pdfglyphtounicode{tonsquare}{3327}
\pdfglyphtounicode{topatakthai}{0E0F}
\pdfglyphtounicode{tortoiseshellbracketleft}{3014}
\pdfglyphtounicode{tortoiseshellbracketleftsmall}{FE5D}
\pdfglyphtounicode{tortoiseshellbracketleftvertical}{FE39}
\pdfglyphtounicode{tortoiseshellbracketright}{3015}
\pdfglyphtounicode{tortoiseshellbracketrightsmall}{FE5E}
\pdfglyphtounicode{tortoiseshellbracketrightvertical}{FE3A}
\pdfglyphtounicode{totaothai}{0E15}
\pdfglyphtounicode{tpalatalhook}{01AB}
\pdfglyphtounicode{tparen}{24AF}
\pdfglyphtounicode{trademark}{2122}
\pdfglyphtounicode{trademarksans}{2122}
\pdfglyphtounicode{trademarkserif}{2122}
\pdfglyphtounicode{tretroflexhook}{0288}
\pdfglyphtounicode{triagdn}{25BC}
\pdfglyphtounicode{triaglf}{25C4}
\pdfglyphtounicode{triagrt}{25BA}
\pdfglyphtounicode{triagup}{25B2}
\pdfglyphtounicode{triangle}{25B3}
\pdfglyphtounicode{triangledownsld}{25BC}
\pdfglyphtounicode{triangleinv}{25BD}
\pdfglyphtounicode{triangleleft}{25C1}
\pdfglyphtounicode{triangleleftequal}{22B4}
\pdfglyphtounicode{triangleleftsld}{25C0}
\pdfglyphtounicode{triangleright}{25B7}
\pdfglyphtounicode{trianglerightequal}{22B5}
\pdfglyphtounicode{trianglerightsld}{25B6}
\pdfglyphtounicode{trianglesolid}{25B2}
\pdfglyphtounicode{ts}{02A6}
\pdfglyphtounicode{tsadi}{05E6}
\pdfglyphtounicode{tsadidagesh}{FB46}
\pdfglyphtounicode{tsadidageshhebrew}{FB46}
\pdfglyphtounicode{tsadihebrew}{05E6}
\pdfglyphtounicode{tsecyrillic}{0446}
\pdfglyphtounicode{tsere}{05B5}
\pdfglyphtounicode{tsere12}{05B5}
\pdfglyphtounicode{tsere1e}{05B5}
\pdfglyphtounicode{tsere2b}{05B5}
\pdfglyphtounicode{tserehebrew}{05B5}
\pdfglyphtounicode{tserenarrowhebrew}{05B5}
\pdfglyphtounicode{tserequarterhebrew}{05B5}
\pdfglyphtounicode{tserewidehebrew}{05B5}
\pdfglyphtounicode{tshecyrillic}{045B}
\pdfglyphtounicode{tsuperior}{0074}
\pdfglyphtounicode{ttabengali}{099F}
\pdfglyphtounicode{ttadeva}{091F}
\pdfglyphtounicode{ttagujarati}{0A9F}
\pdfglyphtounicode{ttagurmukhi}{0A1F}
\pdfglyphtounicode{tteharabic}{0679}
\pdfglyphtounicode{ttehfinalarabic}{FB67}
\pdfglyphtounicode{ttehinitialarabic}{FB68}
\pdfglyphtounicode{ttehmedialarabic}{FB69}
\pdfglyphtounicode{tthabengali}{09A0}
\pdfglyphtounicode{tthadeva}{0920}
\pdfglyphtounicode{tthagujarati}{0AA0}
\pdfglyphtounicode{tthagurmukhi}{0A20}
\pdfglyphtounicode{tturned}{0287}
\pdfglyphtounicode{tuhiragana}{3064}
\pdfglyphtounicode{tukatakana}{30C4}
\pdfglyphtounicode{tukatakanahalfwidth}{FF82}
\pdfglyphtounicode{turnstileleft}{22A2}
\pdfglyphtounicode{turnstileright}{22A3}
\pdfglyphtounicode{tusmallhiragana}{3063}
\pdfglyphtounicode{tusmallkatakana}{30C3}
\pdfglyphtounicode{tusmallkatakanahalfwidth}{FF6F}
\pdfglyphtounicode{twelvecircle}{246B}
\pdfglyphtounicode{twelveparen}{247F}
\pdfglyphtounicode{twelveperiod}{2493}
\pdfglyphtounicode{twelveroman}{217B}
\pdfglyphtounicode{twentycircle}{2473}
\pdfglyphtounicode{twentyhangzhou}{5344}
\pdfglyphtounicode{twentyparen}{2487}
\pdfglyphtounicode{twentyperiod}{249B}
\pdfglyphtounicode{two}{0032}
\pdfglyphtounicode{twoarabic}{0662}
\pdfglyphtounicode{twobengali}{09E8}
\pdfglyphtounicode{twocircle}{2461}
\pdfglyphtounicode{twocircleinversesansserif}{278B}
\pdfglyphtounicode{twodeva}{0968}
\pdfglyphtounicode{twodotenleader}{2025}
\pdfglyphtounicode{twodotleader}{2025}
\pdfglyphtounicode{twodotleadervertical}{FE30}
\pdfglyphtounicode{twogujarati}{0AE8}
\pdfglyphtounicode{twogurmukhi}{0A68}
\pdfglyphtounicode{twohackarabic}{0662}
\pdfglyphtounicode{twohangzhou}{3022}
\pdfglyphtounicode{twoideographicparen}{3221}
\pdfglyphtounicode{twoinferior}{2082}
\pdfglyphtounicode{twomonospace}{FF12}
\pdfglyphtounicode{twonumeratorbengali}{09F5}
\pdfglyphtounicode{twooldstyle}{0032}
\pdfglyphtounicode{twoparen}{2475}
\pdfglyphtounicode{twoperiod}{2489}
\pdfglyphtounicode{twopersian}{06F2}
\pdfglyphtounicode{tworoman}{2171}
\pdfglyphtounicode{twostroke}{01BB}
\pdfglyphtounicode{twosuperior}{00B2}
\pdfglyphtounicode{twothai}{0E52}
\pdfglyphtounicode{twothirds}{2154}
\pdfglyphtounicode{u}{0075}
\pdfglyphtounicode{uacute}{00FA}
\pdfglyphtounicode{ubar}{0289}
\pdfglyphtounicode{ubengali}{0989}
\pdfglyphtounicode{ubopomofo}{3128}
\pdfglyphtounicode{ubreve}{016D}
\pdfglyphtounicode{ucaron}{01D4}
\pdfglyphtounicode{ucircle}{24E4}
\pdfglyphtounicode{ucircumflex}{00FB}
\pdfglyphtounicode{ucircumflexbelow}{1E77}
\pdfglyphtounicode{ucyrillic}{0443}
\pdfglyphtounicode{udattadeva}{0951}
\pdfglyphtounicode{udblacute}{0171}
\pdfglyphtounicode{udblgrave}{0215}
\pdfglyphtounicode{udeva}{0909}
\pdfglyphtounicode{udieresis}{00FC}
\pdfglyphtounicode{udieresisacute}{01D8}
\pdfglyphtounicode{udieresisbelow}{1E73}
\pdfglyphtounicode{udieresiscaron}{01DA}
\pdfglyphtounicode{udieresiscyrillic}{04F1}
\pdfglyphtounicode{udieresisgrave}{01DC}
\pdfglyphtounicode{udieresismacron}{01D6}
\pdfglyphtounicode{udotbelow}{1EE5}
\pdfglyphtounicode{ugrave}{00F9}
\pdfglyphtounicode{ugujarati}{0A89}
\pdfglyphtounicode{ugurmukhi}{0A09}
\pdfglyphtounicode{uhiragana}{3046}
\pdfglyphtounicode{uhookabove}{1EE7}
\pdfglyphtounicode{uhorn}{01B0}
\pdfglyphtounicode{uhornacute}{1EE9}
\pdfglyphtounicode{uhorndotbelow}{1EF1}
\pdfglyphtounicode{uhorngrave}{1EEB}
\pdfglyphtounicode{uhornhookabove}{1EED}
\pdfglyphtounicode{uhorntilde}{1EEF}
\pdfglyphtounicode{uhungarumlaut}{0171}
\pdfglyphtounicode{uhungarumlautcyrillic}{04F3}
\pdfglyphtounicode{uinvertedbreve}{0217}
\pdfglyphtounicode{ukatakana}{30A6}
\pdfglyphtounicode{ukatakanahalfwidth}{FF73}
\pdfglyphtounicode{ukcyrillic}{0479}
\pdfglyphtounicode{ukorean}{315C}
\pdfglyphtounicode{umacron}{016B}
\pdfglyphtounicode{umacroncyrillic}{04EF}
\pdfglyphtounicode{umacrondieresis}{1E7B}
\pdfglyphtounicode{umatragurmukhi}{0A41}
\pdfglyphtounicode{umonospace}{FF55}
\pdfglyphtounicode{underscore}{005F}
\pdfglyphtounicode{underscoredbl}{2017}
\pdfglyphtounicode{underscoremonospace}{FF3F}
\pdfglyphtounicode{underscorevertical}{FE33}
\pdfglyphtounicode{underscorewavy}{FE4F}
\pdfglyphtounicode{union}{222A}
\pdfglyphtounicode{uniondbl}{22D3}
\pdfglyphtounicode{unionmulti}{228E}
\pdfglyphtounicode{unionsq}{2294}
\pdfglyphtounicode{universal}{2200}
\pdfglyphtounicode{uogonek}{0173}
\pdfglyphtounicode{uparen}{24B0}
\pdfglyphtounicode{upblock}{2580}
\pdfglyphtounicode{upperdothebrew}{05C4}
\pdfglyphtounicode{uprise}{22CF}
\pdfglyphtounicode{upsilon}{03C5}
\pdfglyphtounicode{upsilondieresis}{03CB}
\pdfglyphtounicode{upsilondieresistonos}{03B0}
\pdfglyphtounicode{upsilonlatin}{028A}
\pdfglyphtounicode{upsilontonos}{03CD}
\pdfglyphtounicode{upslope}{29F8}
\pdfglyphtounicode{uptackbelowcmb}{031D}
\pdfglyphtounicode{uptackmod}{02D4}
\pdfglyphtounicode{uragurmukhi}{0A73}
\pdfglyphtounicode{uring}{016F}
\pdfglyphtounicode{ushortcyrillic}{045E}
\pdfglyphtounicode{usmallhiragana}{3045}
\pdfglyphtounicode{usmallkatakana}{30A5}
\pdfglyphtounicode{usmallkatakanahalfwidth}{FF69}
\pdfglyphtounicode{ustraightcyrillic}{04AF}
\pdfglyphtounicode{ustraightstrokecyrillic}{04B1}
\pdfglyphtounicode{utilde}{0169}
\pdfglyphtounicode{utildeacute}{1E79}
\pdfglyphtounicode{utildebelow}{1E75}
\pdfglyphtounicode{uubengali}{098A}
\pdfglyphtounicode{uudeva}{090A}
\pdfglyphtounicode{uugujarati}{0A8A}
\pdfglyphtounicode{uugurmukhi}{0A0A}
\pdfglyphtounicode{uumatragurmukhi}{0A42}
\pdfglyphtounicode{uuvowelsignbengali}{09C2}
\pdfglyphtounicode{uuvowelsigndeva}{0942}
\pdfglyphtounicode{uuvowelsigngujarati}{0AC2}
\pdfglyphtounicode{uvowelsignbengali}{09C1}
\pdfglyphtounicode{uvowelsigndeva}{0941}
\pdfglyphtounicode{uvowelsigngujarati}{0AC1}
\pdfglyphtounicode{v}{0076}
\pdfglyphtounicode{vadeva}{0935}
\pdfglyphtounicode{vagujarati}{0AB5}
\pdfglyphtounicode{vagurmukhi}{0A35}
\pdfglyphtounicode{vakatakana}{30F7}
\pdfglyphtounicode{vav}{05D5}
\pdfglyphtounicode{vavdagesh}{FB35}
\pdfglyphtounicode{vavdagesh65}{FB35}
\pdfglyphtounicode{vavdageshhebrew}{FB35}
\pdfglyphtounicode{vavhebrew}{05D5}
\pdfglyphtounicode{vavholam}{FB4B}
\pdfglyphtounicode{vavholamhebrew}{FB4B}
\pdfglyphtounicode{vavvavhebrew}{05F0}
\pdfglyphtounicode{vavyodhebrew}{05F1}
\pdfglyphtounicode{vcircle}{24E5}
\pdfglyphtounicode{vdotbelow}{1E7F}
\pdfglyphtounicode{vector}{20D7}
\pdfglyphtounicode{vecyrillic}{0432}
\pdfglyphtounicode{veharabic}{06A4}
\pdfglyphtounicode{vehfinalarabic}{FB6B}
\pdfglyphtounicode{vehinitialarabic}{FB6C}
\pdfglyphtounicode{vehmedialarabic}{FB6D}
\pdfglyphtounicode{vekatakana}{30F9}
\pdfglyphtounicode{venus}{2640}
\pdfglyphtounicode{verticalbar}{007C}
\pdfglyphtounicode{verticallineabovecmb}{030D}
\pdfglyphtounicode{verticallinebelowcmb}{0329}
\pdfglyphtounicode{verticallinelowmod}{02CC}
\pdfglyphtounicode{verticallinemod}{02C8}
\pdfglyphtounicode{vewarmenian}{057E}
\pdfglyphtounicode{vhook}{028B}
\pdfglyphtounicode{vikatakana}{30F8}
\pdfglyphtounicode{viramabengali}{09CD}
\pdfglyphtounicode{viramadeva}{094D}
\pdfglyphtounicode{viramagujarati}{0ACD}
\pdfglyphtounicode{visargabengali}{0983}
\pdfglyphtounicode{visargadeva}{0903}
\pdfglyphtounicode{visargagujarati}{0A83}
\pdfglyphtounicode{visiblespace}{2423}
\pdfglyphtounicode{visualspace}{2423}
\pdfglyphtounicode{vmonospace}{FF56}
\pdfglyphtounicode{voarmenian}{0578}
\pdfglyphtounicode{voicediterationhiragana}{309E}
\pdfglyphtounicode{voicediterationkatakana}{30FE}
\pdfglyphtounicode{voicedmarkkana}{309B}
\pdfglyphtounicode{voicedmarkkanahalfwidth}{FF9E}
\pdfglyphtounicode{vokatakana}{30FA}
\pdfglyphtounicode{vparen}{24B1}
\pdfglyphtounicode{vtilde}{1E7D}
\pdfglyphtounicode{vturned}{028C}
\pdfglyphtounicode{vuhiragana}{3094}
\pdfglyphtounicode{vukatakana}{30F4}
\pdfglyphtounicode{w}{0077}
\pdfglyphtounicode{wacute}{1E83}
\pdfglyphtounicode{waekorean}{3159}
\pdfglyphtounicode{wahiragana}{308F}
\pdfglyphtounicode{wakatakana}{30EF}
\pdfglyphtounicode{wakatakanahalfwidth}{FF9C}
\pdfglyphtounicode{wakorean}{3158}
\pdfglyphtounicode{wasmallhiragana}{308E}
\pdfglyphtounicode{wasmallkatakana}{30EE}
\pdfglyphtounicode{wattosquare}{3357}
\pdfglyphtounicode{wavedash}{301C}
\pdfglyphtounicode{wavyunderscorevertical}{FE34}
\pdfglyphtounicode{wawarabic}{0648}
\pdfglyphtounicode{wawfinalarabic}{FEEE}
\pdfglyphtounicode{wawhamzaabovearabic}{0624}
\pdfglyphtounicode{wawhamzaabovefinalarabic}{FE86}
\pdfglyphtounicode{wbsquare}{33DD}
\pdfglyphtounicode{wcircle}{24E6}
\pdfglyphtounicode{wcircumflex}{0175}
\pdfglyphtounicode{wdieresis}{1E85}
\pdfglyphtounicode{wdotaccent}{1E87}
\pdfglyphtounicode{wdotbelow}{1E89}
\pdfglyphtounicode{wehiragana}{3091}
\pdfglyphtounicode{weierstrass}{2118}
\pdfglyphtounicode{wekatakana}{30F1}
\pdfglyphtounicode{wekorean}{315E}
\pdfglyphtounicode{weokorean}{315D}
\pdfglyphtounicode{wgrave}{1E81}
\pdfglyphtounicode{whitebullet}{25E6}
\pdfglyphtounicode{whitecircle}{25CB}
\pdfglyphtounicode{whitecircleinverse}{25D9}
\pdfglyphtounicode{whitecornerbracketleft}{300E}
\pdfglyphtounicode{whitecornerbracketleftvertical}{FE43}
\pdfglyphtounicode{whitecornerbracketright}{300F}
\pdfglyphtounicode{whitecornerbracketrightvertical}{FE44}
\pdfglyphtounicode{whitediamond}{25C7}
\pdfglyphtounicode{whitediamondcontainingblacksmalldiamond}{25C8}
\pdfglyphtounicode{whitedownpointingsmalltriangle}{25BF}
\pdfglyphtounicode{whitedownpointingtriangle}{25BD}
\pdfglyphtounicode{whiteleftpointingsmalltriangle}{25C3}
\pdfglyphtounicode{whiteleftpointingtriangle}{25C1}
\pdfglyphtounicode{whitelenticularbracketleft}{3016}
\pdfglyphtounicode{whitelenticularbracketright}{3017}
\pdfglyphtounicode{whiterightpointingsmalltriangle}{25B9}
\pdfglyphtounicode{whiterightpointingtriangle}{25B7}
\pdfglyphtounicode{whitesmallsquare}{25AB}
\pdfglyphtounicode{whitesmilingface}{263A}
\pdfglyphtounicode{whitesquare}{25A1}
\pdfglyphtounicode{whitestar}{2606}
\pdfglyphtounicode{whitetelephone}{260F}
\pdfglyphtounicode{whitetortoiseshellbracketleft}{3018}
\pdfglyphtounicode{whitetortoiseshellbracketright}{3019}
\pdfglyphtounicode{whiteuppointingsmalltriangle}{25B5}
\pdfglyphtounicode{whiteuppointingtriangle}{25B3}
\pdfglyphtounicode{wihiragana}{3090}
\pdfglyphtounicode{wikatakana}{30F0}
\pdfglyphtounicode{wikorean}{315F}
\pdfglyphtounicode{wmonospace}{FF57}
\pdfglyphtounicode{wohiragana}{3092}
\pdfglyphtounicode{wokatakana}{30F2}
\pdfglyphtounicode{wokatakanahalfwidth}{FF66}
\pdfglyphtounicode{won}{20A9}
\pdfglyphtounicode{wonmonospace}{FFE6}
\pdfglyphtounicode{wowaenthai}{0E27}
\pdfglyphtounicode{wparen}{24B2}
\pdfglyphtounicode{wreathproduct}{2240}
\pdfglyphtounicode{wring}{1E98}
\pdfglyphtounicode{wsuperior}{02B7}
\pdfglyphtounicode{wturned}{028D}
\pdfglyphtounicode{wynn}{01BF}
\pdfglyphtounicode{x}{0078}
\pdfglyphtounicode{xabovecmb}{033D}
\pdfglyphtounicode{xbopomofo}{3112}
\pdfglyphtounicode{xcircle}{24E7}
\pdfglyphtounicode{xdieresis}{1E8D}
\pdfglyphtounicode{xdotaccent}{1E8B}
\pdfglyphtounicode{xeharmenian}{056D}
\pdfglyphtounicode{xi}{03BE}
\pdfglyphtounicode{xmonospace}{FF58}
\pdfglyphtounicode{xparen}{24B3}
\pdfglyphtounicode{xsuperior}{02E3}
\pdfglyphtounicode{y}{0079}
\pdfglyphtounicode{yaadosquare}{334E}
\pdfglyphtounicode{yabengali}{09AF}
\pdfglyphtounicode{yacute}{00FD}
\pdfglyphtounicode{yadeva}{092F}
\pdfglyphtounicode{yaekorean}{3152}
\pdfglyphtounicode{yagujarati}{0AAF}
\pdfglyphtounicode{yagurmukhi}{0A2F}
\pdfglyphtounicode{yahiragana}{3084}
\pdfglyphtounicode{yakatakana}{30E4}
\pdfglyphtounicode{yakatakanahalfwidth}{FF94}
\pdfglyphtounicode{yakorean}{3151}
\pdfglyphtounicode{yamakkanthai}{0E4E}
\pdfglyphtounicode{yasmallhiragana}{3083}
\pdfglyphtounicode{yasmallkatakana}{30E3}
\pdfglyphtounicode{yasmallkatakanahalfwidth}{FF6C}
\pdfglyphtounicode{yatcyrillic}{0463}
\pdfglyphtounicode{ycircle}{24E8}
\pdfglyphtounicode{ycircumflex}{0177}
\pdfglyphtounicode{ydieresis}{00FF}
\pdfglyphtounicode{ydotaccent}{1E8F}
\pdfglyphtounicode{ydotbelow}{1EF5}
\pdfglyphtounicode{yeharabic}{064A}
\pdfglyphtounicode{yehbarreearabic}{06D2}
\pdfglyphtounicode{yehbarreefinalarabic}{FBAF}
\pdfglyphtounicode{yehfinalarabic}{FEF2}
\pdfglyphtounicode{yehhamzaabovearabic}{0626}
\pdfglyphtounicode{yehhamzaabovefinalarabic}{FE8A}
\pdfglyphtounicode{yehhamzaaboveinitialarabic}{FE8B}
\pdfglyphtounicode{yehhamzaabovemedialarabic}{FE8C}
\pdfglyphtounicode{yehinitialarabic}{FEF3}
\pdfglyphtounicode{yehmedialarabic}{FEF4}
\pdfglyphtounicode{yehmeeminitialarabic}{FCDD}
\pdfglyphtounicode{yehmeemisolatedarabic}{FC58}
\pdfglyphtounicode{yehnoonfinalarabic}{FC94}
\pdfglyphtounicode{yehthreedotsbelowarabic}{06D1}
\pdfglyphtounicode{yekorean}{3156}
\pdfglyphtounicode{yen}{00A5}
\pdfglyphtounicode{yenmonospace}{FFE5}
\pdfglyphtounicode{yeokorean}{3155}
\pdfglyphtounicode{yeorinhieuhkorean}{3186}
\pdfglyphtounicode{yerahbenyomohebrew}{05AA}
\pdfglyphtounicode{yerahbenyomolefthebrew}{05AA}
\pdfglyphtounicode{yericyrillic}{044B}
\pdfglyphtounicode{yerudieresiscyrillic}{04F9}
\pdfglyphtounicode{yesieungkorean}{3181}
\pdfglyphtounicode{yesieungpansioskorean}{3183}
\pdfglyphtounicode{yesieungsioskorean}{3182}
\pdfglyphtounicode{yetivhebrew}{059A}
\pdfglyphtounicode{ygrave}{1EF3}
\pdfglyphtounicode{yhook}{01B4}
\pdfglyphtounicode{yhookabove}{1EF7}
\pdfglyphtounicode{yiarmenian}{0575}
\pdfglyphtounicode{yicyrillic}{0457}
\pdfglyphtounicode{yikorean}{3162}
\pdfglyphtounicode{yinyang}{262F}
\pdfglyphtounicode{yiwnarmenian}{0582}
\pdfglyphtounicode{ymonospace}{FF59}
\pdfglyphtounicode{yod}{05D9}
\pdfglyphtounicode{yoddagesh}{FB39}
\pdfglyphtounicode{yoddageshhebrew}{FB39}
\pdfglyphtounicode{yodhebrew}{05D9}
\pdfglyphtounicode{yodyodhebrew}{05F2}
\pdfglyphtounicode{yodyodpatahhebrew}{FB1F}
\pdfglyphtounicode{yohiragana}{3088}
\pdfglyphtounicode{yoikorean}{3189}
\pdfglyphtounicode{yokatakana}{30E8}
\pdfglyphtounicode{yokatakanahalfwidth}{FF96}
\pdfglyphtounicode{yokorean}{315B}
\pdfglyphtounicode{yosmallhiragana}{3087}
\pdfglyphtounicode{yosmallkatakana}{30E7}
\pdfglyphtounicode{yosmallkatakanahalfwidth}{FF6E}
\pdfglyphtounicode{yotgreek}{03F3}
\pdfglyphtounicode{yoyaekorean}{3188}
\pdfglyphtounicode{yoyakorean}{3187}
\pdfglyphtounicode{yoyakthai}{0E22}
\pdfglyphtounicode{yoyingthai}{0E0D}
\pdfglyphtounicode{yparen}{24B4}
\pdfglyphtounicode{ypogegrammeni}{037A}
\pdfglyphtounicode{ypogegrammenigreekcmb}{0345}
\pdfglyphtounicode{yr}{01A6}
\pdfglyphtounicode{yring}{1E99}
\pdfglyphtounicode{ysuperior}{02B8}
\pdfglyphtounicode{ytilde}{1EF9}
\pdfglyphtounicode{yturned}{028E}
\pdfglyphtounicode{yuhiragana}{3086}
\pdfglyphtounicode{yuikorean}{318C}
\pdfglyphtounicode{yukatakana}{30E6}
\pdfglyphtounicode{yukatakanahalfwidth}{FF95}
\pdfglyphtounicode{yukorean}{3160}
\pdfglyphtounicode{yusbigcyrillic}{046B}
\pdfglyphtounicode{yusbigiotifiedcyrillic}{046D}
\pdfglyphtounicode{yuslittlecyrillic}{0467}
\pdfglyphtounicode{yuslittleiotifiedcyrillic}{0469}
\pdfglyphtounicode{yusmallhiragana}{3085}
\pdfglyphtounicode{yusmallkatakana}{30E5}
\pdfglyphtounicode{yusmallkatakanahalfwidth}{FF6D}
\pdfglyphtounicode{yuyekorean}{318B}
\pdfglyphtounicode{yuyeokorean}{318A}
\pdfglyphtounicode{yyabengali}{09DF}
\pdfglyphtounicode{yyadeva}{095F}
\pdfglyphtounicode{z}{007A}
\pdfglyphtounicode{zaarmenian}{0566}
\pdfglyphtounicode{zacute}{017A}
\pdfglyphtounicode{zadeva}{095B}
\pdfglyphtounicode{zagurmukhi}{0A5B}
\pdfglyphtounicode{zaharabic}{0638}
\pdfglyphtounicode{zahfinalarabic}{FEC6}
\pdfglyphtounicode{zahinitialarabic}{FEC7}
\pdfglyphtounicode{zahiragana}{3056}
\pdfglyphtounicode{zahmedialarabic}{FEC8}
\pdfglyphtounicode{zainarabic}{0632}
\pdfglyphtounicode{zainfinalarabic}{FEB0}
\pdfglyphtounicode{zakatakana}{30B6}
\pdfglyphtounicode{zaqefgadolhebrew}{0595}
\pdfglyphtounicode{zaqefqatanhebrew}{0594}
\pdfglyphtounicode{zarqahebrew}{0598}
\pdfglyphtounicode{zayin}{05D6}
\pdfglyphtounicode{zayindagesh}{FB36}
\pdfglyphtounicode{zayindageshhebrew}{FB36}
\pdfglyphtounicode{zayinhebrew}{05D6}
\pdfglyphtounicode{zbopomofo}{3117}
\pdfglyphtounicode{zcaron}{017E}
\pdfglyphtounicode{zcircle}{24E9}
\pdfglyphtounicode{zcircumflex}{1E91}
\pdfglyphtounicode{zcurl}{0291}
\pdfglyphtounicode{zdot}{017C}
\pdfglyphtounicode{zdotaccent}{017C}
\pdfglyphtounicode{zdotbelow}{1E93}
\pdfglyphtounicode{zecyrillic}{0437}
\pdfglyphtounicode{zedescendercyrillic}{0499}
\pdfglyphtounicode{zedieresiscyrillic}{04DF}
\pdfglyphtounicode{zehiragana}{305C}
\pdfglyphtounicode{zekatakana}{30BC}
\pdfglyphtounicode{zero}{0030}
\pdfglyphtounicode{zeroarabic}{0660}
\pdfglyphtounicode{zerobengali}{09E6}
\pdfglyphtounicode{zerodeva}{0966}
\pdfglyphtounicode{zerogujarati}{0AE6}
\pdfglyphtounicode{zerogurmukhi}{0A66}
\pdfglyphtounicode{zerohackarabic}{0660}
\pdfglyphtounicode{zeroinferior}{2080}
\pdfglyphtounicode{zeromonospace}{FF10}
\pdfglyphtounicode{zerooldstyle}{0030}
\pdfglyphtounicode{zeropersian}{06F0}
\pdfglyphtounicode{zerosuperior}{2070}
\pdfglyphtounicode{zerothai}{0E50}
\pdfglyphtounicode{zerowidthjoiner}{FEFF}
\pdfglyphtounicode{zerowidthnonjoiner}{200C}
\pdfglyphtounicode{zerowidthspace}{200B}
\pdfglyphtounicode{zeta}{03B6}
\pdfglyphtounicode{zhbopomofo}{3113}
\pdfglyphtounicode{zhearmenian}{056A}
\pdfglyphtounicode{zhebrevecyrillic}{04C2}
\pdfglyphtounicode{zhecyrillic}{0436}
\pdfglyphtounicode{zhedescendercyrillic}{0497}
\pdfglyphtounicode{zhedieresiscyrillic}{04DD}
\pdfglyphtounicode{zihiragana}{3058}
\pdfglyphtounicode{zikatakana}{30B8}
\pdfglyphtounicode{zinorhebrew}{05AE}
\pdfglyphtounicode{zlinebelow}{1E95}
\pdfglyphtounicode{zmonospace}{FF5A}
\pdfglyphtounicode{zohiragana}{305E}
\pdfglyphtounicode{zokatakana}{30BE}
\pdfglyphtounicode{zparen}{24B5}
\pdfglyphtounicode{zretroflexhook}{0290}
\pdfglyphtounicode{zstroke}{01B6}
\pdfglyphtounicode{zuhiragana}{305A}
\pdfglyphtounicode{zukatakana}{30BA}

\pdfgentounicode=1

\title{Logical Clustering and Learning\\ for Time-Series Data}
\titlerunning{Logical Clustering}


%
\author{Marcell Vazquez-Chanlatte\inst{1}
  \and Jyotirmoy V. Deshmukh\inst{2} \and \\
  Xiaoqing Jin\inst{2}
  \and Sanjit A. Seshia\inst{1}
}

\authorrunning{Vazquez-Chanlatte et al.}

\institute{University of California Berkeley, \email{$\mathtt{\{marcell.vc,sseshia\}@eecs.berkeley.edu}$}\\
\and
Toyota Motors North America R\&D, \email{$\mathtt{first.lastname@toyota.com}$}\\
}
			
\maketitle

\begin{abstract}
In order to effectively analyze and build cyberphysical systems (CPS),
designers today have to combat the data deluge problem, i.e., the
burden of processing intractably large amounts of data produced by
complex models and experiments. In this work, we utilize monotonic
parametric signal temporal logic (PSTL) to design features for
unsupervised classification of time series data. This enables using
off-the-shelf machine learning tools to automatically cluster similar
traces with respect to a given PSTL formula. We demonstrate how this
technique produces interpretable formulas that are amenable
to analysis and understanding using a few representative examples. We
illustrate this with case studies related to automotive
engine testing, highway traffic analysis, and auto-grading massively
open online courses.
\end{abstract}

\renewcommand{\baselinestretch}{0.97}

\blfootnote{Acknowledgments: We thank Dorsa Sadigh, Yasser Shoukry,
  Gil Lederman, Shromona Ghosh, Oded Maler, Eamon Keogh, and our anonymous
  reviewers for their invaluable feedback, and Ken Butts for the
  Diesel data. This work is funded in part by the DARPA BRASS program
  under agreement number FA8750--16--C--0043, the VeHICaL Project (NSF
  grant \#1545126), and the Toyota Motor Corporation under the CHESS
  center. }
\section{Introduction}
\label{sec:intro}

In order to effectively construct and analyze cyber-physical systems
(CPS), designers today have to combat the {\em data deluge\/} problem,
\ie\ the burden of processing intractably large amounts of data
produced by complex models and experiments. For example, consider the
typical design process for an advanced CPS such as a self-driving car.
Checking whether the car meets all its requirements is typically done
by either physically driving the car around for millions of
miles~\cite{ackerman2014google}, or by performing virtual simulations
of the self-driving algorithms.  Either approach can generate several
gigabytes worth of time-series traces of data, such as sensor
readings, variables within the software controllers, actuator actions,
driver inputs, and environmental conditions. Typically, designers are
interested not in the details of these traces, but in discovering
higher-level insight from them; however, given the volume of data, a
high level of automation is needed.

\begin{figure}[h]
  \centering
  \includegraphics[width=\textwidth]{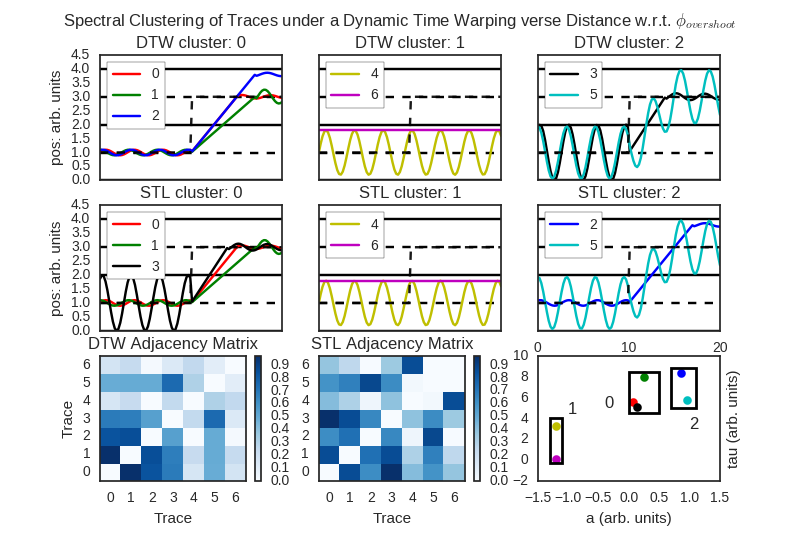}
  \vspace{-15pt}
  \caption{\label{fig:introexample_traces} An example of a pitfall
when using the DTW measure compared to projection using a PSTL
template. We perform spectral clustering~\cite{von2007tutorial} on a
similarity graph representation of 7 traces. Nodes of the graph
represent traces and edges are labeled with the normalized pairwise
distance using (1) the DTW measure and (2) the Euclidean distance
between features extracted using the PSTL template $\overshootSTL$.
Note how under the DTW measure, the black and cyan traces are grouped
together due to their behavior before the lane change, despite the
cyan trace having a much larger overshoot.  Contrast with the STL
labeling in the second row, where both overshooting traces are grouped
together.  The bottom right figure provides the projection of the
traces w.r.t.  $\overshootSTL$ with the associated cluster-labels
shown in the second row.}
\end{figure}

The key challenge then is: ``How do we automatically identify logical
structure or relations within such data?'' One possibility
offered by unsupervised learning algorithms from the machine learning
community is to cluster similar behaviors to identify higher-level
commonalities in the data.  Typical clustering algorithms define
similarity measures on signal spaces, \eg\ the dynamic time warping
distance, or by projecting data to complex feature spaces.
We argue later in this section that these methods can be inadequate 
to learn logical structure in time-series data.

In this paper, we present {\em logical clustering}, 
an {\em unsupervised learning\/} procedure
that utilizes {\em Parametric Signal Temporal Logic\/} (PSTL) templates
to discover logical structure within the given data. Informally,
Signal Temporal Logic (STL) enables specifying temporal relations
between constraints on signal values
\cite{donze2012temporal,akazaki2015time}. PSTL generalizes STL
formulas by replacing, with parameters, time constants in temporal operators and
signal-value constants in atomic predicates in the formula.
With PSTL templates, one can use the template parameters
as {\em features}.  This is done by projecting a trace to parameter
valuations that correspond to a formula that is {\em marginally
satisfied\/} by the trace. As each trace is projected to the
finite-dimensional space of formula-parameters, we can then use
traditional clustering algorithms on this space; thereby grouping
traces that satisfy the same (or similar)  formulas together. Such
{\em logical clustering\/} can reveal heretofore undiscovered structure
in the traces, albeit through the lens of an user-provided template.
We illustrate the basic steps in our technique with an example.

%
Consider the design of a lane-tracking controller for a car and a
scenario where a car has effected a lane-change. A typical control
designer tests design performance by observing the ``overshoot''
behavior of the controller, \ie\ by inspecting the maximum deviation
(say $a$) over a certain duration of time (say, $\tau$) of the vehicle
position trajectory $x(t)$ from a given desired trajectory
$x_{ref}(t)$.  We can use the following PSTL template that captures
such an overshoot:
\begin{equation}
\label{eq:overshoot}
\overshootSTL \eqdef \ev \left (\stepinput \wedge
\ev_{(0,\tau]} \left(x - x_{\mathtt{ref}} > a\right)\right)
\end{equation}
When we project traces appearing in Fig.~\ref{fig:introexample_traces}
through $\overshootSTL$, we find three behavior-clusters as shown in
the second row of the figure: (1) Cluster 0 with traces that track the
desired trajectory with small overshoot, (2) Cluster 1 with traces
that fail to track the desired trajectory altogether, and (3) Cluster
2 with traces that {\em do\/} track the desired trajectory, but have a
large overshoot value.  The three clusters indicate a well-behaved
controller, an incorrect controller, and a controller that needs
tuning respectively.  The key observation here is that though we use a single
overshoot template to analyze the data, qualitatively different
behaviors form separate clusters in the induced parameter space;
furthermore, each cluster has higher-level meaning that the designer
can find valuable.

In contrast to our proposed method, consider the clustering induced by
using the dynamic time warping (DTW) distance measure as shown in
Fig.~\ref{fig:introexample_traces}. Note that DTW is one of the most
popular measures to cluster time-series data~\cite{keogh2000scaling}.
We can see that traces with both high and low overshoots are clustered
together due to similarities in their shape.  Such shape-similarity
based grouping could be quite valuable in certain contexts; however,
it is inadequate when the designer is interested in temporal
properties that may group traces of dissimilar shapes. 

In Section~\ref{sec:temporal_projection}, we show how we can use
feature extraction with PSTL templates to group traces with similar
logical properties together.  An advantage of using PSTL is that the
enhanced feature extraction is computationally efficient for the
fragment of monotonic PSTL formulas
\cite{parametricIDofSTL,jin2015mining}; such a formula has the
property that its satisfaction by a given trace is monotonic in its
parameter values.  The efficiency in feature extraction relies on a
multi-dimensional binary search procedure
\cite{legriel2010approximating} that exploits the monotonicity
property.

A different view of the technique presented here is as a method to
perform temporal logic inference from data, such as the work on
learning STL formulas in a supervised learning
context\cite{dTree,rPSTL,parametricIDofSTL,bartocci2013learning}, in
the unsupervised anomaly detection context\cite{iPSTL}, and in the
context of active learning\cite{juniwal2014cpsgrader}.  Some of these
approaches adapt classical machine learning algorithms such as
decision trees~\cite{dTree} and one-class support vector machines~\cite{iPSTL} to learn (possibly, arbitrarily long) formulas in a
restricted fragment of STL.\@ Formulas exceeding a certain length are
often considered inscrutable by designers.  A key technical
contribution of this paper is to show that using simple shapes such as
specific Boolean combinations of axis-aligned hyperboxes in the
parameter space of monotonic PSTL to represent clusters yields a
formula that may be easier to interpret.  We support this in
Section~\ref{sec:stl_extract}, by showing that such hyperbox-clusters
correspond to STL formulas that have length linear in the number of
parameters in the given PSTL template, and thus of bounded descriptive
complexity.  

Mining parametric temporal logic properties in a model-based design
has also been explored~\cite{jin2015mining,hoxha2017mining}. We note that our
proposed methods does not require such a model, which may not be 
available either due to the complexity of the underlying system or
lack of certainty in the dynamics.  
We also note that there is much work on mining discrete temporal logic
specifications from data (e.g.~\cite{li-dac10}): our work instead focuses
on unsupervised learning of STL properties relevant to CPS.\@

The reader might wonder how much insight is needed by a user to
select the PSTL template to use for classification.
We argue the templates do not pose a burden on the user
and that our technique can have high value in
several ways.  First, we observe that we can combine our technique
with a human-guided (or automated) enumerative learning procedure
that can exploit high-level template pools. We demonstrate such a
procedure in the diesel engine case study.  Second, consider a
scenario where a designer has insight into the data that allows them
to choose the correct PSTL template. Even in this case, our method
automates the task of labeleing trace-clusters with STL labels which
can then be used to automatically classify new data.  Finally, we
argue that many unsupervised learning techniques on time-series data
must ``featurize'' the data to start with, and such features represent
relevant domain knowledge. Our features happen to be PSTL templates.
As the lane controller motivating example illustrates, a common
procedure that doesn't have some domain specific knowledge increases
the risk of wrong classifications. This sentiment is highlighted even
in the data mining literature~\cite{lin2003clustering}.  To illustrate
the value of our technique, in Sec.~\ref{sec:case_studies}, we
demonstrate the use of logic-based templates to analyze time-series
data in case studies from three different application domains.


\section{Preliminaries}

\begin{definition}[Timed Traces]
A timed trace is a finite (or infinite) sequence of pairs
$(t_0,\x_0)$, $\ldots$, $(t_n,\x_n)$, where, $t_0 = 0$, and for all $i
\in [1,n]$, $t_i \in \PosReals$, $t_{i-1} < t_i$, and for $i \in
[0,n]$, $\x_i \in \domain$, where $\domain$ is some compact set. We
refer to the interval $[t_0,t_n]$ as the time domain $\timedomain$.
\end{definition}

Real-time temporal logics are a formalism for reasoning about finite
or infinite timed traces. Logics such as the Timed Propositional
Temporal Logic~\cite{alur1994really}, and Metric Temporal Logic (\MTL)~\cite{KoymanMTL90} were introduced to reason about signals
representing Boolean-predicates varying over dense (or discrete) time.
More recently, Signal Temporal Logic~\cite{MalerN04} was proposed in
the context of analog and mixed-signal circuits as a specification
language for real-valued signals.

\mypara{Signal Temporal Logic} Without loss of generality, atoms in
STL formulas can be be reduced to the form $f(\x) \sim \sPara$, where
$f$ is a function from $\domain$ to $\Reals$, $\sim \in
\setof{\geq,\leq,=}$, and $\sPara \in \Reals$.  Temporal formulas are
formed using temporal operators, ``always'' (denoted as $\G$),
``eventually'' (denoted as $\F$) and ``until'' (denoted as $\U$) that
can each be indexed by an interval $\Intvl$.  An STL formula is
written using the following grammar:
\begin{equation}
\label{eq:stl_syntax}
\begin{array}{l}
\Intvl := (a,b) \mid (a,b] \mid [a,b) \mid [a,b] \\
\f :=      \true 
      \mid f(\x) \sim \sPara 
      \mid \neg\f \mid \f_{1} \wedge \f_{2} 
      \mid \f_{1}\, \U_{\Intvl}\, \f_{2}
\end{array}
\end{equation}
In the above grammar, $a,b \in \timedomain$, and $\sPara \in \Reals$.
The always ($\G$) and eventually ($\F$) operators are defined for
notational convenience, and are just special cases of the until
operator: \mbox{$\F_{\Intvl}\varphi \triangleq \true\, \U_\Intvl\,
\varphi$}, and \mbox{$\G_{\Intvl}\varphi \triangleq \neg \F_{\Intvl}
\neg \varphi$}. We use the notation $(\x,t) \models \f$ to mean that
the suffix of the timed trace $\x$ beginning at time $t$ satisfies the
formula $\f$. The formal semantics of an STL formula are defined
recursively:
\begin{equation*}
\label{eq:stl_semantics}
\begin{array}{lcl}
  (\x,t) \models f(\x) \sim c &\iff& \text{$f(\x(t)) \sim c$ is true} \\
  (\x,t) \models \neg \f &\iff&  (\x,t) \nmodels \f \\
  (\x,t) \models \f_1 \wedge \f_2 &\iff& (\x,t)
  \models \f_1\ \aand\  (\x,t) \models \f_2\\
  (\x,t) \models \f_1\ \U_{\Intvl}\ \f_2 & \iff & \exists t_1\in t \oplus \Intvl: (\x,t_1) \models \f_2\ \aand \\
  &      & \quad \forall t_2\in[t, t_1): (\x,t_2) \models \f_1
\end{array}
\end{equation*}
We write $\x \models \f$ as a shorthand of $(\x,0) \models \f$.

\mypara{Parametric Signal Temporal Logic (PSTL)} PSTL
\cite{parametricIDofSTL} is an extension of STL introduced to define
\emph{template formulas} containing unknown parameters.  Formally, the
set of parameters $\params$ is a set consisting of two disjoint sets
of variables $\valueparams$ and $\timeparams$ of which at least one is
nonempty.  The parameter variables in $\valueparams$ can take values
from their domain, denoted as the set $\valuedomain$.  The parameter
variables in $\timeparams$ are time-parameters that take values from
the time domain $\timedomain$.  We define a valuation function $\val$
that maps a parameter to a value in its domain. We denote a vector of
parameter variables by $\p$, and extend the definition of the
valuation function to map parameter vectors $\p$ into tuples of
respective values over $\valuedomain$ or $\timedomain$. We define the
{\em parameter space\/} $\paramspace$ as a subset of
$\valuedomain^{|\valueparams|} \times \timedomain^{|\timeparams|}$.

A PSTL formula is then defined by modifying the grammar specified in
\eqref{eq:stl_syntax} by allowing $a,b$ to be elements of
$\timeparams$, and $\sPara$ to be an element of $\valueparams$.  An
STL formula is obtained by pairing a PSTL formula with a valuation
function that assigns a value to each parameter variable.  For
example, consider the PSTL formula $\f(\sPara,\tPara)$ =
\mbox{$\G_{[0, \tPara]} x > \sPara$}, with parameters variables
$\sPara$ and $\tPara$.  The STL formula \mbox{$\G_{[0, 10]} x > 1.2$}
is an instance of $\f$ obtained with the valuation $\val =\{\tPara
\mapsto 10,\ \sPara \mapsto 1.2\}$.


\mypara{Monotonic PSTL} 
Monotonic PSTL is a fragment of PSTL introduced as the polarity
fragment in~\cite{parametricIDofSTL}.  A PSTL formula $\f$ is said to
be monotonically increasing in parameter $p_i$  if condition
\eqref{eq:mono_inc} holds for all $\x$, and is said to be
monotonically decreasing in parameter $p_i$ if condition
\eqref{eq:mono_dec} holds for all $\x$. 
\begin{align}
\hspace{-2em}
\val(p_i) \le \val'(p_i)
& \implies & 
\left [\x \models \f(\val(p_i)) \implies 
            \x \models \f(\val'(p_i)) \right]\label{eq:mono_inc} \\
\hspace{-2em}
\val(p_i) \geq \val'(p_i)
& \implies & 
\left[\x \models \f(\val(p_i)) \implies \x \models \f(\val'(p_i))\right] \label{eq:mono_dec}
\end{align}

To indicate the direction of monotonicity, we now introduce the 
polarity of a parameter\cite{parametricIDofSTL}, $\polarity(p_i)$, and
say that $\polarity(p_i) = +$ if the $\f(\p)$ is monotonically
increasing in $p_i$ and $\polarity(p_i) = -$ if it is monotonically
decreasing, and $\polarity(p_i) = \bot$ if it is neither.  A formula
$\f(\p)$ is said to be monotonic in $p_i$ if $\polarity(p_i) \in
\setof{+,-}$, and say that $\f(\p)$ is monotonic if for all $i$, $\f$
is monotonic in $p_i$.

While restrictive, the monotonic fragment of PSTL contains many
formulas of interest, such as those expressing steps and spikes in
trace values, timed-causal relations between traces, and so on.
Moreover, in some instances, for a given non-monotonic PSTL formula,
it may be possible to obtain a related monotonic PSTL formula by
using distinct parameters in place of a repeated parameter, or by
assigning a constant valuation for some parameters (Example in
Sec~\ref{sec:appendix}).

\begin{example} For formula~\eqref{eq:overshoot}, we can see that
$\polarity(a) = -$, because if a trace has a certain overshoot
exceeding the threshold $a^*$, then for a fixed $\tau$, the trace
satisfies any formula where $a < a^{*}$.  Similarly, $\polarity(\tau)$
= $+$, as an overshoot over some interval $(0,\tau^*]$ will be still
considered an overshoot for $\tau > \tau^*$.
\end{example}

\mypara{Orders on Parameter Space} A monotonic parameter induces a
total order $\pOrder_i$ in its domain, and as different parameters for
a given formula are usually independent, valuations for different
parameters induce a partial order:

\begin{definition}[Parameter Space Partial Order]
We define $\pOrder_i$ as a total order on the domain of the parameter
$p_i$ as follows:
\begin{equation}
\val(p_i) \pOrder_i \val'(p_i) \eqdef
\left\{
\begin{array}{ll}
\val(p_i) \le \val'(p_i)  & \text{\ if\ } \polarity(p_i) = + \\
\val(p_i) \geq \val'(p_i) & \text{\ if\ } \polarity(p_i) = - 
\end{array}
\right.
\end{equation}
\end{definition}

Under the order $\pOrder_i$, the parameter space can be viewed as a
partially ordered set $(\paramspace, \pOrder)$, where the ordering
operation $\pOrder$ is defined as follows:
\begin{equation}
\val(\p) \pOrder \val'(\p) \eqdef \forall i: \val(p_i) \pOrder_i \val'(p_i).
\end{equation}

When combined with eqs~\eqref{eq:mono_inc},~\eqref{eq:mono_dec} this
gives us the relation that $\val(\p) \pOrder \val'(\p)$ implies that
$[\f(\val(\p)) \implies \f(\val'(\p))]$.  In order to simplify
notation, we define the subset of $X$ that satisfies $\f(\val(\p))$ as
$\model{\val(\p)}_{\Traces}$. If $\Traces$ and $\p$ are obvious from
context, we simply write: $\model{\val}$. It follows that $(\val
\pOrder \val')$ $\implies$ $(\model{\val} \subseteq \model{\val'})$.
In summary: $\pOrder$ operates in the same direction as implication
and subset. Informally, we say that the ordering is from a stronger to
a weaker formula.

\begin{example}
For formula~\eqref{eq:overshoot}, the order operation $\pOrder$ is
defined as $\val \pOrder \val'$ iff $\val(\tau) < \val'(\tau)$ and
$\val(a) > \val'(a)$.  Consider $\val_1(\p) \eqdef (\tau : 0.1, a :
-1.1)$ and $\val_2(\p) \eqdef (\tau : 3.3, a : -1.3)$.  As
$\polarity(a) = -$, $\polarity(\tau) = +$, $\val_1 \pOrder \val_2$,
and $\f_{\mathrm{overshoot}}(\val_1(\p)) \implies
\f_{\mathrm{overshoot}}(\val_2(\p))$.  Intuitively this means that
if $x(t)$ satisfies a formula specifying a overshoot $> -1.1$ (undershoot
$< 1.1$) over a duration of $0.1$ time units, then $x(t)$ trivially
satisfies the formula specifying an undershoot of $< 1.3$ over a
duration of $3.3$ time units.
\end{example}

Next, we define the downward closure of $\val(\p)$ and relate it to
$\model{\val(\p)}$.
\begin{definition}[Downward closure of a valuation]
For a valuation $\val$, its downward closure (denoted
$\downward(\val)$) is the set $\setof{ \val' \mid \val' \pOrder \val
}$.
\end{definition}

In the following lemma we state that the union of the sets of traces
satisfying formulas corresponding to parameter valuations in the
downward closure of a valuation $\val$ is the same as the set of
traces satisfying the formula corresponding to $\val$. The proof
follows from the definition of downward closure.

\begin{lemma}\label{lemma:stl_to_downward_closure}
  $\bigcup_{\val' \in \downward(\val)} \model{\val'} \equiv \model{\val}$
\end{lemma}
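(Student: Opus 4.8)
The plan is to establish the asserted set equality by proving the two inclusions separately, since $\equiv$ here denotes equality of the underlying trace sets.

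For the inclusion $\bigcup_{\val' \in \downward(\val)} \model{\val'} \subseteq \model{\val}$, I would take an arbitrary trace $\x$ lying in the union. By definition of the union, $\x \in \model{\val'}$ for some $\val' \in \downward(\val)$, that is, for some $\val'$ with $\val' \pOrder \val$. I can then invoke the monotonicity relation established immediately before the lemma statement — namely that $\val' \pOrder \val$ implies $\model{\val'} \subseteq \model{\val}$ — to conclude that $\x \in \model{\val}$. Since $\x$ was arbitrary, this inclusion follows.

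For the reverse inclusion $\model{\val} \subseteq \bigcup_{\val' \in \downward(\val)} \model{\val'}$, the key observation is that the partial order $\pOrder$ is reflexive: it is defined coordinatewise from the total orders $\pOrder_i$ on each parameter domain, and each $\pOrder_i$ is either $\le$ or $\ge$ (depending on the polarity of $p_i$) and hence reflexive, so $\val \pOrder \val$ holds. Consequently $\val \in \downward(\val)$, which means $\model{\val}$ itself appears as one of the sets being unioned. The reverse inclusion is then immediate, since a set is always contained in any union that includes it as a member. Combining the two inclusions yields the desired equality.

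I do not anticipate a genuine obstacle here; the lemma is essentially a bookkeeping consequence of the definitions, which is why the authors remark that it follows directly from the definition of downward closure. The only point requiring a moment of care is confirming reflexivity of $\pOrder$, so that $\val$ genuinely lies in its own downward closure $\downward(\val)$ — this is precisely what makes the reverse inclusion hold without appeal to any further monotonicity argument, while the forward inclusion is the one place the monotonicity relation $(\val \pOrder \val') \implies (\model{\val} \subseteq \model{\val'})$ does the real work.
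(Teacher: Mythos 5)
Your proof is correct and matches the paper's intent exactly: the paper gives no explicit argument, stating only that the lemma ``follows from the definition of downward closure,'' and your two inclusions (monotonicity of $\pOrder$ for the forward direction, reflexivity of $\pOrder$ so that $\val \in \downward(\val)$ for the reverse) are precisely the bookkeeping that claim elides. Nothing further is needed.
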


Lastly, we define the validity domain of a set of traces and $\f$.

\begin{definition}[Validity domain] Let $\Traces$ be a (potentially
infinite) collection of timed traces, and let $\f(\p)$ be a PSTL
formula with parameters $\p \in \params$. The validity
domain\footnote{If $\Traces$ is obvious from context (or does not
matter), we write $\validitydomain{\f(\p), \Traces}$ as
$\validitydomain{\f}$.} $\validitydomain{\f(\p),\Traces}$ of $\f(\p)$
is a closed subset of $\paramspace$, such that:
\begin{equation}
\label{eq:validity_domain}
\forall \val(\p) \in \validitydomain{\f(\p), \Traces}: \forall \x \in \Traces: \x
\models \f(\val(\p))
\end{equation}
\end{definition}

\begin{remark}
The validity domain for a given parameter set $\params$ essentially
contains all the parameter valuations s.t.\ for the given set of traces
$\Traces$, each trace satisfies the STL formula obtained by
instantiating the given PSTL formula with the parameter valuation.
\end{remark}

\begin{wrapfigure}{R}{.4\textwidth}
  \includegraphics[trim={13cm 9.6cm 11.2cm 7.2cm},clip]{./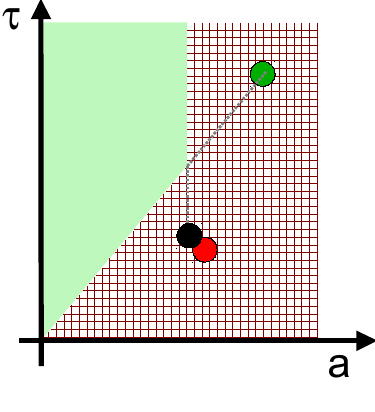}
  \caption{\label{fig:vdomain}Validity domain and projection of traces
    in STL Cluster 0 from Fig.~\ref{fig:introexample_traces}.}
\end{wrapfigure}

\begin{example}

In Fig.~\ref{fig:vdomain}, we show the validity domain of the PSTL
formula~\eqref{eq:overshoot} for the three traces given in the subplot
labeled STL cluster 0 in Fig.~\ref{fig:introexample_traces}.  The
hatched red region contains parameter valuations corresponding to STL
formulas that are not satisfied by any trace, while the shaded-green
region is the validity domain of the formula.  The validity domain
reflects that till the peak value $a^*$ of the black trace is reached
(which is the smallest among the peak values for the three signals),
the curve in $\tau$-$a$ space follows the green trace (which has the
lowest slope among the three traces). For any value of $\tau$, for
which $a > a^*$, the formula is trivially satisfied by all traces.
\end{example}

\section{Trace-Projection and Clustering}
\label{sec:temporal_projection}

In this section, we introduce the projection of a trace to the
parameter space of a given PSTL formula, and discuss mechanisms to
cluster the trace-projections using off-the-shelf clustering
techniques.

\mypara{Trace Projection}
The key idea of this paper is defining a projection operation $\proj$
that maps a given timed trace $\x$ to a suitable parameter
valuation\footnote{For canonicity, $\pi$ need not be a function from
  timed traces to $\paramspace$. For example, it may be expedient to
  project a trace to a subset of $\paramspace$. For simplicity, we defer more
  involved projections to future exposition.}
   $\nu^\star(\p)$ in the validity domain of the
given PSTL formula $\f(\p)$.  We would also like to project the given
timed trace to a valuation that is as close a representative of the
given trace as possible (under the lens of the chosen PSTL
formula).

One way of mapping a given timed trace to a single valuation is by
defining a total order $\tOrder$ on the parameter space by an
appropriate {\em linearization\/} of the partial order on parameter
space. The total order then provides a minimum valuation to which the
given timed trace is mapped.  

\begin{remark}
For technical reasons, we often adjoin two special elements $\top$ and
$\bot$ to $\validitydomain{\f(\p), \Traces}$ such that $\forall
\val(\p) \in \validitydomain{\f(\val(\p)), \Traces}$, $\bot \pOrder
\val(\p) \pOrder \top$ and $\forall x \in \Traces$, $x \models
\f(\top(\p))$ and $\neg(x \models \f(\bot(\p)))$. These special
elements mark whether $\validitydomain{\f(\val(\p))}$ is the whole
parameter space or empty.
\end{remark}

We present the lexicographic order on parameters as one possible
linearization; other linearizations, such as those based on a weighted
sum in the parameter space could also be used (presented in
Sec~\ref{sec:appendix} for brevity).

\mypara{Lexicographic Order}
A lexicographic order (denoted $\lexOrder$) uses the specification of
a total order on parameter indices to linearize the partial order. We
formalize lexicographic ordering as follows.

\begin{definition}[Lexicographic Order]
Suppose we are given a total order on the parameters $j_1 > \cdots >
j_n$.  The total order $\lexOrder$ on the parameter space
$\paramspace$ is defined as:
\begin{equation}
\label{eq:lexorder}
\begin{array}{ll}
\val(\p) \lexOrder \val'(\p)\ & \iff
 \exists j_k \in (j_1,\ldots,j_n)\ \text{\ s.t.\ }  
    \val(p_{j_k}) \pOrder_i \val'(p_{j_k})\ \text{and,} \\
 &  \forall \ell < k,\ \val(p_{j_\ell}) = \val'(p_{j_\ell}).
\end{array}
\end{equation}
\end{definition}

Note that for a given total or partial order, we can define $\inf$ and
$\sup$ under that order in standard fashion. Formally, the projection
function using lexicographic order is defined as follows:
\begin{equation}
\label{eq:lexproj}
\projlex(\x) = \inflex\setof{\val(\p) \in \validitydomain{\f(\p), \set{x}}}
\end{equation}

\mypara{Computing $\projlex$} To approximate $\projlex(\x)$, we recall
Algorithm~\ref{algo:projlex} from~\cite{jin2015mining} that uses a
simple lexicographic binary search\footnote{For simplicity, we have
  omitted a number of optimizations in Algorithm~\ref{algo:projlex}.
  For example, one can replace the iterative loop through parameters
  with a binary search over parameters.}. 

We begin by setting the interval to search for a valuation in
$\validitydomain{\f(\p)}$. We set the initial valuation to
$\top$ since it induces the most permissive STL formula. Next, for
each parameter, (in the order imposed by $\lexOrder$), we perform
bisection search on the interval to find a valuation in
$\validitydomain{\f(\p)}$. Once completed, we return the
lower bound of the search-interval as it is guaranteed to be
satisfiable (if a satisfiable assignment exists).

Crucially, this algorithm exploits the monotonicity of the PSTL
formula to guarantee that there is at most one point during the
bisection search where the satisfaction of $\f$ can change.  The
number of iterations for each parameter index $i$ is bounded above by
$\log\ceil{\frac{\sup(\paramspace_{i})-\inf(\paramspace_{i})}{\epsilon_i}}$, 
and the number of parameters. This gives us an algorithm with
complexity that grows linearly in the number of parameters and
logarithmically in the desired precision.

\begin{algorithm}[t] 
\DontPrintSemicolon{}
\KwIn{$\x(t)$, $\f(\p)$, 
      $\params$, 
      $\paramspace$, 
      $(j_1,\ldots,j_n)$,
      $\vec{\epsilon} > 0$,
      $\lexOrder$}
\KwResult{$\projlex(\x)$}
$\valdn(\p)$ $\assign$ $\inf_{\lexOrder}{\paramspace}$; $\valup(\p)$ $\assign$ $\sup_{\lexOrder}{\paramspace}$\;
\lIf{$\neg(\x \models \f(\valup(\p)))$}{$\Return$ $\top$}
\lIf{$\x \models \f(\valdn(\p))$}{$\Return$ $\bot$}
\For{$i=1$ to $|\params|$}
    {\While{$|\valup(p_{i}) - \valdn(p_{i})| > \epsilon_i$}
      {$\val(p_{i})$ $\assign$
        $\frac{1}{2}\left(\valdn(p_{i}) + \valup(p_{i})\right)$
        \nllabel{algoline:bisect} \;
        \lIf{$\x \models \val(\p)$}
         {$\valup(\p)$ $\assign$ $\val(\p)$ 
            \nllabel{algoline:shrinkdown} 
        } \lElse{$\valdn(\p)$ $\assign$ $\val(\p)$ 
          \nllabel{algoline:shrinkup} 
        }
    }
}
$\Return$ $\projlex(\x)$ $\assign$ $\valup(\p)$
\nllabel{algoline:finalassignment} \;

\caption{\label{algo:projlex}Iterated Binary Search to compute $\projlex(\x)$}
\end{algorithm}

\begin{remark}
Pragmatically, we remark that the projection algorithm is inherently
very parallel at the trace level and as such scales well across
machines.
\end{remark}

\begin{example}
For the running example (PSTL formula~\eqref{eq:overshoot}), we use
the order $a \lexOrder \tau$. As $\polarity(a) = -$, and
$\polarity(\tau) = +$, lexicographic projection has the effect of
first searching for the largest $a$, and then searching for the
smallest $\tau$ such that the resulting valuation is in
$\validitydomain{\overshootSTL}$.  The projections of the
three traces from STL cluster 0 from
Fig.~\ref{fig:introexample_traces} are shown in
Fig.~\ref{fig:vdomain}. We use the same color to denote a trace and
its projection in parameter space.
\end{example}

\begin{wrapfigure}{R}{.4\textwidth}
\includegraphics[width=\textwidth] {./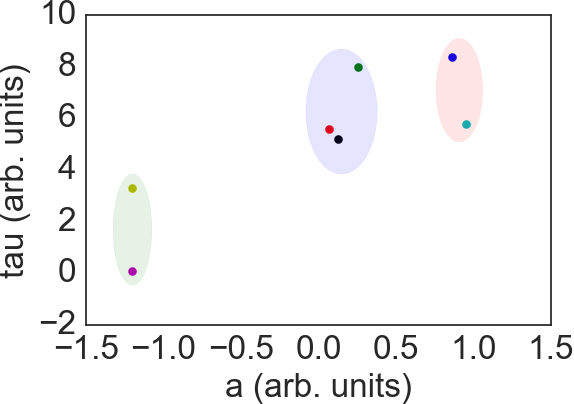}
\caption{\label{fig:gmms} Three
clusters represented using the level sets of Gaussian functions
learned from Gaussian Mixture Models (GMMs) (see Eq~\ref{ex:gmms}). The
user specifies the number of clusters to discover ($3$ in this case),
and specifies that the GMM algorithm use a diagonal covariance matrix
(which restricts cluster shape to axis-aligned ellipsoids).}
\end{wrapfigure}

\mypara{Clustering and Labeling}

What does one gain by defining a projection, $\pi$? We posit that
applying unsupervised learning algorithms for clustering in
$\paramspace$ lets us glean insights about the logical structure of
the trace-space by grouping traces that satisfy similar formulas
together. Let $L$ be a finite, nonempty set of labels.  Let
$\TracesSubset \subset \Traces$ represent a user-provided set of
traces.  In essence, a clustering algorithm identifies a {\em
labeling\/} function $\labelingfn: \TracesSubset \to \powerSet{L}$
assigning to each trace in $\TracesSubset$ zero or more labels. We
elaborate with the help of an example.

\begin{example}\label{ex:gmms} In Fig.~\ref{fig:gmms}, we show a
possible clustering induced by using Gaussian Mixture
Models\footnote{A GMM assumes that the given parameter space can be
  modeled as a random variable distributed according to a set of
  Gaussian distributions with different mean and variance values.  A
  given parameter valuation is labeled $l$ if the probability of the
  valuation belonging to the $l^{th}$ Gaussian distribution exceeds
  the probability of the valuation belonging to other
  distributions. Another way to visualize clusters in the parameter
  space is by level-sets of the probability density functions
  associated with the clusters.  For example, for the $l^{th}$
  cluster, we can represent it using the smallest level-set that
  includes all given points labeled $l$.}
 (GMMs) for the trace-projections for the traces in
Fig.~\ref{fig:introexample_traces}.  The figure shows that the traces
colored green, red and black are grouped in the same cluster; this
matches the observation that all three traces have behaviors
indicating overshoots, but of reasonable magnitudes. On the other
hand, traces colored magenta and yellow have no overshoot and are
grouped into a second cluster. The final cluster contains the blue and
cyan traces, both with a large overshoot.
\end{example}

Supposing the clustering algorithm reasonably groups traces satisfying
similar parameter valuations/logical formulas, one may ask: ``Can we
describe this group of traces in terms of an easily interpretable STL
formula?''  Using an ellipsoid to represent a cluster, unfortunately,
the answer is negative.

\begin{example}
For the cluster labeled $0$ in Fig.~\ref{fig:introexample_traces}, in
\eqref{eq:ellipsoidal_formula}, we show the formula describing the
ellipsoidal cluster. Here the $c_i$s are some constants.
\begin{equation}
\label{eq:ellipsoidal_formula}
\ev \left( \stepinput \wedge \ev_{[0,\tau]} \left( x - x_{ref} > a \right)
\right) \wedge \left({(c_1\tau - c_2)}^2 + {(c_3 a - c_4)}^2 < c_5^2\right)
\end{equation}
\end{example}

It is clear that formula~\eqref{eq:ellipsoidal_formula} is
inscrutable, and actually represents an infinite number of STL
formulas. In case of GMMs, we can at least have an abstract
description of clusters using ellipsoid shapes in the parameter-space.
If we use spectral clustering (as described in
Section~\ref{sec:intro}), the representation of a cluster in the
parameter space is even less obvious. To mitigate this problem, we
observe that the distance between points in $\paramspace$ is a
``good'' proxy for whether they receive the same label. Thus, another
way to define the labeling function $\labelingfn$, is via parameter
ranges.  We argue that the use of axis-aligned hyperboxes enclosing
points with the same labels is a useful approximation of the clusters,
particularly because as we see in the next section, it has a compact
STL encoding.

\begin{remark} For a given set of points, the tightest-enclosing
hyperbox may include points that would not have received the same
label by an off-the-shelf clustering algorithm.  This can lead to a
scenario where hyperbox-clusters intersect (see
Fig.~\ref{fig:experimental_overshoot_clusters} in for an
example). This means that we can now have points in the parameter
space that can have possibly two or more labels. We argue that this
can be addressed in two ways: (1) introduce a new hyperbox cluster for
points in the intersection, (2) indicate that points in the
intersection represent traces for which there is additional guidance
required from the designer.  \end{remark}

\mypara{Hyperbox Clusters}
In the previous section, we showed that we can construct a labeling
function $\labelingfn$ to assign labels to the user-provided set of
traces $\TracesSubset$. We now see how we can extend this labeling
function to all possible traces $\Traces$.

Let $\val_{\top} \eqdef \sup \paramspace$. A valid hyperbox $B$ in the
parameter space is defined in terms of its extreme points
$(\val_s(\p), \val_w(\p))$, (where $\val_s \pOrder \val_w$), where
$\val_s$ and $\val_w$ are the infimum and supremum resp.\ over the box
w.r.t. $\pOrder$.  Formally,

\begin{definition}[Hyperbox]
\label{def:hyperbox}
\begin{equation}
B(\val_s, \val_w)  \eqdef 
\left\{
\begin{array}{lll} 
      \prod_{i} [\val_s(p_i), \val_w(p_i)] & \text{if $\val_w(p_i) \neq \val_\top(p_i)$} \\
      \prod_{i} [\val_s(p_i), \val_w(p_i)] & \text{otherwise.}
      \end{array}
\right.
\end{equation}

\end{definition}

In other words, we assume that a hyperbox is open on all faces not
connected to the infimum of the box, unless the face is connected to
the supremum of $\paramspace$. Let $\aaboxes$ denote the set of all
such hyperboxes.

\begin{definition}[Hyperbox Labeling Function]
Given a trace $x$ and a hyperbox $B$, s.t. $\pi(x) \in B$, we define
$\labelingfnbox$ as the hyperbox labeling function from $\Traces$ to
$\powerSet{L}$ as follows:
\begin{equation}
\label{eq:hyperbox_labeling}
l \in \labelingfnbox(x) \iff \{\pi(x') \mid x' \in \TracesSubset\ \wedge \labelingfn(x') = l\} \subset B
\end{equation}
\end{definition}

In other words, we only consider hyperboxes that contain the
projections of all traces with a specific label (say $l$), and then
any trace that projects to some point in the hyperbox gets all such
labels $l$.  We extend the definition $\labelingfnbox(x)$ to boxes,
such that $\labelingfn_\aaboxes(B) = \setof{l \mid \pi(x) \in B \wedge
l \in \labelingfnbox(x)}$.  We note that $B^{*} \eqdef \inf \{B \mid l
\in \labelingfn_\aaboxes(B)\}$ represents the smallest set containing
all parameter valuations that are labeled $l$. However,  $B^{*}$ does
not satisfy the definition of a hyperbox as per
Definition~\ref{def:hyperbox} as it is a closed set.  Hence, we define
an $\epsilon$ relaxation of this set as the smallest bounding hyperbox
satisfying Definition~\ref{def:hyperbox} at Hausdorff distance
$\epsilon$ from $B^{*}$, and call it the {\em $\epsilon$-bounding
hyperbox}. In the next section, we show how we can translate a cluster
represented as an $\epsilon$-bounding hyperbox to an STL formula. We
will further examine how, in some cases, we can represent a cluster by
a superset $B'$ of the $\epsilon$-bounding hyperbox that satisfies $l
\in \labelingfn_\aaboxes(B')$, but allows a simpler STL
representation.

\begin{example} For the example shown in
Fig.~\ref{fig:introexample_traces}, for each of the red, green and
black traces $x$,  $\labelingfnbox(x) = \setof{0}$, while for the blue
and cyan traces, $\labelingfnbox(x) = \setof{2}$. Any hyperbox $B$
satisfying Definition~\ref{def:hyperbox} that is a superset of the
hyperbox enclosing the red, green and black points shown in the bottom
right figure has $\labelingfn_\aaboxes(B) = \setof{0}$, while the
hyperbox shown in the figure is an $\epsilon$-bounding hyperbox.
\end{example}

\section{Learning STL Formulas from Clusters}\label{sec:stl_extract}

A given $\epsilon$-bounding hyperbox $B$ simply specifies a range of
valuations for the parameters in a PSTL template $\f$.  We now
demonstrate that because $\f$ is monotonic, there exists a simple STL
formula that is satisfied by the set of traces that project to some
valuation in $B$.  Recall that we use $\model{\val}$ to denote the set
of traces that satisfies $\f(\val(\p))$. We define $\Traces_B$ as the
set of traces that have a satisfying valuation in $B$: $\Traces_B
\eqdef \displaystyle\bigcup_{\val(\p) \in B} \model{\val(\p)}$

\begin{theorem}\label{thm:stl_extract}
There is an STL formula $\psi_B$ such that 
$\{x \in \Traces \mid x \models \psi_B\} \equiv \Traces_B$.
\end{theorem}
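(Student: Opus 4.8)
The plan is to construct $\psi_B$ explicitly from the two corner valuations of the box $B=B(\val_s,\val_w)$ and then verify the set equality $\{x\in\Traces\mid x\models\psi_B\}\equiv\Traces_B$ by a double inclusion. First I would recall the two facts the excerpt hands us for free: by monotonicity combined with the parameter order, $\val\pOrder\val'$ implies $\model{\val}\subseteq\model{\val'}$, and by Lemma~\ref{lemma:stl_to_downward_closure} the satisfaction set of a valuation equals the union of satisfaction sets over its downward closure. The key structural observation is that, because each parameter is independently monotonic, membership in the box $B$ is equivalent to a conjunction of per-coordinate constraints $\val_s(p_i)\pOrder_i\val(p_i)\pOrder_i\val_w(p_i)$, and the direction of each $\pOrder_i$ is exactly the direction of implication for $\f$ in that coordinate.

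\textbf{Constructing the formula.}
My candidate is to let $\psi_B$ be the STL formula asserting satisfaction at the ``weak'' corner together with the negation of satisfaction strictly below the ``strong'' corner. Concretely, $\f(\val_w(\p))$ captures being \emph{inside or below} the upper face (since $\val\pOrder\val_w$ gives $\model{\val}\subseteq\model{\val_w}$), so every trace in $\Traces_B$ satisfies $\f(\val_w(\p))$; this handles the upper bound. For the lower bound I want to exclude traces whose projection lies strictly $\pOrder$-below $\val_s$. A trace projecting below $\val_s$ in coordinate $i$ would satisfy $\f$ at a valuation obtained by pushing $p_i$ just past $\val_s(p_i)$ against the polarity direction; so I would form, for each coordinate $i$, the formula $\f$ evaluated at $\val_s$ with its $i$-th coordinate relaxed, and conjoin $\f(\val_w)$ with the negation of the disjunction of these. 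This yields a formula of the shape
\begin{equation*}
\psi_B \;\eqdef\; \f(\val_w(\p)) \;\wedge\; \neg\!\!\bigvee_{i}\f\bigl(\val_s(\p)\text{ relaxed in }p_i\bigr),
\end{equation*}
whose length is linear in the number of parameters, matching the descriptive-complexity claim advertised in the introduction. The open/closed face convention of Definition~\ref{def:hyperbox} is exactly what makes the strict-versus-nonstrict inequalities line up, which is why the negated disjunction (rather than a single conjoined lower bound) is the right encoding.

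\textbf{Verifying the equality.}
For ($\supseteq$): given $x\in\Traces_B$, its satisfying valuation $\val\in B$ satisfies $\val\pOrder\val_w$, hence $x\models\f(\val_w)$, and $\val_s\pOrder\val$ rules out each relaxed-below-$\val_s$ instance by monotonicity, so $x\models\psi_B$. For ($\subseteq$): if $x\models\psi_B$ I must produce a valuation in $B$ at which $x$ satisfies $\f$; here I would invoke the projection $\proj(x)$ (or the fact that the validity domain is a downward-closed region meeting the box), using $x\models\f(\val_w)$ to place the projection at-or-below the upper corner and the negated disjunction to force it at-or-above $\val_s$ in every coordinate, hence inside $B$.

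\textbf{The main obstacle.}
The delicate step is the lower-face argument: ensuring that ``not satisfying any below-$\val_s$ relaxation'' genuinely confines the projection to $B$ and does not instead leave it in some incomparable region of $\paramspace$. Because $\pOrder$ is only a \emph{partial} order, a trace's projection need not be $\pOrder$-comparable to $\val_s$ a priori, so I must argue coordinatewise, leaning on the fact that each $\pOrder_i$ is total and that the validity domain is a monotone (downward-closed under $\pOrder$) region whose boundary is a monotone staircase. Handling the boundary/closure subtleties introduced by the $\epsilon$-bounding relaxation and the half-open faces is where the proof needs the most care; everything else reduces to the monotonicity implication $\val\pOrder\val'\Rightarrow\model{\val}\subseteq\model{\val'}$ already established in the preliminaries.
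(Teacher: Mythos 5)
Your overall strategy---conjoin $\f(\val_w)$ with negated instances of $\f$, one per parameter, to pin a trace's validity domain between the two corners, then verify by double inclusion---is the same as the paper's, and your formula has the right shape and the advertised linear length. But the valuations at which you negate $\f$ are the wrong ones, and this is not a presentational slip: it breaks one inclusion. You negate $\f$ at ``$\val_s$ relaxed in $p_i$'', i.e.\ at points whose coordinates other than $i$ sit at the \emph{strong} values $s_j$. The paper instead negates $\f$ at the \emph{essential corners} $E_B$ (Definition~\ref{def:E_B}): for each $i$, the valuation with $p_i=s_i$ and every other coordinate at the \emph{weak} value $w_j$. The difference matters because the region that must be excluded from a trace's validity domain is the whole slab $\downward(\val_w)\cap\{\val : \val(p_i)\ \text{strictly below}\ s_i\}$, and the downward closure of the essential corner is exactly that slab --- this is the decomposition $B=\downward(\val_w)\cap\bigcap_{\val\in E_B}\overline{\downward(\val)}$ on which the paper's proof rests, combined with Lemma~\ref{lemma:stl_to_downward_closure}. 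The downward closure of your valuation is only the small orthant below $\val_s$ itself, so negating $\f$ there leaves uncovered the part of the sub-$B$ region where the other coordinates are large.

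Concretely, with two parameters of polarity $+$ and $B$ spanning $[s_1,w_1]\times[s_2,w_2]$, take a trace whose validity domain is the upward-closed half-plane $\{\val : \val(p_1)+\val(p_2)\ge s_1+w_2\}$. It satisfies $\f(\val_w)$ and fails $\f$ at both of your valuations (which lie near $\val_s$, below the diagonal), so it satisfies your $\psi_B$; yet its validity domain crosses the lower face of $B$, it satisfies $\f$ at the essential corner $(s_1,w_2)$, and its lexicographic projection lands far outside $B$. The paper's $\psi_B$ correctly rejects it via $\neg\f(s_1,w_2)$. The fix is simply to replace your valuations by the essential corners; the rest of your argument (upper bound via $\val_w$, De Morgan, the boundary/openness caveats) then matches the paper's. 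One smaller point: in your $\supseteq$ direction, the claim that ``$\val_s\pOrder\val$ rules out each relaxed-below-$\val_s$ instance by monotonicity'' is not a valid inference --- satisfaction at a weak valuation never rules out satisfaction at a stronger one, since validity domains are upward closed --- so that step needs the hypothesis that the trace's projection, not merely some satisfying valuation, lies in $B$.
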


Before proving this theorem, we introduce some notation:
\begin{definition}[Essential Corners, $E_B$]\label{def:E_B}
Let $\val_w(\p) = (w_1,\ldots,w_n)$, and let $\val_s(\p) =
(s_1,\ldots,s_n)$. A valuation corresponding to an essential corner
has exactly one $i$ such that $\val(p_i) = s_i$, and for all $j \neq
i$, $\val(p_j) = w_j$. 
\end{definition}

\begin{proof}[Theorem~\ref{thm:stl_extract}]
We first introduce the notion of essential corners of a box $B$. 
 
Note that $B$ can be written in terms of downward closures of
valuations: $B = \downward(\val_w) \cap \bigcap_{\val \in E_B}
\overline{\downward(\val)}$.  From
Lemma~\ref{lemma:stl_to_downward_closure}, the set of traces
satisfying a formula in $\f(\downward(\val))$ is equivalent to
$\model{\val}$.  Further, using the equivalence between intersections
($\cap$) of sets of traces and conjunctions ($\wedge$) in STL, and
equivalence of set-complements with negations, we define $\psi_B$
below and note that the set of traces satisfying the formula $\psi_B$
below is $X_B$. $\blacksquare$
\begin{equation}\label{eq:psi(B)}
    \psi_B \eqdef \f({\val_w}) \wedge \bigwedge_{\val \in E_B} \neg \f(\val)
\end{equation}

\end{proof}

\begin{example}\label{ex:overshoot_stl}
Consider the $B \in \aaboxes$ enclosing the projections for the yellow and
magenta traces (Cluster 1). The corner-points of the cluster in
clockwise order from bottom right corner are: $(-1.3,0.1)$,
$(-1.3,3.3)$, $(-1.1,3.3)$, $(-1.1,0.1)$. Observe that as
$\polarity(a) = -$ and $\polarity(\tau) = +$, $\val_s$ = $(a \mapsto
-1.1, \tau \mapsto 0.1)$, $\val_w$ = $(a \mapsto -1.3, \tau \mapsto
3.3)$. Thus, $E_B$ = $\{(-1.3, 0.1), (-1.1, 3.3)\}$. Thus:
\begin{equation}
\label{eq:psiB}
\begin{array}{l}
\overshootSTL(a,\tau)  \equiv 
\ev \left(\stepinput \wedge \ev_{[0,\tau]} \left(x - x_{ref} > a\right)\right) \\
\psi_B  \equiv 
\overshootSTL(-1.3, 3.3) \wedge \neg \overshootSTL(-1.3,.1) \wedge \neg \overshootSTL(-1.1, 3.3) 
\end{array}
\end{equation}
\end{example}

\begin{lemma}$|\psi_B| \leq (|\params| +1|)(|\f| + 2)$\end{lemma}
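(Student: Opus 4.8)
The plan is to prove the bound by a direct syntactic counting argument on the formula $\psi_B$ defined in~\eqref{eq:psi(B)}, namely $\psi_B = \f(\val_w) \wedge \bigwedge_{\val \in E_B} \neg \f(\val)$. The only two facts I need are the exact cardinality of the set of essential corners $E_B$, and the observation that instantiating a PSTL template with a concrete valuation does not change its syntactic length.

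First I would determine $|E_B|$. By Definition~\ref{def:E_B}, an essential corner is specified by choosing the unique index $i$ at which the valuation takes the $\val_s$-value $s_i$, with every other coordinate $j \neq i$ fixed to the corresponding $\val_w$-value $w_j$. Since this index $i$ ranges over $\{1,\ldots,n\}$ with $n = |\params|$, there are exactly $|\params|$ essential corners. This is the one point worth stating with care, because a hyperbox in $n$ dimensions has $2^n$ geometric corners, and the content of Definition~\ref{def:E_B} is precisely that only $n$ of them enter $\psi_B$; the whole purpose of the bound is that the resulting conjunction stays \emph{linear} rather than exponential in the number of parameters.

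Next I would fix the convention that $|\f|$ denotes the syntactic size of the template, and note that for any valuation $\val$ the instance $\f(\val)$ has this same size $|\f|$, since passing from a PSTL formula to an STL instance merely substitutes real constants for the parameter symbols appearing in atoms and interval bounds. With this in hand the count is immediate: $\psi_B$ contains one unnegated instance $\f(\val_w)$ of size $|\f|$, together with $|\params|$ negated instances, each contributing $|\f|$ for its copy of $\f$ plus $1$ for the leading $\neg$. Joining these $1 + |\params|$ conjuncts requires $|\params|$ occurrences of $\wedge$.

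Summing the contributions gives $|\psi_B| = |\f| + |\params|\,(|\f| + 1) + |\params| = (|\params| + 1)\,|\f| + 2\,|\params|$, and since $2\,|\params| \le 2\,(|\params| + 1)$ this is at most $(|\params| + 1)\,|\f| + 2\,(|\params| + 1) = (|\params| + 1)(|\f| + 2)$, as claimed. I do not expect any genuine obstacle here: the argument is routine once $|E_B| = |\params|$ is pinned down, and the stated bound even carries a slack of $2$, which comfortably absorbs any minor difference in how one accounts for parentheses or the outermost connectives.
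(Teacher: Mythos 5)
Your proof is correct and takes essentially the same route as the paper's: both arguments reduce to observing that $|E_B| \leq |\params|$ (since each essential corner is determined by the single index set to its $\val_s$-value) and then counting the $|\params|+1$ instantiations of $\f$, the $|\params|$ negations, and the $|\params|$ conjunctions. Your version is slightly more explicit in summing the contributions to $(|\params|+1)|\f| + 2|\params|$ and showing this is bounded by $(|\params|+1)(|\f|+2)$, but there is no substantive difference in approach.
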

\begin{proof}
  Recall from Def~\ref{def:E_B} that corners in $E_B$ have exactly 1
  param set to $s_i$. There are $|\params|$ params, thus by pigeon
  hole principle, $|E_B| \leq |\params|$. In $\psi_B$ for each corner
  in $E_B$, the corresponding formula is negated, adding 1
  symbol. Between each $|\params| + 1$ instantiations of $\f$ is a
  $\wedge$. Thus $|\psi_B| \leq (|\params| + 1)(|\f| + 2)$
  $\blacksquare$
\end{proof}

\mypara{Simplifying STL representation}
To motivate this section, let us re-examine
Example~\ref{ex:overshoot_stl}. From
Fig.~\ref{fig:introexample_traces}, we can observe that there is no
hyperbox cluster to the left of or above the chosen hyperbox cluster
$B$, \ie\ the one containing the magenta and yellow
trace-projection. What if we consider supersets of $B$ that are
hyperboxes and have the same infimum point?  For
Example~\ref{ex:overshoot_stl}, we can see that any hyperbox that
extends to the supremum of the parameter space in $\tau$ or $-a$
direction would be acceptable as an enclosure for the yellow and
magenta traces (as there are no other traces in those directions). We
formalize this intuition in terms of relaxing the set of corners that
can appear in $E_B$.

\begin{wrapfigure}{r}{.5\textwidth}
\label{fig:possible_shapes}
{\includegraphics[width=.99\textwidth]{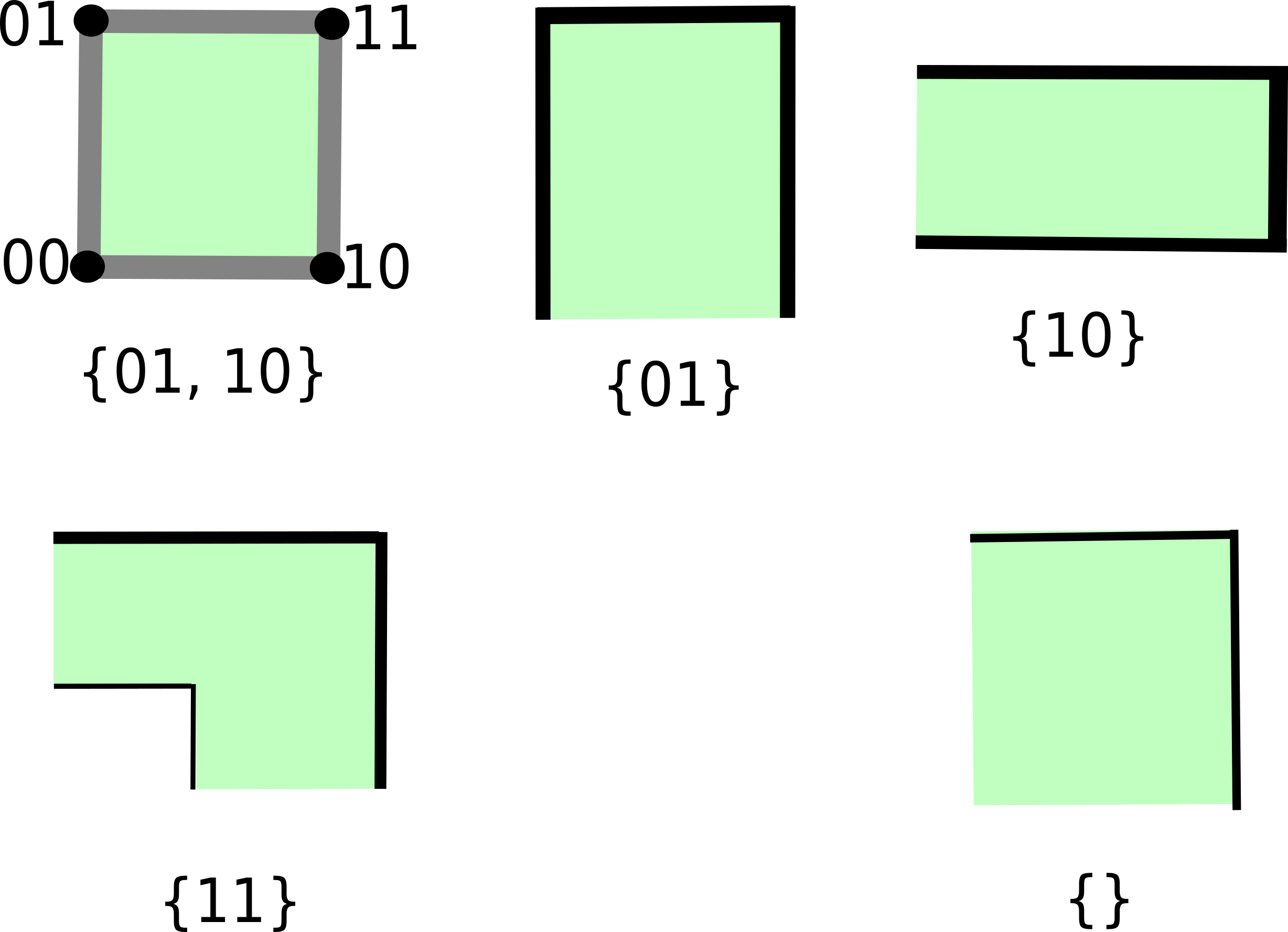}
  \caption{2D shapes generated by different subsets of
    corners.\label{fig:possible_shapes}}
}
\end{wrapfigure}
For instance, suppose that we replace $E_B$ in Eq~\eqref{eq:psi(B)} with
$E'_B$, where $E'_B$ is any subset of the corners of $B$ (excluding
$\val_w$).  We call the collection of shapes induced by this
relaxation as $\aaboxes_2$. For $|\params| = 2$, the possible shapes
of elements in $\aaboxes_2$ are shown in
Fig.~\ref{fig:possible_shapes}. For convenience, we use a bit-vector
encoding for hyperbox corners, where $\val_s$ corresponds to the
bit-vector with all $0$s, $\val_w$ has all $1$s, and essential corners
are bit-vectors with exactly one $0$.  Consider the L shaped region,
$C_L$, created by $E'_B = \{00\}$. The formula corresponding to $C_L$
has obviously less descriptive complexity than $E_B = \{01,
10\}$. Further notice, $\aaboxes2 \setminus \aaboxes$ would have less
descriptive complexity than elements of $\aaboxes$.

One critical feature that $\aaboxes_2$ (and thus $\aaboxes$) has is
{\em comparable convexity\/}:
\begin{definition}[Comparable Convexity]
If $\forall \val, \val' \in B \subset \paramspace$ if $\val \pOrder
\val'$ or $\val' \pOrder \val$ then all {\em convex\/} combinations of $\val$
and $\val'$ are in $B$.
\end{definition}

Comparable convexity allows us to argue that one can gain some insight
into the set of traces by just examining the extremal cases and just
``interpolating'' the cases because of the associated
parameters. We call these extremal cases the ``representatives'' of a
cluster.

\begin{theorem} Each element in $\aaboxes_2$ is
comparably convex. See Fig.~\ref{fig:possible_shapes} for examples.
\end{theorem}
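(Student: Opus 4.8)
The plan is to exploit the decomposition already used in the proof of Theorem~\ref{thm:stl_extract}: replacing $E_B$ by an arbitrary subset $E'_B$ of the corners of $B$ (other than $\val_w$) produces a shape of the form $C = \downward(\val_w) \cap \bigcap_{\val \in E'_B} \overline{\downward(\val)}$, and this is exactly a generic element of $\aaboxes_2$. The point is that $\downward(\val_w)$ is downward closed with respect to $\pOrder$ by definition of the downward closure, while each $\overline{\downward(\val)}$ is the complement of a downward-closed set and is therefore upward closed. So I would reduce the whole theorem to two elementary facts about the poset $(\paramspace,\pOrder)$: that \emph{down-sets} and \emph{up-sets} are each comparably convex, and that comparable convexity is preserved under intersection.

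The technical heart is a \emph{sandwiching} lemma: if $\val \pOrder \val'$, then every convex combination $\val'' = \lambda\val + (1-\lambda)\val'$ with $\lambda \in [0,1]$ satisfies $\val \pOrder \val'' \pOrder \val'$. I would prove this coordinatewise. Fix a parameter $p_i$. If $\polarity(p_i) = +$, then $\pOrder_i$ is ordinary $\le$ and $\val(p_i) \le \val'(p_i)$, so the scalar convex combination lies between the two endpoints, giving $\val(p_i) \pOrder_i \val''(p_i) \pOrder_i \val'(p_i)$. If $\polarity(p_i) = -$, then $\pOrder_i$ is $\ge$ and the corresponding scalar inequality runs the other way, but the convex combination is still numerically sandwiched, so the same relation holds in $\pOrder_i$. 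Since this is true for every $i$, it lifts to $\pOrder$. The one subtlety to flag is that the convex combination is taken in the ambient Euclidean parameter space whereas $\pOrder$ is the mixed-polarity product order, so each polarity case must be checked separately; this is precisely where the definition of $\pOrder_i$ does the work, and I expect it to be the only place where care is required.

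Given the sandwiching lemma, comparable convexity of the pieces is immediate. For a down-set $D$ and comparable $\val, \val' \in D$, the combination $\val''$ satisfies $\val'' \pOrder \val'$, and since $\val' \in D$ with $D$ downward closed we get $\val'' \in D$; dually, for an up-set $U$ the relation $\val \pOrder \val''$ together with $\val \in U$ gives $\val'' \in U$. Finally, comparable convexity is closed under intersection: if $\val$ and $\val'$ are comparable members of $\bigcap_k S_k$, then each $S_k$ already contains the entire segment between them by its own comparable convexity, hence so does the intersection. Applying this to $C = \downward(\val_w) \cap \bigcap_{\val \in E'_B} \overline{\downward(\val)}$ establishes the claim for every element of $\aaboxes_2$, and since $\aaboxes$ arises from the particular choice $E'_B = E_B$, the same argument covers $\aaboxes$ as well.
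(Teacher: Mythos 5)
Your proof is correct, but it takes a genuinely different route from the paper's. The paper writes each $C \in \aaboxes_2$ as a finite union of hyperboxes from $\aaboxes$ and then argues by cases on where the two comparable valuations $\val_1, \val_2$ sit relative to those boxes, closing the one nontrivial case with a containment-and-contradiction argument. You instead keep the intersection form $C = \downward(\val_w) \cap \bigcap_{\val \in E'_B} \overline{\downward(\val)}$ (the same decomposition the paper already uses in the proof of Theorem~\ref{thm:stl_extract}) and reduce everything to (i) a coordinatewise sandwiching lemma --- comparable endpoints sandwich their convex combinations under $\pOrder$, with the two polarities of $\pOrder_i$ checked separately --- and (ii) the observations that down-sets and up-sets of $(\paramspace,\pOrder)$ are comparably convex and that comparable convexity is closed under intersection. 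Your version is more modular and arguably cleaner: it proves the stronger statement that \emph{any} intersection of up-sets and down-sets is comparably convex, it covers $\aaboxes$ and $\aaboxes_2$ uniformly without a separate base case for plain hyperboxes, and it sidesteps the delicate step in the paper's argument where one must justify that $\val_1 \pOrder \val_2$ forces $\val_1$ into a convex box contained in $B_2$. The sandwiching lemma is where the real content lives, and you correctly isolate the only subtlety --- that the convex combination is taken in the ambient Euclidean space while $\pOrder$ is a mixed-polarity product order.
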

\begin{proof}
  Note that all elements of $\aaboxes \subset \aaboxes_2$ are
trivially comparably convex since hyperboxes are convex. Thus we focus
on elements of $\aaboxes_2 \setminus \aaboxes$. Now observe that any
element $C \in \aaboxes_2$ is the union of a finite set,
$\mathcal{H}$, of boxes in $\aaboxes$. $C$ s.t. $C = \bigcup_{B_i \in
\mathcal{H} \subset \aaboxes} B_i$ where $\val_1 \in B_1$ and $\val_2
\in B_2$.  If $B_1 \subset B_2$ or the other way around or $\val_1\in
B_1 \cap B_2$ or $\val_2 \in B_1 \cap B_2$, then again there trivially
the convex combination of $\val_1$ and $\val_2$ is in $C$ because
hyperboxes are convex (and the intersection of two hyperboxes is a
hyperbox).

This leaves the case where $\val_1 \in B_1 \setminus B_2$ and $\val_2
\in B_2 \setminus B_1$ and neither $B_1 \subset B_2$ nor $B_2 \subset
B_1$. This implies that $\inf(B_1)$ is not comparable to $\inf(B_2)$.
W.L.O.G assume $\val_1 \pOrder \val_2$ and that the convex combination
of $\val_1$ and $\val_2$ is not a subset $C$.  Note that the
definition of downward closure and the fact that $\val_1 \pOrder
\val_2 \implies \val_1 \in B(\sup(C), \val_2) \eqdef B'$. But, $B'$ is
convex and $B' \subset B_2 \subset C$, thus the convex combination of
$\val_1$ and $\val_2$ is in $C$ which is a contradiction.
$\blacksquare$ \end{proof}

\section{Case Studies}\label{sec:case_studies}

\mypara{Implementation Details} We leveraged Breach~\cite{breach} for
performing projections $\projlex$, $\mathtt{scikit}$-$\mathtt{learn}$
toolkit~\cite{scikit-learn} for clustering and custom Python code for
learning STL formulas for clusters. An IPython notebook with
compressed versions of the datasets studied in the case studies (and a
replementation of $\projlex$ in Python) is available for download
at~\cite{ReproducabilityArtifact}.

\mypara{Diesel Engine} In this case study, we are provided with timed
traces for a signal representing the Exhaust Gas Recirculation (EGR)
rate for an early prototype of a Diesel Engine airpath controller. As
the example comes from an automotive setting, we suppress actual
signal values for proprietary reasons.  The controller computes an EGR
reference rate and attempts to track it. Typically, engineers visually
inspect the step-response of the control system and look for patterns
such as unusual overshoots, slow responsiveness, {\em etc}. The ST-Lib
library~\cite{stlib} defines a pool of PSTL formulas designed to
detect violations of such properties.  Using a property from ST-Lib
requires correctly setting the parameters in the PSTL templates
therein.  In this case study, we show how we can use our technique to
determine parameters that characterize undesirable behavior. We focus
on two templates: Rising Step and Overshoot.  Many ST-Lib formulas are
``step-triggered'', $\ie$ they are of the form: $\F(\stepStl \wedge
\phi)$ We first identify parameters for the step template, as it is
used as a primitive in further analysis. For example, in the overshoot
analysis we seek to characterize by what margin traces overshoot the
reference. We use the following templates for \textbf{rising}-step and
overshoot:

\begin{align}
  \stepStl_{(m, w)} & \triangleq \F(\ddot{x} > m \wedge \F_{[0,
      w]}(\ddot{x} < -m))\label{eq:step_PSTL}\\
  \overshootSTL(c, w) & \triangleq \stepStl^* \wedge \F_{[0,w]} (x - x_r)
  > c\label{eq:overshoot_PSTL}
\end{align}

Eq~\eqref{eq:step_PSTL} first reduces step detection (via a discrete
derivative) to spike detection and then applies the ST-Lib spike
detection template (that introduces a second derivative). As the view
of PSTL is signal-centric, such operations can be introduced as new
timed traces of a new signal, and do not require any modification of
the logic. The $\stepStl^*$ that appears in Eq~\eqref{eq:overshoot_PSTL}
is result step primitive we learn during our analysis. Finally, the
lexicographic ordering used in the projections of Eq~\eqref{eq:step_PSTL}
and Eq~\eqref{eq:overshoot_PSTL} are: $m \lexOrder w$ and $c \lexOrder w$
resp. Finally, each parameter is in $\Reals_{>0}$.

\myipara{Experiments} 
We have 33 traces of variable time-length. As a preprocessing step, we
used a sliding window with a size of $1$ second and a sliding offset
of $0.5$ seconds to generate equal length traces.  The sliding window
size and the offset was chosen by observation and experience to
capture the significant local behaviors. In general, such a selection
could be automated based on statistical criteria.  Further, as we did
not exploit the relationship between traces generated by the sliding
window, we effectively analyzed over $2\times 10^6$ traces
(~1GB). Each trace generated is then prepossessed by numerically
computing the second derivative\footnote{As the discrete-time
derivative can introduce considerable noise, we remark that the
discrete-time derivative can often be approximated by a noise-robust
operation (such as the difference from a rolling mean/median.)}. After
projecting to the parameter space for each template, we normalize the
parameters to lie between $0,1$ and fit a Gaussian Mixture Model to
generate labels, and learn the STL formulas for each cluster.

\begin{figure}[t]
  \centering
  \subfloat[$\paramspace$ of $\stepStl_{(m, w)}$\label{fig:step_clusters}]{\includegraphics[width=0.3\textwidth]{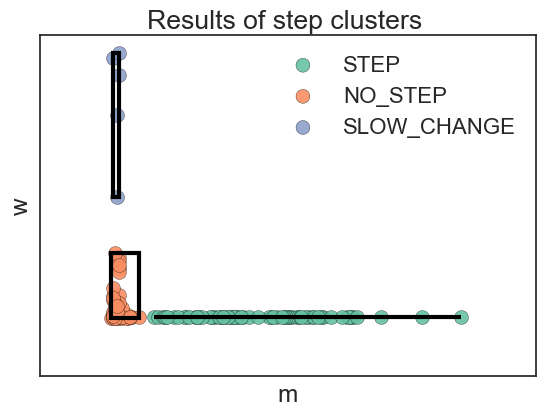}}
  \subfloat[Step Representatives\label{fig:step_representatives}]{\includegraphics[width=0.33\textwidth]{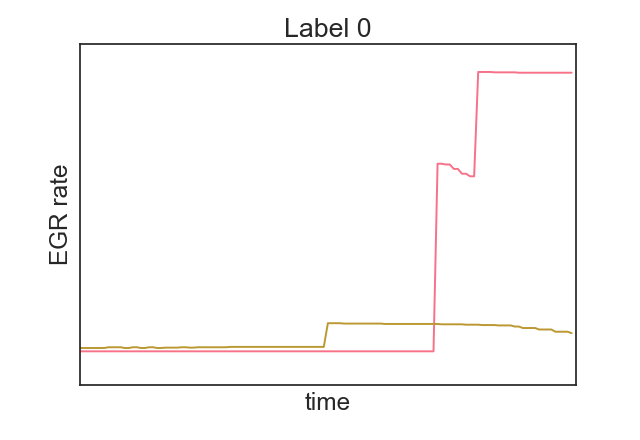}}
  \subfloat[$\paramspace$ of $\overshootSTL$\label{fig:experimental_overshoot_clusters}]{\includegraphics[width=0.3\textwidth]{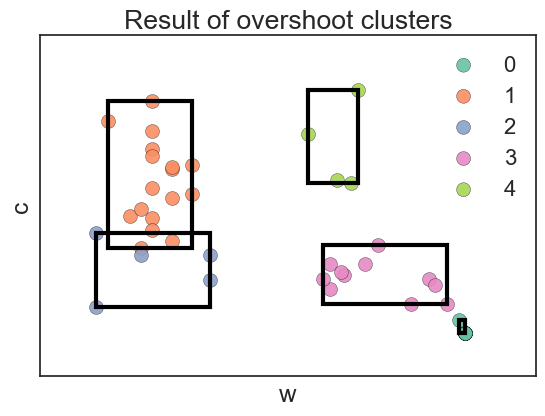}}
  \hfill
  \subfloat[No Overshoot Reps]{\includegraphics[width=0.33\textwidth]{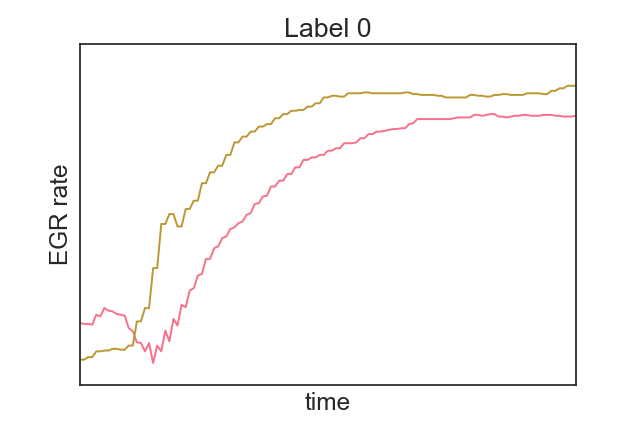}}
  \subfloat[Big Overshoot Reps]{\includegraphics[width=0.33\textwidth]{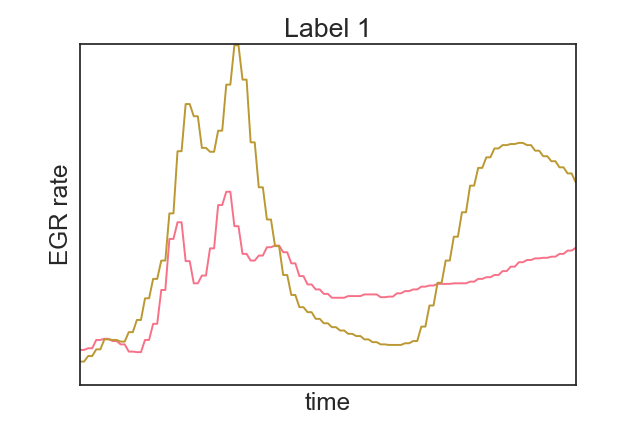}}
  \subfloat[Little Overshoot Reps]{\includegraphics[width=0.33\textwidth]{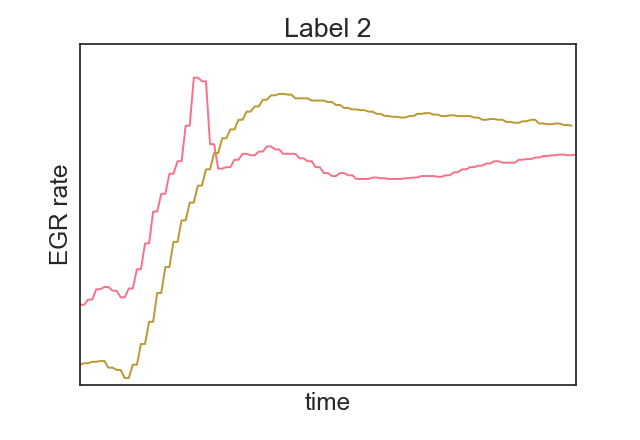}}\label{fig:overshoot_example}
  \vspace{-5pt}
  \caption{$\paramspace$ for the overshoot and step experiments and
    representatives of select clusters.}
\end{figure}

\myipara{Results}
The Step template revealed 3 clusters (Fig~\ref{fig:step_clusters}), of which the cluster labeled Step
(Fig~\ref{fig:step_representatives}), was identified as an admissible
``step'' primitive. In picking the appropriate bounding box in
$\aaboxes$ we noted spikes have no inherit upper limit to their
peaks. Thus, we derived the characterizing STL:\@ $\stepStl^* \eqdef \stepStl(m^*, w^*)$, where $m^*, w^*$ are suppressed
for proprietary reasons. The overshoot analysis revealed 5 clusters.
We note that there are actually 2 distinct clusters which can
reasonably be called overshoots, given by label 1 and 4 in Fig~\ref{fig:experimental_overshoot_clusters}. The interpretation, is that
while the majority of the overshoots occur soon after the step, there
is a cluster that occurred later, either due to slow rise time or
non-linear effects causing the oscillation about the reference to
increasing before dying away. In either case, as with spike, we
declare that any overshoot is still an overshoot as the amplitude $c$
rises. Thus for cluster 1 we again chose to use a box from $\aaboxes$
that does not bound $c$. This lead to:
$\overshootSTL^* \eqdef \overshootSTL(c^*, w_2^*) \wedge \neg
\overshootSTL(c^*, w_1^*)$ again suppressing values.

\mypara{Traffic Behavior on the US-101 highway}
In order to model and predict driver behavior on highways, the Federal
Highway Administration collected detailed traffic data on southbound
US-101 freeway, in Los Angeles
\cite{NGSIM}. The pre-selected segment of the freeway is about 640
meters in length and consists of five main lanes and some auxiliary
lanes. Traffic through the segment was monitored and recorded through
eight synchronized cameras, mounted on top of the buildings next to
the freeway. A total of $45$ minutes of traffic data was recorded
including vehicle trajectory data providing lane positions of each
vehicle within the study area.

\newcommand{\cItem}[1] {\multicolumn{1}{c}{#1}}

\floatsetup{heightadjust=object}
\begin{figure}[t]
\centering

\begin{floatrow}
\subfloat[Lane Clusters\label{tbl:lane_clusters}]{\small
  \begin{tabular*}{0.39\textwidth}{@{\extracolsep{\fill}}crrrcr}
    \toprule C & \cItem{$\tau_2$} & \cItem{$\tau_3$} &
    \cItem{$\tau_4$} & T & |C| \\ 
    \midrule 
    0 & $\top$ & $\top$ & $\top$ & T1 & 626\\ 
    1 & $\top$ & $\top$ & $[\bot, 78]$ & T2 & 115\\ 
    2 & $[\bot, 67]$ & $\top$ & $\top$ & T2 & 44\\ 
    3 & $\top$ & [.26, 30] & $[\bot, 65]$ & T3 & 52\\ 
    4 & $\top$ & $[\bot, 70]$ & $\top$ & T2 & 32\\ 
    5 & [.56, 54] & $[\bot, 40]$ & $\top$ & T3 & 14\\ 
    6 & [.56, 32] & [1.8, 24] & $[\bot, 31]$ & T4 & 12\\ 
    7 & 76 & $\top$ & $\top$ & T1 & 1\\ 
    \bottomrule
  \end{tabular*}
}

\hspace{-3pt}
\subfloat[Examples from lane cluster C$=1$\label{Fig:cluster1}]{\includegraphics[width=0.29\textwidth, valign=c]{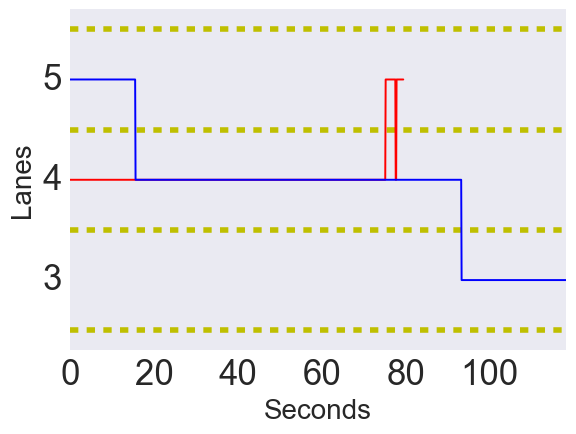}}
\subfloat[Aggressive Behavior\label{Fig:aggressive}]{\includegraphics[width=0.29\textwidth, valign=c]{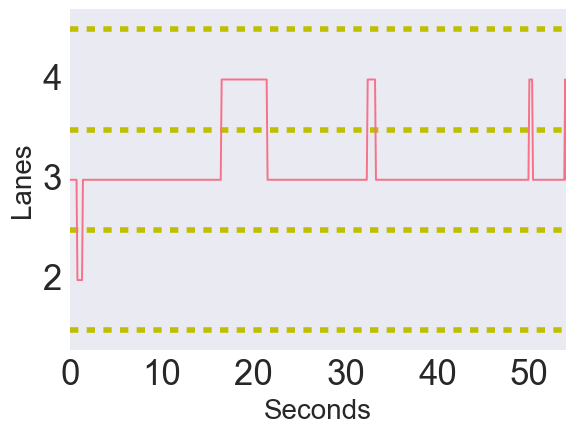}}
\end{floatrow}

\caption{(a) Lane Switching
  Behavior results. Columns: Cluster, parameters, cluster type,
  num$($traces$)$. (b) shows the representatives of cluster 1, which upon
  the inspection are qualitatively very different. The blue car moves
  from lane 5 to lane 4, remains for $\approx$60s and then moves to lane
  3. The red car appears to use lane 5 to pass another car, move's
  back into lane 4 and then to lane 3 shortly after. Inspecting the
  data, most of cluster 1 large $\tau_4$ value. We subdivided the
  behavior further using a one class svm and interpreted the small
  $\tau_4$ values as ``aggressive''. New ``aggressive''
  representatives given in (c).}
\end{figure}

Here, we apply our method to analyze lane switching ``aggressiveness''
characterized by how often a driver switches lanes and the dwell time
in each lane before switching. We focus on lanes 2, 3, and 4, ignoring
the outer lanes lanes 1 and 5 since they are used entering and exiting
the freeway, and thus have qualitatively different behavior.  Each
vehicle trajectory $x(t)$, stores the lane position for the vehicle,
and we use the following STL formula to capture the dwell time in
$\lanei$:
\begin{equation}
\label{eq:lane}
\F \left( x \neq \lanei\wedge (\F_{[0, \epsilon]} x= \lanei
\U_{[\epsilon, \tau_i]} x\neq \lanei) \right)
\end{equation}

\myipara{Results} For this experiment, from 4824 total vehicle
trajectories, we discard trajectories with no lane switching behavior
and group them with the conservative driving behaviors.  We analyze
the remaining 896 targeted trajectories that have at least one
lane-switch behavior, and each trajectory is at most 100 seconds long.
As all parameters are independent, lexicographic ordering has no
impact on $\projlex$.  After normalizing the parameters by centering
and scaling, we apply GMMs to label and generate bounding
hyperboxes/STL formulas.

The resulting clusters are shown in Table~\ref{tbl:lane_clusters}.
Upon examining the representatives,we classified the behaviors of each
cluster into 4 groups: 

\begin{itemize}
\item T1: No Weaving: only switching to adjacent
lanes and never changing back.
\item T2: Normal driving behavior, from
switching to adjacent lanes and coming back to overtake a slow vehicle
in front
\item T3: Slightly aggressive behavior, weaving between $2$ lanes.
\item T4: Aggressive behavior, weaving between all three lanes.
\end{itemize}

The largest cluster, 0, contains behaviors without any weaving
behavior. Cluster 3 and 5 represent the weaving behavior involving 2
lanes. Cluster 6 represents aggressive behavior and one of the
representative is shown in Fig.~\ref{Fig:aggressive}. We consider
Cluster 7 as an anomaly for Cluster 2 as it has only 1 trajectory.

For clusters 1, 2, and 4, we cannot distinguish if drivers were
rapidly weaving or weaving within a short period of time, due to the
scarcity of the data. As seen in Fig.~\ref{Fig:cluster1}, the
representatives for cluster $1$, demonstrated two different behaviors;
one involving rapid lane-switching (red trace), one where the driver
switched lanes more slowly (blue trace). Applying an additional
1-class SVM to the points in cluster $1$ was used to distinguish these
two cases.

\mypara{CPS Grader} Massively Open Online Courses (MOOCs) present
instructors the opportunity to learn from a large amount of collected
data.  For example, the data could be clustered to identify common
correct solutions and mistakes. Juniwal et
al.~\cite{juniwal2014cpsgrader} demonstrated a semi-supervised
procedure for a CPS MOOC;\@ this involved first using DTW and K-Nearest
Neighbors (KNN) to cluster traces of student solutions, and then
picking representatives from clusters to ask the instructor to
label. From the labeled data, they extract a characterizing STL
formula given a PSTL template.  The techniques demonstrated in this
paper offer an alternative approach that can overcome some limitations
of~\cite{juniwal2014cpsgrader}. Firstly, as demonstrated in the
opening example (see Fig.~\ref{fig:introexample_traces}), DTW does not
necessarily group traces in a way consistent with their logical
classification.  Second, the burden of labeling traces can still be
quite large for instructors if the number of clusters is very large.
Instead, 
unsupervised our approach offers a fully unsupervised approach (e.g.,
based on GMMs or K-Means) which still offers some degree of confidence
that elements in the same cluster are similar w.r.t.\ a given PSTL
template.

The tests in~\cite{juniwal2014cpsgrader} involved the simulation of an
IRobot Create and student generated controllers. The controller needed
to navigate the robot up an incline and around static obstacles. To
test this, the authors created a series of parameterized environments
and a set of PSTL formula that characterized failure. In this work, we
attempt to reproduce a somewhat arbitrary subset of the results shown
in~\cite{juniwal2014cpsgrader} that required no additional
preprocessing on our part.

\myipara{Obstacle Avoidance} We focus on 2 tests centered around
obstacle avoidance. The authors used an environment where an obstacle
is placed in front of a moving robot and the robot is expected to
bypass the obstacle and reorient to it's pre-collision orientation
before continuing. The relevant PSTL formulas were ``Failing simple
obstacle avoidance'' and ``Failing re-orienting after obstacle
avoidance'' given below as $\f_{avoid}$ and $\f_{reorient}$ resp.:
\begin{eqnarray}
  &\f_{avoid}(\tau, y_{\min}) = \G_{[0, \tau]}(pos.y < y_{\min})\\
  &\f_{reorient}(y_{\min}, x_{\max}) = \G_{[0, \tau]}(pos.y < y_{\min} \vee pos.x > x_{\max})
\end{eqnarray}

\begin{figure}[t]
  \centering
  \subfloat[$\paramspace$ of $\f_{avoid}$\label{fig:obstacleDist}]{\includegraphics[width=0.42\textwidth]{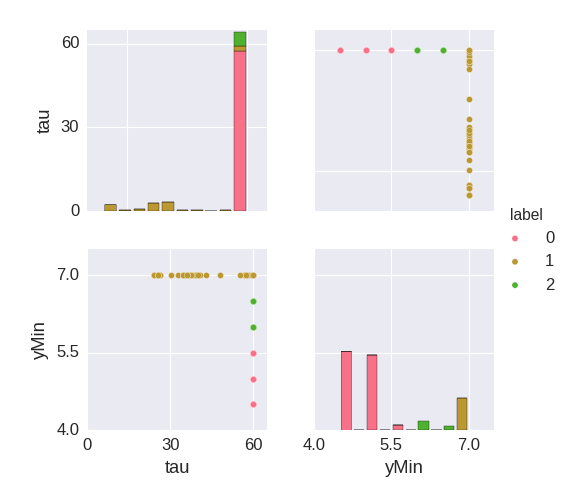}}
  \subfloat[$\paramspace$ of $\f_{reorient}$\label{fig:reorient}]{\includegraphics[width=0.42\textwidth]{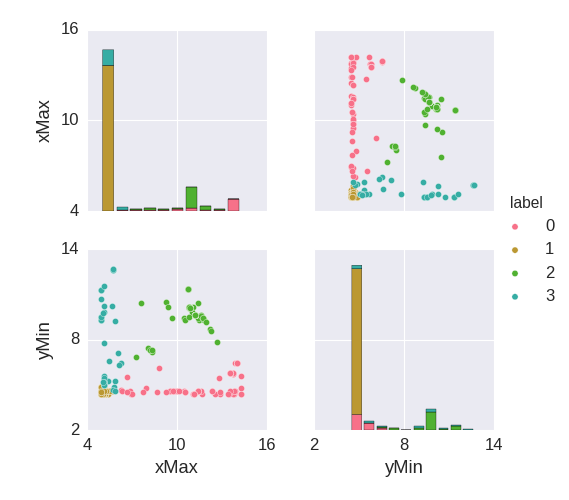}}
  \subfloat[$\validitydomain{\f_{avoid}}$\label{fig:obstacleValidityDomain}]{\includegraphics[width=0.16\textwidth]{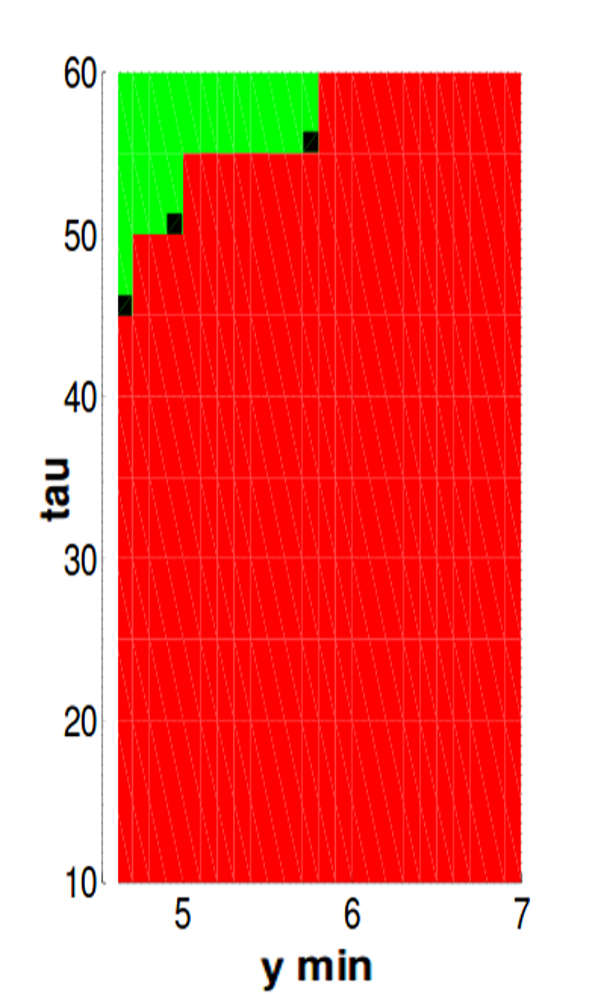}}
  \caption{CPS Grader Study Results, w.
    $\validitydomain{\f_{avoid}}$, (c),
    from~\cite{juniwal2014cpsgrader} included for comparison
    (valuations in the green region
    of (c) correspond to mistakes). We
    note that in (b) we are able to identify 3 modes of
    failure (obstacle not avoided, 2x obstacle avoided but did not
    reorient), an insight not present in~\cite{juniwal2014cpsgrader}.}
\end{figure}

\myipara{Results} A surprising observation for both templates is that
the vast majority of data is captured in a relatively small parameter
range. Upon investigation, it was revealed that the students were able
to submit multiple solutions for grading --- each corresponding to a
trace. This biased the dataset towards incorrect solutions since one
expects the student to produce many incorrect solutions and then a few
final correct solutions. As seen in Fig.~\ref{fig:obstacleDist}, the
results imply that a classifier for label 0, which corresponded to the
robot not passing the obstacle, would have a low misclassification
rate when compared against the STL artifact from
~\cite{juniwal2014cpsgrader}. Moreover, for obstacle avoidance, there
are two other families of correct solutions uncovered. One is the set
of traces that just barely pass the obstacle in time (label 2 in
Fig.~\ref{fig:obstacleDist}), and the other is the spectrum of
traces that pass the minimum threshold with a healthy margin (label 1
in Fig.~\ref{fig:obstacleDist}). For the reorient template, we
discovered 3 general types of behaviors (again with GMMs), see Fig.~\ref{fig:reorient}. The first (label 1) is a failure to move past the
obstacle (echoing the large group under the obstacle avoidance
template). The other 3 groups seem to move passed the obstacle, but
two (labels 0 and 3) of them display failure to reorient to the
original orientation of $45^\circ$. One could leverage this behavior
to craft diagnostic feedback for these common cases.

\\ \\
\noindent{\bf Conclusion.} In this work we explored a technique
to leverage PSTL to extract features from a time series that can be
used to group together qualitatively similar traces under the lens of
a PSTL formula. Our approach produced a simple STL formula for each
cluster, which along with the extremal cases,
enable one to develop insights into a set of traces. We then
illustrated with a number of case studies how this technique could be
used and the kinds of insights it can develop.  For future work, we
will study extensions of this approach to supervised, semi-supervised,
and active learning. A key missing component in this work is a
principled way to select a projection function (perhaps via learning
or posterior methods). Other possible extensions involve integration
with systematic PSTL enumeration, and learning non-monotonic PSTL
formulas.

\bibliographystyle{splncs03}
\bibliography{refs}

\section{Appendix}\label{sec:appendix}

\mypara{Related monotonic formula for a non-monotonic PSTL formula}

\begin{example}
Consider the PSTL template: $\varphi(h,w) \eqdef \F\left((x \leq h)
\wedge \F_{[0,w]}\left(x \geq h\right)\right)$.
We first show that the given formula is not monotonic.
\begin{proof}
  Consider the trace $x(t) = 0$. Keep fixed $w = 1$. Observe that
  $h=0$, $x(t)$ satisfies the formula. If $h=-1$, then $x(t) \nmodels
  \f(h, w)$, since $x(t) \leq h$ is not eventually satisifed. If
  $h=1$, then $x(0) \leq 1$ implying that for satisfaction, within the
  next 1 time units, the signal must becomes greater than 1. The
  signal is always 0, so at $h=1$, the formula is unsatisfied. Thus,
  while increasing $h$ from $-1$ to $0$ to $1$, the satifaction has
  changed signs twice. Thus, $\f(h, w)$ is not monotonic.
\end{proof}

Now consider the following related PSTL formula in which repeated
instances of the parameter $h$ are replaced by distinct parameters
$h_1$ and $h_2$. We observe that this formula is trivially monotonic:
$\varphi((w,h_1,h_2)) \eqdef \F\left((x \leq h_1) \wedge
\F_{[0,w]}\left(x \geq h_2\right)\right)$

\end{example}

\mypara{Linearization based on scalarization}

Borrowing a common trick from multi-objective optimization, we
define a cost function on the space of valuations as follows:
$\cost(\val(\p)) =  \sum_{i=1}^{|\params|} \lambda_i \val(p_i)$.
Here, $\lambda_i \in \Reals$, are weights on each parameter.  The
above cost function implicitly defines an order $\scalarOrder$, where,
$\val(\p) \scalarOrder \val'(\p)$ iff $\cost(\val(\p)) \le
\cost(\val'(\p))$.  Then, the projection operation $\projscalar$ is
defined as:
$\projscalar(\x) = \argmin_{\val(\p) \in
  \validitydomainboundary{\f(\p)}} \cost(\val(\p))$.
We postpone any discussion of how to choose such a scalarization to
future work.

\end{document}